\newif\ifarXiv         % declaration
\newif\ifPNAS        % declaration
\providecommand{\keywords}[1]
{
\small	
\textbf{\textit{Keywords: }} #1
}
\DeclareMathAlphabet{\mathpzc}{OT1}{pzc}{m}{it}
\newcommand{\stkout}[1]{\ifmmode\text{\sout{\ensuremath{#1}}}\else\sout{#1}\fi}
\newcolumntype{C}[1]{>{\Centering}m{#1}}
\newcommand{\mycaption}[1]{\caption{{#1}}}
\newcommand{\mycaption}[1]{\caption{\small{#1}}}
\newtheorem{theorem}{Theorem}[section]
\newtheorem{lemma}[theorem]{Lemma}
\newtheorem{proposition}[theorem]{Proposition}
\newtheorem{remark}[theorem]{Remark}
\newtheorem{definition}{Definition}[section]
\newcommand{\rhoT}{\rho_T}
\newcommand{\rhoL}{\rho_T^L}
\newcommand{\rhoLM}{\rho_T^{L,M}}
\newcommand{\mbf}[1]{\boldsymbol{#1}}
\newcommand{\dbinnerp}[1]{\langle\hspace{-0.5mm}\langle{#1}\rangle\hspace{-0.5mm}\rangle}
\newcommand{\abs}[1]{\big| #1 \big|}
\newcommand{\bX}{\mbf{X}}
\newcommand{\bxm}{\bx^{m}}
\newcommand{\dotbxm}{\dot\bx^m}
\newcommand{\bXm}{\bX^{m}}
\newcommand{\dotbXm}{\dot\bX^{m}}
\newcommand{\Xtrain}{\bX_{\mathrm{tr}}}
\newcommand{\brm}{\br^{m}}
\newcommand{\nbrm}{r^{m}}
\newcommand{\norm}[1]{\left\| #1 \right\|}
\newcommand{\tnorm}[1]{\norm{#1}_{\mathbb{S}}}
\newcommand{\real}{\mathbb{R}}
\newcommand{\br}{\mbf{r}}
\newcommand{\bv}{\mbf{v}}
\newcommand{\bx}{\mbf{x}}
\newcommand{\mE}{\mathcal{E}}
\newcommand{\mF}{\mathcal{F}}
\newcommand{\bigO}{\mathcal{O}}
\newcommand{\R}{\real}
\newcommand{\dimamb}{D}
\renewcommand{\dim}{d}
\newcommand{\numcl}{K}
\newcommand{\idxcl}{k}
\newcommand{\spaceM}{S}
\newcommand{\cl}{C}
\newcommand{\clof}{\mathpzc{k}}
\newcommand{\Tmixing}{T_{\mathrm{mix}}}
\newcommand{\force}{F}
\newcommand{\forcev}{\force^{\bv}}
\newcommand{\forcexi}{\force^{\xi}}
\newcommand{\intkernel}{\phi}
\newcommand{\lintkernel}{\hat\intkernel}
\newcommand{\bintkernel}{{\bm{\phi}}}
\newcommand{\blintkernel}{{\widehat{\bm{\phi}}}}
\newcommand{\intkernelvar}{\varphi}
\newcommand{\bintkernelvar}{{\bm{\varphi}}}
\newcommand{\intkernele}{\intkernel^E}
\newcommand{\intkernela}{\intkernel^A}
\newcommand{\lintkernele}{\lintkernel^E}
\newcommand{\lintkernela}{\lintkernel^A}
\newcommand{\intkernelxi}{\intkernel^{\xi}}
\newcommand{\rhsfo}{\mathbf{f}}
\newcommand{\hypspace}{\mathcal{H}}
\newcommand{\E}{\mathbb{E}}
\newcommand{\probIC}{\mu_0}
\newcommand{\intkerneltrue}{\phi}%{\intkernelvar^{true}}
\newcommand{\ptrans}[1]{(#1)^{\top}}
\newcommand{\argmin}[1]{\underset{#1}{\operatorname{arg}\operatorname{min}}\;}
\newcommand{\supp}[1]{\text{supp}(#1)}
\newcommand{\revision}[1]{\textcolor{black}{{#1}}}
\newcommand{\newrefs}[1]{{#1}}%{\textcolor{green}{{#1}}}
\newcommand{\theTitle}{Nonparametric inference of interaction laws in systems of agents from trajectory data}
\newcommand{\theAbstract}{Inferring the laws of interaction in agent-based systems from observational data is a fundamental challenge in a wide variety of disciplines.
We propose a non-parametric statistical learning approach for distance-based interactions, with no reference or assumption on their analytical form, given data consisting of sampled trajectories of interacting agents. We demonstrate the effectiveness of our estimators both by providing theoretical guarantees that avoid the curse of dimensionality, and by testing them on a variety of prototypical systems used in various disciplines.  These systems include homogeneous and heterogeneous agents systems, ranging from particle systems in fundamental physics to agent-based systems that model opinion dynamics under the social influence, prey-predator dynamics, flocking and swarming, and phototaxis in cell dynamics.}
\newcommand{\theKeywords}{Data-driven modeling $|$ Dynamical systems $|$ Agent-based systems}
\newcommand{\theAcknowledge}{We are grateful for comments by the reviewers, which lead to significant improvements to the paper, and for discussions with Prof. Massimo Fornasier, Prof. Pierre-Emmanuel Jabin, Prof. Yannis Kevrekidis, Prof. Nathan Kutz, Prof. Yaozhong Hu and Dr. Cheng Zhang. We acknowledge support from the National Science Foundation under grants DMS-1708602, ATD-1737984, IIS-1546392, DMS-1821211, IIS-1837991, the Air Force Office of Scientific Research (AFOSR) under grant AFOSR-FA9550-17-1-0280,  and American Mathematical Society-Simons Travel grant. We acknowledge Duke University for computing equipment and the Maryland Advanced Research Computing Center (MARCC). }
\begin{document}
\ifPNAS
\title{\theTitle}
% Use letters for affiliations, numbers to show equal authorship (if applicable) and to indicate the corresponding author
\author[a,b,c,e]{Fei Lu}
\author[b,e]{Ming Zhong}
\author[a,e]{Sui Tang}
\author[a,b,c,d,e]{Mauro Maggioni}
\affil[a]{Department of Mathematics}
\affil[b]{Department of Applied Mathematics \& Statistics}
\affil[c]{Institute for Data Intensive Engineering and Science}
\affil[d]{Mathematical Institute for Data Science}
\affil[e]{Johns Hopkins University, Baltimore, MD $21218$}
% Please give the surname of the lead author for the running footer
\leadauthor{Maggioni} 
% Please add here a significance statement to explain the relevance of your work
\significancestatement{
Particle and agent-based systems are ubiquitous in science. The complexity of emergent patterns and the high-dimensionality of the state space of such system are obstacles to the creation data-driven methods for inferring the driving laws from observational data. We introduce a nonparametric estimator for learning interaction kernels from trajectory data, scalable to large data sets, statistically optimal, avoiding the curse of dimensionality, and applicable to a wide variety of systems from Physics, Biology, Ecology and Social Sciences.
}
% Please include corresponding author, author contribution and author declaration information
\authorcontributions{F.L. and M.M. initiated the project; F.L., M.M. and S.T. developed theory for $1$st-order systems; M.M. and M.Z. developed estimators and algorithms for $1$st- and $2$nd-order heterogeneous systems; S.T. performed experiments on LJ system; M.Z. performed experiments on OD, PS$1$, PS$2$, PT systems and Model Selection. All authors contributed to developing algorithms and writing the paper.}
%\authordeclaration{Please declare any conflict of interest here.}
%\equalauthors{\textsuperscript{1} All authors contributed equally to this work.}
\correspondingauthor{\textsuperscript{1}To whom correspondence should be addressed. E-mail: mauromaggionijhu@icloud.com}
\keywords{\theKeywords} 
\begin{abstract}
\theAbstract
\end{abstract}
\dates{This manuscript was compiled on \today}
%\doi{\url{www.pnas.org/cgi/doi/10.1073/pnas.XXXXXXXXXX}}
% Optional adjustment to line up main text (after abstract) of first page with line numbers, when using both lineno and twocolumn options.
% You should only change this length when you've finalised the article contents.
\verticaladjustment{-2pt}
\maketitle
\thispagestyle{firststyle}
\ifthenelse{\boolean{shortarticle}}{\ifthenelse{\boolean{singlecolumn}}{\abscontentformatted}{\abscontent}}{}
\fi
\ifarXiv
\author[a, b]{Fei Lu}
\author[a, b]{Mauro Maggioni}
\author[a]{Sui Tang}
\author[b]{Ming Zhong}
\affil[a]{Department of Mathematics}
\affil[b]{Department of Applied Mathematics \& Statistics}
\affil[{ }]{Johns Hopkins University, Baltimore, MD $21218$, USA}
\title{\theTitle}
\date{\today}
\maketitle
\begin{abstract}
\theAbstract
\end{abstract}
\keywords{\theKeywords} 
\fi
\section{Introduction}
\ifPNAS
\dropcap{S}ystems
\fi
\ifarXiv
Systems
\fi
 of interacting agents arise in a wide variety of disciplines, including Physics, Biology, Ecology, Neurobiology, Social Sciences, and Economics (see e.g. \cite{carrillo2017review, KSUB2011, vicsek2012collective,Shoham} and references therein).
Agents may represent particles, atoms, cells, animals, neurons, people, rational agents, opinions, etc...
The understanding of agent interactions at the appropriate scale in these systems is as fundamental a problem as the understanding of interaction laws of particles in Physics.

How can laws of interaction between agents be discovered? In Physics vast knowledge and intuition exist to formulate hypotheses about the form of interactions, inspiring careful experiments and accurate measurements, that together lead to the inference of interaction laws. 
This is a classical area of research, dating back to at least Gauss, Lagrange, and Laplace \cite{StiglerHistoryStats}, that plays a fundamental role in many disciplines. 
In the context of interacting agents at the scale of complex organisms, there are fewer controlled experiments possible, and few ``canonical'' choices for modeling the interactions. 
Different types and models of interactions have been proposed in different scientific fields, and fit to experimental data, which in turn may suggest new modeling approaches, in a model-data validation loop.
Often the form of governing interaction laws is chosen a priori, within perhaps a small parametric family, and the aim is often to reproduce only qualitatively, and not quantitatively, some of the macroscopic features of the observed dynamics, such as the formation of certain patterns. 

Our work fits at the boundary between statistical/machine learning and dynamical systems, where equations are estimated from observed trajectory data, and inference takes into account assumptions about the form of the equations governing the dynamics. 
Since the past decade, the rapidly increasing acquisition of data, due to decreasing costs of sensors and measurements, has made the learning of large and complex systems possible, and there has been an increasing interest in inference techniques that are model-agnostic and scalable to high-dimensional systems and large data sets. 

We establish statistically sound, dynamically accurate, computationally efficient techniques\footnote{The software package implementing the proposed algorithms can be found on \url{https://github.com/MingZhongCodes/LearningDynamics}.} for inferring these interaction laws from trajectory data.
We propose a {\em{non-parametric}} approach for learning interaction laws in particle and agent systems, based on observations of trajectories of the states (e.g. position, opinion, etc...) of the systems, on the assumption that the interaction kernel depends on pairwise distances only, unlike recent efforts either require feature libraries or parametric forms for such interactions \cite{Schaeffer6634,BPK2016,TranWardExactRecovery,BCGMSVW2012}, or aim at identifying only the type of interaction from a small set of possible types \cite{BCCCCGLOPPVZ2008, LLEK2010, KTIHC2011}.
We consider a Least Squares (LS) estimator, classical in the area of inverse problems (dating back to Legendre and Gauss), suitably regularized and tuned to the learning of the interaction kernel in agent-based systems. 

The unknown is the interaction kernel, a function of pairwise-distances between agents of the systems. 
While the values of this function are not observed, in contrast to the standard regression problems, yet we are able to show that our estimator converges at an optimal rate as if we were in the 1-dimensional regression setting. 
In particular, \revision{the learning rate has no dependency on the dimension} of the state space of the system, therefore avoiding any curse of dimensionality, and making these estimators well-suited for the modern high-dimensional data regime. % it is also robust with respect to observational noise \MM{TBD}.
Our estimator is constructed with algorithms that are computationally efficient and may be implemented in a streaming fashion: it is, therefore, well-suited for large data sets.
It may be easily extended to a variety of complex systems; we consider here first order and second order models, with single and multiple types of agents, and with interactions with simple environments. 
We also show that the theoretical guarantees on the performance of the estimator make it suitable for hypothesis testing when the true model is unknown, assisting the investigator in choosing among different possible (nonparametric) models.
%
%%%% LEARNING LAWS OF INTERACTION
\section{Learning interaction kernels}\label{sec:main2}
We start with a model that is used in a wide variety of interacting agent systems (e.g. physical particles, influence propagation in a population \cite{Krause2000, CKFL2005}): consider $N>1$ agents $\{\bx_i\}_{i=1}^N$ in $\R^\dim$, evolving according to the system of ODE's
\begin{equation}
\label{e:firstordersystemsimple}
  \dot{\bx}_i(t) = \frac1N\sum_{i' = 1}^{N}\intkernel(||\bx_{i'}(t) - \bx_i(t)||)(\bx_{i'}(t) - \bx_i(t))\,,
\end{equation}
where $\dot{\bx}_i(t)=\frac{d}{dt}\bx_i(t)$; $\norm{\cdot}{}$ is the Euclidean norm, and $\intkernel:\R_+\rightarrow\R$ is the {\em{interaction kernel}}. \revision{In other words, every agent's velocity is obtained by superimposing the interactions with all the other agents, each weighted in way dependent on the distance to the interacting agent. In a prototypical example, e.g. arising in particle systems (see \ifPNAS Sec.~\ref{sec:main2}\ref{s:LJexample}\fi \ifarXiv Sec.~\ref{s:LJexample}\fi) and flocking systems, the interaction kernel may be negative for small distances, inducing repulsion, and attractive for large distances.}
Let $\bX:=(\bx_i)_{i=1}^N\in\R^{\dim N}$ be the state vector for all the agents, $\br_{ii'}(t):=\bx_{i'}(t) - \bx_i(t)$ and $r_{ii'}(t):=||\br_{ii'}(t)||$.
The evolution \eqref{e:firstordersystemsimple} is the gradient flow for the potential energy
$\mathcal{U}(\bX(t)):=\frac{1}{2N}\sum_{i\neq i'}\Phi(r_{ii'}(t))$,
with $\intkernel(\cdot)=\Phi'(\cdot)/\cdot$. 
The function $\intkernel(\cdot)\cdot$ reappears naturally below, the fundamental reason being its relationship with $\mathcal{U}$ and $\Phi$. Our observations are positions along trajectories: $\Xtrain:=\{\bXm(t_l)\}_{l=1,m=1}^{L,M}$, \revision{with $0=t_1<\dots<t_L=T$ being the times at which observations occur, and $m$ indexing $M$ different trajectories}. Velocities $\dotbXm(t_l)$ are approximated by finite differences.
The $M$ initial conditions (I.C.'s) $\bX_0^{m}:=\bXm(0)$ are drawn independently at random from a probability measure $\probIC$ on $\R^{dN}$.

Our goal is to infer, in a nonparametric fashion, the interaction kernel $\intkernel$, by constructing an estimator $\lintkernel$ from training data.
A fundamental statistical problem that involves estimating a function is regression: given samples $(z_i,g(z_i))_{i=1}^n$, with the $z_i$'s i.i.d. samples from an (unknown) measure $\rho_Z$ in $\R^\dimamb$, and $g$ a suitably regular (say H\"older $s$) unknown function $\R^\dimamb\rightarrow\R$, one constructs an estimator $\hat g$ such that $||\hat g-g||_{L^2(\rho_Z)}\lesssim n^{-\frac{s}{2s+D}}$, with high probability (over the $z_i$'s). 
This rate is optimal in a minimax sense, \cite{Gyorfi06}, and its dramatic degradation with $\dimamb$ is a manifestation of the curse of dimensionality. 
Upon re-writing \eqref{e:firstordersystemsimple} as $\dot{\bX}=\rhsfo_\intkernel(\bX)$,
our observations (with either approximated or directly observed velocities) resemble those needed for regression if we thought of $Z=\bX$ as a random variable, and $g=\rhsfo_\intkernel$. 
However, our observations are not i.i.d. samples of $\bX$ with respect to any probability measure, the lack of independence being the most glaring aspect. 
If we nevertheless pursued this line of thought, we would be hit with the curse of dimensionality in trying to learn the target function $g=\rhsfo_\intkernel$ on the state space $\R^{dN}$, leading to a rate $n^{-O(1/dN)}$ for regression. This renders this approach useless in practice as soon as, say, $dN\ge20$.
A direct application of existing approaches (e.g. \cite{Schaeffer6634,BPK2016,TranWardExactRecovery}), developed for low-dimensional systems, go in this direction, These works would try to ameliorate this curse of dimensionality by requiring $\rhsfo_\intkernel$ to be well-approximated by a linear combination of a small number of functions in a known large dictionary. While such dictionaries may be known for specific problems, they are usually not given in the case of complex, agent-based systems. Finally, such dictionaries typically grow dramatically in size with the dimension (here, $dN$), and existing guarantees that avoid the curse dimensionality require further, strong assumptions on the measurements or the dynamics.

We proceed in a different direction, aiming for the flexibility of a non-parametric model while exploiting the structure of the system in \eqref{e:firstordersystemsimple}. 
The target function $\intkernel$ depends on just one variable (pairwise distance), but it is observed through a collection of non-independent linear measurements (the l.h.s. of \eqref{e:firstordersystemsimple}), at locations $\nbrm_{ii'}(t_l)=||\bxm_{i'}(t_l)-\bxm_i(t_l)||$, with coefficients $\brm_{ii'}(t_l)=\bxm_{i'}(t_l)-\bxm_i(t_l)$, as in the r.h.s. of \eqref{e:firstordersystemsimple}.
When the $t_l$'s are equidistant in time, we consider an estimator minimizing the empirical error functional
\begin{align}
\mE_{L,M}(\intkernelvar) &:= \frac{1}{LMN}\sum_{l,m,i= 1}^{L,M,N}\big\|\dotbxm_i(t_l)-\rhsfo_\intkernelvar( \bxm(t_l))_i\big\|^2, \label{e:firstordersystem_eef} \\
\widehat\intkernel&=\widehat\intkernel_{L,M,\hypspace} := \argmin{\intkernelvar\in\hypspace} \mE_{L,M}(\intkernelvar)\, , \label{e:estimator}
\end{align}
where $\hypspace$ is a hypothesis space of functions $\R_+\rightarrow\R$, of dimension $n$ (we will choose $n$ dependent on $M$). 
We introduce a natural probability measure $\rhoT$ on $\R_+$  adapted to the dynamics: it can be thought of as an ``occupancy'' measure, in the sense that for any interval $I$, $\rhoT(I)$ is the probability (over the random initial conditions distributed according to $\probIC$) of seeing a pair of agents with a distance between them being a value in $I$, averaged over the time interval $[0,T]$; see \eqref{e:rhoT} for a formal definition.

We measure the performance of $\hat\phi$ in terms of the error $||\hat\intkernel(\cdot)\cdot-\intkernel(\cdot)\cdot||_{L^2(\rhoT)}$. Thm.~\ref{t:mainsimple}, our main result, will bound this error by $\smash{\tilde O(M^{-s/(2s+1)})}$ if $\intkernel$ is H\"older $s$: this is the optimal exponent for learning $\intkernel$ if we were in the (more favorable) $1$-dimensional regression setting! We therefore completely avoid the curse of dimensionality. \revision{In fact, we show under some rather general assumptions that not only the rate, but even the constants in the bound are independent of $N$, making the bounds essentially dimension-free.}
It is crucial that $\rhoT$ has wide support in order for the error to be informative. When the system is ergodic, we expect $\rhoT$ to have a large support for large $T$, as the system explores its ergodic distribution.  However many deterministic systems of interest may reach a stationary state (as in the cases of the Lennard-Jones or opinion dynamics, to be considered momentarily), in which case $\rhoT$ becomes highly concentrated on a finite set for large $T$: in these cases it may be more relevant to consider $T$ small compared to the relaxation time.

We are also interested in whether trajectories $\bX(t)$ of the true system are well-approximated by trajectories $\smash{\widehat{\bX}(t)}$ of the system governed by the interaction kernel $\smash{\hat\intkernel}$, on both the ``training'' time interval $[0,T]$ and after time $T$. Prop.~\ref{stateestimation} below bounds $\smash{\sup_{t\in[0,T']} \|\widehat{\bX}(t)- \bX(t)\|}$ in terms of $\smash{||\hat\intkernel(\cdot)\cdot-\intkernel(\cdot)\cdot||_{L^2(\rhoT)}}$, at least for $T'$ not too large; this further validates the use of $L^2(\rhoT)$.
We will report on this distance for both $T'=T$ and $T'>T$ (``prediction'' regime).

\begin{figure*}[!h]
\begin{subfigure}[t]{\ifPNAS 0.5\fi \ifarXiv 0.49\fi\textwidth}
\centering
      \includegraphics[width=\linewidth]{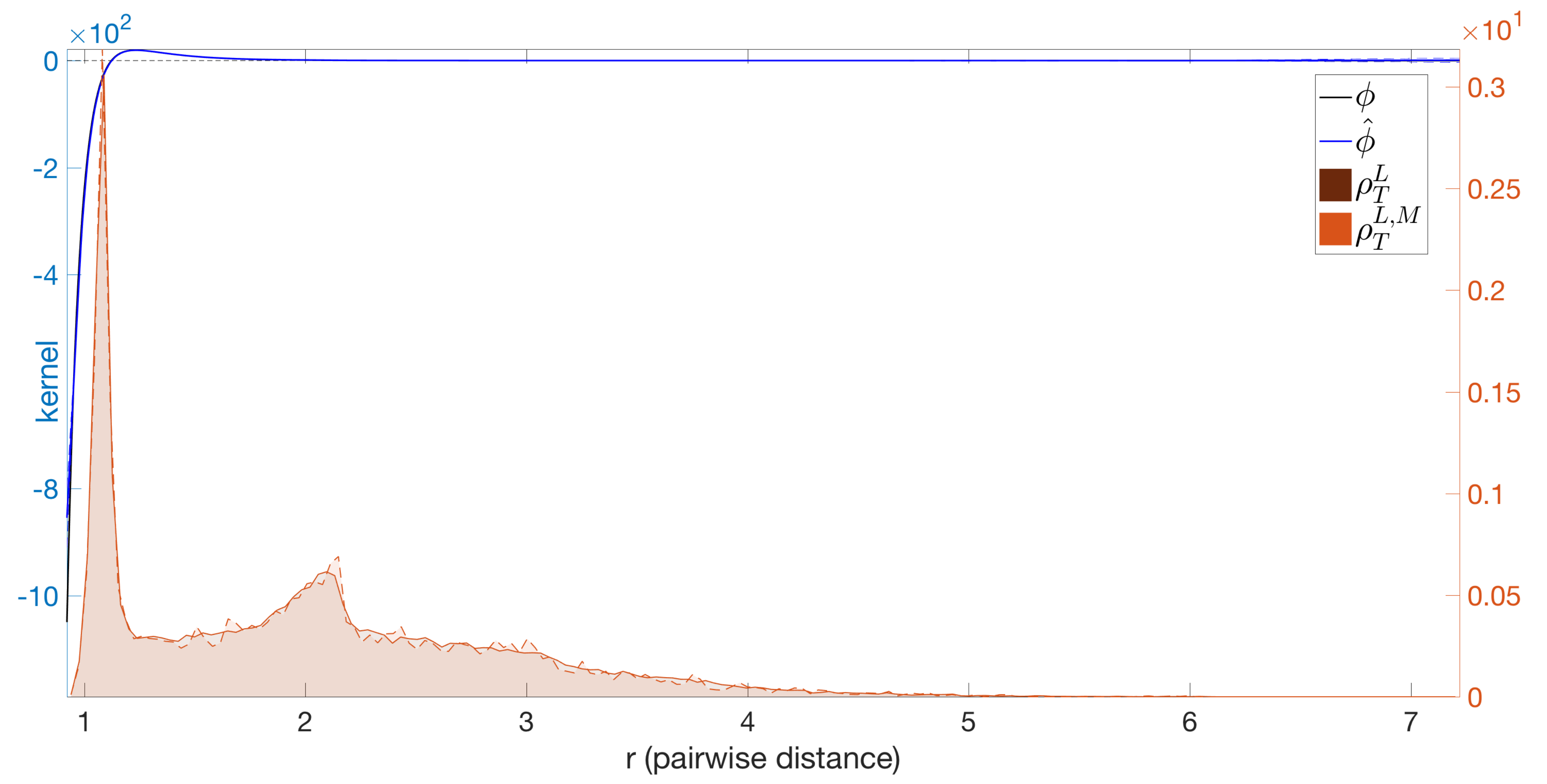}
      \subcaption{Interaction kernel learned from many shot-time trajectories} \end{subfigure}
\begin{subfigure}[t]{\ifPNAS 0.5\fi \ifarXiv 0.49\fi\textwidth}
\centering
      \includegraphics[width= \linewidth]{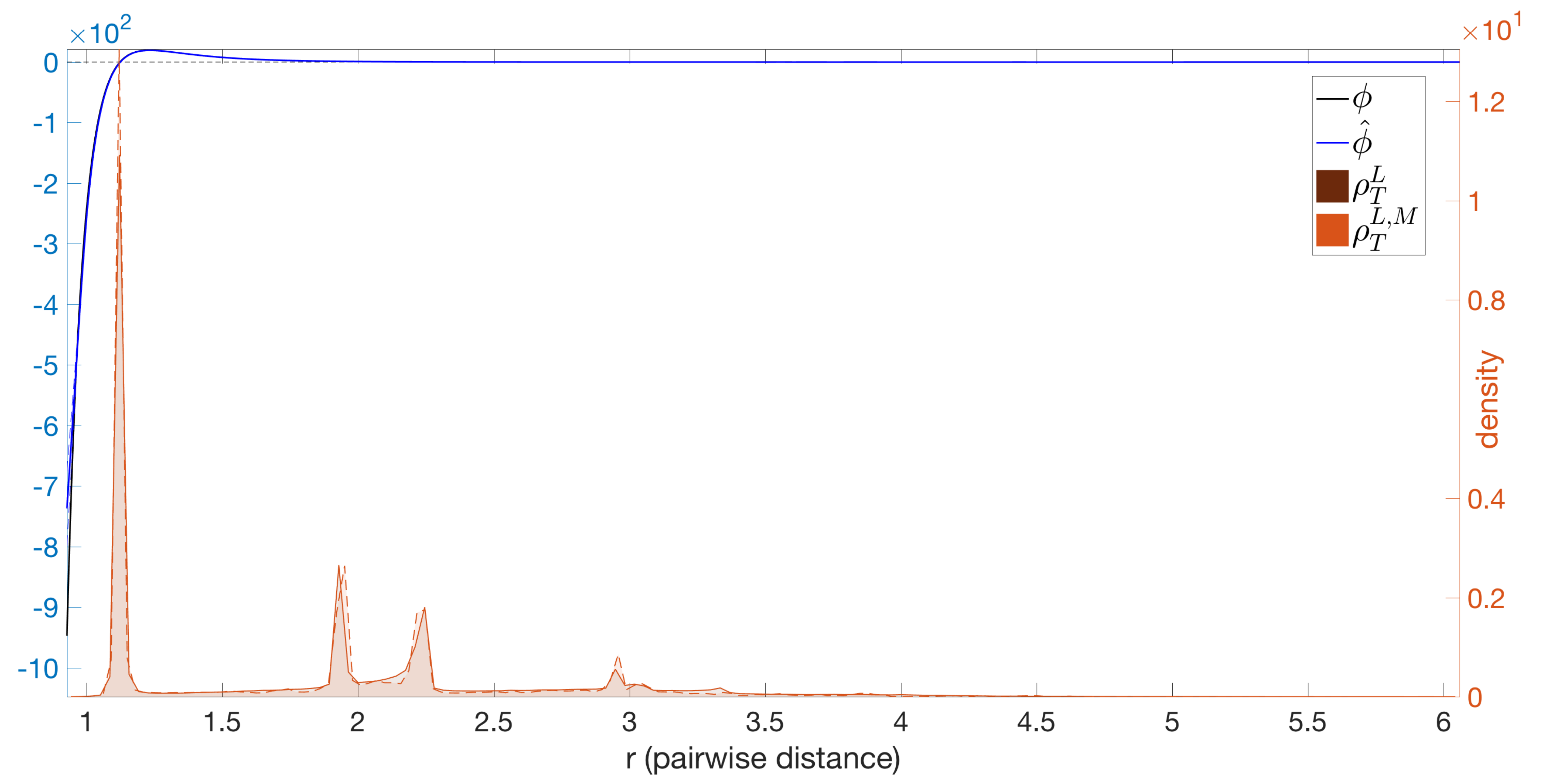}
      \subcaption{Interaction kernel learned from a few  long trajectories}
\end{subfigure}
\begin{subfigure}[t]{\ifPNAS 0.5\fi \ifarXiv 0.49\fi\textwidth}
\centering
     \includegraphics[width=\linewidth]{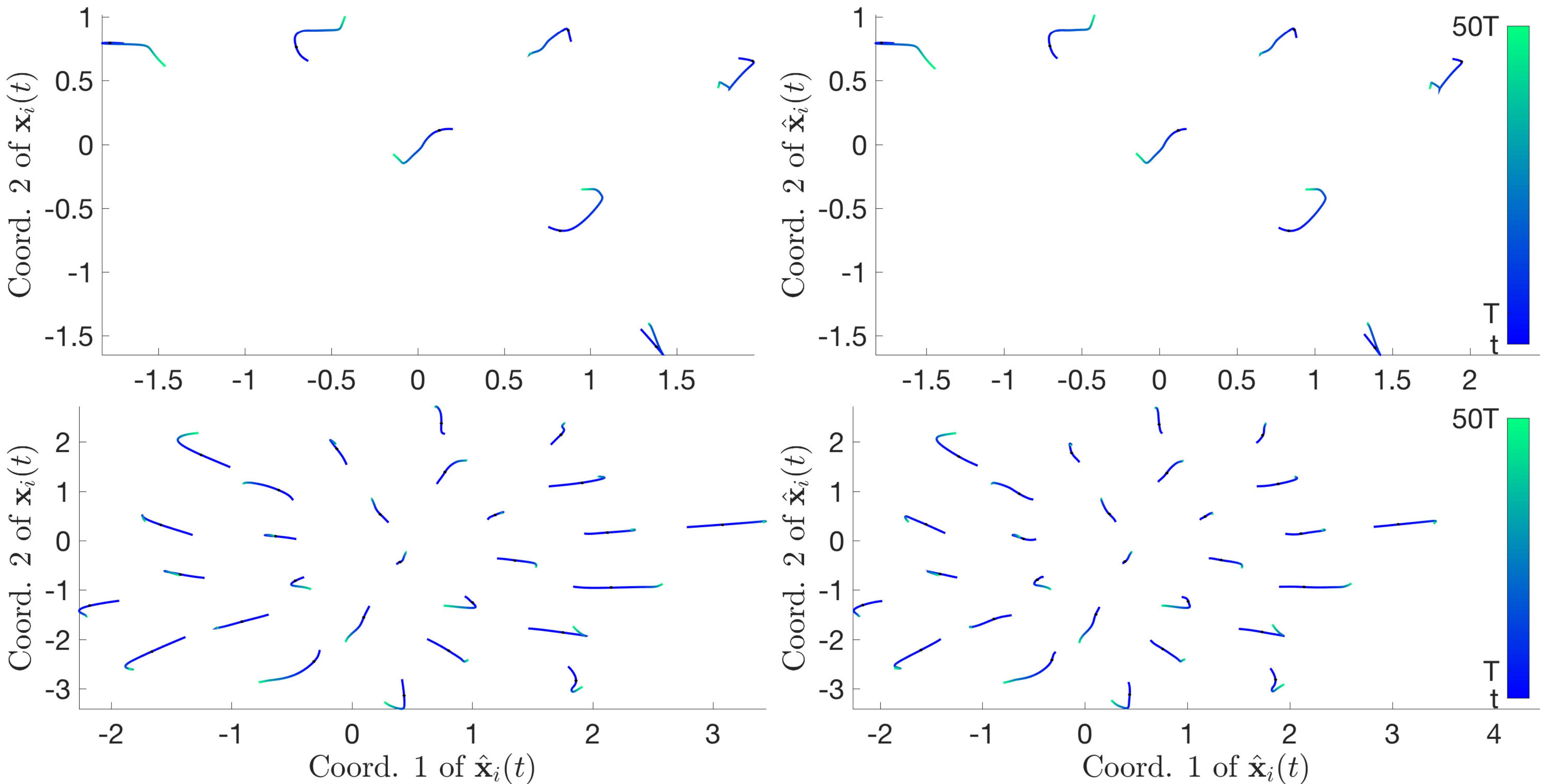}
      \subcaption{True and predicted trajectories for systems with interaction kernel learned in (a)} 
\end{subfigure}
\begin{subfigure}[t]{\ifPNAS 0.5\fi \ifarXiv 0.49\fi\textwidth}
\centering
      \includegraphics[width=\linewidth]{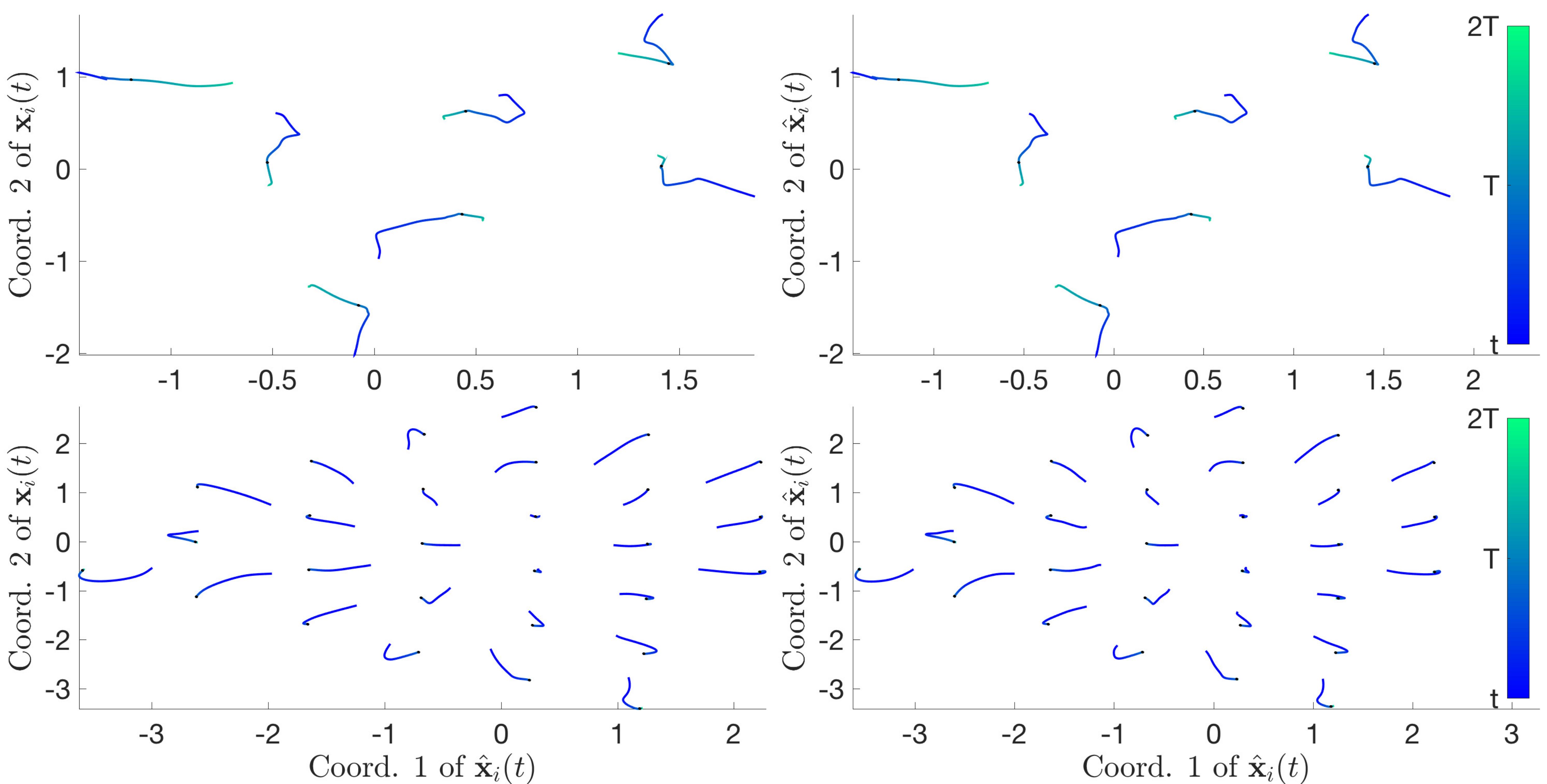}
      \subcaption{True and predicted trajectories for system with interaction  kernel learned in (b) } 
 \end{subfigure}
\mycaption{\textbf{Interaction kernel estimation and trajectory prediction for Lennard-Jones system}.  {\textit{Top row}}: estimators $\hat\intkernel$ (in blue) of the true interaction kernel $\intkernel$ (in black) in two sampling regimes: many short-time trajectories (left), and a few large-time trajectories (right). The proposed nonparametric estimators perform extremely well - the means and standard deviations of the relative $L^2(\rhoL)$ errors are $6.6\cdot 10^{-2}\pm 5.0\cdot 10^{-3}$ and $7.2\cdot 10^{-2} \pm 1.0\cdot 10^{-2}$ respectively, over 10 independent learning runs. The standard deviation (dashed) lines on the estimated kernel are so small to be barely visible. 
% (see Table \ifPNAS \newrefs{{S6} in the SI}\fi \ifarXiv \ref{t:LJ_errs}\fi).
In both cases we superimpose histograms of $\rhoL$ (estimated from a large number of trajectories, outside of training data) and $\smash{\rho_T^{L,M}}$ (estimated from the $M$ training data trajectories, see \ifPNAS \newrefs{Eq. (5) in the SI}\fi \ifarXiv Eq.\eqref{e:rhoLM}\fi). 
The estimators belong to a hypothesis space $\hypspace_n$ of piecewise linear functions with equidistant knots, and yield accurate estimators in $L^2(\rhoL)$.
Note that we observe the dynamics starting from a suitable $t_0>0$, due to the singularity of Lennard-Jones kernel at $r=0$. See \ifPNAS \newrefs{Sec. 3B in the SI} \fi \ifarXiv Sec.~\ref{LJdescriptions} \fi for details about the setup and results. {\textit{Bottom row}}: the true and predicted trajectories for the N-particle system (top row) and a 4N-particle system (bottom row) with interaction kernels learned on the N-particle system, for randomly sampled initial conditions. The blue-to-green color gradient indicates the movement of particles in time (see color scales on the side). We achieve small errors in predicting the trajectories in all cases, even when we transfer the interaction kernel learned on an $N$ particle system to predict trajectories of a system with $4N$ particles. 
}
\label{f:LJ_main}
\end{figure*}

Finally, while the error $\smash{||\hat\intkernel(\cdot)\cdot-\intkernel(\cdot)\cdot ||_{L^2(\rhoT)}}$ is unknown in practice (since $\intkernel$ is unknown), our results give guarantees on its size, which in turn imply guarantees on accuracy of trajectory predictions. Proxies for the error on trajectories, for example by holding out portions of trajectories during the training phase, may be derived from data. 
These measures of error may be used to test and validate different models of the dynamics: too large an error with one model may invalidate it and suggest that a different one (e.g. $2^{\text{nd}}$ vs. $1^{\text{st}}$ order, or 
multiple vs. single agent types) should be used (see Sec.~\ref{s:examples}).

%%%% SAMPLING REGIMES
\subsection{Different sampling regimes, and randomness}
The total number of observations is (\# of initial conditions)$\times$(\# of temporal observations in $[0,T]$)$=M\times L$, each in $\mathbb{R}^{dN}$.
We will consider several regimes:
\begin{itemize} \setlength\itemsep{0mm} 
\item{\em{Many Short Time Trajectories}}: $T$ is small, $L$ is small (e.g. $L=1$), and $M$ is large (many I.C.'s sampled from $\probIC$);
\item{\em{Single Large Time Trajectory}}: $T$ large (even comparable to the relaxation time of the system if applicable), $L$ is large, and $M=1$ (or very small);
\item {\em{Intermediate Time Scale}}: $T$, $L$ and $M$ are all not small, but none is very large, corresponding to multiple ``medium''-length trajectories, with several different initial conditions.
\end{itemize}

Randomness is injected via the initial conditions, and in our main results in Sec.~\ref{s:MainResults} the sample size will be $M$.
If the system is ergodic, the regimes above are partially related to each other, at least when the initial conditions are sampled from the ergodic distribution $\mu_{\mathrm{erg}}$. Indeed, at times much larger than the mixing time $\Tmixing$, the state of the system becomes indistinguishable from a random sample of $\mu_{\mathrm{erg}}$, and we may interpret the subsequent part of the trajectory as a new trajectory with that initial condition. The $M$ observed trajectories of length $T\gg\Tmixing$ are then equivalent to $M\times T/\Tmixing$ trajectories of length $\Tmixing$, to which our results apply.
In regimes when $M$ is very small or $\probIC$ is very concentrated, there is little randomness: the problem is close to a fixed-design inverse problem which is solvable if the dynamics produces different-enough pairwise distances.

\begin{figure}[!ht] \center
%\begin{minipage}{\ifPNAS 0.333 \fi \ifarXiv 0.96 \fi\textwidth}
\includegraphics[width=0.5 \textwidth]{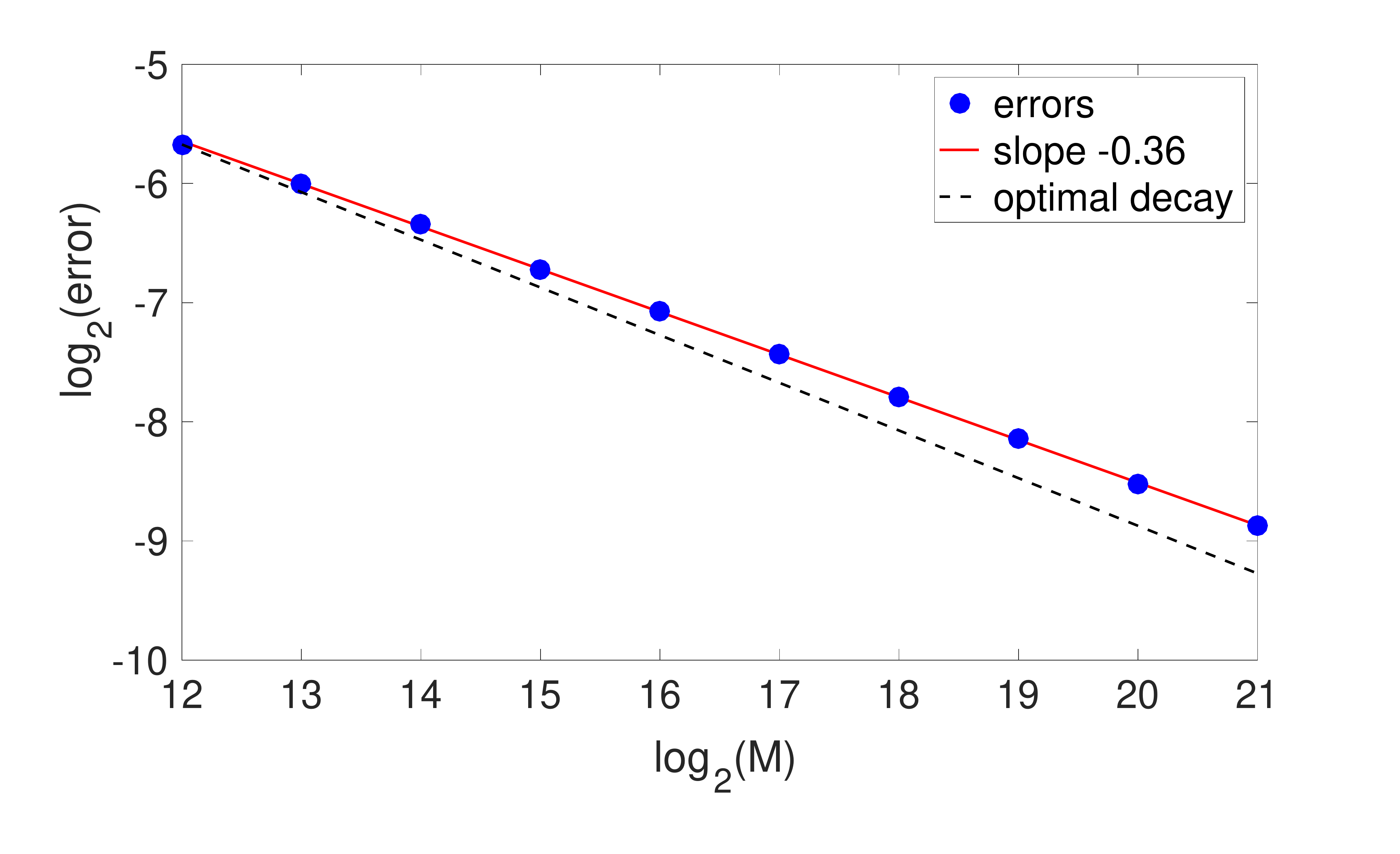}
%\end{minipage}
\ifPNAS \vspace{-6mm} \fi
\mycaption{\textbf{Learning rate in $M$ for the LJ system.}: The estimation error in ${L^2(\rhoL)}$ decays at rate $0.36$, close to the optimal rate $0.4$ (black dotted line) as in Thm.~\ref{t:mainsimple}.}
\label{f:LJ_rate}
\vskip-0.4cm
\end{figure}

\begin{figure}[h] 
\centering
\includegraphics[width=0.5\textwidth]{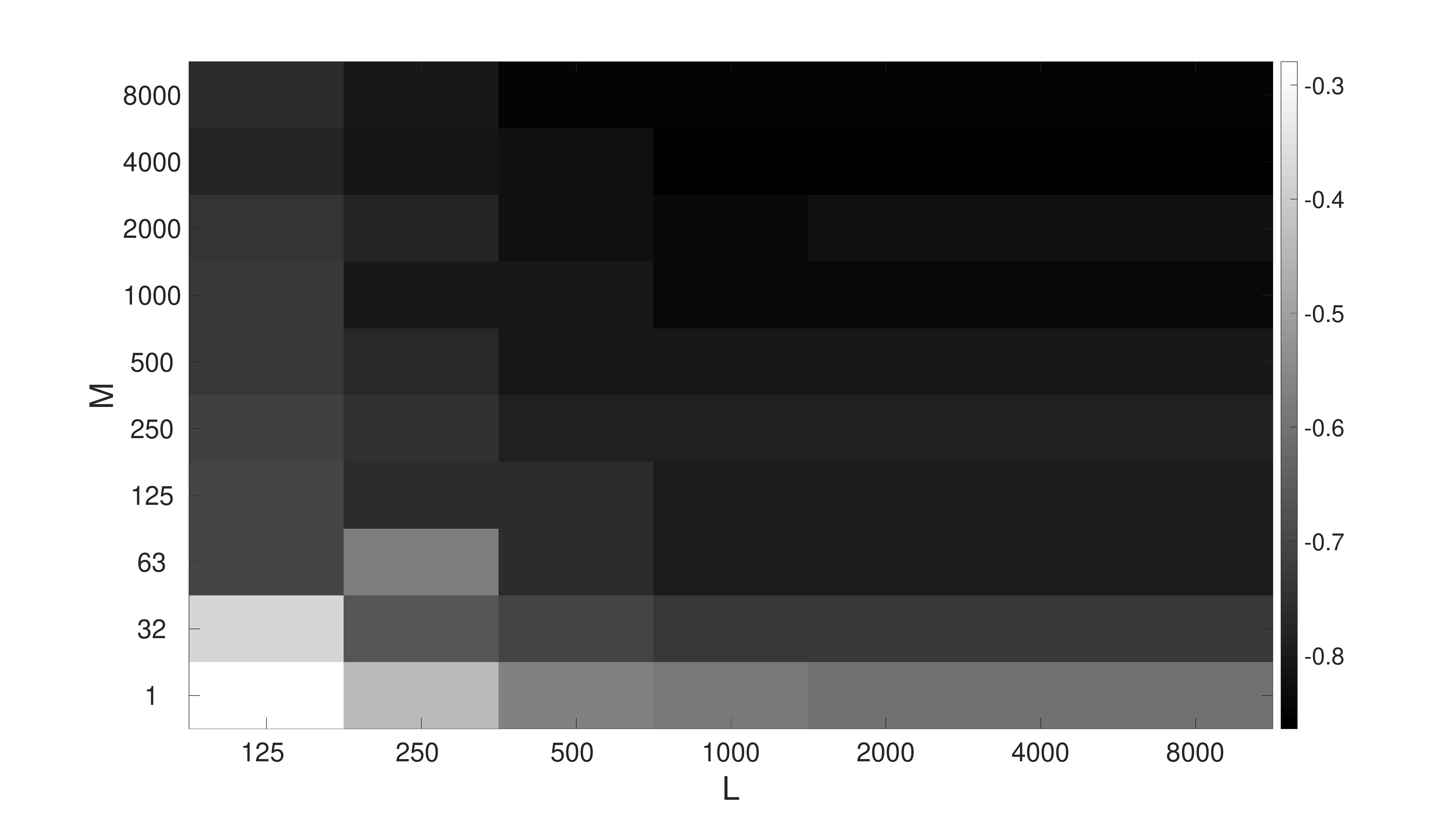}
\ifPNAS \vspace{-6mm} \fi
\mycaption{ \textbf{The relative error of the estimated kernel as a function of $M, L$ for the LJ system. } The Relative error, in $\log_{10}$ scale, of $\hat\intkernel$ decreases both in $L$ and $M$, in fact roughly in the product $ML$, at least when $M$ and $L$ are not too small. $M=1$ does not seem to suffice, no matter how large $L$ is, due to the limited amount of ``information'' contained in a single trajectory.}
\label{fig:JLMLtest}
\end{figure} 

\subsection{Example: interacting particles with the Lennard-Jones potential}\label{s:LJexample}
We illustrate the learning procedure on a particle system with $N=7$ particles in $\R^2$, interacting according to \eqref{e:firstordersystemsimple} with $\phi(r)=\Phi_{LJ}'(r)/r$, where $\Phi_{LJ}(r):=4\epsilon\left( (\sigma/r)^{12}-(\sigma/r)^6\right)$ is the Lennard-Jones potential, consisting of a strong near-field repulsion and a long-range attraction. 
The system converges quickly to equilibrium configurations, 
which often consist of ordered, crystal-like structures. 
 This example is challenging for various reasons:
the interaction kernel is unbounded, has unbounded support, and equilibrium is reached quickly, reducing the amount of information in trajectories. \ifPNAS \newrefs{Sec.3B in SI} \fi \ifarXiv Sec.~\ref{LJdescriptions} \fi contains a detailed description of  the experiments. Fig.~\ref{f:LJ_main} demonstrates that the estimators approximate the true kernel well in different sampling regimes, and that the trajectories of the true system are well-approximated by those of the learned system both in the ``training'' interval ($[t_0,T]$) and in the ``prediction'' interval ($[T, 50T]$ and $[T, 2T]$ respectively for the two regimes). \revision{We also show, as a simple example of transfer learning, that we can use the interaction kernel learned on the system with $N$ particles to accurately predict trajectories of a system with $4N$  particles.}

The rate of decay of the estimation error is about 0.36 (see Fig.~\ref{f:LJ_rate}), close to the optimal rate 0.4 in Thm.~\ref{t:mainsimple}; 
this is a consequence of two factors: the use of an empirical approximation to $\rhoL$ and the blowup at $0$ of $\Phi_{LJ}$, which is not an admissible kernel as in Thm.~\ref{t:mainsimple} (see \ifPNAS \newrefs{Sec.~3B in SI} \fi \ifarXiv Sec.~\ref{LJdescriptions} \fi for a detailed discussion).  

Fig.~\ref{fig:JLMLtest} shows the behavior of the error of the estimators as both $L$ and $M$ are increased. \revision{It indicates that a single long trajectory may not contain enough ``information'' to learn the kernel, at least for deterministic systems approaching a steady state. It also shows the behavior predicted by Thm.~\ref{t:mainsimple}, namely for each fixed $L$ the error decreases as $M$ increases.}

\section{Learning Theory}\label{s:MainResults}
We introduce an error functional based on the structure of the dynamical system $\dot{\bX}=\rhsfo_\intkernel(\bX)$, whose minimizer will be our estimator of the interaction kernel $\intkernel$.
We consider kernels in the \emph{admissible set}
$\mathcal{K}_{R,\spaceM}: = \{\intkernel \in C^1(\R_+):  \ \supp \intkernel \subset  [0,R], \sup_{r\in[0,R]} |\intkernel(r)| + | \intkernel'(r) | \leq \spaceM \}$,
for some $R,\spaceM >0$. 
The boundedness of $\intkernel$ and $\intkernel'$ ensures the global well-posedness of the system in \eqref{e:firstordersystemsimple}. 
The restriction $\supp \intkernel \subset  [0,R]$ models the finite range of interaction between agents, and it may be relaxed to $\intkernel\in W^{1,\infty}(\R_+)$ with a suitable decay. 

%%%% =====    MEASURE on DISTANCE SPACE ===
\subsection{Probability measures adapted to the dynamics}
In order to measure the quality of the estimator of the interaction kernel $\intkernel$, we introduce two probability measures on $\R_+$, the space of pairwise distances $\nbrm_{ii'}(t_l)=||\bxm_{i'}(t_l) - \bxm_i(t_l)||$. 
We consider the expectation of the empirical measure of pairwise distances, for continuous and discrete time observations respectively:
\begin{align}
\rhoT(r)  &:= \frac{1}{\binom N2T}\int_{t = 0}^T\E_{\bX_0\sim\probIC}\bigg[\sum_{i,i'=1, i< i' }^N\delta_{r_{ii'}(t)}(r) \, dt\bigg]\,, \label{e:rhoT}\\
\rhoL(r)  &:= \frac{1}{\binom N2 L}\sum_{l= 1}^{L}\E_{\bX_0\sim\probIC}\bigg[\sum_{i,i'=1, i< i' }^N\delta_{r_{ii'}(t_l)}(r)\bigg] \,. \label{e:rhoL} 
\end{align}
The expectations are over the initial conditions, with distribution $\probIC$.
The measure $\rhoT$ is intrinsic to the dynamical system, dependent on $\probIC$ and the time scale $T$, and independent of the observation data. $\rhoL$ depends also on the sampling scheme $\{t_l\}_{l=1}^L$ in time. 
Both are Borel probability measures on $\R_+$ (\ifPNAS \newrefs{Lemma 1.1 in the SI}\fi \ifarXiv Lemma \ref{averagemeasure}\fi) measuring how much regions of $\R_+$ on average (over the observed times and ICs) are explored by the system. Highly explored regions are where the learning process ought to be more accurate, as they are populated by more ``samples'' of pairwise distances. 
We will measure the estimation error of our estimators in  $L^2(\rhoT)$ or $L^2(\rhoL)$.

We report here on the analysis in the discrete-time observation case, most relevant in practice, with $\rhoL$; the arguments however also apply to continuous-time observations, with $\rhoT$.

%%%% =====   COERCIVITY CONDITION  ====
\subsection{Learnability: the coercivity condition} 

A fundamental question is the learnability of the kernel, i.e., the convergence of the estimator $\lintkernel_{L,M,\hypspace}$ defined in \eqref{e:estimator} to the true kernel $\intkerneltrue$ as the sample size increases (i.e. $M\to \infty$) and $\hypspace$ increases in a suitable way. The following condition, similar to the one introduced in \cite{BFHM17} for studying the mean field limit ($N\rightarrow\infty$), ensures learnability and well-posedness of the estimation.
 
\begin{definition}[Coercivity condition] \label{def_coercivity}
The dynamical system in \eqref{e:firstordersystemsimple}, with initial condition sampled from $\probIC$ on $\R^{dN}$, satisfies the {\bf{coercivity condition}} \revision{on a set $\hypspace$} if there exists a constant $c_L>0$ such that for all $\intkernelvar\in \hypspace$ with $\intkernelvar(\cdot)\cdot  \in  L^2(\rhoL)$,
\begin{align}\label{coercivity}
  c_L\|\intkernelvar(\cdot)\cdot\|_{L^2(\rhoL)}^2\!\!  \leq \!\frac{1}{NL}\sum_{l,i=1}^{L,N}\E \big\| \frac{1}{N}\sum_{i'= 1}^{N}  \intkernelvar(r_{ii'}(t_l))\br_{ii'}(t_l) \big\|^2
 \end{align} 
\end{definition}

The coercivity condition ensures learnability, by implying the uniqueness of minimizer of 
$\mE_{L,\infty}(\intkernelvar): = \E[\mE_{L,M}(\intkernelvar)]$
and, eventually, the convergence of estimators through a control of the error of the estimator in $L^2(\rhoL)$ (see \ifPNAS \newrefs{Thm.~$1.2$ and Prop.~$1.3$ in the SI}\fi \ifarXiv see Thm.~\ref{thm_Learnability} and Prop.~\ref{convexity} \fi).  Thm.~\ref{t:coercivity} proves that the coercivity condition holds under suitable hypotheses, \revision{even independently of $N$}; 
 numerical tests suggest that it holds generically over larger classes of interaction kernels and distributions of initial conditions, for large $L$, and as long as $\rhoL$ is not degenerate, see \ifPNAS \newrefs{Fig.~{S6} in the SI}. \fi \ifarXiv Fig.~\ref{fig:LJ_coercivity_1}.\fi 
Finally, $c_L$ also controls the condition number of the matrix in the Least Squares problem yielding the estimator (see \ifPNAS \newrefs{Prop.~2.1 in the SI for details}\fi \ifarXiv Sec. \ref{sec_condN}\fi).

The next theorem proves the coercivity condition when $\probIC$ is exchangeable (i.e. the distribution is invariant under permutation of components), Gaussian, and $L=1$. Numerical tests show that the coercivity condition holds true for a larger class of interaction kernels, for various initial distributions including Gaussian and uniform distributions, and for large $L$ as long as $\rhoL$ is not degenerate. We conjecture that the coercivity condition holds true in much greater generality (but not always!), leaving a detailed investigation to future work.  
\begin{theorem}
\label{t:coercivity}
Suppose $L=1, N>1$ and assume that the distribution of $\bX(t_1)=(\bx_1(t_1),\cdots, \bx_N(t_1))$ is exchangeable Gaussian with $\mathrm{cov}(\bX_i)-\mathrm{cov}(\bX_i,\bx_{i'})=\lambda I_d$ for a constant $\lambda>0$. 
Then the coercivity condition holds true with constant $c_L=\frac{N-1}{N^2}$ on $L^{2}(\rhoL)$\revision{, and with a constant $c_{\mathcal{H}}>0$, independent of $N$, for any compact hypothesis space $\hypspace \subset L^{2}(\rhoL)$}.
 \end{theorem}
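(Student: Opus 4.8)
The plan is to expand the right-hand side of \eqref{coercivity}, isolate a "diagonal" piece that already yields the stated constant, and prove the remaining "off-diagonal" piece is nonnegative — indeed, on a compact $\hypspace$, uniformly bounded below — which is exactly what removes the $N$-dependence.

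\emph{Step 1 — reduction.} Fix $\intkernelvar\in\hypspace$ with $\intkernelvar(\cdot)\cdot\in L^2(\rhoL)$ and set $\psi(\br):=\intkernelvar(\norm{\br})\br$, an odd vector field on $\realR{d}$. Since $L=1$ and $\bX(t_1)$ is exchangeable, every pair $(\bx_i(t_1),\bx_{i'}(t_1))$ has the same law, so $\rhoL$ is the law of $r_{12}(t_1)$ and $\|\intkernelvar(\cdot)\cdot\|_{L^2(\rhoL)}^2=\E\,\norm{\psi(\br_{12})}^2=:A$ (all quantities at $t_1$, which I suppress). Expanding the square in \eqref{coercivity}, the terms with $i'=i$ or $i''=i$ vanish since $\br_{ii}=0$; separating $i'=i''$ from $i'\ne i''$ and using exchangeability to identify, for each $i$, the $(N-1)$ diagonal contributions with $A$ and the $(N-1)(N-2)$ off-diagonal ones with $B:=\E\langle\psi(\br_{12}),\psi(\br_{13})\rangle$, I get
\begin{equation*}
\frac{1}{N}\sum_{i=1}^{N}\E\,\Big\|\tfrac1N\textstyle\sum_{i'=1}^N\psi(\br_{ii'})\Big\|^2 \;=\; \frac{N-1}{N^2}\,A \;+\; \frac{(N-1)(N-2)}{N^2}\,B ,
\end{equation*}
all interchanges of $\E$ and summation being legitimate because $\E|\intkernelvar(r_{12})\intkernelvar(r_{13})\langle\br_{12},\br_{13}\rangle|\le A<\infty$ by Cauchy--Schwarz.

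\emph{Step 2 — $B\ge 0$, hence $c_L=(N-1)/N^2$.} By exchangeability $\mathrm{cov}(\bx_i)=:\Sigma$ and $\mathrm{cov}(\bx_i,\bx_{i'})=:\Gamma$ ($i\ne i'$) are index-free, and by hypothesis $\Sigma-\Gamma=\lambda I_d$. A direct computation gives $\mathrm{cov}(\br_{1j})=2(\Sigma-\Gamma)=2\lambda I_d$ for $j\in\{2,3\}$ and $\mathrm{cov}(\br_{12},\br_{13})=\Sigma-\Gamma=\lambda I_d$, so $(\br_{12},\br_{13})$ is a centered Gaussian pair with the law of $(\bc+\bh_2,\,\bc+\bh_3)$ for independent $\bc,\bh_2,\bh_3\sim\mathcal{N}(0,\lambda I_d)$. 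Conditioning on $\bc$ makes $\br_{12},\br_{13}$ conditionally i.i.d., so $\E[\langle\psi(\br_{12}),\psi(\br_{13})\rangle\mid\bc]=\norm{\E[\psi(\br_{12})\mid\bc]}^2=\norm{\bV(\bc)}^2\ge0$, where $\bV(\bc):=\E_{\bh\sim\mathcal{N}(0,\lambda I_d)}[\intkernelvar(\norm{\bc+\bh})(\bc+\bh)]$; averaging over $\bc$ gives $B=\E_\bc\norm{\bV(\bc)}^2\ge0$. With Step 1 this yields the first claim with $c_L=\tfrac{N-1}{N^2}$ for all admissible $\intkernelvar$.

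\emph{Step 3 — an $N$-free constant on compact $\hypspace$.} The key observation is that $A=A(\intkernelvar)$ and $B=B(\intkernelvar)$ depend only on $\intkernelvar$ and $\lambda$, not on $N$ (they are fixed by the laws of $r_{12}$ and of $(\br_{12},\br_{13})$, which depend only on $\lambda$). Moreover $0\le B\le A$ (Jensen), and writing $g_\lambda$ for the $\mathcal{N}(0,\lambda I_d)$ density one has $\bV=\psi\ast g_\lambda$ componentwise; since $\widehat{g_\lambda}>0$ everywhere, $B(\intkernelvar)=0$ forces $\widehat{\psi}\equiv0$, i.e. $\intkernelvar=0$ a.e. on $\R_+$, which (as $\rhoL$ has a strictly positive density, $\br_{12}$ being a non-degenerate Gaussian) means $\intkernelvar=0$ in $L^2(\rhoL)$. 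Hence on the compact set $\hypspace\subset L^2(\rhoL)$ the continuous functionals $A,B$ satisfy $B(\intkernelvar)>0$ whenever $A(\intkernelvar)>0$, and by compactness $\beta:=\inf\{B(\intkernelvar)/A(\intkernelvar):\intkernelvar\in\hypspace,\ A(\intkernelvar)>0\}>0$. Combining with Step 1, the right-hand side of \eqref{coercivity} is $\ge A\cdot h(N)$ with $h(N):=\tfrac{N-1}{N^2}\big(1+(N-2)\beta\big)$; since $h(2)=\tfrac14$ and, for $N\ge3$, $\tfrac{(N-1)(N-2)}{N^2}\ge\tfrac29$ (this factor being increasing in $N\ge2$), the coercivity condition holds on $\hypspace$ with $c_{\hypspace}:=\min\{\tfrac14,\tfrac29\beta\}>0$, independent of $N$.

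\emph{Main obstacle.} The substantive point is the positivity of the off-diagonal term $B$ — and, for the uniform-in-$N$ statement, its strict positivity on $\hypspace\setminus\{0\}$. Both rest on the conditional-independence (common-component) decomposition of $(\br_{12},\br_{13})$, which uses the \emph{exact} relation $\mathrm{cov}(\bx_i)-\mathrm{cov}(\bx_i,\bx_{i'})=\lambda I_d$ (making the shared Gaussian component isotropic) together with the everywhere-positivity of the Gaussian characteristic function. The rest — continuity of $\intkernelvar\mapsto A(\intkernelvar),B(\intkernelvar)$ (immediate once $\hypspace$ is relatively compact for the weighted norm $\|\intkernelvar(\cdot)\cdot\|_{L^2(\rhoL)}$, e.g. if $\hypspace\subset\mathcal{K}_{R,\spaceM}$ so the weight $r^2\le R^2$ on supports), the injectivity $\psi\ast g_\lambda=0\Rightarrow\psi=0$, and the Fubini interchanges — is routine given the integrability bound of Step 1.
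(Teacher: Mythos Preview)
Your Steps 1--2 are correct and yield the first claim $c_L=(N-1)/N^2$. The route, however, differs from the paper's. The paper reduces to Lemma~\ref{coerlemma} and proves it by writing
\[
B=\E\big[\intkernelvar(r_{12})\intkernelvar(r_{13})\langle\br_{12},\br_{13}\rangle\big]=\iint g(r)g(s)\,\mathcal{K}(r,s)\,dr\,ds
\]
and verifying that $\mathcal{K}$ is a positive definite integral kernel. You instead exhibit the common-component decomposition $(\br_{12},\br_{13})\stackrel{d}{=}(\bc+\bh_2,\bc+\bh_3)$ and obtain $B=\E_\bc\|\bV(\bc)\|^2\ge0$ directly. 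Your argument is more probabilistic and avoids computing $\mathcal{K}$ at all; the paper's route, in exchange, makes the bilinear structure explicit and identifies the operator whose spectrum governs the constant. The two are compatible: your conditional-expectation identity is, in effect, a factorization certifying the positive definiteness the paper asserts.

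There is a genuine gap in Step~3. From ``$B(\intkernelvar)>0$ whenever $A(\intkernelvar)>0$'' and compactness of $\hypspace$ you conclude $\beta=\inf_{\hypspace,\,A>0}B/A>0$. This inference fails in general when $0\in\hypspace$: the ratio $B/A$ is not defined at $0$, and although it is homogeneous of degree zero, the set of \emph{directions} $\{\intkernelvar/\|\intkernelvar\|:\intkernelvar\in\hypspace\setminus\{0\}\}$ need not be compact in infinite dimensions. Concretely, take $\hypspace=\{0\}\cup\{\intkernelvar_n/n:n\ge1\}$ with $\intkernelvar_n$ chosen so that $\intkernelvar_n(\cdot)\cdot$ are orthonormal in $L^2(\rhoL)$ and increasingly oscillatory; this $\hypspace$ is compact, $A(\intkernelvar_n/n)=1/n^2$, yet $B(\intkernelvar_n/n)/A(\intkernelvar_n/n)=B(\intkernelvar_n)$, and since $B$ is a smoothing (your $\bV=\psi\ast g_\lambda$) one can drive $B(\intkernelvar_n)\to0$. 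So ``by compactness $\beta>0$'' needs more: either assume $0\notin\hypspace$, or restrict to finite-dimensional $\hypspace$ (unit sphere compact), or---closer to the paper's intended use---assume $\hypspace\subset\mathcal{K}_{R,\spaceM}$ and exploit the resulting precompactness of directions. Your Fourier-injectivity step also tacitly requires $\psi$ to be tempered, which holds under $\hypspace\subset\mathcal{K}_{R,\spaceM}$ but not for arbitrary $\intkernelvar$ with $\intkernelvar(\cdot)\cdot\in L^2(\rhoL)$ (the Gaussian weight allows super-polynomial growth). State the extra hypothesis explicitly and the argument closes.
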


\revision{The constant $c_{\hypspace}$ is independent of $N$ fundamentally because of the exchangeability of the distribution;} the following lemma is key in the proof of the theorem: 
 \begin{lemma}\label{coerlemma}
Let $X, Y, Z$ be exchangeable Gaussian random vectors in $\R^d$ with $\mathrm{cov}(X)-\mathrm{cov}(X,Y)=\lambda I_d$ for a constant $\lambda>0$, and let $g:\R_+\to \R$ be a function such that $g(\cdot)\cdot\in L^2(\R_+, \rho_1))$  with the probability distribution $\rho_1(r)\propto r^{d-1}e^{-r^2/3}$. 
% $C_W(\R_+):=\{f\in C(\R_+)| \sup_{x>0} |f(x)| <\infty, \sup_{x>0} x|f(x)| <\infty\}$.  
Then
 \begin{equation*} \label{ineq_gauss}
\E\left[g(|X-Y|)g(|X-Z|)  \langle X-Y, X-Z \rangle \right] \geq 0.
 \end{equation*} 
 \revision{Moreover, for any compact hypothesis space $\hypspace \subset L^{2}(\rhoL)$, 
  \begin{equation*} \label{ineq_gauss2}
\E\left[g(|X-Y|)g(|X-Z|)  \langle X-Y, X-Z \rangle \right] \geq c_{\hypspace} \|g(\cdot)\cdot\|^2_{ L^2(\rho_1)},
 \end{equation*} 
 for some constant $c_{\hypspace}>0$.
 }
 \end{lemma}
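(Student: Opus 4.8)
\emph{Proof strategy.} The plan is to reduce the three-vector expectation to a one-vector Gaussian computation, exhibit it as a manifestly nonnegative quantity, and then obtain the quantitative bound on a compact hypothesis space from strict positivity together with a compactness argument. First I would observe that the expectation depends only on the joint law of $(X-Y,\,X-Z)$, and that this is a centered Gaussian vector in $\R^{2d}$ with covariance $\left(\begin{smallmatrix} 2\lambda I_d & \lambda I_d \\ \lambda I_d & 2\lambda I_d\end{smallmatrix}\right)$, independent of the common mean and of $\mathrm{cov}(X,Y)$. Hence I may assume $X,Y,Z$ are i.i.d.\ $\mathcal N(0,\lambda I_d)$, and (after rescaling the state variables) that $\lambda=\tfrac{3}{4}$, so that $|X-Y|$ has law $\rho_1$. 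Conditioning on $X$: given $X=x$, the vectors $x-Y$ and $x-Z$ are independent, each $\mathcal N(x,\lambda I_d)$, so with $G(x):=\E_Y[(x-Y)\,g(|x-Y|)]$ one gets
\begin{equation*}
\E\!\left[g(|X-Y|)\,g(|X-Z|)\,\langle X-Y,\,X-Z\rangle\right]=\E_X\!\left[\,\|G(X)\|^2\,\right]\ \ge\ 0 ,
\end{equation*}
which is the first inequality. Because the $\mathcal N(0,\lambda I_d)$ density is radial, $G$ is $O(d)$-equivariant, hence of the form $G(x)=\gamma(|x|)\,x/|x|$ for a scalar function $\gamma$, and the quadratic form equals $\E_X[\gamma(|X|)^2]$.

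For the second inequality I would argue by homogeneity and compactness: on the unit sphere of $\hypspace$ --- which is compact since $\hypspace$ is finite-dimensional, as in the piecewise-linear spaces used here --- the continuous quadratic form $g\mapsto\E_X[\gamma(|X|)^2]$ attains a minimum, which is positive provided the form has trivial kernel. This yields $\E_X[\gamma(|X|)^2]\ge c_\hypspace\,\|g(\cdot)\cdot\|_{L^2(\rho_1)}^2$ with $c_\hypspace>0$ depending only on $\hypspace$ and $d$; in particular not on $N$, which is precisely why this lemma produces the $N$-independence claimed in Theorem~\ref{t:coercivity}. It remains to show the form has trivial kernel, i.e.\ $\E_X[\gamma(|X|)^2]=0$ forces $g(\cdot)\cdot=0$ in $L^2(\rho_1)$.

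If the form vanishes then $G=0$ Lebesgue-a.e. Writing out the Gaussian convolution, $G(x)=c\,e^{-|x|^2/(2\lambda)}\,H(x)$ with
\begin{equation*}
H(x):=\int_{\R^d} w\,g(|w|)\,e^{-|w|^2/(2\lambda)}\,e^{\langle w,\,x\rangle/\lambda}\,dw ,
\end{equation*}
so $H=0$ a.e.\ as well. A routine Cauchy--Schwarz estimate using $g(\cdot)\cdot\in L^2(\rho_1)$ shows the $\R^d$-valued function $w\mapsto w\,g(|w|)e^{-|w|^2/(2\lambda)}$ is integrable against $e^{\langle w,\,x\rangle/\lambda}$ for every $x$, so $H$ is the two-sided Laplace transform of a finite vector measure and extends to an entire function on $\mathbb{C}^d$; vanishing a.e.\ on $\R^d$ it vanishes identically, and in particular $H(i\lambda\xi)=0$ for all $\xi$, i.e.\ the Fourier transform of $w\mapsto w\,g(|w|)e^{-|w|^2/(2\lambda)}$ is identically zero. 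Hence $w\,g(|w|)=0$ a.e., so $g=0$ a.e.\ on $\R_+$, which gives $g(\cdot)\cdot=0$ in $L^2(\rho_1)$ and completes the argument.

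The hard part is this injectivity (strict-positivity) step, and the closely related fact that the bound cannot be made uniform over all of $L^2(\rho_1)$: the map $g\mapsto\gamma$ is a Gaussian-type smoothing whose singular values decay to zero, so the coercivity estimate genuinely lives only on a compact (finite-dimensional) $\hypspace$ and the constant must be extracted from that compactness rather than from a spectral gap. The subsidiary points --- the equivariant normal form for $G$, and the integrability and analyticity needed to pass from ``the Laplace transform vanishes on $\R^d$'' to ``the Fourier transform vanishes'' --- are routine, but need some care because $g$ is assumed only to be square-integrable against the Gaussian weight $\rho_1$, not bounded or compactly supported.
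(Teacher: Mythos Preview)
Your argument is correct. The reduction to i.i.d.\ Gaussians via the covariance computation for $(X-Y,X-Z)$ is right, and the conditioning step $\E_X[\|G(X)\|^2]\ge 0$ is a clean way to obtain the nonnegativity. Your injectivity argument via analytic continuation of the Laplace transform is valid under the stated integrability, and the compactness extraction of $c_{\hypspace}$ is the natural endgame.

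The paper's own proof sketch is organized differently: it writes the expectation as a bilinear form
\[
\int\!\!\int g(r)\,g(s)\,\mathcal{K}(r,s)\,dr\,ds
\]
and verifies that $\mathcal{K}$ is a positive definite integral kernel on $\R_+\times\R_+$. Your conditioning identity is, in effect, one concrete way of exhibiting that positive definiteness (it furnishes a ``square-root'' representation $\mathcal{K}(r,s)=\int k(r,x)k(s,x)\,d\mu(x)$ implicitly), whereas the paper's route works directly with the reduced kernel in the pairwise-distance variable. The practical difference is that the paper's formulation lives natively on $\R_+$ and connects directly to the $L^2(\rho_1)$ structure of the hypothesis space, while yours stays in $\R^d$ and uses rotational equivariance to descend. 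Both reach the same conclusion; your version is arguably more transparent for the first inequality, while the kernel formulation is better suited to reading off the strict positivity from properties of $\mathcal{K}$ without the Laplace-transform detour.

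One small caveat: you read ``compact hypothesis space'' as finite-dimensional so that the unit sphere is compact. That is the case used throughout the paper, so your extraction of $c_{\hypspace}$ is adequate for the application; just be aware that compactness of $\hypspace$ as a subset of $L^2$ does not by itself make $\{g/\|g(\cdot)\cdot\|:g\in\hypspace\}$ compact, so for a literal ``compact $\hypspace$'' one would phrase the minimization slightly differently.
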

\noindent\revision{The lemma is proved by writing the above expectation as an integral 
\[ \int\int g(r)g(s) \mathcal{K}(r,s)drds,\]
and by showing that the function $\mathcal{K}(r,s)$ is a positive definite integral kernel. }

%%%% ====== THEORETICAL ANALYSIS   ==
\subsection{Optimal rates of convergence}

The classical bias-variance trade-off in statistical estimation guides the selection of the hypothesis space $\hypspace$, whose dimension will depend on $M$, the number of observed trajectories. On the one hand, $\hypspace$ should be large so that the bias (distance between the true kernel $\intkerneltrue$ and $\hypspace$) is small; on the other hand, $\hypspace$ should be small so that variance of the estimator is small. 
In the extreme case where $\hypspace=\mathcal{K}_{R, \spaceM}$, the bias is $0$, the variance of the estimator dominates, and we obtain the dimension-independent bound
$\mathbb{E}[\| \widehat\intkernel_{L,M,\hypspace}(\cdot)\cdot-\intkerneltrue(\cdot)\cdot\|_{L^2(\rhoL)} ]\leq C M^{-{1}/{4}}$  (see \ifPNAS \newrefs{Prop.~$1.5$ in the SI}\fi \ifarXiv Prop. \ref{prop_optRate}\fi).  In fact, significantly better rates may be achieved for regular $\intkerneltrue$'s:
\begin{theorem} 
\label{t:mainsimple}
Assume that $\intkerneltrue\in\mathcal{K}_{R,\spaceM}$.  
Let $\{\hypspace_n\}_n$ be a sequence of subspaces of $L^\infty([0,R])$, with  $\mathrm{dim}(\hypspace_n) \leq c_0 n$ and 
$\inf_{\intkernelvar \in \hypspace_n}\|\intkernelvar-\intkerneltrue\|_{L^\infty([0,R])}  \leq c_1 n^{-s}$, for some constants $c_0, c_1, s >0$. \revision{Assume that the coercivity condition holds on $\cup_{n=1}^\infty \hypspace_n$.}
Such a sequence exists, for example, if $\intkernel$ is $s$-H\"older regular.
Choose  $n_*=({M}/{\log M})^{{1}/{(2s+1)}}$. Then there exists a constant $C=C(c_0,c_1, R,S)$ such that
\begin{align}
 \mathbb{E}[\| \widehat\intkernel_{L,M,\hypspace_{n_*}}(\cdot)\cdot-\intkerneltrue(\cdot)\cdot\|_{L^2(\rhoL)} ] \leq \revision{\frac{C}{c_L}} \left(\frac{\log M}{M}\right)^{\frac{s}{2s+1}}\,. 
 \label{e:expectationMainBound}
 \end{align}
\end{theorem}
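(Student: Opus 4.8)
The plan is to carry out the classical bias--variance analysis of a least--squares estimator, using the coercivity condition of Def.~\ref{def_coercivity} to convert control of the error functional $\mE_{L,M}$ into control of the error in $L^2(\rhoL)$. Since the observed velocities equal $\rhsfo_\intkerneltrue(\bXm(t_l))_i$ exactly and $\intkernelvar\mapsto\rhsfo_\intkernelvar$ is linear, one has $\mE_{L,M}(\intkernelvar)=\frac1{LMN}\sum_{l,m,i}\|\rhsfo_{\intkerneltrue-\intkernelvar}(\bXm(t_l))_i\|^2$ and hence $\mE_{L,\infty}(\intkernelvar)=\frac1{LN}\sum_{l,i}\E\|\rhsfo_{\intkerneltrue-\intkernelvar}(\bX(t_l))_i\|^2$. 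The coercivity assumption on $\cup_n\hypspace_n$ says precisely that this quantity is bounded below by $c_L\|\psi(\cdot)\cdot\|_{L^2(\rhoL)}^2$ when $\psi:=\intkerneltrue-\intkernelvar\in\cup_n\hypspace_n$, while a Cauchy--Schwarz estimate together with the fact that all kernels in play vanish outside $[0,R]$ gives the matching upper bound with constant $\tfrac{N-1}{N}\le1$. Thus, on each $\hypspace_n$ (more precisely on differences of elements), the map $\psi\mapsto\mE_{L,\infty}(\intkerneltrue-\psi)$ is equivalent to $\psi\mapsto\|\psi(\cdot)\cdot\|_{L^2(\rhoL)}^2$ up to the factors $c_L$ and $1$.

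For the bias, pick $\intkernelvar_n\in\hypspace_n$ with $\|\intkerneltrue-\intkernelvar_n\|_{L^\infty([0,R])}\le c_1 n^{-s}$; since $\rhoL$ is a probability measure and $\intkerneltrue-\intkernelvar_n$ is supported in $[0,R]$, we get $\mE_{L,\infty}(\intkernelvar_n)\le R^2c_1^2n^{-2s}$ and $\|(\intkerneltrue-\intkernelvar_n)(\cdot)\cdot\|_{L^2(\rhoL)}\le Rc_1 n^{-s}$. Such $\hypspace_n$ exist for $s$--H\"older $\intkerneltrue$, e.g.\ splines or piecewise polynomials of degree $\lceil s\rceil-1$ on $[0,R]$ with $n$ equidistant knots, of dimension $\Theta(n)$. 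For the variance, write $\widehat\intkernel-\intkerneltrue=(\widehat\intkernel-\intkernelvar_n)+(\intkernelvar_n-\intkerneltrue)$, apply the coercivity lower bound to $\widehat\intkernel-\intkernelvar_n\in\hypspace_n$ (bounding $\frac1{NL}\sum_{l,i}\E\|\rhsfo_{\widehat\intkernel}-\rhsfo_{\intkernelvar_n}\|^2\le 2\mE_{L,\infty}(\widehat\intkernel)+2\mE_{L,\infty}(\intkernelvar_n)$), and use the minimizing property $\mE_{L,M}(\widehat\intkernel)\le\mE_{L,M}(\intkernelvar_n)$: this reduces everything to bounding $\sup_{\intkernelvar\in\hypspace_n,\ \|\intkernelvar\|\le B}|\mE_{L,M}(\intkernelvar)-\mE_{L,\infty}(\intkernelvar)|$ for a suitable radius $B$, plus the single--function fluctuation of $\mE_{L,M}(\intkernelvar_n)$ about its mean. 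The $M$ trajectories are i.i.d., and for $\intkernelvar$ in a bounded set each per--trajectory summand of $\mE_{L,M}$ is bounded by a constant depending only on $R,\spaceM,B$ and \emph{not} on $N$ (again because everything is supported in $[0,R]$); a Bernstein inequality plus a covering--number bound for the $\Theta(n)$--dimensional space $\hypspace_n$, at confidence level $\sim M^{-1}$, produces a deviation of order $\tfrac{n\log M}{M}$. Assembling these pieces gives
\begin{equation*}
\E\big[\,\|(\widehat\intkernel-\intkerneltrue)(\cdot)\cdot\|_{L^2(\rhoL)}^2\,\big]\ \le\ \frac{C'}{c_L^{2}}\Big(n^{-2s}+\frac{n\log M}{M}\Big),
\end{equation*}
with $C'=C'(c_0,c_1,R,\spaceM)$ independent of $N$. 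Choosing $n=n_*=(M/\log M)^{1/(2s+1)}$ balances the two terms, so the right side is $\le(C''/c_L^{2})(\log M/M)^{2s/(2s+1)}$, and Jensen's inequality $\E[\|\cdot\|]\le(\E[\|\cdot\|^2])^{1/2}$ then yields \eqref{e:expectationMainBound}.

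The step I expect to be the main obstacle is the variance estimate, for two reasons. First, because each $\hypspace_n$ is a \emph{linear} subspace (hence unbounded), the estimator $\widehat\intkernel$ from \eqref{e:estimator} is not a priori in any fixed ball, so the uniform concentration cannot be applied directly; to fix the radius $B$ one needs a high--probability lower bound on the smallest eigenvalue of the empirical normal matrix of $\hypspace_n$ --- an \emph{empirical} version of the coercivity condition, obtained by the same Bernstein/covering argument applied to the associated quadratic form and valid once $n\log M\ll M$. This is exactly where the condition number, and hence the power $c_L^{-2}$ above (equivalently $c_L^{-1}$ in \eqref{e:expectationMainBound}), enters; alternatively one sidesteps it by intersecting each $\hypspace_n$ with $\mathcal{K}_{R,\spaceM}$. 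Second, and more conceptually, the observations are not i.i.d.\ evaluations of $\intkerneltrue$ at sampled points but $N$--fold averaged linear functionals of $\intkerneltrue$ over all pairs along non--independent trajectories; verifying that the resulting variance is still only linear in $\dim\hypspace_n$ and that all constants can be taken independent of $N$ --- so that the coercivity constant $c_L$ of Thm.~\ref{t:coercivity} is the only $N$--sensitive quantity --- is the technical heart of the argument.
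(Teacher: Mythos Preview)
Your overall plan---coercivity to convert $\mE_{L,\infty}$ into an $L^2(\rhoL)$ norm, bias from an $L^\infty$-approximant $\intkernelvar_n\in\hypspace_n$, variance from concentration over $\hypspace_n$, then balancing $n^{-2s}$ against $n\log M/M$---is the paper's plan, and the two obstacles you flag at the end are the right ones. The paper resolves the unboundedness issue by your second option (working over compact convex $\hypspace$, i.e.\ intersecting the linear space with an $L^\infty$-ball), so no empirical-coercivity detour is needed.

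There is, however, a real gap in the variance step that, as written, would give only the slow rate $(\log M/M)^{s/(4s+1)}$. You say the problem ``reduces to bounding $\sup_{\intkernelvar\in\hypspace_n,\,\|\intkernelvar\|\le B}|\mE_{L,M}(\intkernelvar)-\mE_{L,\infty}(\intkernelvar)|$'' and that Bernstein plus covering ``produces a deviation of order $n\log M/M$''. But an \emph{absolute} uniform deviation over an $n$-dimensional $L^\infty$-ball is only $O(\sqrt{n\log M/M})$: the per-trajectory summands $Z_m(\intkernelvar)=\frac{1}{LN}\sum_{l,i}\|\rhsfo_{\intkerneltrue-\intkernelvar}(\bXm(t_l))_i\|^2$ are bounded by a constant depending on $R,\spaceM$, so Bernstein plus a covering at scale $\sim t$ gives $\mathbb{P}(\sup|\cdot|>t)\lesssim (C/t)^{c_0 n}e^{-cMt^2}$, hence $t\asymp\sqrt{n\log M/M}$. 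Plugging this into your display yields $c_L^{-2}\bigl(n^{-2s}+\sqrt{n\log M/M}\bigr)$, which optimizes at $n\asymp(M/\log M)^{1/(4s+1)}$ to the wrong exponent.

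What rescues the fast rate---and this is the missing idea---is a \emph{relative} (ratio-type) concentration exploiting that the $Z_m$ are nonnegative, so $\mathrm{Var}\,Z_m(\intkernelvar)\le C\,\E Z_m(\intkernelvar)=C\,\mE_{L,\infty}(\intkernelvar)$. The paper centers at the population minimizer $\widehat\intkernel_{L,\infty,\hypspace}$, defines the defect $\mathcal{D}_{\cdot}(\intkernelvar)=\mE_{\cdot}(\intkernelvar)-\mE_{\cdot}(\widehat\intkernel_{L,\infty,\hypspace})$, and combines this Bernstein condition with Prop.~\ref{convexity} (coercivity restated at the constrained minimizer) to obtain Prop.~\ref{sampleerror}:
\[
\mathbb{P}\Big\{\sup_{\intkernelvar\in\hypspace}\frac{\mathcal{D}_{L,\infty}(\intkernelvar)-\mathcal{D}_{L,M}(\intkernelvar)}{\mathcal{D}_{L,\infty}(\intkernelvar)+\epsilon}\ge\tfrac12\Big\}\ \le\ \mathcal{N}(\hypspace,c\epsilon)\,e^{-c'\,c_L M\epsilon}.
\]
Evaluating at $\intkernelvar=\widehat\intkernel_{L,M,\hypspace}$, where $\mathcal{D}_{L,M}\le0$, gives $\mathcal{D}_{L,\infty}(\widehat\intkernel_{L,M,\hypspace})<\epsilon$ directly; Prop.~\ref{convexity} then turns this into $c_L\|\widehat\intkernel_{L,M,\hypspace}-\widehat\intkernel_{L,\infty,\hypspace}\|_{L^2(\rhoL)}^2<\epsilon$ with $\epsilon\asymp n\log M/(c_L M)$. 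Adding the bias $\|\widehat\intkernel_{L,\infty,\hypspace}-\intkerneltrue\|^2\le R^2c_1^2 n^{-2s}/c_L$ recovers your final display with the correct rate. So your endpoint formula is right, but the route to it has to go through the ratio bound, not the absolute one.
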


The rate we achieve is {\em{optimal}}: it coincides with the minimax rate in the classical regression setting where one can observe directly the values of an $s$-H\"older regression function at the sample points. 
Obtaining this optimal rate in our context is perhaps surprising, because we do not observe the values $\{\intkerneltrue(r_{ii'}^{m}(t_l))\}_{l,i,i',m}$, but a ``mixture'' of them in the observed trajectory data. 
\revision{Many choices of $\{\hypspace_n\}$ are consistent with the requirements in the theorem, e.g. splines on increasingly finer grids, or band-limited functions with increasing frequency limits. These choices affect the constants in \eqref{e:expectationMainBound}, the computational complexity of computing $\smash{\widehat\intkernel_{L,M,\hypspace_{n_*}}}$, but not the rate as a function of $M$.}
\revision{While the rate is independent of the dimension $dN$ of the state space, the constant may depend on $d$ and $N$ through the coercivity constant $c_L$. However, we do expect that under rather general conditions (e.g. as shown in Thm.~\ref{t:coercivity}),  $c_L$ is, in fact, independent of $N$ -- i.e. it is a fundamental property of the mean field limit ($N\rightarrow\infty$) of the system.
}

One shortcoming of our result is that \revision{the rate is not a function of $LN^2M$} (we have $LN^2/2$ pairwise distances for each of the $M$ trajectories), but only of $M$, the number of {\em{random}} samples. Numerical experiments  (see Fig.~\ref{fig:JLMLtest} and similar experiments for the other systems \ifPNAS, reported in the SI\fi) do suggest that the estimator does improve as $L$ increases, at least to a point, limited by the information contained in a single trajectory.
Comparing to \cite{BFHM17}, where the mean field limit $N\rightarrow\infty$, $M=1$, is studied, we see the rates in \cite{BFHM17} are no better than $N^{-1/d}$, i.e. they are cursed by dimension. 
So are sparsity-based inference techniques such as those in \cite{Schaeffer6634,STW:extractingsaprsedynamics,TranWardExactRecovery,BPK2016,BCCCCGLOPPVZ2008}, 
 which also require a good dictionary of template functions, are not non-parametric (at least in the form therein presented), and lack performance guarantees except in some cases under stringent assumptions.

 \revision{Our work here may be compared with the classical parameter estimation problem for the ODE models \cite{brunel2008parameter,liang2008parameter, cao2011robust,ramsay2007parameter}, where one is interested in estimating the vector parameter $\mbf{\theta}$ in the ODE model 
$\dot{\bX}=\mbf{f}(\bX(t),t,\mbf{\theta})$ from the observation of a single noisy trajectory. Our error functional, in spirit, is the same with the gradient matching method (also called the two-stage method) used in the parameter estimation problems (see \cite{bellman1971use,varah1982spline,ramsay1996principal,pascual2000linking,timmer2000parametric}).  A challenging problem is the identifiability of $\mbf{\theta}$. We refer the reader \cite{miao2011identifiability} for the statistical analysis and \cite{ramsay2005functional} (and references therein) for a comprehensive survey of this topic.
However, the problem and approach we considered here are different from the parameter estimation problem in several aspects. First of all our state variable $\bX$ enters into the domain of the $\intkernel$ (via its ``projection'' onto pairwise distance), while the parameter vector $\mbf{\theta}$ is decoupled from the state variable $\bX$. Moreover, our estimator is nonparametric, i.e, the goal is to estimate a function $\intkernel$ (a vector infinite dimensions) instead of a finite-dimensional vector $\mbf{\theta}$ of parameters.   Finally, we establish identifiability conditions for $\intkernel$ from the perspective that the observations are i.i.d trajectories with random initial conditions, in contrast with identifiability of $\mbf{\theta}$ from observations along a fixed single trajectory with i.i.d noise. }
\revision{We would like to mention the different but related problem of inferring potentials from ground states and unstable modes, see for example \cite{BJ2012}, as well as recent results on existence and properties of ground states for systems with non-local interactions \cite{SST2015}.}

\vskip-4mm
\subsection{Trajectory-based Performance Measures}
It is important not only that $\widehat\intkernel$ is close to $\intkerneltrue$, but also that the dynamics of the system governed by $\widehat\intkernel$ approximate well the original dynamics. 
The error in prediction may be bounded trajectory-wise by a continuous-time version of the error functional, and bounded in average by the $L^2(\rhoT)$ error of the estimated kernel (further evidence of the usefulness of $\rhoT$):
\begin{proposition}\label{stateestimation}
Assume $\widehat\intkernel(\|\cdot\|) \cdot \in\mathrm{Lip}(\R ^d)$, with Lipschitz constant $C_{\rm{Lip}}$. Let $\widehat{\bX}(t)$ and $\bX(t)$ be the solutions of systems with kernels $\widehat\intkernel$ and $\intkerneltrue$ respectively, started from the same initial condition. Then for each trajectory
\begin{align*}
\smash{
\sup_{t\in[0,T]}\!\! \|\widehat{\bX}(t)- \bX(t)\|^2
\leq 2Te^{8T^2C^2_{\rm{Lip}}} \!\!\!\int_0^T\!\!\!\left\|\dot\bX(t)-\rhsfo_{\hat{\intkernelvar}}(\bX(t))\right\|^2\!\!\!dt\,,}
\end{align*}
and on average w.r.t. the distribution $\probIC$ of initial conditions:
\[
\E_{\probIC}[\sup_{t\in[0,T]} \|\widehat{\bX}(t)- \bX(t)\|] \leq  C\sqrt{N} \|\lintkernel(\cdot)\cdot-\intkerneltrue(\cdot)\cdot\|_{L^2(\rhoT)}\,,
\] where the measure $\rho_T$ is defined in \eqref{e:rhoT} and  $C=C(T, C_{\rm{Lip}})$.
\end{proposition}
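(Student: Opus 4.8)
The plan is a Gronwall estimate for the trajectory-wise bound, followed by Jensen's inequality and the defining formula \eqref{e:rhoT} of $\rhoT$ for the averaged bound. Fix an initial condition common to both systems and set $\mathbf{e}(t):=\widehat{\bX}(t)-\bX(t)$, so $\mathbf{e}(0)=\zero$. Using $\dot\bX=\rhsfo_\intkernel(\bX)$ and adding and subtracting $\rhsfo_{\widehat\intkernel}(\bX(t))$, I would write
\[
\dot{\mathbf{e}}(t)=\big(\rhsfo_{\widehat\intkernel}(\widehat\bX(t))-\rhsfo_{\widehat\intkernel}(\bX(t))\big)+\big(\rhsfo_{\widehat\intkernel}(\bX(t))-\dot\bX(t)\big)\,,
\]
where the second parenthesis is exactly the integrand appearing in the statement. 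The first step is to record that $\rhsfo_{\widehat\intkernel}$ is globally Lipschitz on $\R^{dN}$: since $\rhsfo_{\widehat\intkernel}(\bX)_i=\tfrac1N\sum_{i'}\mathbf{K}(\bx_{i'}-\bx_i)$ with $\mathbf{K}(\bz):=\widehat\intkernel(\|\bz\|)\bz$ being $C_{\rm Lip}$-Lipschitz on $\R^d$, the triangle inequality together with Cauchy--Schwarz over the $N$ components gives a Lipschitz constant $\le 2C_{\rm Lip}$ for $\rhsfo_{\widehat\intkernel}$ on $\R^{dN}$; the $\tfrac1N$ averaging in \eqref{e:firstordersystemsimple} is essential here, as it keeps this constant independent of $N$.

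Integrating the displayed identity from $0$ and using $\mathbf{e}(0)=\zero$ yields $\|\mathbf{e}(t)\|\le 2C_{\rm Lip}\int_0^t\|\mathbf{e}(s)\|\,ds+\int_0^t\|\dot\bX(s)-\rhsfo_{\widehat\intkernel}(\bX(s))\|\,ds$. Squaring, using $(a+b)^2\le 2a^2+2b^2$, applying Cauchy--Schwarz to each time integral (and $t\le T$), and then invoking Gronwall's inequality for $u(t):=\|\mathbf{e}(t)\|^2$ produces
\[
\sup_{t\in[0,T]}\|\mathbf{e}(t)\|^2\le 2T\,e^{8T^2C^2_{\rm Lip}}\int_0^T\|\dot\bX(t)-\rhsfo_{\widehat\intkernel}(\bX(t))\|^2\,dt\,,
\]
which is the first claim.

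For the averaged bound, take square roots, then the supremum over $t$, then expectation over $\bX_0\sim\probIC$, and apply Jensen's inequality to the concave map $\sqrt{\cdot}$ to pull the expectation inside the square root. It then remains to bound $\E_{\probIC}\int_0^T\|\dot\bX(t)-\rhsfo_{\widehat\intkernel}(\bX(t))\|^2\,dt$ by a multiple of $\|\widehat\intkernel(\cdot)\cdot-\intkernel(\cdot)\cdot\|_{L^2(\rhoT)}^2$. Since $\dot\bX-\rhsfo_{\widehat\intkernel}(\bX)=\rhsfo_\intkernel(\bX)-\rhsfo_{\widehat\intkernel}(\bX)$, its $i$-th block is $\tfrac1N\sum_{i'}\big(\intkernel(r_{ii'})-\widehat\intkernel(r_{ii'})\big)\br_{ii'}$; convexity of $\|\cdot\|^2$ bounds its squared norm by $\tfrac1N\sum_{i'}|f(r_{ii'})|^2$ with $f(r):=(\intkernel(r)-\widehat\intkernel(r))r$, so summing over $i$ gives $\|\dot\bX-\rhsfo_{\widehat\intkernel}(\bX)\|^2\le\tfrac2N\sum_{i<i'}f(r_{ii'})^2$. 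Taking $\E_{\probIC}\int_0^T(\cdot)\,dt$ and comparing with \eqref{e:rhoT} turns the right-hand side into $\tfrac2N\binom N2 T\,\|f\|_{L^2(\rhoT)}^2=(N-1)T\,\|f\|_{L^2(\rhoT)}^2$. Substituting into the Jensen bound yields $\E_{\probIC}[\sup_{t\in[0,T]}\|\mathbf{e}(t)\|]\le\sqrt{2(N-1)}\,T\,e^{4T^2C^2_{\rm Lip}}\,\|\widehat\intkernel(\cdot)\cdot-\intkernel(\cdot)\cdot\|_{L^2(\rhoT)}$, i.e. the asserted inequality with $C=\sqrt2\,T e^{4T^2C^2_{\rm Lip}}$.

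The only genuinely delicate point is the very first step: showing that the Lipschitz constant of $\rhsfo_{\widehat\intkernel}$ on the full state space $\R^{dN}$ is controlled by $C_{\rm Lip}$ \emph{uniformly in $N$}, which relies on the $\tfrac1N$ normalization in the model; the remainder of the argument is Gronwall's inequality, Cauchy--Schwarz, Jensen's inequality, and unwinding the definition of $\rhoT$, none of which presents an obstacle.
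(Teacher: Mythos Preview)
Your argument is correct and is precisely the intended one: decompose $\dot{\mathbf e}(t)$ into a Lipschitz piece and the residual $\rhsfo_{\widehat\intkernel}(\bX)-\dot\bX$, apply Cauchy--Schwarz and Gronwall to obtain the trajectory-wise bound with the exact constant $2Te^{8T^2C_{\rm Lip}^2}$, then pass to expectation via Jensen and convert the double sum over pairs into the $L^2(\rhoT)$ norm using \eqref{e:rhoT}, picking up the factor $(N-1)T$ and hence $\sqrt N$. The paper states the proposition (both as Prop.~\ref{stateestimation} and as Prop.~\ref{Trajdiff} in the SI) without spelling out the proof, but the constants in the statement pin down exactly this route, and your computation of the Lipschitz constant $2C_{\rm Lip}$ for $\rhsfo_{\widehat\intkernel}$ on $\R^{dN}$ (using the $1/N$ normalization) is the only step requiring any care.
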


%%%%==========   Section: Extension to general systems
\section{Extensions: Heterogeneous agent systems, first and second order}\label{s:extensions}
The method proposed extends naturally to a large variety of interacting agent systems arising in a multitude of applications \cite{Shoham}, including systems with multiple types of agents, driven by second order equations, and including interactions with an environment. 
For detailed discussions of related topics on self-organized dynamics, we refer the readers to \cite{CD2011, CM2008, GC2004, KMAW2002, vicsek2012collective} and the recent surveys \cite{CPT2014,  CFTV2010}.
% and the applications to game theory in \cite{HS1998, HCM2003, LL2007, NCM2010, NCM2011}.}

\subsection{First Order Heterogeneous Agents Systems}\label{s:SI_first_order}

Let the agents be divided into $\numcl$ disjoint sets $\{\cl_\idxcl\}_{\idxcl=1}^\numcl$ (``types''), with different interaction kernels for each ordered pair of types:
\begin{equation}
\label{e:firstordersystem}
  \dot{\bx}_i(t) = \sum_{i' = 1}^{N}\frac{1}{N_{\clof_{i'}}}\intkernel_{\clof_i\clof_{i'}}(r_{ii'}(t))\br_{ii'}(t)\,,
\end{equation}
where $\clof_i$ is the index of the type of agent $i$, i.e. $i\in C_{\clof_i}$; $N_{\clof_{i'}}$ is the number of agents in type $C_{\clof_{i'}}$; $\br_{ii'} = \bx_{i'} - \bx_i$ and $r_{ii'} = \norm{\br_{ii'}}$; $\intkernel_{\idxcl\idxcl'}:\R_+\rightarrow\R$ is the interaction kernel governing how agents in type $\cl_{k'}$ influence agents in type $\cl_k$.  As usual we let $\bX:=(\bx_i)_{i=1}^N\in\R^{\dim N}$ be the vector describing the state of the system.  We assume that the interaction kernels $\smash{\intkernel_{\clof_i\clof_{i'}}}$'s are the only unknown factors in the model; in particular we know the sets $\cl_k$'s (i.e. the type of each agent is known). The goal is to infer the interaction kernels $\intkernel_{\idxcl\idxcl'}$ from observations $\{\bX^{m}(t_l)\}_{l, m = 1}^{L, M}$ with $0=t_1<\dots<t_l=T$ and with the initial conditions $\bX^{m}(0)=\bX_0^{m}$ randomly sampled from $\probIC$.

Let $\rhsfo_{\bintkernel}(\bX^{m})\in\R^{dN}$ to be the vectorization of the right hand sides of \eqref{e:firstordersystem}, and $\bintkernel=(\intkernel_{\idxcl\idxcl'})_{\idxcl,\idxcl'=1}^{\numcl}$. Dropping from the notation of quantities that are assumed known, we  rewrite the equations for the dynamics in \eqref{e:firstordersystem} as
$
  \dot{\bX}^{m} = \rhsfo_{\bintkernel}(\bX^{m})
$.
We use an error functional similar to \eqref{e:firstordersystem_eef}, with a weighted norm, to define the estimators:
\begin{equation}\label{e:discretecomputedfirstordererror_edmc}
\blintkernel := \argmin{\bintkernelvar\in\hypspace}
\frac{1}{ML}\sum_{m = 1, l = 1}^{M, L}\norm{\dot{\bX}^{m}(t_l) - \rhsfo_{\bintkernelvar}(\bxm(t_l))}^2_{\mathcal{S}},
\end{equation}
where $\bintkernelvar =(\intkernelvar_{\idxcl\idxcl'})_{\idxcl,\idxcl'=1}^{\numcl}$, $\blintkernel = (\lintkernel_{\idxcl\idxcl'})_{\idxcl,\idxcl'=1}^{\numcl}$ and $\norm{\bX}^2_{\mathcal{S}} := \sum_{i = 1}^N\frac1{{N_{\clof_i}}}\norm{\bx_i}^2$.
The weighted norm $\norm{\cdot}^2_{\mathcal{S}}$ is introduced so that, when different types of agents have significantly different cardinalities (e.g. a large number of preys vs. a single predator), the error functional will take into suitable consideration the least numerous type. Otherwise only the interaction kernel of the most numerous type of agents would be accurately learned. Other more general weighting strategies may be considered, with minimal changes to the algorithm.

The generalization of $\rhoL$ in \eqref{e:rhoL} (similarly for $\rhoT$) to the heterogeneous-agent case is the family, indexed by ordered pairs $\{(k,k')\}_{k,k'\in \{1,\dots,K\}}$, of probability measures on $\R_+$
\begin{equation}\label{e:rhot_norm_mc} 
\rhoT^{L,\idxcl\idxcl'}(r)  = \displaystyle \frac{1}{LN_{\idxcl\idxcl'}}\sum_{l= 1}^{L}\E_{\mathbf{X}_0\sim\probIC}\!\!\!\!\!\!\!\!\!\sum_{\substack{i \in C_{\idxcl}, i' \in C_{\idxcl'}, i\neq i'}}\!\!\delta_{r_{ii'}(t_l)}(r),
\end{equation}
where $N_{\idxcl\idxcl'} = N_{\idxcl}N_{\idxcl'}$ when $\idxcl \neq \idxcl'$ and $N_{\idxcl\idxcl'} = {{N_{\idxcl}}\choose 2}$ when $\idxcl = \idxcl'$ (for $N_k>1$, otherwise there is no interaction kernel to learn).  
The error of an estimator, $\lintkernel_{\idxcl\idxcl'}$, will be measured by $\norm{\lintkernel_{\idxcl\idxcl'}(\cdot)\cdot - \intkernel_{\idxcl\idxcl'}(\cdot)\cdot}_{L^2(\rhoT^{L,\idxcl\idxcl'})}$.

While this case requires learning multiple interaction kernels, it turns out that the learning theory developed for the single-type agent systems can be generalized, and the estimator in \eqref{e:discretecomputedfirstordererror_edmc} still achieves optimal rates of convergence, and a similar control on the error of predicted trajectories can be obtained.

\subsection{Second order heterogeneous agent systems}
\label{s:secondordersystem}
Here we focus on a broad family of second order multi-type agent systems (not included, even when rewritten as first order systems, in the family discussed above).
We consider systems with $K$ types of agents:
\begin{align}
\!\!\!\left\{
\begin{aligned}
\!  m_i\ddot{\bx}_i &= \forcev_i(\dot\bx_i,\xi_i)  + \sum_{i' = 1}^N\frac{1}{N_{\clof_{i'}}}\big(\intkernele_{\clof_i\clof_{i'}}(r_{ii'})\br_{ii'} + \intkernela_{\clof_i\clof_{i'}}(r_{ii'})\dot\br_{ii'}\big)\\
\!\!  \dot{\xi}_i    &= \forcexi_i(\xi_i) + \sum_{i' = 1}^N\frac{1}{N_{\clof_{i'}}}\intkernelxi_{\clof_i\clof_{i'}}(r_{ii'})\xi_{ii'}\,, 
\end{aligned}
\raisetag{2.5\baselineskip}
\label{e:secondorder}
\right.
\end{align}
for $i=1,\ldots,N$.  
Here $\clof_{i}\in\{1,\dots,\numcl\}$ is the type of agent $i$, $\xi_i\in\R$ is a variable modeling the agent's response to the environment (e.g. food/light source), $\xi_{ii'} = \xi_{i'} - \xi_i$, and:
\begin{center} \vspace{-1mm}
\footnotesize{\begin{tabular}{ ll }
$m_i$, $N_k$  & mass of agent $i$ and number of agents of type $k$\\
$\forcev_i$, $\forcexi_i$  & non-collective influences on $\dot\bx_i$ and $\xi_i$ \\ 
$\smash{\intkernele_{\idxcl\idxcl'}}$, $\smash{\intkernela_{\idxcl\idxcl'}}$ & energy- and alignment- type interaction kernels \\ 
\end{tabular}}  
\end{center}
Note that here each agent is influenced by a weighted sum of different influences over agents of different types, leading to a rich family of models (including but not limited to prey-predator, leader-follower, cars-pedestrian models).
 Using vector notation, let $\rhsfo_{\bintkernel^E}(\bX^m)$ and $\rhsfo_{\bintkernel^A}(\bX^m, \dot{\bX}^{m}) \in \R^{dN}$ be the collection of the energy and alignment induced interaction terms respectively, and $\mF^{\bv}(\dot{\bX}^{m}, \Xi^{m})_i = \forcev_i(\dot\bx_i,\xi_i)$ (similar setup for $\mF^{\xi}(\Xi^{m})$ and $\rhsfo_{\bintkernel^{\xi}}(\bX^m, \Xi^{m})$) we can rewrite the equations as:
\begin{equation}
\left\{
\begin{aligned}
  \ddot{\bX}^{m} &= \mF^{\bv}(\dot{\bX}^{m}, \Xi^{m}) + \rhsfo_{\bintkernel^E}(\bX^m) + \rhsfo_{\bintkernel^A}(\bX^m, \dot{\bX}^{m})\\
  \dot{\Xi}^{m} &= \mF^{\xi}(\Xi^{m}) + \rhsfo_{\bintkernel^{\xi}}(\bX^m, \Xi^{m})\,,
\end{aligned}
\right.
\label{e:secondordercompact}
\end{equation}
where $\bintkernel^E =\{\intkernel_{\idxcl\idxcl'}^E\}$, $\bintkernel^A =\{\intkernel_{\idxcl\idxcl'}^A\}$ and $\bintkernel^\xi = \{\intkernel^\xi_{\idxcl\idxcl'}\}$, with $\idxcl,\idxcl'=1,\ldots,\numcl$.
We assume that the interaction kernels are the only unknowns in the model, to be estimated from the observations $\{\bXm(t_l), \dot{\bX}^{m}(t_l),  \Xi^{m}(t_l)\}_{l, m = 1}^{L, M}$, with $M$ initial conditions $\bX_0^{m}:=\bX^{m}(0)$, $\dot\bX^m_0:=\dot\bX^{m}(0)$, and $\Xi_0^{m}:=\Xi^{m}(0)$ sampled independently from $\probIC^{\bX}$, $\probIC^{\dot\bX}$, and $\probIC^{\Xi}$ respectively. 
With $\ddot{\bX}^{m}(t_l)$ approximated by finite difference, we construct estimators similar to those in \eqref{e:firstordersystem_eef} 
\begin{equation}\label{e:discretecomputedsecondordererror_v_edmc} 
\begin{aligned}
(\blintkernel^E,\blintkernel^A)&:=\!
\argmin{\bintkernelvar^E, \bintkernelvar^A \in \hypspace^{\bv}} \!\!\frac{1}{ML}\!\!\sum_{m,l=1}^{M, L }||\ddot{\bX}^m(t_l) - \mF^{\bv}(\dot{\bX}^m(t_l), \Xi^{m}(t_l))  \\
&\quad - \rhsfo_{\bintkernelvar^E}(\bX^m(t_l)) - \rhsfo_{\bintkernelvar^A}(\bX^m(t_l), \dot{\bX}^{m}(t_l))||_{\mathbb{S}}^2\,,
\end{aligned}
\end{equation}
and the interactions acting on the auxiliary variable $\xi_i$ can be solved for separately as
\[
\blintkernel^{\xi} := \argmin{\bintkernel^\xi \in \hypspace^\xi}\frac{1}{ML}\!\!\!\sum_{m = 1, l = 2}^{M, L}\!\!\!||{\dot{\Xi}^{m}_l - \mF^{\xi}(\Xi^{m}_l) - \rhsfo_{\bintkernel^{\xi}}(\bX^m_l, \Xi^{m}_l)}||_{\mathbb{S}},
\]
where $\dot{\Xi}^{m}_l = \dot\bX^m(t_l)$, $\bX^m_l = \bX^m(t_l)$, $\Xi^{m}_l = \Xi^m(t_l)$, $\blintkernel^{\xi} = \{\lintkernel_{\idxcl\idxcl'}^\xi\}_{\idxcl,\idxcl'=1}^\numcl$, and the state space norm $||\cdot||_{\mathbb{S}}$ is defined similarly to the first order case.  
Here we are using a vectorized notation for $\bintkernelvar^E, \bintkernelvar^A$, $\hypspace^{\bv}$ (a suitable product hypothesis space).
In order to measure performance, for each pair $(k,k')$, we define a probability measure on $\R_+\times\R_+$
\begin{equation*}%\label{e:rhot_norm_mc2} 
\rho_{T}^{\idxcl\idxcl'}\!\!(r,\dot r)  = \displaystyle \frac{1}{TN_{\idxcl\idxcl'}}\int_{t = 0}^T\E\!\!\!\!\sum_{\substack{i \in C_{\idxcl}, i' \in C_{\idxcl'} i\neq i'}}\!\!\!\!\!\!\!\!\!\!\delta_{r_{ii'}(t),\dot r_{ii'}(t)}(r,\dot r) dt\,,
\end{equation*}
and another probability measure on $\R_+ \times \R_+$,
\begin{equation*}%\label{e:rhot_norm_mc2_xi} 
\rho_{T, r, \xi}^{L,\idxcl\idxcl'}(r, \xi)  = \displaystyle \frac{1}{LN_{\idxcl\idxcl'}}\sum_{l= 1}^{L} \E\!\!\!\!\!\!\sum_{\substack{i \in C_{\idxcl}, i' \in C_{\idxcl'}, i\neq i'}}\!\!\!\!\!\!\delta_{r_{ii'}(t_l), \xi_{ii'}(t)}(r, \xi)\,,
\end{equation*}
where the expectation is with respect to initial conditions distributed according to $\probIC^{\bX}\times\probIC^{\dot\bX}\times\probIC^{\Xi}$, and 
we let $\dot r=\norm{\dot\br}$ (with abuse of notation), $\xi_{ii'}(t) = \abs{\xi_{i'}(t) - \xi_i(t)}$, $N_{\idxcl\idxcl'} = N_{\idxcl}N_{\idxcl'}$ if $\idxcl\neq\idxcl'$ and $N_{\idxcl\idxcl'}= {N_{\idxcl}\choose{2}}$ if $\idxcl=\idxcl'$ (and $N_{\idxcl}>1$, as there is no kernel to learn if $N_{\idxcl}=1$).  Let $\rho_{T,r}^{\idxcl\idxcl'}$ be the marginal of $\rho_{T}^{\idxcl\idxcl'}$ with respect to $r$.  We will measure the errors for $\lintkernel^E_{\idxcl\idxcl'}(r)r$, $\lintkernel^A_{\idxcl\idxcl'}(r)\dot r$ and $\blintkernel^{\xi}_{\idxcl\idxcl'}(r)\xi$ in $L^2(\rho_{T,r}^{\idxcl\idxcl'})$, $L^2(\rho_{T}^{\idxcl\idxcl'})$ and $L^2(\rho_{T, r, \xi}^{\idxcl\idxcl'})$ respectively.

The algorithm to construct the estimator in \eqref{e:discretecomputedsecondordererror_v_edmc} generalizes that for the first order single-type agent systems, and involves a least squares problem with a structured matrix with $\numcl^2$ vertical blocks indexed by $(\idxcl,\idxcl')$, accommodating the estimators for the interaction kernels. Note that such LS problem takes into account, as it should, the dependencies in learning the various interaction kernels, all at once.

\begin{figure}[!h]
\begin{subfigure}[b]{0.49\textwidth}    \centering
  \includegraphics[width=\textwidth]{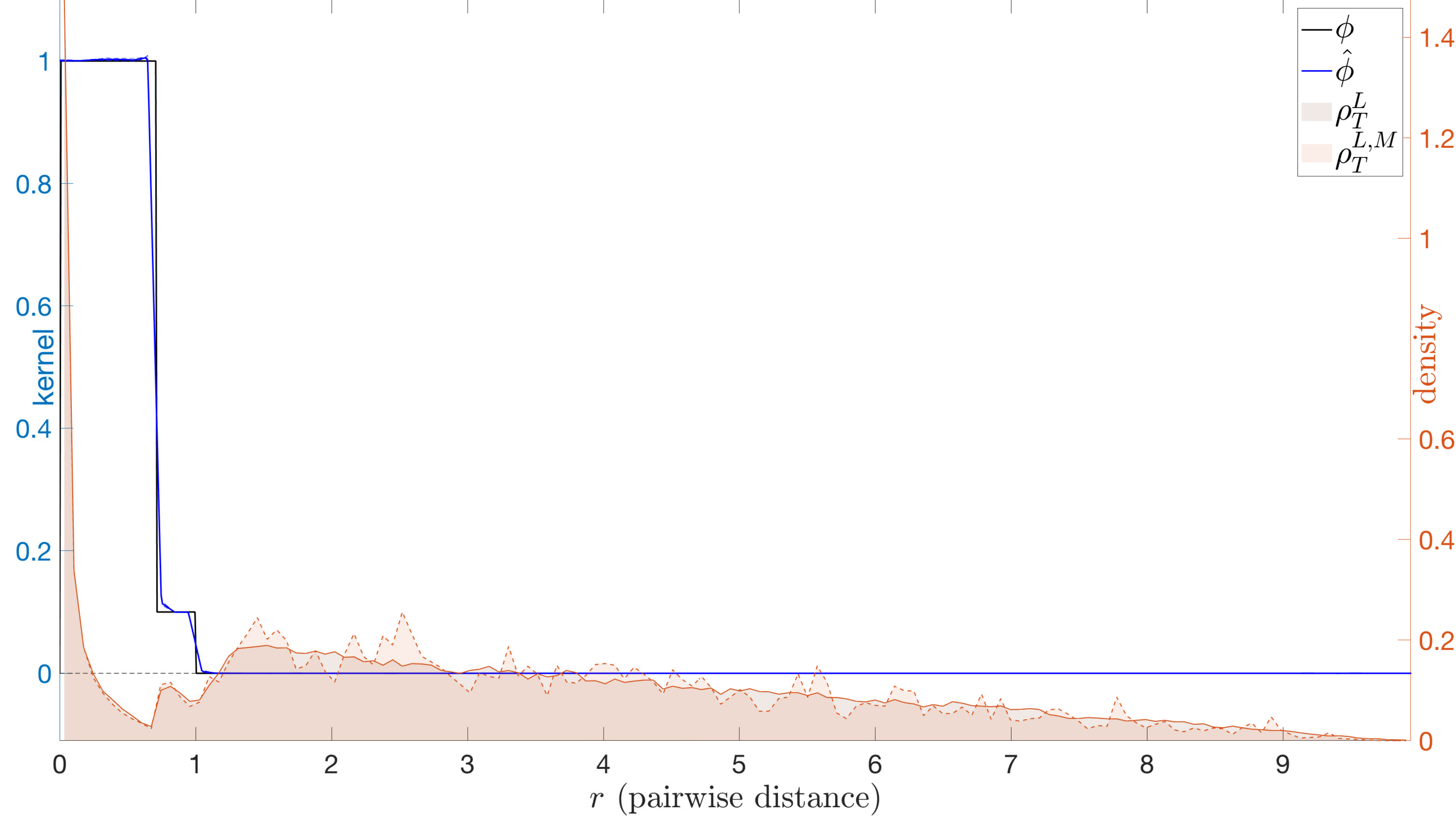}
\end{subfigure}
\begin{subfigure}[b]{0.49\textwidth}    \centering
  \includegraphics[width=\textwidth]{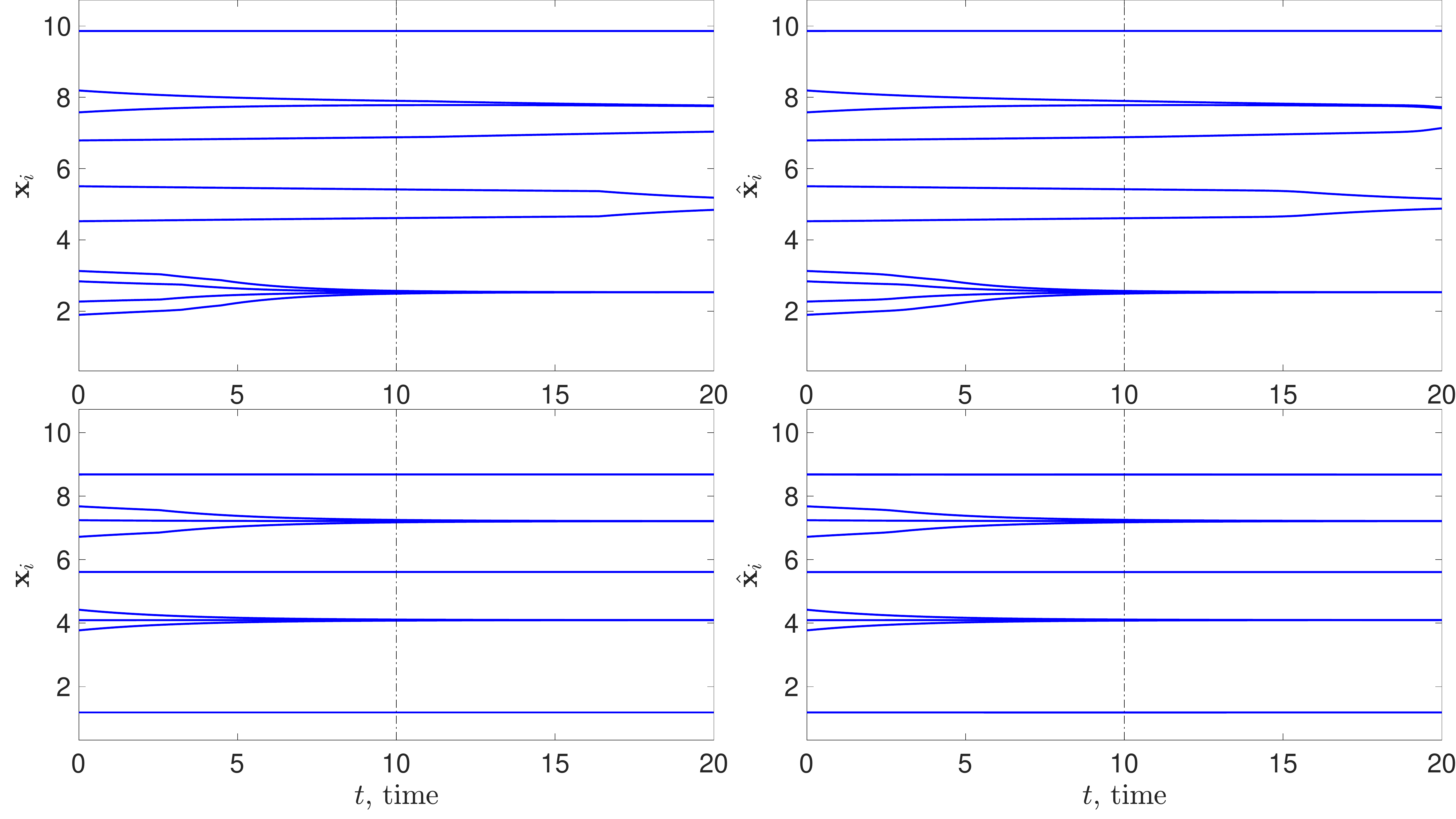}
\end{subfigure} 
\mycaption{
\textbf{Opinion dynamics.} {\textit{Top:}} Comparison between true and estimated interaction kernel, together with histograms for $\rhoL$ and $\smash{\rho_T^{L, M}}$. The mean and standard deviation of the relative error for the interaction kernel are $1.6\cdot10^{-1} \pm 2.3 \cdot10^{-3}$ over 10 independent learning runs. The standard deviation lines (in dash lines) on the estimated kernel are so small to be barely visible.
{\textit{Bottom:}} Trajectories $\bX(t)$ and $\smash{\widehat{\bX}(t)}$ obtained with $\intkernel$ and $\smash{\lintkernel}$ respectively, for an initial condition in the training data (top) and an initial condition randomly chosen (bottom). The black dashed vertical line at $t = T$ divides the ``training'' interval $[0, T]$ from the ``prediction'' interval $[T, T_f]$ (which in this case, $T_f = 2T$).  We achieve small errors in all cases, in particular predicting number and location of clusters for large time.}
\label{fig:example_main_OD}
\end{figure}

We note that while of course the second order system may be written as a first order system in the variables $\bx_i$ and $\bv_i = \dot{\bx}_i$; even when $\forcev_i\equiv0$ and $\intkernela_{\clof_i,\clof_{i'}} \equiv 0$, the resulting equations for $(\bx_i, \bv_i)$ are different from those governing the first order systems considered above in \eqref{e:firstordersystem}.  

\begin{figure}
\begin{subfigure}[b]{0.49\textwidth} \centering
  \includegraphics[width=\textwidth]{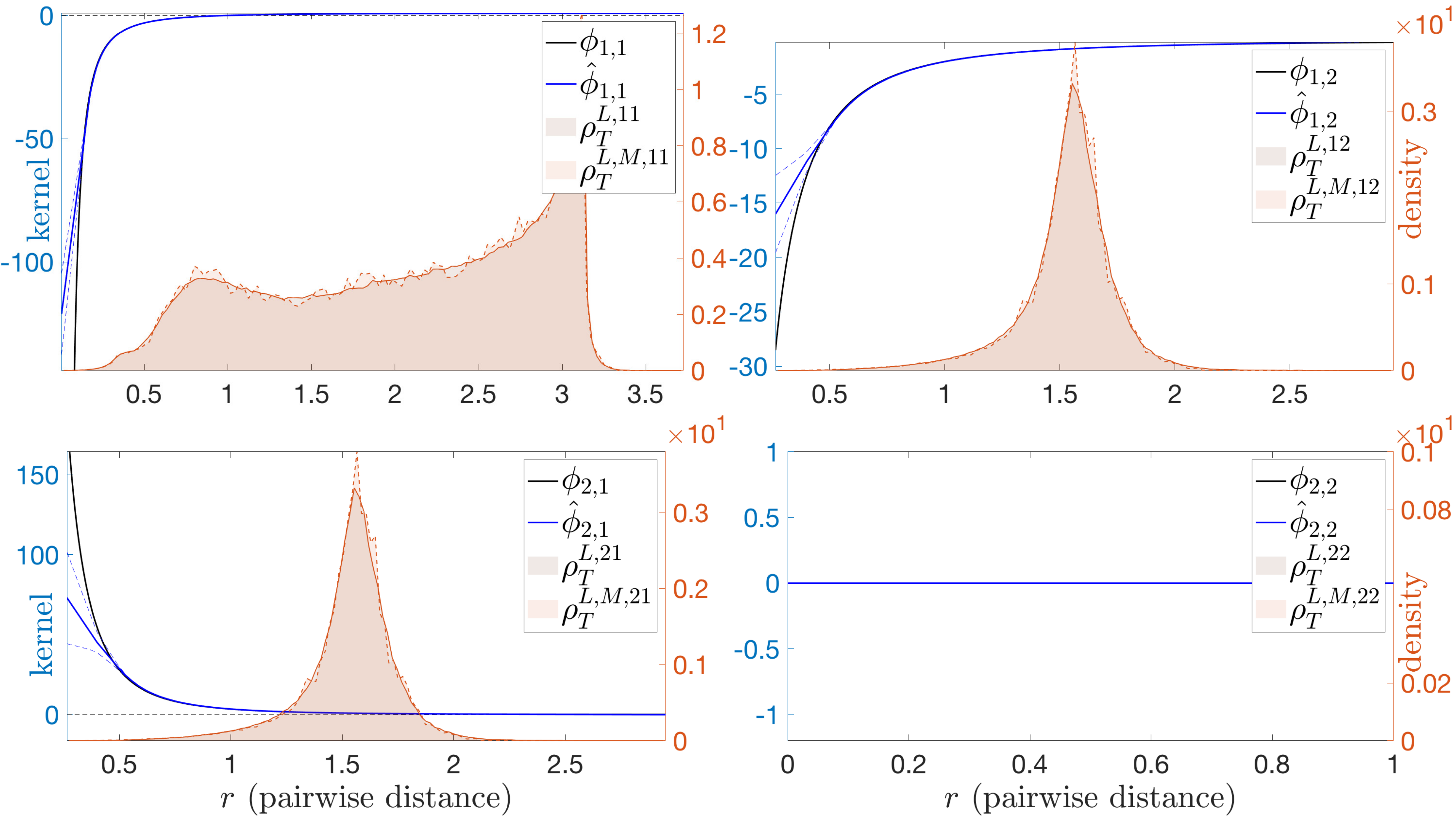}
\end{subfigure}
\begin{subfigure}[b]{0.49\textwidth}   \centering
  \includegraphics[width=\textwidth]{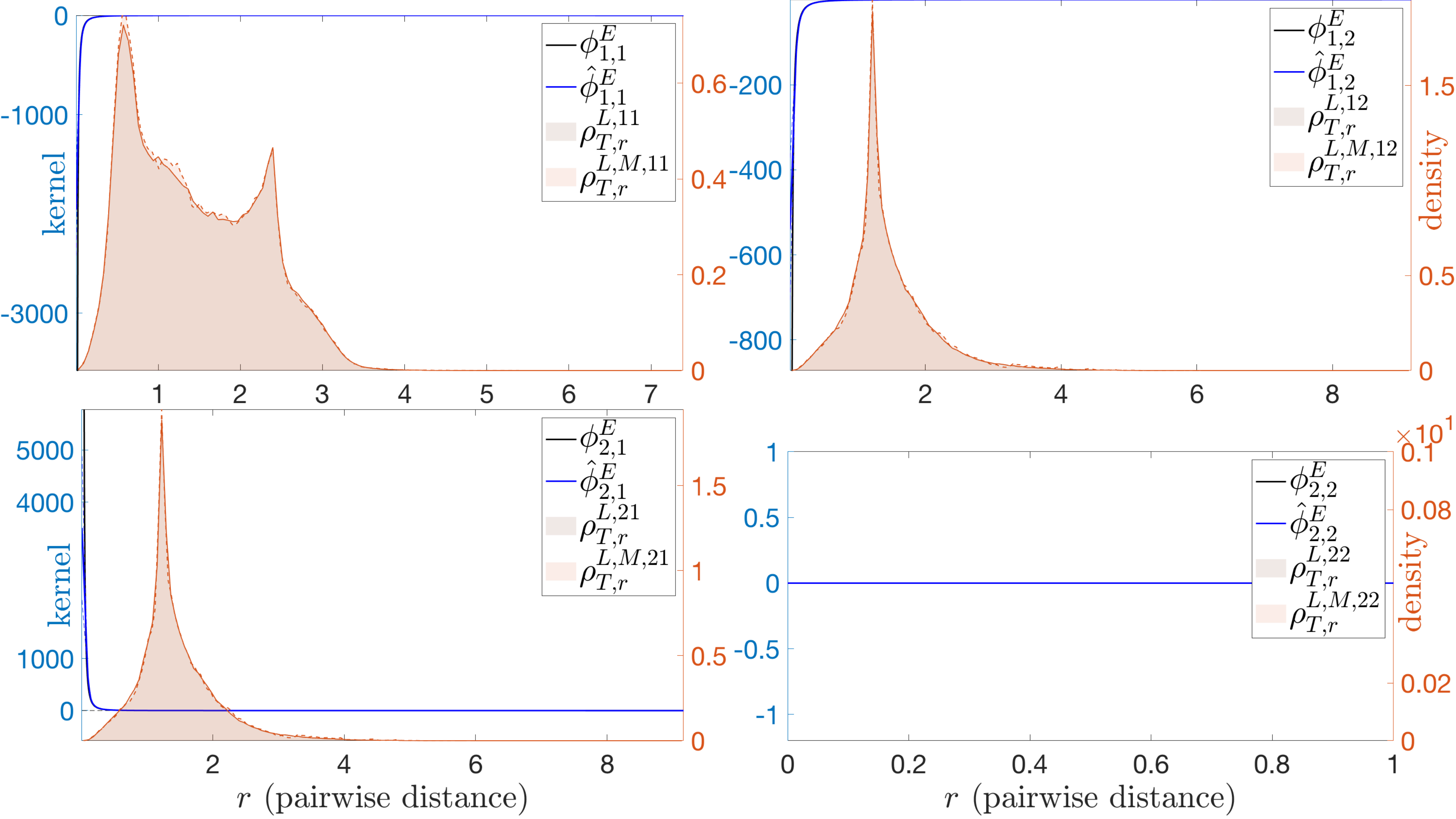}
\end{subfigure}
\begin{subfigure}[b]{0.49\textwidth}    \centering
  \includegraphics[width=\textwidth]{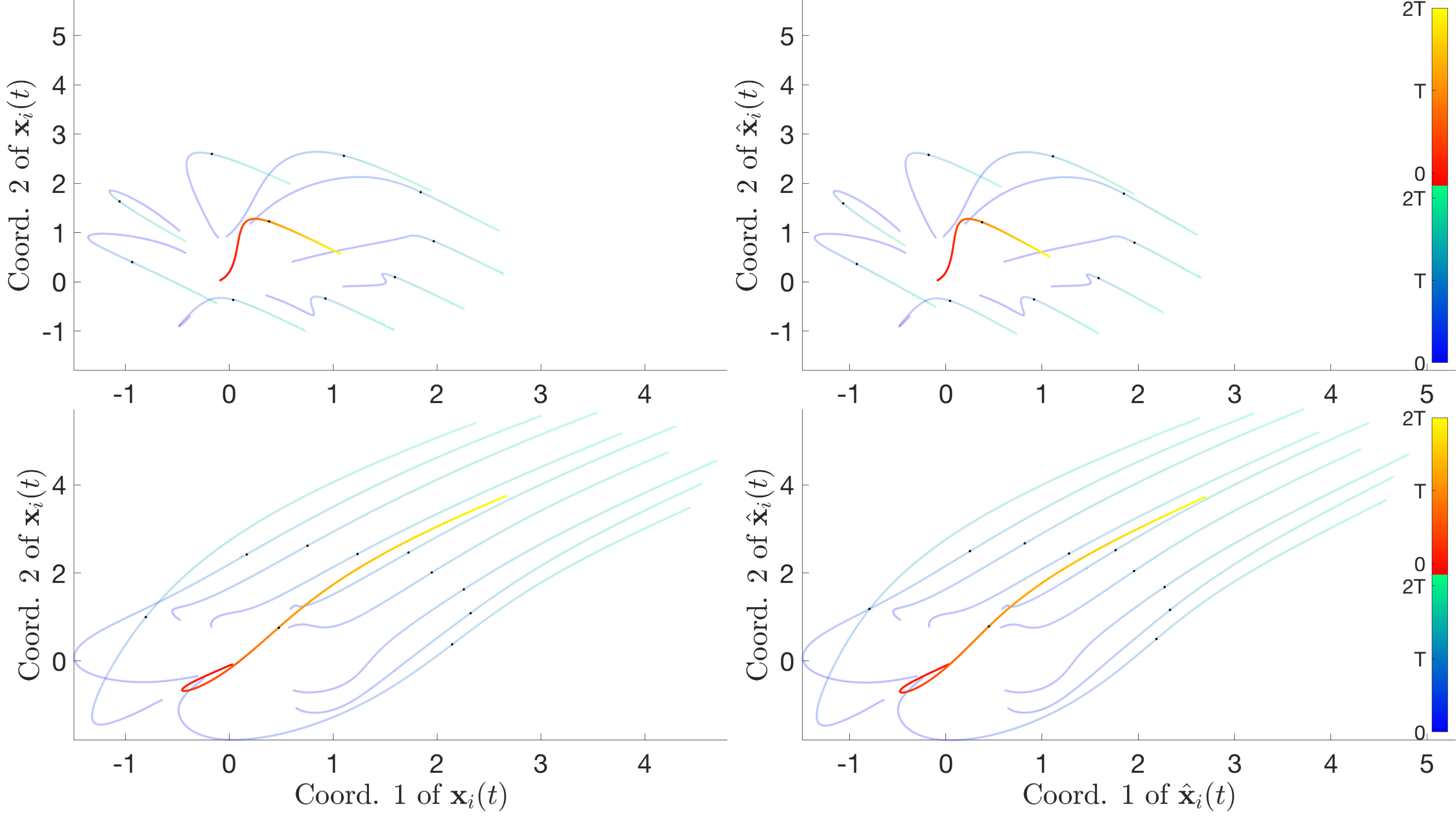}
    \mycaption{ {\em{Predator-Prey, $1^{st}$ order}}.   }
\end{subfigure}
\begin{subfigure}[b]{0.49\textwidth} \centering
  \includegraphics[width=\textwidth]{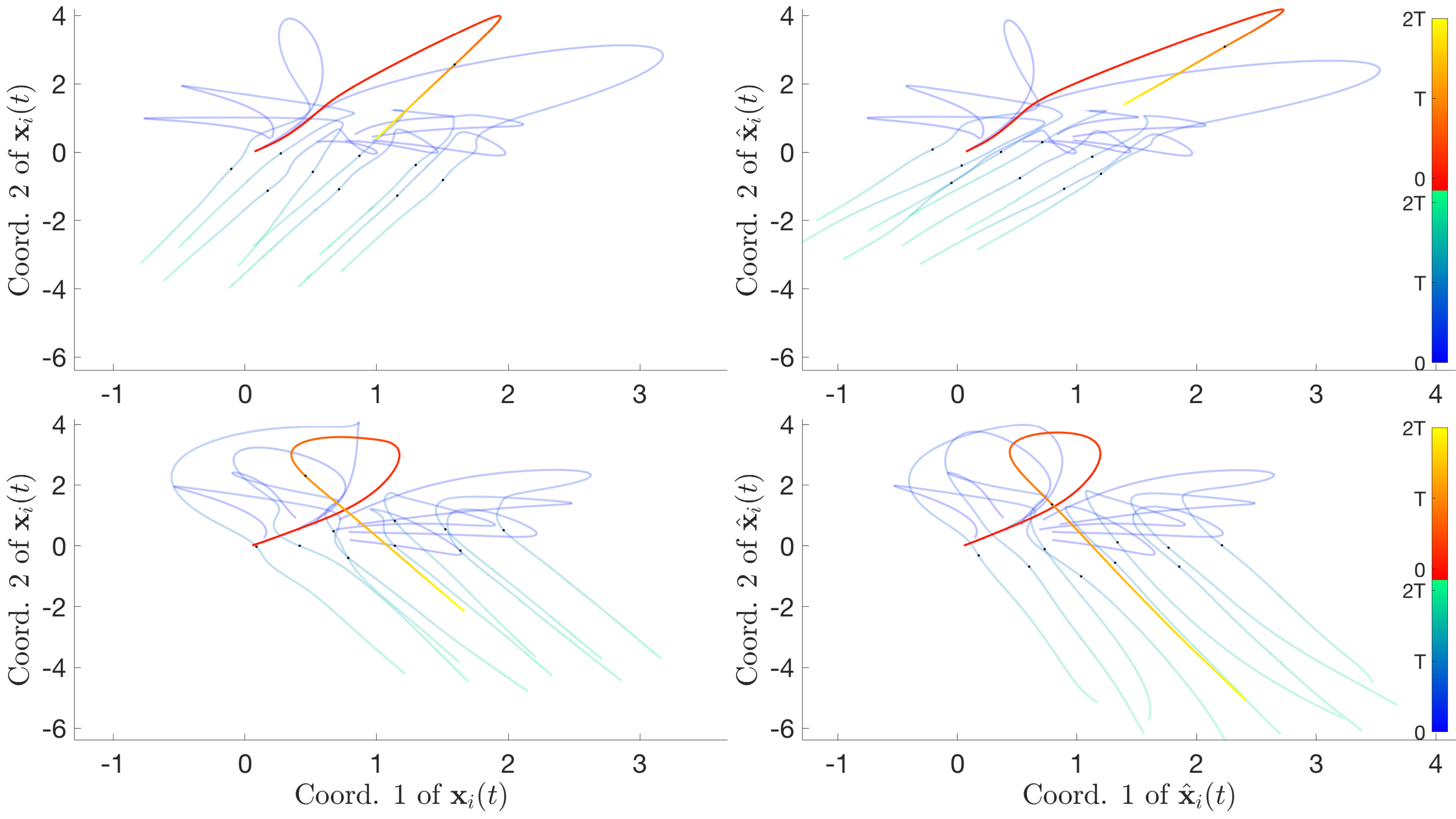}
    \mycaption{ {\em{Predator-Prey, $2^{nd}$ order}}.      }
\end{subfigure}
\mycaption{\textbf{Estimation of interaction kernels and trajectory prediction for Predator-Swarm $1^{st}$ and $2^{nd}$ order Systems.} Results for the $1^{st}$ and $2^{nd}$ order Predator-Swarm systems, as described in Section \ref{s:extensions}-\ref{s:examples}. For each system (corresponding to each column), the {\textit{top row}} represent $\intkerneltrue_{k,k'}$ and $\smash{\widehat\intkernel_{k,k'}}$, superimposed with the histograms of $\rhoL$ (estimated from a large number of trajectories, outside of training data) and $\smash{\rho_T^{L,M}}$ (estimated from the $M$ training data trajectories, see \ifPNAS Eq. ($5$) in SI \fi \ifarXiv \eqref{e:rhoLM} \fi). The {\textit{bottom row}} show trajectories $\bX(t)$ and $\smash{\widehat\bX(t)}$ of the corresponding (original and estimated) systems, evolved from the same ICs as the training data (3rd row) and newly sampled ICs (4th row), over both the training time interval $[0,T]$ and in the future $[T,2T]$ (see color bars; the black dots in the trajectories correspond to $t = T$). \revision{For trajectories generated by the Predator-Swarm system, red-to-yellow lines indicate the movement of predators, whereas the blue-to-green lines indicate the movement of preys. The color gradients indicate time, see the colors scales on the side of the plots.} The estimators $\smash{\hat\intkernel_{k,k'}}$ perform extremely well: with negligible differences in the regions with large $\rhoL$ and with possibly larger errors in regions with small $\rhoL$ (where the standard deviations over $10$ independent learning runs become visible). The $L^2(\rhoL)$ errors of the estimators are reported numerically in \ifPNAS \newrefs{Sec. 3 in the SI}\fi \ifarXiv Sec.~\ref{s:SIExamples} \fi. Note that they are truncated to a constant while preserving continuity, when there are no samples (e.g.  $r$ near $0$ or $r$ very large). The measure $\rhoL$ is quite smooth but can have interesting features; $\rho_T^{L,M}$ is typically a noisy version of $\rhoL$. The trajectories of the estimated system are typically good approximations to those of the original system, on both ICs in the training data and newly sampled ICs. The error of the estimated trajectories increases with time, as expected, albeit it still typically excellent also in the ``prediction'' time interval $[T,2T]$, showing that the bounds in Prop.~\ref{stateestimation}, while sharp in general, may be overly pessimistic in some practical cases. Some slightly larger errors are present in some trajectories, e.g. when preys and predators get much closer to each other than they did in the training data. 
}
\label{fig:example_main}
\ifPNAS
\vskip0mm
\fi
\end{figure}

\begin{figure}
\begin{subfigure}[b]{0.48\textwidth}    \centering
  \includegraphics[width=\textwidth]{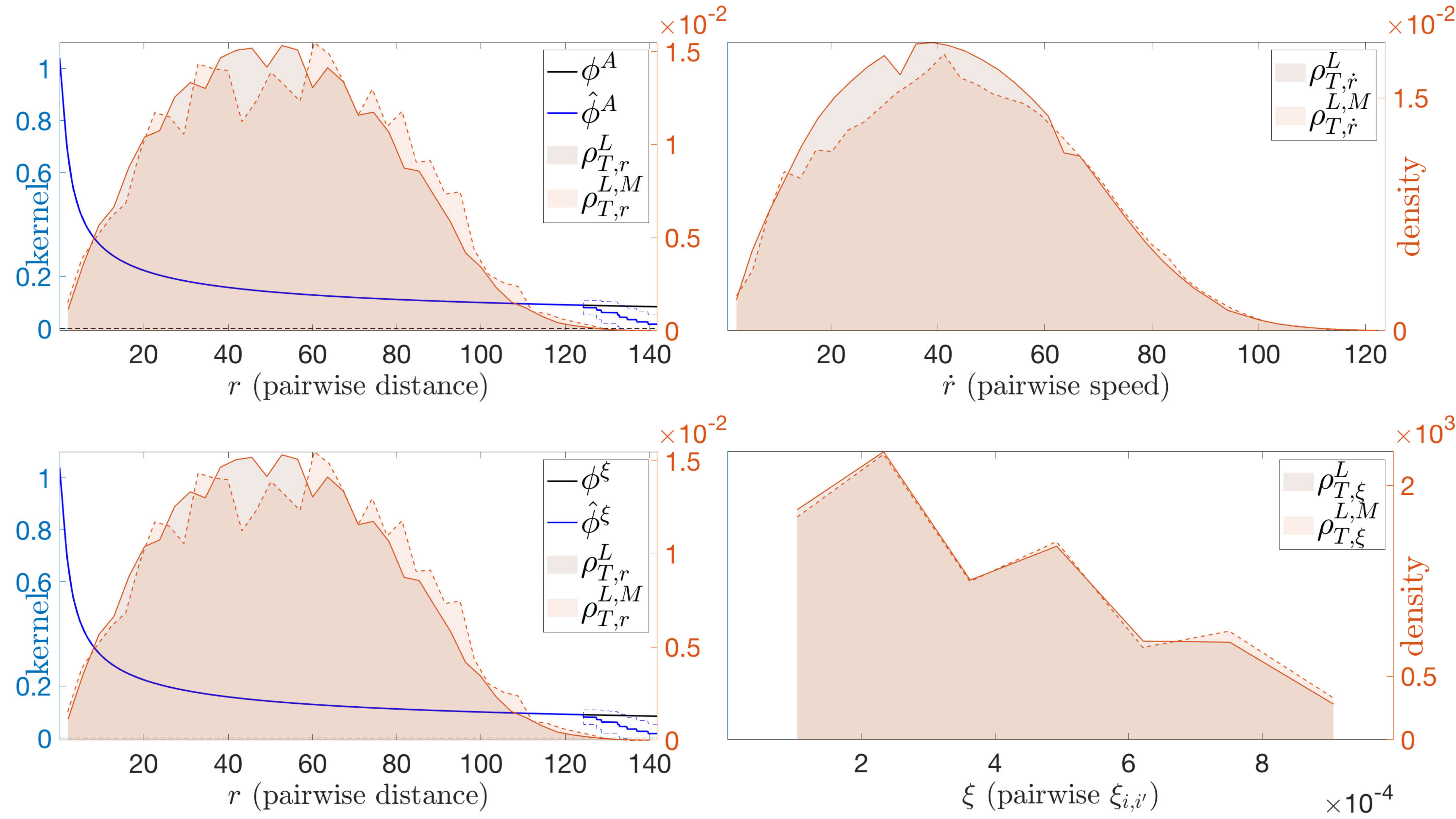}
  \mycaption{{\em{Phototaxis}}.  Interaction Kernels}
\end{subfigure}
\begin{subfigure}[b]{0.48\textwidth}    \centering
  \includegraphics[width=\textwidth]{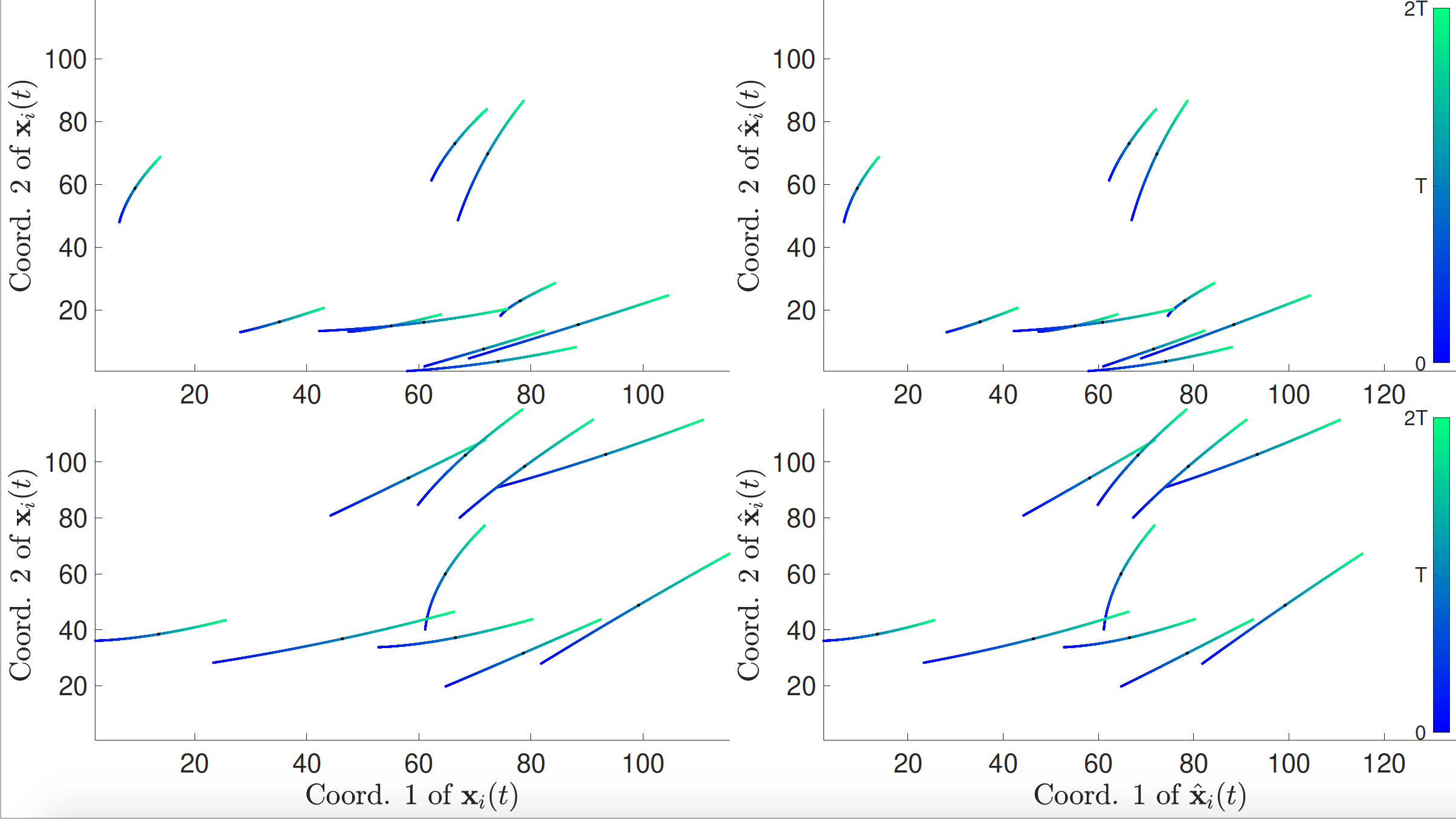}
  \mycaption{{\em{Phototaxis}}.  Trajectories}
\end{subfigure} 
\mycaption{ \textbf{Estimation of interaction kernels and trajectory prediction for the Phototaxis system.} Results for the Phototaxis systems, as described in Sec.~\ref{s:extensions}-\ref{s:examples}.  {\em{Top:}} the first column represents $\intkernel^A$ versus $\hat\intkernel^A$ (top), and $\intkernel^\xi$ versus $\hat\intkernel^\xi$ (bottom), superimposed with the histograms of $\rho^L_{T, r}$ and, respectively, $\rho^{L, M}_{T, r}$. The second column shows the comparison of the marginal distributions, $\rho^L_{T, \dot r}$ versus $\rho^{L, M}_{T, \dot r}$ and $\rho^L_{T, \xi}$ versus $\rho^{L, M}_{T, \xi}$. {\em{Bottom:}} The left column represents the trajectories generated from true interaction kernels, whereas the second column shows the trajectories generated by the estimated kernels, generated from training IC data (top row) and from a new random IC (bottom row).  In this system the interaction kernels $\intkernel^A$ and $\intkernel^\xi$ are the same; the corresponding estimators $\hat\intkernel^A$ and $\hat\intkernel^\xi$ are both learned accurately, but note that they are being learned from two different sets of data, $(r, \dot r)$ and $(r, \xi)$ respectively. In both cases, data is scarce or missing for large values $r$, leading to estimators tapering to $0$ faster than the true interaction kernels. However, despite the undesired tail end behavior of our estimators, the estimators perform extremely well in re-generating the trajectories.  See \ifPNAS \newrefs{Sec.~$3$ in the SI} \fi \ifarXiv Sec.~\ref{s:SIExamples} \fi for more details.
}
\label{fig:example_main_PT}
\ifPNAS\vskip-8mm\fi
\end{figure}

%
%%%%%% EXAMPLES
\section{Examples}
\label{s:examples}
We consider the learning of interaction kernels and the prediction of trajectories for three canonical categories of examples of self-organized dynamics (see \ifPNAS \newrefs{Sec.~3 in the SI} \fi \ifarXiv Sec.~\ref{s:SIExamples}\fi for details). 

\noindent{\bf{Opinion dynamics.}} These are first-order ODE systems with a single type of agent, with bounded, discontinuous, compactly supported and attraction-only interaction kernels. \revision{They model how the opinions of people influence each other and how consensus is formed based on different kinds of influence functions (see \cite{Krause2000, MT2014, CKFL2005} and references therein).}
 
\noindent{\bf{Predator-Swarm System}}.
We consider a first-order system with a single predator and a swarm of preys, with the interaction kernels (prey-prey, predator-prey, prey-predator) similar to Lennard-Jones kernels (with appropriate signs to model attractions and repulsions).  \revision{Different chasing patterns arise depending on the relative interaction strength of predator-prey vs. prey-predator interactions}.  We also consider a second order Predator-Swarm system, with the collective interaction acting on accelerations, leading to even richer dynamics and chasing patterns (see e.g.~\cite{CK2013, JT2007, ZKHFK2005}).

\noindent{\bf{Phototaxis}}. This is a second order ODE system with a single type of agents interacting in an environment, \revision{modeling phototactic bacteria moving towards a far away fixed light source.  The response of the bacteria to the light source is represented in the auxiliary variable $\xi_i$ as the excitation level for each bacteria $i$ (see e.g.~\cite{HL2009, SB2001, BTG2001}).  Another example which we do not pursue here is the Vicsek model \cite{Vicsek_model}, which fits perfectly in our model upon choosing $\xi_i = \theta_i$ ($\theta_i$: moving direction of agent $i$).}

In our experiments we report the measure $\rhoLM$ estimated from the training data, our estimator, and similarly in the case of noisy observations; we measure performance in terms of (relative) $L^2(\rhoL)$ error of the kernel estimators and of distance between true trajectories $\bX(t)$ and estimated trajectories $\smash{\widehat{\bX}(t)}$, on both the ``training'' interval $[0,T]$ (where observations were given) and in the future $[T,2T]$ (predictions). See Prop.~\ref{stateestimation}, where the bounds may be overly pessimistic, especially for systems tending to stable configurations.
\revision{Our estimator performs extremely well in all these examples: the interaction kernels are accurately estimated and the trajectories are accurately predicted. We refer the reader to Fig.~\ref{fig:example_main_OD} for the results of the opinion dynamics, Fig.~\ref{fig:example_main} for the results of the predator-swarm dynamics and Fig.~\ref{fig:example_main_PT} for the results of the phototaxis, and to \ifPNAS \newrefs{Sec.~$3$ in the SI} \fi \ifarXiv Sec.~\ref{s:SIExamples} \fi for further details on the setup for the experiments and a comprehensive report of all the results.} 

\begin{figure}[!h]
\begin{subfigure} [b]{0.49\textwidth}    \centering
  \includegraphics[width=\ifPNAS 0.99 \fi \ifarXiv 0.99 \fi\textwidth]{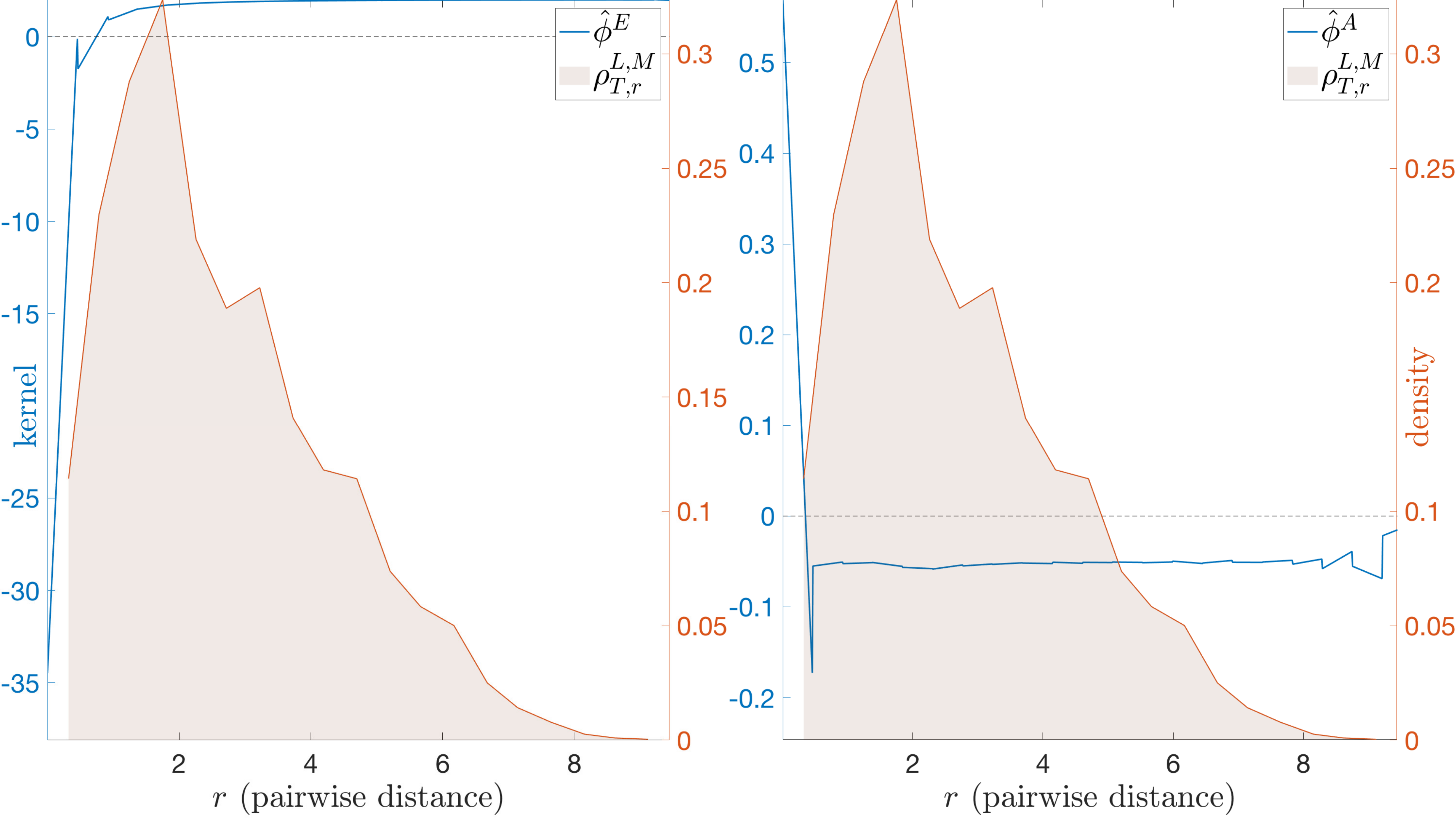} 
  \subcaption{Energy-based model}
  \end{subfigure}  
\begin{subfigure}[b]{0.49\textwidth}    \centering
  \includegraphics[width=\ifPNAS 0.99 \fi \ifarXiv 0.99 \fi\textwidth]{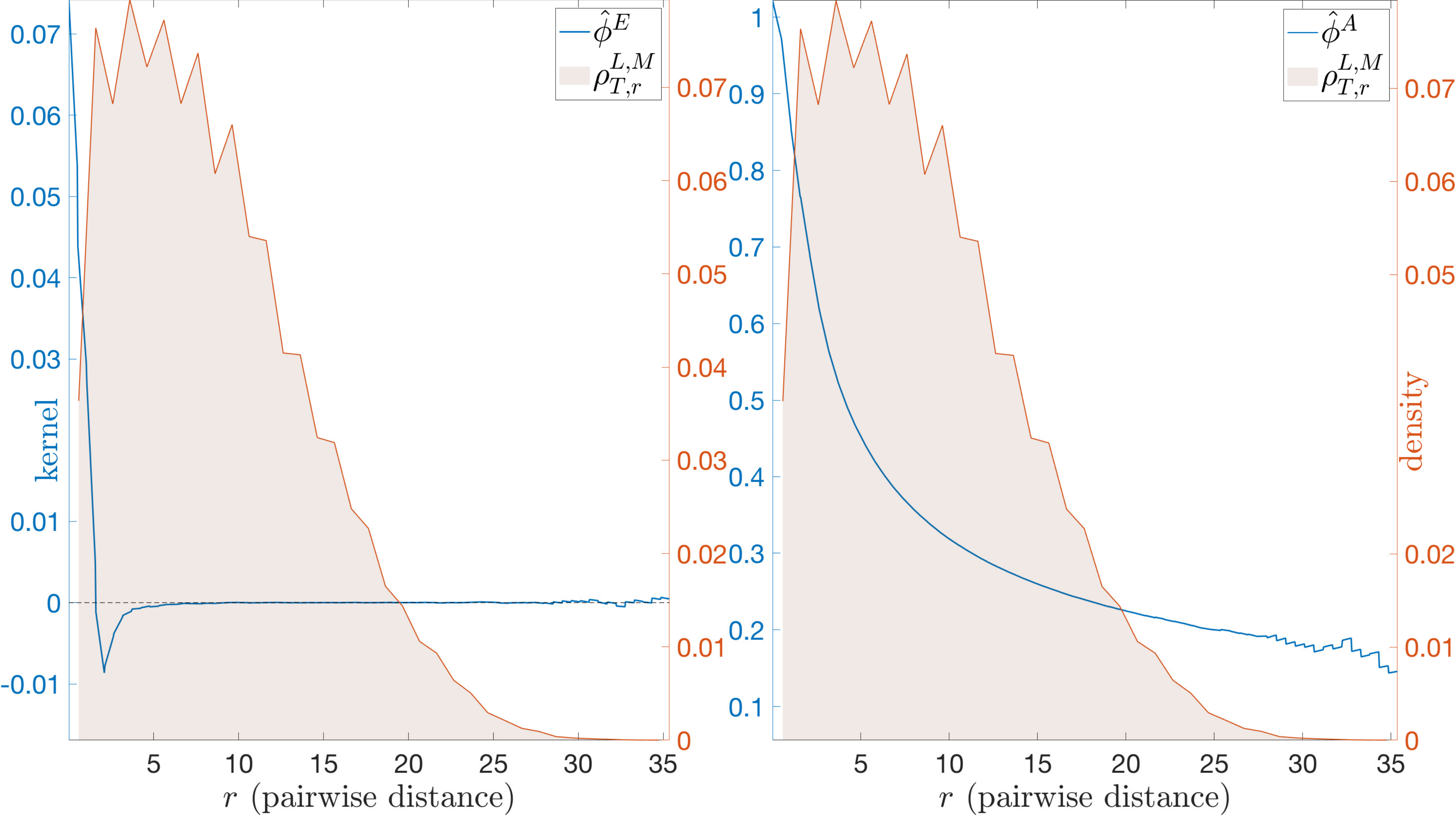}
   \subcaption{Alignment-based model}
    \end{subfigure}
\mycaption{{\textbf{Model Selection: energy-based vs. alignment-based}}. The estimated interaction kernels for an energy-based model (\textit{top row}) and an alignment-based model (\textit{bottom row}). \revision{For each model, we compute two estimators: an energy-based interaction kernel $\lintkernele$ and an alignment-based interaction kernel $\lintkernela$.} Our estimators correctly identify the type of model in each case: 
the $L^2(\rho^L_{T, r})$ norm of $\lintkernele$ is significantly larger than that of $\lintkernela$ (means and std.'s:  $\smash{\mathbf{18.8 \pm 0.4}}$ vs. $\smash{6.5 \pm 0.3}$) for the energy-based model, and the $\smash{L^2(\rho^L_{T, r})}$ norm of $\smash{\lintkernela}$ is larger than that of $\smash{\lintkernele}$ (means and std.'s:   $\mathbf{27.6 \pm 0.7}$ vs.~$2.4 \cdot 10^{-2}\pm 0.1$  ) for the alignment-based model.  Note the $y$-axes are on very different scales.}
\label{fig:ms_cases1}
\end{figure}

 \paragraph{Model Selection \revision{and Transfer Learning}.}
We also consider the use of our method for model selection, where the theoretical guarantees on learning the interaction kernels and on predicting trajectories are used to decide between different models for the dynamics.
We consider two examples of model selection, to test whether: (i) a second order system is driven by energy-based or alignment-based interactions; (ii) a heterogeneous agent system is driven by first order or second order ODE's.  For each of them, we construct two estimators assuming either case, then select models according to the performance of the estimators in predicting trajectories. See Table \ref{tab:ms_cases2} and Fig.~\ref{fig:ms_cases1} for results and discussions, and \ifPNAS\newrefs{Sec.~$3E$ in the SI }\fi \ifarXiv Sec.~\ref{s:SI_MS}  \fi for details.

\revision{As a simple example of transfer learning, we use the interaction kernel learned on a system with $N$ agents to accurately predict trajectories of the same type of system but with more agents ($4N$ in our simulations); the interaction kernel acts as a sort of ``latent variable'' that seamlessly enables transfer across such related systems. In \ifPNAS \newrefs{Sec.~$3$ in the SI} \fi \ifarXiv Sec.~\ref{s:SIExamples} \fi we report the corresponding results, for all the systems considered (see however Fig.~\ref{f:LJ_main} for the Lennard-Jones system).}

\begin{table}[ht!]
\mycaption{ \textmd{\footnotesize{\textbf{Model Selection: first order vs. second order.} 
The table shows the mean and standard deviation of the errors of estimated trajectories, over $M =250$ train-test runs, with random initial conditions in each case. Small errors, consistent with our theory that the errors are on a scale of $M^{-2/5}$, indicate a correct model. The order is correctly identified in each case (highlighted in bold). 
}}} \vspace{-2mm}
  \centering
\footnotesize{\begin{tabular}{ c | c | c}
\hline
                  & Learned as $1^{st}$ order             & Learned as $2^{nd}$ order \\
\hline
$1^{st}$ order system   &  $\mathbf{0.01 \, \pm 0.002}$    &  $1.6\, \pm 1.1$ \\
% \hline
$2^{nd}$ order system  &  $1.7 \, \pm 0.3$                        & $\mathbf{0.2 \, \pm 0.06}$ \\
% \hline
\end{tabular}}  
\label{tab:ms_cases2} 
\ifPNAS\vspace{-4mm}\fi
\end{table}
\revision{
\paragraph{Noisy observations.} 
 Our estimators appear robust under observation noise, namely if the observed positions and derivatives are corrupted by noise. Fig.~\ref{fig:PS1noisetraj} demonstrates the kernel estimation and trajectory prediction for the first-order Predator-Swarm system when only noisy observations are available. Similar results (reported in  \ifPNAS \newrefs{Sec.~3 in the SI} \fi \ifarXiv Sec.~\ref{s:SIExamples}\fi) are obtained in all the other systems considered. }
 
\begin{figure}[!t]
\centering
\includegraphics[width=\ifPNAS0.49\fi\ifarXiv0.49\fi\textwidth]{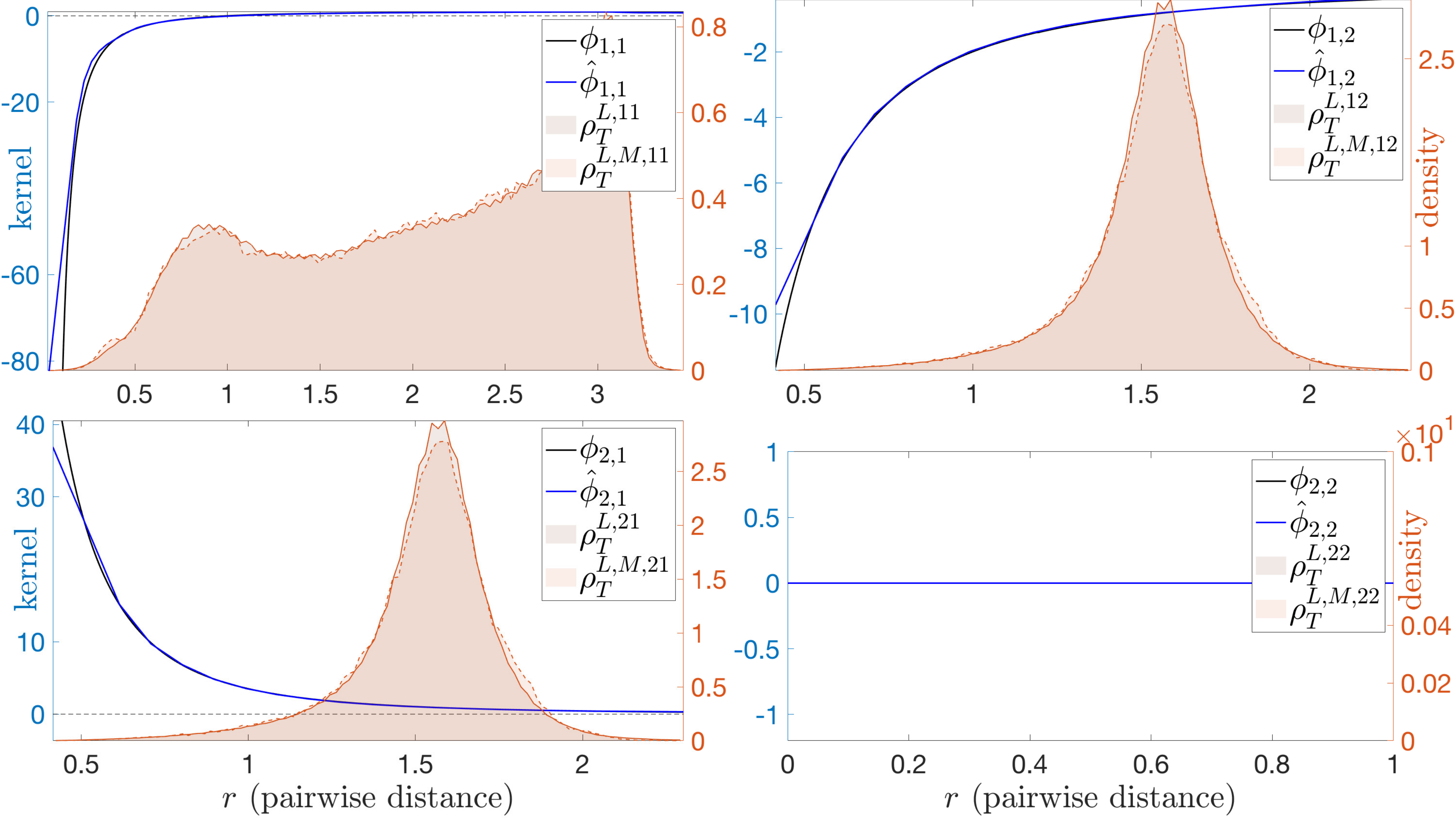}
\includegraphics[width=\ifPNAS0.49\fi\ifarXiv0.49\fi\textwidth]{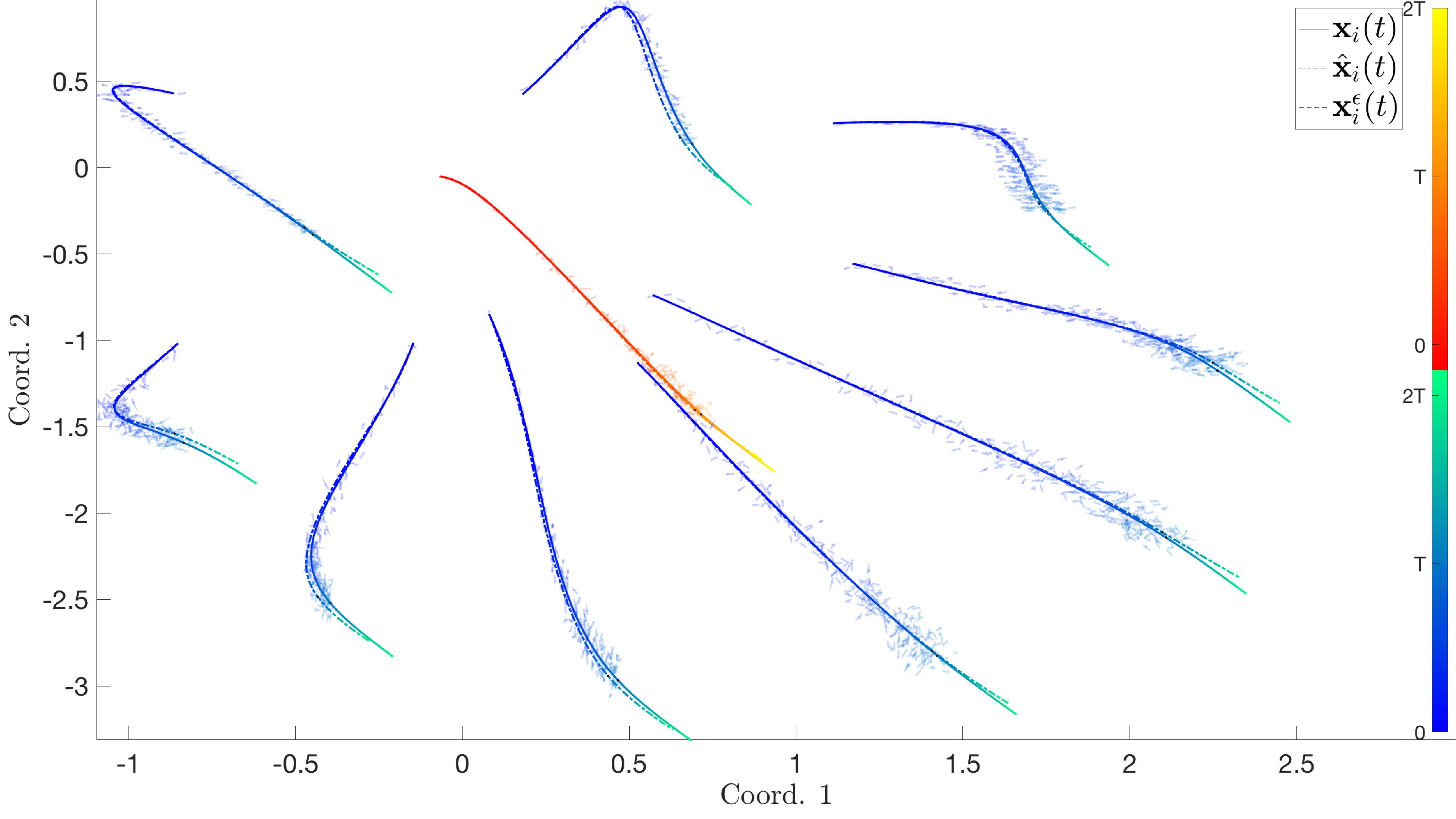}
\mycaption{\textbf{Kernel estimation for PS$1^{st}$ from noisy observations.} \textit{Top}: interaction kernels learned with Unif.$([-\sigma,\sigma])$ multiplicative noise, for $\sigma=0.1$ in the observed positions and velocities, with parameters as in Table \ifPNAS S$9$ in SI\fi \ifarXiv \ref{t:PSparams_1}\fi. The estimated kernels are minimally affected, and only in regions with small $\rhoL$. \textit{Bottom}: one of the observed trajectories before and after being perturbed by noise.  The solid lines represent the true trajectory, the dashed semi-transparent lines represent the noisy trajectory used as training data (together with noisy observations of the derivative, not shown), and the dash-dotted lines are the predicted trajectory learned from the noisy trajectory.}
\label{fig:PS1noisetraj}
\end{figure}
\revision{
\paragraph{Choice of the basis of the hypothesis space.} Our learning approach is robust to the choice of hypothesis space $\hypspace$, as long as the coercivity condition is satisfied by $\hypspace$ (or the sequence $\hypspace_n$). Additionally, different well-conditioned bases may be used in $\hypspace$ to compute the projection onto $\hypspace$, implying, together with the coercivity condition, a control of the condition number of the least squares problem (see \ifPNAS \newrefs{Prop.~$2.1$ in the SI}\fi \ifarXiv Prop.~\ref{p:sigmaminAL} \fi).  To demonstrate this numerically,  we compare the B-splines linear basis with the piecewise polynomial basis on the $1^{st}$-order Predator-Swarm system, with results shown in \ifPNAS \newrefs{Fig.~{S8} in SI.} \fi \ifarXiv Fig.~\ref{fig:PS1splines} in SI.\fi}

%%%%%%% DISCUSSION
\section{Discussion and Conclusion}
We proposed a non-parametric estimator for learning interaction kernels from observations of agent systems, implemented by computationally efficient algorithms. We applied the estimator to several classes of systems, including first- or second-order, with single- or multiple-type agents, and with simple environments. We have also considered observation data from different sampling regimes: many short-time trajectories, a single large-time trajectory, and intermediate time scales. 

Our inference approach is non-parametric, does not rely on a dictionary of hypotheses (such as in \cite{Schaeffer6634,BPK2016,TranWardExactRecovery}), exploits the structure of dynamics, and enjoys  optimal rates of convergence (which we proved here for first-order systems), independent of the dimension of the state space of the system. 
Having techniques with solid statistical guarantees is fundamental in establishing trust in data-driven models for these systems, and in using them as an aide to the researcher in formulating and testing conjectures about models underlying observed systems. In this vein, we presented two examples of model selection, showing that our estimators can reliably identify the order of a system, and identify whether a system is driven by energy- or alignment-type interactions.

We expect further generalizations to the case of stochastic dynamical systems and to the cases of more general interaction kernels that depend on more general types of interaction between agents, beyond pairwise, distance-based interactions.
Other future directions include (but are not limited to) a better understanding of learnability, model selection based on the theory, learning from partial observations, and learning reduced models for large systems. 

%\afterpage{\clearpage}

%
\ifPNAS
\acknow{\theAcknowledge}
\showacknow % Display the acknowledgments section
% \vspace{-15mm}
% \pnasbreak
\fi

\ifarXiv
\section*{Acknowledgement}
\theAcknowledge
\fi
\ifarXiv
\pagebreak
\begin{center}
\textbf{\large Supplemental Information (SI)}
\end{center}
\fi
\section{Learning Theory}
Consider the problem of estimating the interaction kernel $\intkernel: \R_+\to \R$ of the dynamical system as follows
\begin{equation}
\label{e:1stordersystem}
  \dot{\bx}_i(t) = \frac1N\sum_{i' = 1}^{N}\intkernel(\norm{\bx_{i'}(t) - \bx_i(t)}{})(\bx_{i'}(t) - \bx_i(t))\,,
\end{equation}
from observations of discrete-time trajectories and derivatives,  $\{\bXm(t_l)\}$ and $\{\dot{\bX}^{m}(t_l)\}$ with $0=t_1<\dots<t_L=T$ and $m=1,\dots,M$. We let $\bX:=(\bx_i)_{i=1}^N\in\R^{\dim N}$ be the state space variable. The initial conditions  $\bX_0^{m}:=\bXm(0)$ are sampled independently from a probability measure $\probIC$ on $\R^{dN}$. 

Such a system can also be described as the gradient flow 
$\dot{\bX}=\rhsfo_\intkernel(\bX) = \nabla \mathcal{U}(\bX)$ of the potential energy $\mathcal{U}(\bX) =\frac{1}{2N}\sum_{i,i'} \Phi(\|\bx_i-\bx_{i'}\|)$, with the function $\Phi:\R_+\to \R$ satisfying $\Phi'(r) = \phi(r)r$.  Therefore, the estimation of $\intkernel$ is equivalent to the estimation of $\Phi'$. As we will see later, the function $\intkernel(\cdot)\cdot$ appears naturally in assessing the quality of approximation of estimators of $\intkernel$, the fundamental reason being the relationship with the potential involving $\Phi$.

We restrict our attention to kernels in the \emph{admissible set}
\begin{align} \label{eq_admSet}
\mathcal{K}_{R,\spaceM }: = \{\intkernel \in & W^{1,\infty}:  \supp{\intkernel}\in [0,R], \sup_{r\in[0,R]} \left[|\intkernel(r)| + | \intkernel'(r) | \right]\leq \spaceM \}
\end{align}
for some $R,\spaceM >0$.  The boundedness of $\intkernel$ and its derivative ensures the existence and uniqueness of a global solution to initial value problems of the first order system \eqref{e:1stordersystem}, and the continuous dependence of the solution on the initial condition. The restriction $\supp \intkernel \subset  [0,R]$ represents the finite range of interaction between particles, and this restriction may be replaced by functions with unbounded support but with a suitable decay on $\R_+$. 

We shall construct an error functional based on the special structure of the dynamical system $\dot{\bX}=\rhsfo_\intkernel(\bX)$, taking advance of the form of the dependency of the right-hand side $\rhsfo_\intkernel$ on the interaction kernel $\intkernel$. 
This learning procedure deviates from standard regression in two aspects: 
(i) the values of the interaction kernel are not observed, and cannot be explicitly estimated from the observations of the state variables; (ii) the observations of the independent variable of the interaction kernel, given by the pairwise distance between the agents, though abundant, are not independent and may be redundant. 

We would also like to stress the importance of using a carefully chosen measure on the pairwise distance space, so as to account for both the randomness from the initial conditions and the evolution of the dynamical system, and to reflect the (relative) abundances of pairwise distances. Our analysis shows that the expectation of the empirical measure of the pairwise distances is a natural choice, and it is closely related to the coercivity condition, the other fundamental ingredient which ensures learnability and convergence of the estimators.  

%%%%% ==========================l
%%%% ======= ERROR FUNCTIONAL
\subsection{The Error functional and estimators} 
Given the structure of the first order system \eqref{e:1stordersystem}, we consider the error functional
\begin{equation}
\mE_{L,M}(\intkernelvar) := \frac{1}{MN}\sum_{l,m,i= 1}^{L,M,N} w_l \big\|\dotbxm_i(t_l)-\rhsfo_\intkernelvar( \bxm(t_l))_i\big\|^2\, ,
 \label{e:errorFnal}
\end{equation}
where $\{w_l\}_{l=1}^L$ is a normalized set of weights ($w_l>0$ and $\smash{\sum_{l=1}^L w_l=1}$), and define an estimator
\begin{equation}\label{e:LSE}
\widehat\intkernel_{L,M,\hypspace} := \argmin{\intkernelvar\in\hypspace} \mE_{L,M}(\intkernelvar),
\end{equation}
where $\hypspace$ is a suitable class of functions that will be referred as hypothesis space. 
Natural choices of weights $\{w_l\}$ may be chosen to be all equal to $1/L$, as in the case of equispaced $t_l$'s, which is what we considered throughout the paper, and is consistent with the definition of $\rhoL$ and its use in measuring the performance of the estimator in $L^2(\rhoL)$. However, if one wished to measure the performance in a different $L^2$ space, one could choose the weights differently. A distinguished choice would be $L^2(\rho_{\mathrm{Lebesgue}})$, in which case one may choose $w_l=1/(t_{l+1}-t_l)$, for $l=1,\dots,L-1$ (and change all the summations involving $l$ to stop at $L-1$ instead of $L$). Other choices of weights corresponding to other quadrature rules are also possible.

Note that the error functional is quadratic in $\intkernelvar$ and bounded below by $0$, therefore the minimizer exists for any finite-dimensional convex hypothesis spaces $\hypspace$. We can always truncate this minimizer so that it is bounded above by $S$, the upper bound of the functions in the admissible set $\mathcal{K}_{R,\spaceM }$, and this truncated estimator behaves similarly to the estimator obtained by assuming that the functions in $\hypspace$ are uniformly bounded. In fact, such truncation can only reduce the error. Hence, without loss of generality, we assume $\hypspace$ to be a compact set in the $L^{\infty}$ norm.

Our objectives are measuring the quality of approximation of the estimator and finding the hypothesis spaces for which the optimal rate of convergence of $\smash{\hat\intkernel}$ to the true interaction kernel $\intkerneltrue$ is achieved.

%%%% =====    MEASURE on DISTANCE SPACE ===
\subsection{Measures on the pairwise distance space}
We introduce a probability measure on $\R_+$, to define a suitable function space that contains all the estimators and the true interaction kernel, and to provide a norm to assess the accuracy of the estimators.  We let
\[
\mathbf{r}_{ii'}(t) = \bx_{i'}(t) - \bx_i(t), \text{ and }  r_{ii'}(t) =  \|\bx_{i'}(t) - \bx_i(t)\|.
\]
Note that the independent variable of the interaction kernel is the pairwise distances $\nbrm_{ii'}(t)$, which can be computed from the observed trajectories. It is natural to start from the empirical measure of pairwise distances 
\begin{equation}
\rhoLM(r)  =\frac{1}{\binom N2 L M}\sum_{l,m= 1}^{L,M}\sum_{i,i'=1,\\ i< i' }^N \delta_{\nbrm_{ii'}(t_l)}(r)\,,
\label{e:rhoLM}
\end{equation} 
which tends, as $M\to \infty$, using the law of large numbers, to $\rhoL$ defined in (5) in the main text.
When trajectories are observed continuously in time, the counterpart of $\rhoL$ is the measure defined in (5).
We now establish basic properties of these measures:
\begin{lemma}\label{averagemeasure}
For each $\intkernel \in \mathcal{K}_{R,\spaceM }$ defined in \eqref{eq_admSet}, the measures $\rhoL$ and $\rhoT$ defined in (5) and (4) in the main text are Borel probability measures on $\R_+$. They are absolutely continuous with respect to the Lebesgue measure provided that $\probIC$ is absolutely continuous  with respect to the Lebesgue measure on $\R^{dN}$. 
\end{lemma}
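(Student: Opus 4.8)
The plan is to deal with the two assertions separately: first, that the displayed definitions of $\rhoL$ and $\rhoT$ give genuine Borel probability measures on $\R_+$; and second, that they inherit absolute continuity from $\probIC$. Throughout I will use the fact, already noted above, that the bounds on $\intkernel$ and $\intkernel'$ built into $\mathcal{K}_{R,\spaceM}$ guarantee global existence and uniqueness of solutions of \eqref{e:1stordersystem} and their continuous dependence on the initial data; write $\Psi_t:\R^{dN}\to\R^{dN}$, $\Psi_t(\bX_0)=\bX(t)$, for the time-$t$ flow map, so that for each fixed $t$ it is a homeomorphism with inverse $\Psi_{-t}$, both Lipschitz with a constant depending only on $\intkernel$ and $t$, and $(\bX_0,t)\mapsto\Psi_t(\bX_0)$ is jointly continuous. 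Hence $(\bX_0,t)\mapsto r_{ii'}(t)=\|\Psi_t(\bX_0)_{i'}-\Psi_t(\bX_0)_i\|$ is jointly continuous, in particular Borel; this is exactly the measurability needed to make sense of the $\E_{\bX_0\sim\probIC}$ expectations and of the $dt$ integral appearing in the definitions of $\rhoT$ and $\rhoL$.

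For the measure-theoretic part I would just unwind the definitions against a Borel set $A\subset\R_+$. By Tonelli's theorem (legitimate by the joint measurability above and nonnegativity),
\[
\rhoL(A)=\frac{1}{\binom{N}{2}L}\sum_{l=1}^{L}\sum_{i<i'}\probIC\big(\{r_{ii'}(t_l)\in A\}\big),\qquad
\rhoT(A)=\frac{1}{\binom{N}{2}T}\int_{0}^{T}\sum_{i<i'}\probIC\big(\{r_{ii'}(t)\in A\}\big)\,dt.
\]
Each summand is the value at $A$ of the pushforward of $\probIC$ under a measurable map, hence a nonnegative, countably additive set function; countable additivity then passes through the finite sums, and for $\rhoT$ through the integral as well by monotone convergence. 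Taking $A=\R_+$ makes every event have probability $1$, so one simply counts $\binom{N}{2}$ pairs at each of the $L$ sampling times in the case of $\rhoL$, and integrates the constant $\binom{N}{2}$ over $[0,T]$ in the case of $\rhoT$; either way the total mass is exactly $1$. This shows both are Borel probability measures on $\R_+$.

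For absolute continuity, assume $\probIC\ll\mathrm{Leb}$ on $\R^{dN}$, fix a Lebesgue-null set $A\subset\R_+$, and note that by the two displayed formulas it suffices to show $\probIC(\{r_{ii'}(t)\in A\})=0$ for every ordered pair and every $t\in[0,T]$. Factor $r_{ii'}(t)=h\circ\Psi_t$ with $h:\R^{dN}\to\R_+$, $h(\bX)=\|\bx_{i'}-\bx_i\|$, so that $\{r_{ii'}(t)\in A\}=\Psi_t^{-1}(h^{-1}(A))$. Now $h^{-1}(A)$ is Lebesgue-null in $\R^{dN}$: applying the linear bijection of $\R^{dN}$ that replaces the coordinate block $\bx_{i'}$ by $\bu:=\bx_{i'}-\bx_i$ while keeping all other blocks, $h^{-1}(A)$ becomes a product of $\{\bu\in\R^d:\|\bu\|\in A\}$ with a full-dimensional factor, and by Fubini it is enough that $\mathrm{Leb}_d(\{\bu:\|\bu\|\in A\})=c_d\int_A r^{d-1}\,dr=0$, which holds since $A$ is null. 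Since $\Psi_t$ is bi-Lipschitz, $\Psi_t^{-1}$ is Lipschitz and hence maps Lebesgue-null sets to Lebesgue-null sets; therefore $\{r_{ii'}(t)\in A\}$ is Lebesgue-null and so $\probIC$-null. Summing over pairs and times (and integrating over $t$) gives $\rhoL(A)=\rhoT(A)=0$, i.e.\ $\rhoL,\rhoT\ll\mathrm{Leb}$.

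The Tonelli/monotone-convergence bookkeeping and the mass count are routine; the step I would be most careful about is the use of the flow as a bi-Lipschitz homeomorphism of $\R^{dN}$ — this is what lets Lebesgue-null sets be both pulled back and pushed forward — and this is precisely where the admissibility $\intkernel\in\mathcal{K}_{R,\spaceM}$ enters, through the Gr\"onwall estimate applied to $\Psi_t$ and to $\Psi_{-t}$. A minor additional point worth stating explicitly is that $h^{-1}(A)$ is null both when $d=1$ (where it has measure $2\,\mathrm{Leb}(A)$) and when $d\ge2$ (via the polar-coordinate computation above).
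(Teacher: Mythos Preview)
The paper states this lemma without proof, so there is no approach to compare against. Your argument is correct and complete: the measurability and mass-one claims follow exactly as you outlined, and for absolute continuity the key step---writing $\{r_{ii'}(t)\in A\}=\Psi_{-t}(h^{-1}(A))$ and using that $\Psi_{-t}$ is Lipschitz (hence preserves Lebesgue-null sets in $\R^{dN}$), together with the polar-coordinate computation showing $h^{-1}(A)$ is null---is the right one, and you correctly identified where the hypothesis $\intkernel\in\mathcal{K}_{R,\spaceM}$ is used.
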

%%% ==== Proof: comment measure well-defineness

%%%% =====   COERCIVITY CONDITION  ====
\subsection{Learnability: the coercivity condition} 
\label{s:SIcoercivity}
A fundamental question is the learnability of the true interaction kernel, i.e. the well-posedness of the inverse problem of kernel learning.  Since the estimators $\smash{\lintkernel_{L,M,\hypspace}}$ always exists for suitably chosen hypothesis spaces $\hypspace$ (e.g. compact sets), learnability is equivalent to the convergence of the estimator $\smash{\lintkernel_{L,M,\hypspace}}$ to the true kernel $\intkerneltrue$ as the sample size increases (i.e. $M\to \infty$) and as the hypothesis space grows. 
To ensure such a convergence, one would naturally wish:  (i) that  the true kernel $\intkerneltrue$ is the unique minimizer of the expectation of the error functional (by the law of large numbers)
\begin{align}
\mE_{L,\infty}(\intkernelvar) &\coloneqq \lim_{M\to\infty}\mE_{L,M}(\intkernelvar) =\frac{1}{L N}\sum_{l,i= 1}^{L,N} \E \left[ \big\| \frac{1}{N}\sum_{i'= 1}^{N}  \left(\intkernelvar -\intkerneltrue\right)(r_{ii'}(t_l))\br_{ii'}(t_l)\big\|^2\right]; \label{e_errorFtnl_infty2} 
\end{align}
(ii) that the error of the estimator, in terms of a metric based on the $L^2(\rhoL)$ norm, can be controlled by the discrepancy between the empirical error functional and its limit.

Note that $\mE_{L,\infty}(\intkernelvar) \geq 0$ for any $\intkernelvar$ and that $\mE_{L,\infty}(\intkernel) =0$. 
Furthermore, \eqref{e_errorFtnl_infty2} reveals that $\mE_{L,\infty}(\intkernelvar)$ is a quadratic functional of $\intkernelvar -\intkerneltrue$, and we have, by Jensen's inequality, 
\begin{align*}
 \mE_{L,\infty}(\intkernelvar) \leq \frac{(N-1)^2}{N^2}\norm{\intkernelvar(\cdot)\cdot- \intkerneltrue(\cdot)\cdot}^2_{L^2(\rhoL)}\,. 
\end{align*}
This inequality suggests the above weighted $L^2(\rhoL)$ norm as a metric on the error of the estimator that we wish to be controlled.  Therefore, as long we as can bound the limit error functional from below by $\norm{\intkernelvar(\cdot)\cdot - \intkerneltrue(\cdot)\cdot}^2_{L^2(\rhoL)}$, we can conclude that $\intkerneltrue$ is the unique minimizer of $\mE_{L,\infty}(\cdot)$ and that the estimators converge to $\intkerneltrue$.  
This suggests the following coercivity condition:
 
\begin{definition}[Coercivity condition] \label{def_coercivity_SI}
We say that the dynamical system defined in \eqref{e:1stordersystem} together with the probability measure $\probIC$ on $\R^{dN}$, satisfies the coercivity condition on $\hypspace$ with a constant $c_L>0$, if 
\begin{align}\label{gencoer}
  c_L\|\intkernelvar(\cdot)\cdot\|_{L^2(\rhoL)}^2  \!\!\leq   \!\!\frac{1}{NL}\sum_{i,l=1}^{L,N}\E \bigg[ \big\| \frac{1}{N}\sum_{i'= 1}^{N}  \intkernelvar(r_{ii'}(t_l))\br_{ii'}(t_l) \big\|^2\bigg]
 \end{align} 
 for all \revision{$\intkernelvar\in \hypspace$ such that $\intkernelvar(\cdot)\cdot\in L^2(\rhoL)$}, with the measure $\rhoL$ defined in (4) in the main text, and the expectation being with respect to initial conditions distributed according to $\probIC$.
\end{definition}

The above inequality is called a coercivity condition because that it implies coercivity of the bilinear functional $\dbinnerp {\cdot, \cdot}$ on $L^2(\real_+,\rhoL)$, 
\begin{align}\label{eq:bilinearFn}
& \dbinnerp {\intkernelvar_1, \intkernelvar_2}:=\frac{1}{LN}\sum_{l,i=1}^{L,N}\mathbb{E}  \bigg[\bigg\langle \frac{1}{N}\sum_{j=1}^{N}\intkernelvar_1(r_{ji}(t_l))\br_{ij}(t_l), \frac{1}{N}\sum_{j=1}^{N}\intkernelvar_2(r_{ji}(t_l))\br_{ij}(t_l) \bigg\rangle \bigg],
\end{align}
as \eqref{gencoer} may be rewritten as 
\[
c_L \norm{\intkernelvar(\cdot)\cdot }^2_{L^2(\R_{+},\rhoL)}\leq  \dbinnerp {\varphi, \varphi}.
\]

The coercivity condition plays a key role in the learning of the interaction kernel. It ensures learnability by ensuring the uniqueness of minimizer of the expectation of the error functional, and by guaranteeing convergence of estimators through control of the error of the estimator on every compact convex hypothesis space $\hypspace$ in  $L^2(\rhoL)$. 
To see this, apply the coercivity inequality to $\intkernelvar - \intkerneltrue$, to obtain
\begin{align}
c_L \norm{\intkernelvar(\cdot)\cdot  - \intkerneltrue(\cdot)\cdot }^2_{L^2(\R_{+},\rhoL)} \leq  \mE_{L,\infty}(\intkernelvar).
\end{align}
From the facts that $\mE_{L,\infty}(\intkernelvar) \geq 0$ for any $\intkernelvar$ and that $\mE_{L,\infty}(\intkernel) =0$, we can conclude that the true kernel $\intkernel$ is the unique minimizer of the  $\mE_{L,\infty}(\intkernelvar)$. Furthermore, the coercivity condition enables us to control the error of the estimator, on every compact convex hypothesis space in $L^2(\rhoL)$, by the discrepancy of the error functional (see Proposition \ref{convexity}), therefore guaranteeing convergence of the estimator.

\begin{theorem}  \label{thm_Learnability}
Let $\hypspace_n $ be a sequence of compact convex subsets of $L^{\infty}([0,R])$ such that 
$$\quad\inf_{\intkernelvar \in \hypspace_n} \|\intkernelvar (\cdot)\cdot -\intkernel(\cdot)\cdot \|_{L^2(\rhoL)} \to 0$$ as $n\to \infty$.  \revision{Assume that the coercivity condition holds on $\cup_{n=1}^\infty \hypspace_n$.} Then the estimator $\widehat\intkernel_{L,M,\hypspace_n}$ defined in \eqref{e:LSE} converges to the true kernel in $L^2(\rhoL)$ almost surely as $n, M$ approaches infinity, i.e.
 \[ \lim_{n\to\infty} \lim_{M\to \infty} \|\widehat\intkernel_{L,M,\hypspace_n}(\cdot)\cdot  -\intkernel (\cdot)\cdot\|_{L^2(\rhoL)} = 0, \text{ almost surely}.   \] 
\end{theorem}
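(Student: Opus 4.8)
The plan is to run the classical bias--variance argument for nonparametric least squares, adapted to the coercivity condition, and to exploit that the statement concerns the \emph{iterated} limit $\lim_{n\to\infty}\lim_{M\to\infty}$: in the inner limit the index $n$, hence the compact convex set $\hypspace_n$ and all constants attached to it, stay fixed. Throughout, abbreviate $\intkernelvar(\cdot)\cdot$ for the function $r\mapsto\intkernelvar(r)r$. Note that any $\intkernelvar$ supported in $[0,R]$ with $\sup|\intkernelvar|\le\spaceM$ satisfies $|\intkernelvar(r)r|\le\spaceM R$ for \emph{every} $r\ge 0$; hence $\intkernelvar(\cdot)\cdot\in L^\infty(\R_+,\rhoL)\subset L^2(\R_+,\rhoL)$ for all $\intkernelvar\in\hypspace_n$, the functionals below are finite, and the coercivity inequality \eqref{gencoer} is available on $\hypspace_n$ (and, as in the derivations of the excerpt, on the relevant differences of such functions).

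First I would fix $n$ and introduce the population minimizer
\[
\intkernel_{\hypspace_n}:=\argmin{\intkernelvar\in\hypspace_n}\mE_{L,\infty}(\intkernelvar),
\]
which exists because $\hypspace_n$ is compact in $L^\infty$ and $\mE_{L,\infty}$ is continuous, and whose image $\intkernel_{\hypspace_n}(\cdot)\cdot$ in $L^2(\rhoL)$ is unique since coercivity makes $\mE_{L,\infty}$ strictly convex in $\intkernelvar(\cdot)\cdot$. By the triangle inequality in $L^2(\rhoL)$,
\[
\norm{\widehat\intkernel_{L,M,\hypspace_n}(\cdot)\cdot-\intkerneltrue(\cdot)\cdot}_{L^2(\rhoL)}
\le
\norm{\widehat\intkernel_{L,M,\hypspace_n}(\cdot)\cdot-\intkernel_{\hypspace_n}(\cdot)\cdot}_{L^2(\rhoL)}
+
\norm{\intkernel_{\hypspace_n}(\cdot)\cdot-\intkerneltrue(\cdot)\cdot}_{L^2(\rhoL)},
\]
a sum of a \emph{sample error} and an \emph{approximation error}. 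For the approximation error I would chain the coercivity lower bound $c_L\norm{\intkernel_{\hypspace_n}(\cdot)\cdot-\intkerneltrue(\cdot)\cdot}^2_{L^2(\rhoL)}\le\mE_{L,\infty}(\intkernel_{\hypspace_n})$, the minimality $\mE_{L,\infty}(\intkernel_{\hypspace_n})\le\mE_{L,\infty}(\intkernelvar)$ for every $\intkernelvar\in\hypspace_n$, and the Jensen upper bound $\mE_{L,\infty}(\intkernelvar)\le\frac{(N-1)^2}{N^2}\norm{\intkernelvar(\cdot)\cdot-\intkerneltrue(\cdot)\cdot}^2_{L^2(\rhoL)}$ recalled above; taking the infimum over $\intkernelvar\in\hypspace_n$ gives
\[
\norm{\intkernel_{\hypspace_n}(\cdot)\cdot-\intkerneltrue(\cdot)\cdot}^2_{L^2(\rhoL)}\le \frac{1}{c_L}\,\frac{(N-1)^2}{N^2}\,\Big(\inf_{\intkernelvar\in\hypspace_n}\norm{\intkernelvar(\cdot)\cdot-\intkerneltrue(\cdot)\cdot}_{L^2(\rhoL)}\Big)^2\longrightarrow 0
\]
as $n\to\infty$, by the standing hypothesis on $\{\hypspace_n\}$.

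For the sample error, with $n$ still fixed I would show $\norm{\widehat\intkernel_{L,M,\hypspace_n}(\cdot)\cdot-\intkernel_{\hypspace_n}(\cdot)\cdot}_{L^2(\rhoL)}\to 0$ almost surely as $M\to\infty$. Write $\mE_{L,M}(\intkernelvar)=\frac1M\sum_{m=1}^M Y_m(\intkernelvar)$ with $Y_m(\intkernelvar):=\frac1N\sum_{l,i}w_l\norm{\dotbxm_i(t_l)-\rhsfo_\intkernelvar(\bxm(t_l))_i}^2$ the contribution of the $m$-th trajectory; the $Y_m$ are i.i.d.\ random functionals on $\hypspace_n$ (the $M$ trajectories are i.i.d.\ in $m$, even though the $\binom N2$ pairwise distances inside a single trajectory are not independent --- harmless, since the averaging that produces the limit is over $m$). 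Since $\intkerneltrue\in\mathcal{K}_{R,\spaceM}$, the true trajectories and their velocities are bounded by $\spaceM R$, and the pairwise differences enter $Y_m$ linearly with coefficient at most $\spaceM$; hence each $Y_m$ is uniformly bounded on $\hypspace_n$ and Lipschitz in $\intkernelvar$ with respect to $\norm{\cdot}_{L^\infty([0,R])}$ with a deterministic constant. A covering argument --- cover the compact $\hypspace_n$ by finitely many $L^\infty$-balls, apply the scalar strong law of large numbers at the centers, and interpolate using equicontinuity --- then yields the uniform strong law $\sup_{\intkernelvar\in\hypspace_n}\abs{\mE_{L,M}(\intkernelvar)-\mE_{L,\infty}(\intkernelvar)}\to 0$ almost surely. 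Finally, the minimality of $\widehat\intkernel_{L,M,\hypspace_n}$ for $\mE_{L,M}$ over $\hypspace_n$ yields the defect inequality $\mE_{L,\infty}(\widehat\intkernel_{L,M,\hypspace_n})-\mE_{L,\infty}(\intkernel_{\hypspace_n})\le 2\sup_{\intkernelvar\in\hypspace_n}\abs{\mE_{L,M}(\intkernelvar)-\mE_{L,\infty}(\intkernelvar)}$ (as in the proof of Prop.~\ref{convexity}), while coercivity applied to $\widehat\intkernel_{L,M,\hypspace_n}-\intkernel_{\hypspace_n}$ bounds $c_L\norm{\widehat\intkernel_{L,M,\hypspace_n}(\cdot)\cdot-\intkernel_{\hypspace_n}(\cdot)\cdot}^2_{L^2(\rhoL)}$ by the same excess risk; hence the sample error tends to $0$ a.s.\ as $M\to\infty$. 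Combining the two bounds and then letting $n\to\infty$ completes the proof.

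I expect the main obstacle to be the uniform strong law of large numbers over $\hypspace_n$ --- promoting the pointwise SLLN, which is immediate from the i.i.d.\ structure in $m$, to a uniform one --- together with the defect inequality that turns ``$\widehat\intkernel_{L,M,\hypspace_n}$ minimizes the empirical functional'' into ``$\widehat\intkernel_{L,M,\hypspace_n}$ has small population excess risk''. Both become routine once one has secured the \emph{a priori} boundedness of the trajectories (a consequence of $\intkerneltrue\in\mathcal{K}_{R,\spaceM}$) and the resulting uniform boundedness and equi-Lipschitzness of the $Y_m$ on the compact set $\hypspace_n$; that is the estimate I would establish first. A minor point requiring care is that coercivity, as stated, is a property of functions in $\cup_k\hypspace_k$, so one must ensure that the differences to which it is applied (such as $\widehat\intkernel_{L,M,\hypspace_n}-\intkernel_{\hypspace_n}$ and $\intkernelvar-\intkerneltrue$) lie in a space on which the coercivity inequality is valid --- automatic in our constructions, where the $\hypspace_n$ are convex subsets of an increasing family of finite-dimensional spaces.
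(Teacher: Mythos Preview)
Your proof is correct and follows the same route as the paper, which proves the theorem in one line by asserting it ``follows from the next proposition'' (Prop.~\ref{convexity}). You spell out exactly how: the bias--variance split into approximation error (controlled via coercivity plus the Jensen upper bound on $\mE_{L,\infty}$) and sample error (controlled via a uniform strong law of large numbers on the compact $\hypspace_n$ together with the excess-risk inequality \eqref{eq_minH} of Prop.~\ref{convexity}).
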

%comment

The above theorem follows from the next proposition. 
\begin{proposition}\label{convexity}
 Let $\mathcal{H}$ be a compact convex subset of $L^{2}(\rhoL)$ and assume the coercivity condition holds true on $\hypspace$.  Then the functional $\mathcal{E}_{L,\infty}$ defined in \eqref{e_errorFtnl_infty2} admits a unique minimizer 
  \begin{equation}\label{e:minimizer}
 \widehat{\phi}_{L,\infty, \mathcal{H}}=\argmin{\intkernelvar \in \mathcal{H}} \mathcal{E}_{L,\infty}(\intkernelvar),
 \end{equation}
in $L^2(\rhoL)$. Furthermore, for all $\intkernelvar \in \mathcal{H}$
 \begin{equation}\label{eq_minH}
 \mathcal{E}_{L,\infty}(\intkernelvar)- \mathcal{E}_{L,\infty}(\widehat{\phi}_{L, \infty, \mathcal{H}}) \geq c_L \|\intkernelvar(\cdot)\cdot-\widehat{\phi}_{L,\infty, \mathcal{H}}(\cdot)\cdot\|_{L^2(\rhoL) }^2.
 \end{equation}
\end{proposition}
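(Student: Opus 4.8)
The plan is to recognize $\mathcal{E}_{L,\infty}$ as a shifted quadratic form associated with the symmetric bilinear functional $\dbinnerp{\cdot,\cdot}$ from \eqref{eq:bilinearFn}, and then run the textbook argument for minimizing a coercive quadratic functional over a closed convex set. The first step is to record the algebraic facts: comparing \eqref{e_errorFtnl_infty2} with \eqref{eq:bilinearFn} (and using that the distance is symmetric, $r_{ii'}=r_{i'i}$), one has $\mathcal{E}_{L,\infty}(\intkernelvar)=\dbinnerp{\intkernelvar-\intkerneltrue,\intkernelvar-\intkerneltrue}$, so writing $B(\cdot,\cdot):=\dbinnerp{\cdot,\cdot}$ we see $B$ is symmetric and, being an average of expectations of squared Euclidean norms, positive semidefinite. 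Jensen's inequality gives the upper bound $B(\psi,\psi)\le\frac{(N-1)^2}{N^2}\norm{\psi(\cdot)\cdot}_{L^2(\rhoL)}^2\le\frac{(N-1)^2}{N^2}R^2\norm{\psi}_{L^2(\rhoL)}^2$ for any $\psi$ supported in $[0,R]$ (the last step since $\intkerneltrue$ and the elements of $\hypspace$ are supported there), while the coercivity assumption gives the matching lower bound $B(\psi,\psi)\ge c_L\norm{\psi(\cdot)\cdot}_{L^2(\rhoL)}^2$. In particular $\intkernelvar\mapsto\mathcal{E}_{L,\infty}(\intkernelvar)$ is continuous (indeed locally Lipschitz) on $L^2(\rhoL)$.

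Next I would settle existence of the minimizer: since $\hypspace$ is compact in $L^2(\rhoL)$ and $\mathcal{E}_{L,\infty}$ is continuous there, Weierstrass' theorem produces at least one minimizer, which we call $\widehat{\phi}_{L,\infty,\hypspace}$.

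Then I would derive the first-order variational inequality and conclude \eqref{eq_minH}. Fix $\intkernelvar\in\hypspace$; by convexity of $\hypspace$, $\widehat{\phi}_{L,\infty,\hypspace}+t(\intkernelvar-\widehat{\phi}_{L,\infty,\hypspace})\in\hypspace$ for $t\in[0,1]$, and the quadratic polynomial $t\mapsto\mathcal{E}_{L,\infty}(\widehat{\phi}_{L,\infty,\hypspace}+t(\intkernelvar-\widehat{\phi}_{L,\infty,\hypspace}))$ is minimized over $[0,1]$ at $t=0$, hence has nonnegative right derivative at $0$; computing this derivative yields $B(\widehat{\phi}_{L,\infty,\hypspace}-\intkerneltrue,\intkernelvar-\widehat{\phi}_{L,\infty,\hypspace})\ge0$. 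Expanding the quadratic form around $\widehat{\phi}_{L,\infty,\hypspace}$,
\[
\mathcal{E}_{L,\infty}(\intkernelvar)=\mathcal{E}_{L,\infty}(\widehat{\phi}_{L,\infty,\hypspace})+2B(\widehat{\phi}_{L,\infty,\hypspace}-\intkerneltrue,\intkernelvar-\widehat{\phi}_{L,\infty,\hypspace})+B(\intkernelvar-\widehat{\phi}_{L,\infty,\hypspace},\intkernelvar-\widehat{\phi}_{L,\infty,\hypspace}),
\]
and discarding the nonnegative middle term and applying coercivity to the last one gives exactly \eqref{eq_minH}. Uniqueness is then immediate: if $\intkernelvar\in\hypspace$ is another minimizer, \eqref{eq_minH} forces $\norm{(\intkernelvar-\widehat{\phi}_{L,\infty,\hypspace})(\cdot)\cdot}_{L^2(\rhoL)}=0$, i.e. $\intkernelvar(r)r=\widehat{\phi}_{L,\infty,\hypspace}(r)r$ for $\rhoL$-a.e.\ $r$, hence $\intkernelvar=\widehat{\phi}_{L,\infty,\hypspace}$ in $L^2(\rhoL)$ once one knows $\rhoL(\{0\})=0$, which holds when $\probIC$ is absolutely continuous by Lemma~\ref{averagemeasure}.

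I do not expect a genuine obstacle: the substance of the proposition is entirely carried by the coercivity condition, which is assumed here. The only points requiring care are (i) verifying that $\dbinnerp{\cdot,\cdot}$ is bounded in the $L^2(\rhoL)$ norm on the subspace of functions supported in $[0,R]$, which is what makes $\mathcal{E}_{L,\infty}$ continuous and hence guarantees a minimizer is attained on the compact set $\hypspace$, and (ii) the possible degeneracy of the seminorm $\norm{\intkernelvar(\cdot)\cdot}_{L^2(\rhoL)}$ at the origin, which is the reason uniqueness is understood as uniqueness of $\widehat{\phi}_{L,\infty,\hypspace}$ as an element of $L^2(\rhoL)$ (equivalently, of $\widehat{\phi}_{L,\infty,\hypspace}(\cdot)\cdot$). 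Both are handled by the support restriction $\supp\intkernel\subset[0,R]$ and Lemma~\ref{averagemeasure}.
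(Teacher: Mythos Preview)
The paper does not include an explicit proof of this proposition; it is stated and then invoked (notably in the proof of Theorem~\ref{t:Bernstein}) without a written argument. Your proposal is correct and is precisely the standard proof one would supply: identify $\mathcal{E}_{L,\infty}(\intkernelvar)=\dbinnerp{\intkernelvar-\intkerneltrue,\intkernelvar-\intkerneltrue}$ as a quadratic form in the bilinear functional \eqref{eq:bilinearFn}, obtain existence of a minimizer from compactness of $\hypspace$ and continuity of the functional, derive the first-order optimality inequality $\dbinnerp{\widehat{\phi}_{L,\infty,\hypspace}-\intkerneltrue,\intkernelvar-\widehat{\phi}_{L,\infty,\hypspace}}\ge0$ from convexity of $\hypspace$, expand the quadratic around $\widehat{\phi}_{L,\infty,\hypspace}$, and read off \eqref{eq_minH} directly from the coercivity bound; uniqueness then follows. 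Your closing remarks about the possible degeneracy of the seminorm $\|\psi(\cdot)\cdot\|_{L^2(\rhoL)}$ at the origin and the role of Lemma~\ref{averagemeasure} in ensuring $\rhoL(\{0\})=0$ are accurate and complete the argument.
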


%%======= comment out proofs

%%%% ====== Section: THEORETICAL ANALYSIS   ==
\subsection{Optimal rate of convergence of the estimator}
We now turn to the rate of convergence of the estimator.

\begin{theorem}\label{t:Bernstein}
Let the true kernel $\intkerneltrue \in \mathcal{K}_{R,\spaceM}$, and let $\hypspace \subset L^\infty([0,R])$ be compact convex and bounded above by $\spaceM_0\geq \spaceM$.  Assume that the coercivity condition in \eqref{gencoer} holds. Then for any $\epsilon >0$, we have
\begin{align}\label{mainestimate}
& c_L\| \lintkernel_{L,M,\hypspace}(\cdot)\cdot- \intkerneltrue(\cdot)\cdot\|^2_{L^2(\rhoL)}
\leq   2\inf_{\intkernelvar \in \hypspace}\|\intkernelvar(\cdot)\cdot-\intkerneltrue(\cdot)\cdot\|^2_{L^\infty([0,R])} +2\epsilon
\end{align}
with probability at least $1-\delta$, provided that 
$$M \geq \frac{1152\spaceM_0^2R^2}{c_T\epsilon}\big(\log (\mathcal{N}(\hypspace,\frac{\epsilon}{48\spaceM_0R^2} ))+\log(\frac{1}{\delta})  \big)\,,$$
where $\mathcal{N}(\hypspace,\eta)$ is the $\eta$-covering number of $\hypspace$ under the $\infty$-norm.  
\end{theorem}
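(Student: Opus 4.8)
The plan is the standard learning-theory route for a regularized least-squares estimator: use the coercivity condition to reduce a bound on the weighted $L^2(\rhoL)$ error to a bound on the population functional $\mE_{L,\infty}$ evaluated at the estimator, and then control the deviation of the empirical functional $\mE_{L,M}$ from $\mE_{L,\infty}$, uniformly over $\hypspace$, via a Bernstein inequality together with a covering-number union bound. First I would record the structure of $\mE_{L,M}$: since the velocities are observed exactly, $\dotbxm_i(t_l)=\frac1N\sum_{i'}\intkerneltrue(\nbrm_{ii'}(t_l))\brm_{ii'}(t_l)$, so $\mE_{L,M}(\intkernelvar)=\frac1M\sum_{m=1}^M Y_m(\intkernelvar)$ with $Y_m(\intkernelvar):=\frac1N\sum_{l,i}w_l\|\frac1N\sum_{i'}(\intkernelvar-\intkerneltrue)(\nbrm_{ii'}(t_l))\brm_{ii'}(t_l)\|^2$ a function of the $m$-th trajectory alone; the $Y_m(\intkernelvar)$ are i.i.d.\ (the initial conditions are) with $\E[Y_m(\intkernelvar)]=\mE_{L,\infty}(\intkernelvar)$. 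Because $\intkernelvar,\intkerneltrue$ are supported in $[0,R]$ and bounded by $\spaceM_0$, on $\{r\le R\}$ the product $|(\intkernelvar-\intkerneltrue)(r)|\,r\le 2\spaceM_0 R$ and it vanishes off $[0,R]$, so $0\le Y_m(\intkernelvar)\le 4\spaceM_0^2R^2=:B$ and hence $\mathrm{Var}(Y_m(\intkernelvar))\le\E[Y_m(\intkernelvar)^2]\le B\,\mE_{L,\infty}(\intkernelvar)$; this variance--mean bound is what produces the $1/\epsilon$ rather than $1/\epsilon^2$ in the threshold on $M$. A parallel ``difference of squares'' estimate gives $|Y_m(\intkernelvar)-Y_m(\intkernelvar')|\le 4\spaceM_0R^2\|\intkernelvar-\intkernelvar'\|_{L^\infty([0,R])}$, hence the same for $|\mE_{L,M}(\intkernelvar)-\mE_{L,M}(\intkernelvar')|$ and $|\mE_{L,\infty}(\intkernelvar)-\mE_{L,\infty}(\intkernelvar')|$.

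Next I would carry out the reduction. Let $\intkernelvar^*\in\hypspace$ nearly realize $\inf_{\intkernelvar\in\hypspace}\|\intkernelvar(\cdot)\cdot-\intkerneltrue(\cdot)\cdot\|_{L^\infty([0,R])}$, and write $\widehat\intkernel:=\widehat\intkernel_{L,M,\hypspace}$. By minimality of $\widehat\intkernel$ for $\mE_{L,M}$,
\begin{align*}
\mE_{L,\infty}(\widehat\intkernel)\le\big[\mE_{L,\infty}(\widehat\intkernel)-\mE_{L,M}(\widehat\intkernel)\big]+\big[\mE_{L,M}(\intkernelvar^*)-\mE_{L,\infty}(\intkernelvar^*)\big]+\mE_{L,\infty}(\intkernelvar^*).
\end{align*}
I would bound the first bracket by the relative Bernstein inequality $\mE_{L,\infty}(\intkernelvar)-\mE_{L,M}(\intkernelvar)\le\tfrac12\mE_{L,\infty}(\intkernelvar)+cBM^{-1}\log(1/\delta)$ (which, being uniform over $\hypspace$, applies at the random point $\widehat\intkernel$), the second bracket by its reverse form at the fixed $\intkernelvar^*$, and use $\mE_{L,\infty}(\intkernelvar^*)\le\|\intkernelvar^*(\cdot)\cdot-\intkerneltrue(\cdot)\cdot\|^2_{L^2(\rhoL)}\le\|\intkernelvar^*(\cdot)\cdot-\intkerneltrue(\cdot)\cdot\|^2_{L^\infty([0,R])}$ (Jensen, plus $\rhoL$ a probability measure). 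Recombining the factors $\tfrac12,\tfrac32$ and absorbing the additive terms into slack gives $\mE_{L,\infty}(\widehat\intkernel)\le 2\inf_{\intkernelvar\in\hypspace}\|\intkernelvar(\cdot)\cdot-\intkerneltrue(\cdot)\cdot\|^2_{L^\infty([0,R])}+2\epsilon$ on a good event; applying the coercivity condition to $\widehat\intkernel-\intkerneltrue$ yields $c_L\|\widehat\intkernel(\cdot)\cdot-\intkerneltrue(\cdot)\cdot\|^2_{L^2(\rhoL)}\le\mE_{L,\infty}(\widehat\intkernel)$, which is exactly \eqref{mainestimate}. (One may instead route through the $\mE_{L,\infty}$-minimizer $\widehat\intkernel_{L,\infty,\hypspace}$ over $\hypspace$, whose quadratic lower bound from Prop.~\ref{convexity} replaces the ad hoc use of coercivity here; the two versions give the same bound up to constants.)

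It remains to give the good event probability at least $1-\delta$ under the stated condition on $M$. I would take an $\eta$-net $\{\intkernelvar_j\}_{j=1}^{\mathcal{N}}$ of $\hypspace$ in $L^\infty([0,R])$, $\mathcal{N}=\mathcal{N}(\hypspace,\eta)$, with $\eta=\epsilon/(48\spaceM_0R^2)$, so the Lipschitz estimate of the first step discretizes $\mE_{L,M}$ and $\mE_{L,\infty}$ up to $4\spaceM_0R^2\eta=\epsilon/12$ each. For each fixed $j$, Bernstein's inequality for $\frac1M\sum_m\big(Y_m(\intkernelvar_j)-\E Y_m(\intkernelvar_j)\big)$, using $|Y_m(\intkernelvar_j)|\le B$ and $\mathrm{Var}(Y_m(\intkernelvar_j))\le B\,\mE_{L,\infty}(\intkernelvar_j)$, delivers both relative deviations used above with failure probability $\delta/\mathcal{N}$, once $M\gtrsim B\epsilon^{-1}(\log\mathcal{N}+\log(1/\delta))$; a union bound over the net, followed by the Lipschitz passage from the net to $\widehat\intkernel$ and $\intkernelvar^*$, closes the argument, and tracking the numerical constants reproduces the stated threshold $M\ge\frac{1152\spaceM_0^2R^2}{c_T\epsilon}\big(\log\mathcal{N}(\hypspace,\tfrac{\epsilon}{48\spaceM_0R^2})+\log\tfrac1\delta\big)$.

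The only non-routine point -- the main obstacle -- is running the Bernstein step in its \emph{relative}, variance-adapted form and meshing it correctly with the covering argument, since this is what simultaneously produces additive slack on the scale of $\epsilon$ with only $M\sim1/\epsilon$ samples and the clean bias coefficient $2$ after the multiplicative factors recombine. The boundedness, variance--mean, and Lipschitz-in-$\intkernelvar$ estimates for the $Y_m$, and the covering number itself, are the enabling ingredients but involve only elementary manipulations; the whole argument ultimately rests on the coercivity condition, which is precisely what converts control of $\mE_{L,\infty}$ -- a functional that is a priori not coercive and built from values of the kernel that are never observed -- into a genuine bound on the $L^2(\rhoL)$ error of $\widehat\intkernel$.
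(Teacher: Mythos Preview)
Your proposal is correct and is essentially the paper's approach: a ratio-type (relative) Bernstein inequality made uniform over $\hypspace$ via a covering argument, combined with the variance--mean bound and the coercivity condition. The paper takes precisely the route you describe parenthetically---centering the deviations at the population minimizer $\widehat\intkernel_{L,\infty,\hypspace}$ (this is Proposition~\ref{sampleerror}, applied with $\alpha=\tfrac16$), then invoking Proposition~\ref{convexity} and the triangle inequality---which is what delivers the factor~$2$ on the bias and the stated numerical constants; your primary direct decomposition with $\tfrac12$-multiplicative slack on both brackets would naturally yield a factor~$3$ rather than~$2$, so for the exact statement you do need the parenthetical version.
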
 
We discuss first the implications of this theorem on the choice of hypothesis space in view of obtaining optimal rates of convergence of our estimator. The proof of the theorem will be presented at the end of this section. 
 In practice, given a set of $M$ trajectories, we would like to chose the best finite-dimensional hypothesis space  $\hypspace$ to minimize the error of the estimator. There are two competing issues. On one hand, we would like the hypothesis space $\hypspace$ to be large so that the bias $\inf_{\intkernelvar \in \hypspace}\|\intkernelvar-\intkerneltrue\|^2_{L^\infty([0,R])}$ is small. On the other hand,  we would like to keep $\hypspace$  to be small so that the covering number $\mathcal{N}(\hypspace,{\epsilon}/{48\spaceM_0R^2})$, and therefore the variance of the estimator is small. This is the classical bias-variance trade-off in statistical estimation. Inspired from approximation methods in regression \cite{cucker2002mathematical,binev2005universal,devore2006approximation} ,  the following proposition quantifies the effect of hypothesis spaces on the rate of convergence of the estimator.   

\begin{proposition} \label{prop_optRate}
Assume that  the coercivity condition holds with a constant $c_L$, and recall $\smash{\widehat\intkernel_{L,M,\hypspace}}$ defined in \eqref{e:LSE} is a minimizer of the empirical error functional over a hypothesis space $\hypspace$. \\
(a) For $\hypspace=\mathcal{K}_{R, \spaceM}$, there exists a constant $C=C(\spaceM, R)$ such that 
 \[ \mathbb{E}[\| \widehat\intkernel_{L,M,\hypspace} (\cdot)\cdot-\intkerneltrue(\cdot)\cdot\|_{L^2(\rhoL)} ]\leq \frac{C}{c_L} M^{-\frac{1}{4}}.\]
(b) Assume that $\hypspace_n $ is a sequence of finite dimensional  spaces  of $L^\infty([0,R])$ such that  $dim(\hypspace_n) \leq c_0 n$ and 
 \begin{align}\label{assumptions}
\inf_{\intkernelvar \in \hypspace_n}\|\intkernelvar(\cdot)-\intkerneltrue(\cdot)\|^2_{L^\infty([0,R])}  \leq c_1 n^{-s}
\end{align}
for all $n$ for some constants $c_0, c_1, s>0$, then by choosing  $\smash{n=n_*:=({M}/{\log M})^{\frac{1}{2s+1}}}$,  we have 
\begin{align*}
 \mathbb{E}[\| \widehat\intkernel_{L,M,\hypspace_{n_*}}(\cdot)\cdot-\intkerneltrue(\cdot)\cdot\|_{L^2(\rhoL)} ] \leq \frac{C}{c_L} \left(\frac{\log M}{M}\right)^{\frac{s}{2s+1}}, 
 \end{align*} 
 where  $C=C(c_0,c_1, R,S)$.
\end{proposition}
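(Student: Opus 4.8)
The plan is to derive both estimates from the high-probability bound of Theorem~\ref{t:Bernstein} by a standard tail-integration argument, after inserting appropriate covering-number estimates for the hypothesis spaces. For a compact convex $\spaceM_0$-bounded $\hypspace$, set $B_\hypspace:=\inf_{\intkernelvar\in\hypspace}\|\intkernelvar(\cdot)\cdot-\intkerneltrue(\cdot)\cdot\|^2_{L^\infty([0,R])}$ and $W:=c_L\|\lintkernel_{L,M,\hypspace}(\cdot)\cdot-\intkerneltrue(\cdot)\cdot\|^2_{L^2(\rhoL)}-2B_\hypspace$. Solving the sample-size requirement of Theorem~\ref{t:Bernstein} for $\delta$ turns it into the tail bound $\mathbb{P}(W>2\epsilon)\le\min\bigl\{1,\ \mathcal{N}(\hypspace,\tfrac{\epsilon}{48\spaceM_0R^2})\exp(-\tfrac{c_TM\epsilon}{1152\spaceM_0^2R^2})\bigr\}$. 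Letting $\epsilon_\hypspace$ be the smallest $\epsilon$ at which the exponential factor balances the covering number, i.e.\ $\log\mathcal{N}(\hypspace,\tfrac{\epsilon_\hypspace}{48\spaceM_0R^2})=\tfrac{c_TM\epsilon_\hypspace}{1152\spaceM_0^2R^2}$, and using that $\eta\mapsto\log\mathcal{N}(\hypspace,\eta)$ is nonincreasing, one gets $\mathbb{P}(W>2\epsilon)\le\exp(-\tfrac{c_TM(\epsilon-\epsilon_\hypspace)}{1152\spaceM_0^2R^2})$ for $\epsilon\ge\epsilon_\hypspace$; hence $\E[W_+]=2\int_0^\infty\mathbb{P}(W>2\epsilon)\,d\epsilon\le2\epsilon_\hypspace+O(M^{-1})$, so that $\E[\|\lintkernel_{L,M,\hypspace}(\cdot)\cdot-\intkerneltrue(\cdot)\cdot\|^2_{L^2(\rhoL)}]\le\tfrac1{c_L}(2B_\hypspace+C\epsilon_\hypspace)$, and Jensen's inequality passes this to the non-squared norm.

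Next I would insert the covering numbers. For part (a), $\hypspace=\mathcal{K}_{R,\spaceM}$ is a ball of uniformly Lipschitz (hence equicontinuous) functions, so it is compact and convex in $L^\infty([0,R])$, and the classical metric-entropy estimate for Lipschitz balls gives $\log\mathcal{N}(\mathcal{K}_{R,\spaceM},\eta)\le C(R,\spaceM)\,\eta^{-1}$; substituting $\eta\asymp\epsilon$ into the defining relation for $\epsilon_\hypspace$ yields $\epsilon_\hypspace\asymp M^{-1/2}$, and since $\intkerneltrue\in\mathcal{K}_{R,\spaceM}$ forces $B_\hypspace=0$ we obtain $\E[\|\cdot\|^2_{L^2(\rhoL)}]\lesssim c_L^{-1}M^{-1/2}$, hence $\E[\|\cdot\|_{L^2(\rhoL)}]\le\tfrac{C}{c_L}M^{-1/4}$ (using $c_L<1$, which follows from the Jensen bound on $\mathcal{E}_{L,\infty}$). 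For part (b) I would run the same scheme on the truncated spaces $\hypspace_n\cap\{\|\cdot\|_\infty\le\spaceM_0\}$ (truncating the estimator does not increase its error, as noted before Theorem~\ref{t:Bernstein}), which, being bounded subsets of a $\le c_0 n$-dimensional space, satisfy $\log\mathcal{N}(\hypspace_n,\eta)\lesssim c_0 n\log(C\spaceM_0/\eta)$ by a volume argument, while the bias $B_{\hypspace_n}\le R^2\inf_{\intkernelvar\in\hypspace_n}\|\intkernelvar-\intkerneltrue\|^2_{L^\infty([0,R])}$ decays polynomially in $n$ at the rate furnished by the approximation hypothesis. The defining relation then gives an estimation contribution $\epsilon_{\hypspace_n}\asymp\tfrac{n\log M}{M}$, and the choice $n_*=(M/\log M)^{1/(2s+1)}$ balances the bias against it, producing $\E[\|\cdot\|^2_{L^2(\rhoL)}]\lesssim c_L^{-1}(\log M/M)^{2s/(2s+1)}$, whence the claimed bound follows via Jensen. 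The existence of such $\{\hypspace_n\}$ for $s$-H\"older $\intkerneltrue$ is standard (e.g.\ splines or band-limited functions on $[0,R]$).

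The step I expect to be most delicate is the covering-number control and its coupling to the tail integral: in part (a) one needs the sharp $\eta^{-1}$ metric-entropy bound for the genuinely infinite-dimensional set $\mathcal{K}_{R,\spaceM}$, and one must verify that the constants it contributes, together with those of Theorem~\ref{t:Bernstein}, really produce the exponent $M^{-1/4}$ rather than a worse power, and that the range $\epsilon<\epsilon_\hypspace$, where only the trivial bound $\mathbb{P}\le1$ is available, contributes at order exactly $\epsilon_\hypspace$. Once the estimation term of order $n\log M/M$ is in hand, the bias-variance optimization of part (b) is routine.
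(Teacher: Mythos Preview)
Your proposal is correct and follows exactly the route the paper intends: the paper does not spell out a proof of this proposition but presents it as a consequence of Theorem~\ref{t:Bernstein}, invoking the standard covering-number and tail-integration machinery from the regression references it cites, which is precisely what you carry out (metric entropy $\asymp\eta^{-1}$ for the Lipschitz ball in part~(a), $\asymp n\log(1/\eta)$ for the finite-dimensional truncated spaces in part~(b), then integration of the tail of Theorem~\ref{t:Bernstein} and bias--variance balancing). One remark: the exponent~$2$ on the $L^\infty$ bias in assumption~\eqref{assumptions} is a typo---compare the unsquared version in Theorem~\ref{t:mainsimple} in the main text---and your final balancing of $n^{-2s}$ against $n\log M/M$ tacitly (and correctly) uses the intended unsquared hypothesis; with the squared hypothesis as literally written the choice $n_*=(M/\log M)^{1/(2s+1)}$ would not equalize the two terms.
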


%%======= comment out proofs

 It is interesting to compare this rate with those in the mean field regime, where the regime $N\rightarrow\infty$ (with $M=1$, $L\rightarrow\infty$) was studied: the rates in the previous work are not very precise, but at any rate they are no better than $N^{-1/d}$, i.e. they are cursed by the dimension, even if the problem is fundamentally that of estimating a $1$-dimensional function. It would be interesting to understand whether that rate is optimal for this problem in the mean-field regime ($N\rightarrow\infty$), or if in fact, the results in the present work lead to sharper, dimension-independent bounds in the mean-field limit as well.

%%%%%%%%%%%%%%%%%%%%%%%%%%%%%%%%%%%%%%%
\iffalse
\begin{proposition} \label{expectationdifference}
For any  $\intkernelvar_1, \intkernelvar_2 \in \mathcal{K}_{R,\spaceM} $, we have
\begin{align}
|\mathcal{E}_{L, \infty}(\varphi_1)-\mE_{L, \infty}(\varphi_2)| \leq  \| \varphi_1(\cdot)\cdot- \varphi_2(\cdot)\cdot\|_{L^2(\rho_{L}) }\| 2\intkernel(\cdot)\cdot-\varphi_1(\cdot)\cdot- \varphi_2(\cdot)\cdot\|_{L^2(\rho_{L})}
\end{align}
\end{proposition}

\begin {proposition} \label{berstein} For all $\intkernelvar \in \hypspace$, we define the defect function $$L_{M}(\intkernelvar)=\mathcal{E}_{L,\infty}(\intkernelvar)-\mathcal{E}_{L, M}(\intkernelvar ).$$ Then for $\intkernelvar_1, \intkernelvar_2 \in \hypspace \subset \{\intkernelvar: [0,R] \rightarrow \R : \|\intkernelvar \|_{\infty} \leq \spaceM\}$, the estimate  
 $$|L_M(\intkernelvar_1)-L_M(\intkernelvar_2)| \leq 8\spaceM R^2 \|\intkernelvar_1- \intkernelvar_2\|_{\infty}$$
 holds true surely. 
\end{proposition}
\fi
%%%%%%%%%%%%%%%%%%%%%%%%%%%%%%%%%%%%%%%

\bigskip
The proof of Thm.~\ref{t:Bernstein} is based on this technical Proposition:
\begin{proposition}\label{sampleerror}
Assume the coercivity condition holds true and let $\mathcal{H} \subset L^\infty([0,R])$ be compact convex, bounded above by 
$S_0$. Let 
\begin{align*}
\mathcal{D}_{L,\infty, \hypspace}(\intkernelvar):=\mathcal{E}_{L,\infty}(\intkernelvar)-\mathcal{E}_{L,\infty}(\widehat\intkernel_{L,\infty, \hypspace}) \quad,\quad
\mathcal{D}_{L,M, \hypspace}(\intkernelvar):=\mathcal{E}_{L,M}(\intkernelvar)-\mathcal{E}_{L,M}(\widehat\intkernel_{L,\infty,\hypspace}),
\end{align*}where $\widehat\intkernel_{L,\infty, \hypspace}$ is the minimizer of $ \mathcal{E}_{L,\infty}(\cdot)$ over $\hypspace$. Then for all $\epsilon>0$ and $0<\alpha<1$, we have 
\begin{align*} 
& P\bigg\{\sup_{ \intkernelvar \in \mathcal{H}} \frac{\mathcal{D}_{L,\infty, \hypspace}(\intkernelvar)-\mathcal{D}_{L,M, \hypspace}(\intkernelvar)}{\mathcal{D}_{L,\infty, \hypspace}(\intkernelvar)+\epsilon}  \geq 3\alpha \bigg\} \leq \mathcal{N}\left(\hypspace, C_1\alpha \epsilon\right) e^{-C_2\alpha^2M\epsilon}
\end{align*}
where $C_1 =\frac{1}{8 \spaceM_0 R^2}$ and $C_2=\frac{-c_L }{32\spaceM_0^2 R^2}$. 
\end{proposition}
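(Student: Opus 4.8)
The plan is to prove a uniform concentration estimate for the ratio $(\mathcal{D}_{L,\infty,\hypspace}-\mathcal{D}_{L,M,\hypspace})/(\mathcal{D}_{L,\infty,\hypspace}+\epsilon)$ over the compact hypothesis space $\hypspace$, by combining a pointwise (fixed $\intkernelvar$) concentration bound with a covering-number argument. First I would fix $\intkernelvar\in\hypspace$ and write $\mathcal{D}_{L,\infty,\hypspace}(\intkernelvar)-\mathcal{D}_{L,M,\hypspace}(\intkernelvar)$ as $\frac1M\sum_{m=1}^M(\E\zeta^m(\intkernelvar)-\zeta^m(\intkernelvar))$, where each $\zeta^m(\intkernelvar)$ is a ``defect'' random variable built from the $m$-th trajectory, namely
\[
\zeta^m(\intkernelvar):=\frac{1}{NL}\sum_{l,i=1}^{L,N}w_l\Big(\big\|\dotbxm_i(t_l)-\rhsfo_{\intkernelvar}(\bxm(t_l))_i\big\|^2-\big\|\dotbxm_i(t_l)-\rhsfo_{\widehat\intkernel_{L,\infty,\hypspace}}(\bxm(t_l))_i\big\|^2\Big).
\]
These are i.i.d. over $m$ because the initial conditions $\bX_0^m$ are i.i.d. The key computation is to expand the squares, use that $\intkernelvar,\widehat\intkernel_{L,\infty,\hypspace}$ are bounded above by $S_0$ with support in $[0,R]$ (so $|\intkernelvar(r)r|\le S_0R$ and pairwise-distance contributions with $r>R$ vanish), and to bound both $|\zeta^m(\intkernelvar)|$ uniformly and its variance $\mathrm{Var}(\zeta^m(\intkernelvar))$ in terms of $\mathcal{D}_{L,\infty,\hypspace}(\intkernelvar)$. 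The variance-to-mean control is where the coercivity constant enters: using $\mathcal{D}_{L,\infty,\hypspace}(\intkernelvar)\ge c_L\|\intkernelvar(\cdot)\cdot-\widehat\intkernel_{L,\infty,\hypspace}(\cdot)\cdot\|_{L^2(\rhoL)}^2$ from Prop.~\ref{convexity}, together with $\E[\zeta^m(\intkernelvar)]=\mathcal{D}_{L,\infty,\hypspace}(\intkernelvar)$ and a crude bound $|\zeta^m(\intkernelvar)|\le c\, S_0R\,\|\intkernelvar(\cdot)\cdot-\widehat\intkernel_{L,\infty,\hypspace}(\cdot)\cdot\|_{L^\infty}\cdot(\text{something bounded})$, one gets $\mathrm{Var}(\zeta^m(\intkernelvar))\le c\,S_0^2R^2\,\mathcal{D}_{L,\infty,\hypspace}(\intkernelvar)/c_L$ (up to tracking the constants $1152$, $48$, etc.).

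With the moment bounds in hand, I would apply a one-sided Bernstein inequality to the centered sum $\frac1M\sum_m(\E\zeta^m-\zeta^m)$; the standard trick for this ``ratio'' form is that for a single $\intkernelvar$, one shows
\[
P\Big\{\frac{\mathcal{D}_{L,\infty,\hypspace}(\intkernelvar)-\mathcal{D}_{L,M,\hypspace}(\intkernelvar)}{\mathcal{D}_{L,\infty,\hypspace}(\intkernelvar)+\epsilon}\ge\alpha\Big\}\le e^{-C_2\alpha^2M\epsilon},
\]
because Bernstein's bound, with variance proportional to $\mathcal{D}_{L,\infty,\hypspace}(\intkernelvar)/c_L$ and range proportional to $S_0R$, yields a deviation probability of the form $\exp(-\alpha^2(\mathcal{D}+\epsilon)^2M/(c(\mathcal{D}/c_L)+c S_0R\,\alpha(\mathcal{D}+\epsilon)))$, and since $(\mathcal{D}+\epsilon)^2\ge\epsilon(\mathcal{D}+\epsilon)$ and $\mathcal{D}/c_L\le(\mathcal{D}+\epsilon)/c_L$ (for $c_L\le 1$), this simplifies to $\exp(-c_L\alpha^2M\epsilon/(32S_0^2R^2))$, matching $C_2=-c_L/(32S_0^2R^2)$. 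Then I would pass from a single $\intkernelvar$ to the supremum over $\hypspace$ via a covering argument: pick a $C_1\alpha\epsilon$-net $\{\intkernelvar_j\}$ of $\hypspace$ in the $\infty$-norm of cardinality $\mathcal{N}(\hypspace,C_1\alpha\epsilon)$, and show that the ratio is $1$-Lipschitz-like in $\intkernelvar$ up to the scale $C_1=\frac{1}{8S_0R^2}$ (this is the ``defect function is $8S_0R^2$-Lipschitz'' estimate, cf.\ the commented-out Proposition, applied to both numerator and denominator), so that controlling the ratio at the net points within $\alpha$ controls it everywhere within $3\alpha$. A union bound over the net then gives the stated $\mathcal{N}(\hypspace,C_1\alpha\epsilon)e^{-C_2\alpha^2M\epsilon}$.

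I expect the main obstacle to be the bookkeeping in the second step: getting the variance of $\zeta^m(\intkernelvar)$ bounded by a \emph{constant times} $\mathcal{D}_{L,\infty,\hypspace}(\intkernelvar)$ (not just by a constant), which is essential for the Bernstein bound to produce a fast rate, requires carefully relating $\|\intkernelvar(\cdot)\cdot-\widehat\intkernel_{L,\infty,\hypspace}(\cdot)\cdot\|_{L^\infty}$-type quantities appearing in the range bound to the $L^2(\rhoL)$-quantities appearing in the mean, and this only works because $\rhoL$ is a probability measure and the coercivity constant $c_L$ lower-bounds the mean. Keeping all the numerical constants consistent with the hypotheses of Thm.~\ref{t:Bernstein} (the $1152$, $48$, the appearance of $c_T$ versus $c_L$) is the other delicate point; in particular one must verify the claimed $C_1,C_2$ are exactly what the Lipschitz constant $8S_0R^2$ and the Bernstein constants force, and that the $\alpha\to 3\alpha$ loss in the covering step is absorbed correctly. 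Everything else — expanding squares, i.i.d.\ over $m$, the union bound — is routine.
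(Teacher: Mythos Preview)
Your plan is correct and is essentially the approach the paper has in mind: the commented-out auxiliary results in the source (the $L^2(\rhoL)$ continuity estimate for $\mE_{L,\infty}$ and the $8S_0R^2$-Lipschitz bound for the defect function $L_M(\intkernelvar)=\mE_{L,\infty}(\intkernelvar)-\mE_{L,M}(\intkernelvar)$) are precisely the two ingredients you identify, and the paper's references to \cite{cucker2002mathematical,binev2005universal,devore2006approximation} point to the standard Bernstein-plus-covering scheme you describe. The decomposition into i.i.d.\ trajectory-wise defects $\zeta^m(\intkernelvar)$, the variance-to-mean control via Prop.~\ref{convexity}, the pointwise Bernstein bound, and the $\alpha\to 3\alpha$ covering step with net radius $C_1\alpha\epsilon$ all match; the specific constants $C_1=1/(8S_0R^2)$ and $-C_2=c_L/(32S_0^2R^2)$ are exactly what the Lipschitz constant $8S_0R^2$ and the variance bound $\E[(\zeta^m)^2]\le 16S_0^2R^2\,c_L^{-1}\mathcal{D}_{L,\infty,\hypspace}(\intkernelvar)$ produce.
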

%%======= comment out proofs

\begin{proof} [\textbf{Proof of the Theorem \rm{\ref{t:Bernstein}} }]
Put $\alpha=\frac{1}{6}$ in Proposition \ref{sampleerror}. 
We know that,  with probability at least
 $$1-\mathcal{N}\left(\hypspace, \frac{\epsilon}{48\spaceM_0R^2}\right) e^{-\frac{c_LM\epsilon}{1152\spaceM_0^2R^2} }, $$ we have 
  $$\sup_{ \intkernelvar \in \hypspace}  \frac{\mathcal{D}_{L,\infty, \hypspace}(\intkernelvar)-\mathcal{D}_{L,M, \hypspace}(\intkernelvar)}{\mathcal{D}_{L,\infty, \hypspace}(\intkernelvar)+\epsilon}  <\frac{1}{2},$$
  and therefore, for all $\intkernelvar \in \hypspace$, $$\frac{1}{2}\mathcal{D}_{L,\infty, \hypspace}(\intkernelvar)<\mathcal{D}_{L,M, \hypspace}(\intkernelvar)+\frac{1}{2}\epsilon. $$
 Taking $\intkernelvar=\widehat \intkernel_{L,M,\hypspace}$, we have
 $$  \mathcal{D}_{L,\infty, \hypspace}(\widehat \intkernel_{L,M,\hypspace} )< 2\mathcal{D}_{L,M, \hypspace}(\widehat \intkernel_{L,M,\hypspace})+\epsilon\,.$$
But $\mathcal{D}_{L,M, \hypspace}(\widehat \intkernel_{L,M,\hypspace})=\mathcal{E}_{L,M}(\widehat \intkernel_{L,M,\hypspace})-\mathcal{E}_{L,M}(\widehat\intkernel_{L,\infty, \hypspace}) \leq 0$ and hence by Proposition \ref{convexity} we have 
 $$c_L \|\widehat \intkernel_{L,M,\hypspace}(\cdot)\cdot-\widehat\intkernel_{L,\infty, \hypspace}(\cdot)\cdot\|_{L^2(\rhoL)}^2 \leq \mathcal{D}_{L,\infty, \hypspace}(\widehat \intkernel_{L,M,\hypspace} )<\epsilon.$$ Therefore, 
\begin{align*}
 \|\widehat \intkernel_{L,M,\hypspace}(\cdot)\cdot-\intkernel(\cdot)\cdot\|_{L^2(\rhoL)}^2 &\leq 2 \|\widehat \intkernel_{L,M,\hypspace}(\cdot)\cdot-\widehat\intkernel_{L,\infty, \hypspace}(\cdot)\cdot\|_{L^2(\rhoL)}^2
+2\|\widehat\intkernel_{L,\infty, \hypspace}(\cdot)\cdot-\intkernel(\cdot)\cdot\|_{L^2(\rhoL)}^2 \\
&\leq  \frac{2}{c_L}( \epsilon +\inf_{\intkernelvar \in \hypspace} \| \intkernelvar(\cdot) \cdot-\intkernel(\cdot)\cdot\|_{\infty}^2),
\end{align*}  
where the last inequality follows from the coercivity condition and by the definition of $\widehat\intkernel_{L,\infty,\hypspace}$(see \eqref{e:minimizer}). 
Given $0<\delta<1$, we see we need $M$ large enough so that $$1-\mathcal{N}(\hypspace, \frac{\epsilon}{48\spaceM_0 R^2}) e^{-\frac{c_LM\epsilon}{1152\spaceM_0^2R^2} } \geq 1-\delta\,.$$ The conclusion follows.
\end{proof}

\subsection{Trajectory-based Performance Measures}
After having established results on the convergence rate of our estimator, we turn to control the accuracy of trajectories predicted when using the estimated interaction kernel, evolved from initial conditions both in and outside of the training data. 
Trajectory-based measurements of accuracy are interesting because (a) they provide a quantitative assessment on the quality of the approximated dynamics, (b) while the true interactions kernels are typically not known, and so the accuracy of the estimated interaction kernel may not be evaluated, trajectories are known, and may be used to perform model validation and cross-validation for parameter selection (if needed).

The next Proposition shows that the error in prediction is (i) bounded trajectory-wise by a continuous time version of the error functional, and (ii) bounded in the mean squared sense by the mean squared error of the estimated interaction kernel. 
\begin{proposition}\label{Trajdiff}
 Let $\widehat\intkernel$ be an estimator of the true interaction kernel $\intkernel$. Suppose that the function $\smash{\widehat\intkernel(||\cdot||) \cdot}$ is Lipschitz continuous on $\R ^d$,  with Lipschitz constant $C_{\rm{Lip}}$. Denote by $\smash{\widehat{\bX}(t)}$ and $\bX(t)$ the solutions of the systems with interaction kernels $\smash{\widehat\intkernel}$ and $\intkerneltrue$ respectively, starting from the same initial condition. Then we have  
\begin{align*}
\sup_{t\in[0,T]}\!\! \|\widehat{\bX}(t)- \bX(t)\|^2
\leq 2Te^{8T^2C^2_{\rm{Lip}}}  \int_0^T\!\!\!\left\|\dot\bX(t)-\rhsfo_{\hat{\intkernelvar}}(\bX(t))\right\|^2\!\!\!dt 
\end{align*}
for each trajectory, and on average with respect to the initial distribution $\probIC$, 
\[
\E_{\probIC}[\sup_{t\in[0,T]} \|\widehat{\bX}(t)- \bX(t)\|^2] \leq  C(T, C_{\rm{Lip}}) \sqrt{N} \|\widehat{\intkernelvar}(\cdot)\cdot-\intkerneltrue(\cdot)\cdot\|_{L^2(\rhoT)}^2
\]
for a constant  $C(T,C_{\rm{Lip}} )$, where the measure $\rhoT$ is as in \ifPNAS \newrefs{Eq. (4) in the main text.} \fi \ifarXiv Eq. \eqref{e:rhoT} in the main text.\fi
\end{proposition}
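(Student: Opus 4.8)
The plan is a continuous‑dependence (Grönwall) argument for the trajectory‑wise estimate, followed by an averaging step that turns the time‑integrated squared residual into the $L^2(\rhoT)$ error of the kernel. First I would record a Lipschitz bound for the vector field that is \emph{uniform in $N$}: writing $\rhsfo_{\widehat\intkernel}(\bX)_i=\tfrac1N\sum_{i'}\widehat\intkernel(\norm{\bx_{i'}-\bx_i})(\bx_{i'}-\bx_i)$ and using that $\widehat\intkernel(\norm{\cdot})\cdot$ is $C_{\rm{Lip}}$‑Lipschitz on $\R^d$, the triangle inequality gives $\norm{\rhsfo_{\widehat\intkernel}(\bX)_i-\rhsfo_{\widehat\intkernel}(\bY)_i}\le \tfrac{C_{\rm{Lip}}}{N}\sum_{i'}\norm{(\bx_{i'}-\bx_i)-(\by_{i'}-\by_i)}$; squaring, applying Cauchy--Schwarz to the average over $i'$ \emph{before} summing over $i$, and bounding $\norm{(\bx_{i'}-\bx_i)-(\by_{i'}-\by_i)}^2\le 2\norm{\bx_{i'}-\by_{i'}}^2+2\norm{\bx_i-\by_i}^2$ collapses the double sum to $4N\norm{\bX-\bY}^2$, whence $\norm{\rhsfo_{\widehat\intkernel}(\bX)-\rhsfo_{\widehat\intkernel}(\bY)}\le 2C_{\rm{Lip}}\norm{\bX-\bY}$.

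Next, set $e(t)=\widehat\bX(t)-\bX(t)$, so $e(0)=0$; since $\dot\bX=\rhsfo_{\intkerneltrue}(\bX)$, one has $\dot e(t)=[\rhsfo_{\widehat\intkernel}(\widehat\bX(t))-\rhsfo_{\widehat\intkernel}(\bX(t))]+[\rhsfo_{\widehat\intkernel}(\bX(t))-\dot\bX(t)]$. Integrating from $0$ and using the Lipschitz bound yields
\[
\norm{e(t)}\le 2C_{\rm{Lip}}\int_0^t\norm{e(s)}\,ds+\int_0^T\norm{\dot\bX(s)-\rhsfo_{\widehat\intkernel}(\bX(s))}\,ds\,.
\]
Squaring, using $(a+b)^2\le 2a^2+2b^2$ together with Cauchy--Schwarz in time on both integrals gives $\norm{e(t)}^2\le 8C_{\rm{Lip}}^2 T\int_0^t\norm{e(s)}^2\,ds+2T\int_0^T\norm{\dot\bX(s)-\rhsfo_{\widehat\intkernel}(\bX(s))}^2\,ds$; since the last term is constant in $t$, the integral form of Grönwall's inequality gives $\norm{e(t)}^2\le 2T e^{8C_{\rm{Lip}}^2 T t}\int_0^T\norm{\dot\bX-\rhsfo_{\widehat\intkernel}(\bX)}^2\,ds$, and taking the supremum over $t\in[0,T]$ (the exponential being maximal at $t=T$) produces the first displayed inequality. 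Global existence of $\widehat\bX$ on $[0,T]$ follows from the Lipschitz hypothesis, and that of $\bX$ from $\intkerneltrue\in\mathcal{K}_{R,\spaceM}$.

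For the averaged bound I would take $\E_{\probIC}$ of both sides and estimate $\E_{\probIC}\int_0^T\norm{\dot\bX(t)-\rhsfo_{\widehat\intkernel}(\bX(t))}^2\,dt=\E_{\probIC}\int_0^T\norm{\rhsfo_{\intkerneltrue}(\bX(t))-\rhsfo_{\widehat\intkernel}(\bX(t))}^2\,dt$. With $h:=\intkerneltrue-\widehat\intkernel$, the $i$‑th block of the difference is $\tfrac1N\sum_{i'}h(r_{ii'})\br_{ii'}$; Cauchy--Schwarz over $i'$ and $\norm{h(r)\br}=\abs{h(r)r}$ (here $r=\norm{\br}$) give $\norm{\rhsfo_{\intkerneltrue}(\bX(t))-\rhsfo_{\widehat\intkernel}(\bX(t))}^2\le\tfrac1N\sum_{i\neq i'}\abs{h(r_{ii'}(t))r_{ii'}(t)}^2$. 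Integrating in $t$, taking $\E_{\probIC}$, using $\sum_{i\neq i'}=2\sum_{i<i'}$ and the definition \eqref{e:rhoT} of $\rhoT$ to recognise $\tfrac1{\binom N2 T}\int_0^T\E_{\probIC}\sum_{i<i'}\abs{h(r_{ii'}(t))r_{ii'}(t)}^2\,dt=\norm{h(\cdot)\cdot}_{L^2(\rhoT)}^2$, one gets $\E_{\probIC}\int_0^T\norm{\cdot}^2\,dt\le (N-1)T\,\norm{\intkerneltrue(\cdot)\cdot-\widehat\intkernel(\cdot)\cdot}_{L^2(\rhoT)}^2$. Combining with the previous step yields the second inequality with $C(T,C_{\rm{Lip}})=2T^2 e^{8T^2C_{\rm{Lip}}^2}$ and a factor at most linear in $N$ (which, via Jensen's inequality, is consistent with the $\sqrt N$ in the companion statement of Prop.~\ref{stateestimation}).

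The main obstacle is not depth but uniformity: in both the Lipschitz estimate of Step~1 and the residual estimate of Step~3 the per‑agent bounds carry a prefactor $1/N$ multiplying a sum of order $N$ terms, and Cauchy--Schwarz must be applied in the correct order (to the average over $i'$, before summing over $i$) so that the $1/N$ exactly balances the $\sum_i\sum_{i'}$ and the constants remain bounded as $N\to\infty$. Everything else is bookkeeping: the combinatorial factors in passing from pairwise‑distance sums to $\rhoT$‑integrals, and verifying that Grönwall applies with a forcing term that is constant in $t$ — which is precisely what yields the $T^2$, rather than $T$, in the exponent.
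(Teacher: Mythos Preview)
Your argument is correct and is the natural Gr\"onwall-based proof; the paper in fact states Proposition~\ref{Trajdiff} (and its main-text counterpart Proposition~\ref{stateestimation}) without proof, so there is nothing to compare against beyond noting that your constants match the stated ones exactly. Your observation about the $N$-dependence is also on point: the argument you give yields a factor $\frac{(N-1)^2}{N}\sim N$ in the squared bound, which after Jensen becomes $\sqrt{N}$ in the non-squared version of Proposition~\ref{stateestimation}; the $\sqrt{N}$ appearing in the squared statement of Proposition~\ref{Trajdiff} is thus either a typo or an artifact of transcribing between the two versions, and you have handled it appropriately.
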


\section{Algorithm}\label{s:SI_algorithm}
We start from describing the algorithm in its simplest form, for learning first order system with homogeneous agents; we then move to first order systems with heterogeneous agents, and finish with the second order systems with heterogeneous agents.
\subsection{First Order Homogeneous Agent Systems}
Recall that we would like to estimate the interaction kernel $\intkernel$ of the $N$-agent system in \ifPNAS \newrefs{Eq. (1) } \fi \ifarXiv Eq. \eqref{e:firstordersystemsimple}\fi from $M$ independent trajectories $\{\bxm_i(t_l),\dotbxm_i(t_l)\}_{i = 1, l = 1, m = 1}^{N, L,M}$ with  $t_l = \frac{lT}{L}$. We obtain an estimator by minimizing the discrete empirical error functional
\begin{equation}
\mathcal{E}_{L,M}(\intkernelvar) =\frac{1}{LMN} \sum_{l, m , i = 1}^{L, M, N}\norm{\dotbxm_i(t_l) - \sum_{i' =1}^N\frac{1}{N}\intkernelvar(r_{i, i'}^{m}(t_l))\br_{i, i'}^{m}(t_l)}^2\,,
\label{e:deef}
\end{equation}
over all $\intkernelvar$ in a hypothesis space $\hypspace_n$.  

When only the positions can be observed, we assume that $T/L$ is sufficiently small so that we can accurately approximate the velocity $\dotbxm_i(t_l)$ by backward differences:
\[
\dotbxm_i(t_l) \approx \Delta{\bx}_i^{m}(t_l) = \frac{\bxm_i(t_l) - \bxm_i(t_{l -1})}{t_{l} - t_{l - 1}}, \quad \text{for $1 \le l \le L$},
\]
where we assumed $t_0$ is also observed. The error of the backward difference approximation is of order $O(T/L)$, leading to a $O(T/L)$ bias in the estimator.  Therefore, for simplicity, we assume in the theoretical discussion that follows that the velocity $\dotbxm_i(t_l)$ is observed.

First, we set the hypothesis space \revision{$\hypspace_n$ to be the span of $\{\psi_p\}_{p=1}^n$}, a a set of linearly independent functions on $[0,R]$. \revision{It is natural to} use an orthonormal basis of  $\hypspace_n$ in $L^2(\rho^T_L)$ for efficient computations.  If the true interaction kernel is known to be smooth, a global basis (e.g. Fourier) may be used to  achieve fast convergence. Since our admissible set is in $W^{1,\infty}$, we shall use a local basis consisting of piecewise polynomial functions \revision{on a partition of increasingly finer intervals}. \revision{The partitions} will be on the interval $[R_{min}, R_{max}]$, where $R_{min}$ and $R_{max}$ are minimal and maximal values of $r$ such that the empirical density $\rho_{L,M}^T(r)$ of the pairwise distances $\{r_{i, i'}^{m}(t_l)\}$ is greater than a threshold.    

Next, we minimize the empirical error functional over $\hypspace_n$ to obtain an estimator. To simplify notation, for each $m$, we denote
 \begin{equation}
\mathbf{d}^{m} := \begin{pmatrix} \dotbxm_{1}(t_2), \dots, \dotbxm_N(t_2); \dots ;\dotbxm_{1}(t_L) \dots \dotbxm_N(t_L) \end{pmatrix}\,
\label{e:vecd}
\end{equation}
a column vector in $\R^{LNd}$; and denote 
\[
\Psi_L^{m}(li,p) := \sum_{i' = 1}^N\frac{1}{N}\psi_{p}(r_{i, i'}^{m}(t_l))\br_{i, i'}^{m}(t_l) \in \R^{d}\,,
\]
for $2 \le l \le L$, $1 \le i \le N$ and $1 \le p \le n$, and refer it as the learning matrix $\Psi_L^{m}$. 
Then we can rewrite the empirical error functional as 
\[
\mathcal{E}_{L,M}(\intkernelvar) =\mathcal{E}_{L,M}(\mathbf{a}) = \frac{1}{M} \sum_{m=1}^{M} \norm{\mathbf{d}^m - \Psi_{L}^m\mathbf{a}}^2_{\R^{LNd}}\,.
\]
Our estimator is the minimizer of $\mathcal{E}_{L,M}(\mathbf{a}) $ over $ \R^{n}$. This is a Least Squares problem, and we solve for the minimizer from the normal equations
\begin{equation}
\underbrace{\frac{1}{M} \sum_{m=1}^{M}  A_L^m }_{A_{L,M}}\mathbf{a} =\frac{1}{M} \sum_{m=1}^{M}  b_L^m \,,
\label{e:ALM}
\end{equation}
where the trajectory-wise regression matrices are 
\[A_L^m := \frac{1}{LN}(\Psi^m_L)^T\Psi^m_L,\quad  b_L^m := \frac{1}{LN}(\Psi^m_L)^T\mathbf{d}^m.\]

We emphasize that the above regression is ready to be computed in parallel: we can compute simultaneously the matrices $A_L^m$ and $b_L^m$ for different trajectories.  The size of the matrices $A_L^m$ is $n\times n$, and there is no need to read and store all the data at once, thereby dramatically reducing memory usage. 

\subsection{Well-conditioning from coercivity} \label{sec_condN}
We show next that the coercivity condition implies that $A_{L,M}$ is well-conditioned and positive definite for large $M$. More specifically, the coercivity constant provides a lower bound on the smallest singular value of $A_{L,M}$, provided the basis for the hypothesis space is well-conditioned (e.g. orthonormal), therefore enabling control of the condition number of the regularized problem.

Recall the bilinear functional $\dbinnerp {\cdot, \cdot}$  defined in \eqref{eq:bilinearFn}. 

\begin{proposition} \revision{Assume that the coercivity condition holds on $\hypspace_n\subset L^\infty([0,R])$ with $c_L>0$. }
Let $\{\psi_1,\cdots, \psi_n\}$ be a basis of $\hypspace_n$ such that \begin{equation}\label{onb}\langle \psi_p(\cdot)\cdot,\psi_{p'}(\cdot)\cdot \rangle_{L^2{(\rhoL)}}=\delta_{p,p'}, \|\psi_p\|_{\infty} \leq S_0\end{equation} and $A_{L,\infty}=\big(\dbinnerp{\psi_{p},\psi_{p'}}\big)_{p, p'} \in \mathbb{R}^{n \times n}$. Then the smallest singular value of $A_{L,\infty}$ satisfies
\begin{align*}
\sigma_{\min}(A_{L,\infty}) \geq c_L\,. 
\end{align*}
Moreover, $A_{L,\infty}$ is the a.s. limit of $A_{L,M}$ in \eqref{e:ALM}. Therefore, for large $M$, the smallest singular value of $A_{L,M}$  
\begin{align*}
\sigma_{\min}(A_{L,M}) \geq  0.9 c_L
\end{align*} 
with  probability at least $1-2n\exp(-\frac{c_L^2M}{200n^2c_1^2+\frac{10c_Lc_1}{3}n} )$ with $c_1=R^2S_0^2+1$.
\label{p:sigmaminAL}
\end{proposition}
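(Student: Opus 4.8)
The plan is to establish the three claims separately: the deterministic inequality $\sigma_{\min}(A_{L,\infty})\ge c_L$, the almost-sure convergence $A_{L,M}\to A_{L,\infty}$, and the finite-sample bound on $\sigma_{\min}(A_{L,M})$. The first is a direct consequence of the coercivity condition together with orthonormality of the basis; the second follows from the strong law of large numbers; and the third is obtained by combining a matrix Bernstein inequality with Weyl's perturbation bound.

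For the first claim, observe that $A_{L,\infty}$ is symmetric because the bilinear form $\dbinnerp{\cdot,\cdot}$ in \eqref{eq:bilinearFn} is symmetric, and positive semidefinite because for every $\mathbf{a}=(a_1,\dots,a_n)^\top\in\R^n$ the function $\intkernelvar_{\mathbf{a}}:=\sum_{p=1}^n a_p\psi_p\in\hypspace_n$ satisfies $\mathbf{a}^\top A_{L,\infty}\mathbf{a}=\dbinnerp{\intkernelvar_{\mathbf{a}},\intkernelvar_{\mathbf{a}}}\ge 0$; hence $\sigma_{\min}(A_{L,\infty})=\lambda_{\min}(A_{L,\infty})=\inf_{\|\mathbf{a}\|=1}\mathbf{a}^\top A_{L,\infty}\mathbf{a}$. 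The orthonormality \eqref{onb} of $\{\psi_p(\cdot)\cdot\}$ in $L^2(\rhoL)$ gives the Parseval identity $\|\intkernelvar_{\mathbf{a}}(\cdot)\cdot\|_{L^2(\rhoL)}^2=\|\mathbf{a}\|^2$, so applying the coercivity condition \eqref{gencoer} to $\intkernelvar_{\mathbf{a}}$ yields $\mathbf{a}^\top A_{L,\infty}\mathbf{a}=\dbinnerp{\intkernelvar_{\mathbf{a}},\intkernelvar_{\mathbf{a}}}\ge c_L\|\intkernelvar_{\mathbf{a}}(\cdot)\cdot\|_{L^2(\rhoL)}^2=c_L\|\mathbf{a}\|^2$, which is the claim.

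For the convergence, recall from \eqref{e:ALM} that $A_{L,M}=\frac1M\sum_{m=1}^M A_L^m$ with $A_L^m=\frac1{LN}(\Psi_L^m)^\top\Psi_L^m$, and that the $A_L^m$ are i.i.d.\ (each a fixed function of the i.i.d.\ initial condition $\bX_0^m\sim\probIC$); comparing the definitions of $A_L^m$ and $\dbinnerp{\cdot,\cdot}$ shows $\E[A_L^m]=A_{L,\infty}$. Since $\psi_p$ is supported on $[0,R]$ and bounded by $S_0$ there, each $\Psi_L^m(li,p)=\frac1N\sum_{i'}\psi_p(r_{ii'}(t_l))\br_{ii'}(t_l)$ has norm at most $S_0R$, so the entries of $A_L^m$ are bounded by $S_0^2R^2$; the strong law of large numbers applied entrywise gives $A_{L,M}\to A_{L,\infty}$ a.s. For the quantitative estimate, write $A_{L,M}-A_{L,\infty}=\frac1M\sum_{m=1}^M X_m$ with $X_m:=A_L^m-A_{L,\infty}$ i.i.d.\ centered symmetric $n\times n$ matrices, and bound their norms via traces: $\|A_L^m\|\le\Tr(A_L^m)=\tfrac1{LN}\|\Psi_L^m\|_F^2\le nS_0^2R^2$ and $\|A_{L,\infty}\|\le\Tr(A_{L,\infty})=\sum_p\dbinnerp{\psi_p,\psi_p}\le\sum_p\tfrac{(N-1)^2}{N^2}\|\psi_p(\cdot)\cdot\|_{L^2(\rhoL)}^2\le n$, the middle step using Jensen's inequality inside \eqref{eq:bilinearFn}. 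Hence $\|X_m\|\le n(S_0^2R^2+1)=nc_1$, which also gives $X_m^2\preceq n^2c_1^2 I$ and $\big\|\sum_{m=1}^M\E[X_m^2]\big\|\le Mn^2c_1^2$. The matrix Bernstein inequality then bounds $\mathbb{P}\big(\|A_{L,M}-A_{L,\infty}\|>0.1c_L\big)=\mathbb{P}\big(\|\sum_m X_m\|>0.1c_LM\big)$ by $2n\exp\!\big(-\tfrac{(0.1c_LM)^2/2}{Mn^2c_1^2+nc_1(0.1c_LM)/3}\big)$, which after simplification yields the stated probability; on the complementary event, Weyl's inequality gives $\sigma_{\min}(A_{L,M})\ge\sigma_{\min}(A_{L,\infty})-\|A_{L,M}-A_{L,\infty}\|\ge c_L-0.1c_L=0.9c_L$.

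The main obstacle is essentially bookkeeping: arranging the operator-norm and variance parameters fed into matrix Bernstein so that the exponent collapses exactly to $c_L^2M/(200n^2c_1^2+\tfrac{10}{3}c_Lc_1 n)$ (a slightly sharper variance bound, e.g.\ $\E[X_m^2]\preceq \|A_L^m\|\,A_{L,\infty}$, may be needed to match the advertised constants precisely). Everything else reduces to the coercivity inequality, the identity $\E[A_L^m]=A_{L,\infty}$, and elementary bounds on $\psi_p(r)r$.
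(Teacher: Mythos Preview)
Your proof is correct and follows essentially the same route as the paper: the coercivity inequality plus orthonormality for $\sigma_{\min}(A_{L,\infty})\ge c_L$, the strong law of large numbers for $A_{L,M}\to A_{L,\infty}$, and the matrix Bernstein inequality for the finite-sample bound. In fact you supply considerably more detail than the paper, which merely states ``apply the matrix Bernstein inequality'' without working out the operator-norm and variance parameters or invoking Weyl's perturbation bound; your closing caveat about matching the constants precisely is appropriate, since the paper does not display that computation either.
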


\begin{proof}
For each $\mathbf{a}\in \R^n$,
 \begin{align*}
\mathbf{a}^T A_{L,\infty} \mathbf{a}&=\dbinnerp {\sum_{p=1}^{n} a_{p}\psi_{p} , \sum_{p=1}^{n} a_{p} \psi_{p}}\geq c_L \big\| \sum_{p=1}^{n} a_{p} \psi_{p}(\cdot)\cdot\big\|_{L^2(\rhoL)}^2 = c_L\|\mathbf{a} \|^2\,.
\end{align*} 
This proves the desired bound on the smallest singular value.

Going back to the case of finite $M$: by the law of large numbers, the matrix $A_{L,M}= \sum_{m=1}^{M}  A_L^m$ converges to 
$A_{L,\infty} = \E[A_L^m]$ as $M \rightarrow \infty$. Hence if the sample size $M$ is large enough, then we apply the matrix Bernstein inequality to get the probability estimates for the event that $\sigma_{min}(A_{L,M})$ is  bounded below by $0.9 c_L$. 
\end{proof}

\begin{remark} Proposition \ref{p:sigmaminAL} highlights the importance of choosing basis functions to be linearly independent in $L^2(\rhoL)$ instead of in $L^\infty([0,R])$ for the hypothesis space $\hypspace_n$ (orthonormality can be easily obtained through Gram-Schmidt orthogonalization if the functions are linearly independent).
To see this, consider a set of basis functions consisting of piecewise polynomials that are supported on a partition of the interval $[0,R]$. These functions are linearly independent in $L^\infty([0,R])$, but can be linearly dependent in $L^2(\rhoL)$ if some of the partitioned intervals have zero probability under the measure $\rhoL$. This would lead to an ill-conditioned normal matrix $A_{L,\infty}$. 
 This issue can deteriorate in practice when the unknown $\smash{\rhoL}$ is replaced by the empirical measure $\smash{\rho_{T}^{L,M}}$. In this work we use piecewise polynomials on a partition of the support of $\smash{\rho_T^{L,M}}$, which are orthogonal in $\smash{L^2(\rho_T^{L,M})}$. 
\end{remark}

\subsection{First Order Heterogeneous Agent Systems}
For these systems the empirical error to be minimized is as in (9) in the main text:
\[
\sum_{l = 2, m = 1, i = 1}^{L, M, N}\frac{1}{LMN_{\clof_i}}\norm{\dotbxm_i(t_l) - \sum_{i' =1}^N\frac{1}{N_{\clof_{i'}}}\intkernelvar_{\clof_i\clof_{i'}}(r_{i, i'}^{m}(t_l))\br_{i, i'}^{m}(t_l)}^2,
\]
over all possible $\bintkernelvar = \{\intkernelvar_{\idxcl\idxcl'}\}_{\idxcl,\idxcl'=1}^{\numcl} \in \hypspace$. Here $\br_{i,i'}(t_l)$ and $r_{i,i'}(t_l)$ are as in \eqref{e:deef}. When given observation data, $\{\bxm_i(t_l)\}_{i = 1, m = 1, l = 1}^{N, M, L}$, but no derivative information, we approximate the derivatives using backward differencing scheme for $2 \le l \le L$; in either case we assemble the derivative vector $\mathbf{d}$ similarly to \eqref{e:vecd}, but with the normalization 
\[
\mathbf{d}^{m}(li) = (1/N_{\clof_i})^{1/2}\Delta{\bx}_i^{m}(t_l) \in \R^d. \]
  
Proceeding analogously to the homogeneous agent case, we search for $\intkernelvar_{\idxcl\idxcl'}$ in a $n_{\idxcl\idxcl'}$-dimensional hypothesis space $\hypspace_{n_{\idxcl\idxcl'}}$, with basis $\{\psi_{\idxcl\idxcl',p}\}_{p=1}^{n_{\idxcl\idxcl'}}$, and write $\intkernelvar_{\idxcl\idxcl'}(r) = \sum_{\idxcl,\idxcl' = 1}^{\numcl}\sum_{p = 1}^{n_{\idxcl\idxcl'}}a_{\idxcl\idxcl',p}\psi_{\idxcl\idxcl', p}(r)$ for some vector of coefficients $(a_{\idxcl\idxcl',p})_{p=1}^{n_{\idxcl\idxcl'}}$.  
For the learning matrix $\Psi_L^{m}$, we will divide the columns into $\numcl^2$ regions, each region indexed by the pair $(\idxcl,\idxcl')$, with $\idxcl,\idxcl' = 1, \cdots, \numcl$.  
We adopt the usual lexicographic partial ordering on these pairs.
The columns of $\Psi_L^{m}$ corresponding to $(\idxcl,\idxcl')$ are given by
\[
\Psi_L^{m}(li, \tilde{n}_{kk'} + p) =  \sqrt{\frac{1}{N_{\clof_i}}}\sum_{i' \in \cl_{\idxcl'}}\frac1{N_{\idxcl'}}\psi_{\idxcl\idxcl', p}(r_{i, i'}^{m}(t_l))\br_{i, i'}^{m}(t_l) \in \R^{d},
\]
for $i \in \cl_{\idxcl}$ and $2 \le l \le L$, and $\tilde{n}_{kk'} = \sum_{(\idxcl_1,\idxcl'_1) < (\idxcl,\idxcl')}n_{\idxcl_1\idxcl'_1}$.  We define
\[
\mathbf{a} = \begin{pmatrix} a_{11, 1}, \dots, a_{11, n_{11}}; \dots ; a_{\numcl\numcl, 1}, \dots, a_{\numcl\numcl, n_{\numcl\numcl}} \end{pmatrix}\in\R^{d_0}\,
\]
with $d_0={\scriptscriptstyle{\sum_{\idxcl, \idxcl' = 1}^\numcl n_{\idxcl, \idxcl'}}}$,  
to arrive at \eqref{e:ALM}

\subsection{Second Order Heterogeneous Agent Systems}
The learning problems of inferring the interactions of the $\dot\bx_i$'s and $\xi_i$'s can be de-coupled.  
We start with the inference of the interactions on $\dot\bx_i$'s. Let the observations of the second order heterogeneous agent system be $\{\bx_i^m(t_l), \dot\bx_i^m(t_l), \xi_i^{m}(t_l)\}_{l, i, m = 1}^{L, N, M}$. Let $\bv_i^{m} = \dot\bx_i^{m}$.
As usual, if velocities and/or accelerations are not observed, they are approximated by a finite-difference (in time) scheme, for example
\[
\Delta{\bv}_i^{m}(t_l) = \frac{\bv_i^{m}(t_l) - \bv_i^{m}(t_{l - 1})}{t_l - t_{l - 1}} \quad, \quad \Delta{\xi}_i^{m}(t_l) = \frac{\xi_i^{m}(t_l) - \xi_i^{m}(t_{l - 1})}{t_l - t_{l - 1}},
\]
for $2 \le l \le L$ and $1 \le i \le N$.  
For the data corresponding to the $m^{th}$ initial condition, we assemble the external influence (from interaction with the environment) vector $\vec{F}^{m, \bv}$ as:
\[
\vec{F}^{m, \bv}(li) = (1/N_{\clof_i})^{1/2}\forcev(\bv_i^{m}(t_l), \xi_i^{m}(t_l)) \in \R^{d},
\]
 and the approximated derivative of $\bv_i$'s as 
\[
\mathbf{d}^{m, \bv}(li) = (1/N_{\clof_i})^{1/2}m_i\Delta{\bv}_i^{m}(t_l) \in \R^{d}.
\]  
We use a finite dimensional subspace $\hypspace^E_{n^E}$, so that the candidate functions $\bintkernelvar^E = \{\intkernelvar_{\idxcl\idxcl'}^E\}_{\idxcl,\idxcl' = 1}^{\numcl}$ are expressed as $\bintkernelvar^E(r) = \sum_{\idxcl,\idxcl'=1}^{\numcl}\sum_{p = 1}^{n^E_{\idxcl,\idxcl'}} \alpha^E_{\idxcl\idxcl', p}\psi_{\idxcl\idxcl', p}^E(r)$.  
Using the same ordering from previous discussion on the first order heterogeneous agent system, we have, for a pair $(\idxcl,\idxcl')$ learning matrix $\Psi_{L,M}^{m, E}$ for the energy-based interaction kernel,
\[
\Psi_{L,M}^{m, E}(li, \tilde{n}^E + p) = N_{\clof_i}^{-1/2}\sum_{i' \in \cl_{\idxcl'}}\frac{1}{N_{\idxcl'}}\psi_{\idxcl\idxcl', p}^E(r_{i, i'}^{m}(t_l))\br_{i, i'}^{m}(t_l),
\]
for $2 \le l \le L$, $i \in \cl_{\idxcl}$ and $\tilde{n}^E = \sum_{(\idxcl_1,\idxcl'_1) < (\idxcl,\idxcl')} n^E_{\idxcl_1\idxcl'_1}$.  The construction of the alignment-based learning matrix $\Psi_{L,M}^{m, A}$ is analogous:
\[
\Psi_{L,M}^{m, A}(li, \tilde{n}^A + p) = N_{\clof_i}^{-1/2}\sum_{i' \in \cl_{\idxcl'}}\frac{1}{N_{\idxcl'}}\psi_{\idxcl\idxcl', p}^A(r_{i, i'}^{m}(t_l))\br_{i, i'}^{m}(t_l),
\]
for $2 \le l \le L$, $i \in \cl_{\idxcl}$ and $\tilde{n}^A = \sum_{(\idxcl_1,\idxcl'_1) < (\idxcl,\idxcl')} n^A_{\idxcl_1\idxcl'_1}$.  
We put all the $\alpha$'s together into $\mathbf{a}^E$ and $\mathbf{a}^A$, and further grouping them into one big vector, $\mathbf{a}^{\bv} = \begin{pmatrix} \mathbf{a}^E \\ \mathbf{a}^A \end{pmatrix}$ and $\Psi_{L,M}^{m,\bv} = \begin{pmatrix} \Psi_{L,M}^{m,E}, \Psi_{L,M}^{m,A} \end{pmatrix}$, we arrive at the final formulation,
\[
\frac{1}{M}\sum_{m=1}^M\norm{\mathbf{d}^{m,\bv} - \vec{F}^{m,\bv} - \Psi_{L,M}^{m,\bv}\mathbf{a}^{\bv}}_{\R^{LNd}}^2.
\]
As usual, we solve the associated normal equations of \eqref{e:ALM} with $A_L^m:= \ptrans{\Psi_{L,M}^{m,\bv}}\Psi_{L,M}^{m,\bv}$ and $b_L^m := \ptrans{\Psi_{L,M}^{m,\bv}}(\mathbf{d}^{m,\bv} - \vec{F}^{m,\bv})$, 
reducing the system size from $(MLNd)\times (n^E + n^A)$ to $(n^E + n^A)^2$. 

For the inference of the interactions on $\xi_i$'s, we let
\begin{equation*}
\vec{F}^{m, \xi}(li) = N_{\idxcl}^{-\frac{1}{2}} \forcexi(\xi_i^{m}(t_l)) \quad \text{and} \quad \mathbf{d}^{m, \xi}(li) =  N_{\idxcl}^{-\frac{1}{2}}\Delta{\xi}_i^{m}(t_l),
\end{equation*}
for $2 \le l \le L$ and $1 \le i \le N$;  then the learning matrix $\Psi_{L,M}^{m, \xi}$ is assembled similarly as
\begin{equation*}
\Psi_{L,M}^{m, \xi}(li, \tilde{n}^{\xi} + p) =  N_{\idxcl}^{-\frac{1}{2}} \sum_{i' \in \cl_{\idxcl'}}\frac{1}{N_{\idxcl'}}\psi_{\idxcl\idxcl', p}^{\xi}(r_{i, i'}^{m}(t_l))\br_{i, i'}^{m}(t_l),
\end{equation*}
for $2 \le l \le L$, $i \in \cl_{\idxcl}$, and $\tilde{n}^{\xi} = \sum_{(\idxcl_1\idxcl'_1) < (\idxcl,\idxcl')} n^{\xi}_{\idxcl_1,\idxcl'_1}$.  
We then arrive at the Least Squares problem
\begin{equation*}
\frac{1}{M}\sum_{m=1}^M \norm{\mathbf{d}^{m,\xi} - \vec{F}^{m,\xi} - \Psi_{L,M}^{m,\xi}\mathbf{a}^{\xi}}_{\R^{LNd}}^2
\end{equation*}
and solve it from the associated normal equations.

\subsection{The Final Algorithm}
Given observation data, $\{\bx_i^{m}(t_l)$ and $\dot\bx_i^{m}(t_l)$ and/or $\xi_i^{m}(t_l)\}_{l, i, m = 1}^{L, N, M}$, we use the Algorithm \ref{alg:main} to find the estimators for the interaction kernels.
\begin{algorithm}[H]
\mycaption{Learning Interaction Kernels from Observations}\label{alg:main}
\small{
\begin{algorithmic}[1]
\State Input: $\{\bx_i^{m}(t_l)$ and/or $\dot\bx_i^{m}(t_l)$ and/or $\xi_i^{m}(t_l)\}_{l, i, m = 1}^{L, N, M}$.
\State Output: estimators for the interaction kernels.
\If{First Order System}
  \State Find out the maximum interaction radii $R_{\idxcl\idxcl'}$'s.
  \State Construct the basis, $\psi_{\idxcl\idxcl', p}$'s.
  \State Assemble the normal equations (\ref{e:ALM}) (in parallel). 
  \State Solve for $\mathbf{a}$.
  \State Assemble $\blintkernel(r) = \sum_{\idxcl,\idxcl'=1}^{\numcl}\sum_{p = 1}^{n_{\idxcl\idxcl'}}a_{\idxcl\idxcl', p}\psi_{\idxcl\idxcl', p}(r)$.
\ElsIf{Second Order System}
  \State Find out the maximum interaction radii $R_{\idxcl\idxcl'}$'s.
  \State Construct the basis, $\psi_{\idxcl\idxcl', p}^E$'s and $\psi_{\idxcl\idxcl', p}^A$'s.
  \State Assemble 
       the normal equations (\ref{e:ALM}) (in parallel). 
  \State Solve for $\mathbf{a}^{\bv}$, and partition it to $\mathbf{a}^E$ and $\mathbf{a}^A$.
  \State Assemble $\blintkernel(r)^E = \sum_{\idxcl,\idxcl'=1}^{\numcl}\sum_{p = 1}^{n_{\idxcl\idxcl'}^E}a_{\idxcl\idxcl', p}^E\psi_{\idxcl\idxcl', p}^E(r)$.
  \State Assemble $\blintkernel(r)^A = \sum_{\idxcl,\idxcl'=1}^{\numcl}\sum_{p = 1}^{n_{\idxcl\idxcl'}^A}a_{\idxcl\idxcl', p}^A\psi_{\idxcl\idxcl', p}^A(r)$.
    \If{If there are $\xi_i$'s}
      \State Construct the basis, $\psi_{\idxcl\idxcl', p}^{\xi}$'s.
      \State Assemble the normal equations. 
      \State Solve for $\mathbf{a}^{\xi}$.
      \State Assemble $\blintkernel(r)^{\xi} = \sum_{\idxcl,\idxcl'=1}^{\numcl}\sum_{p = 1}^{n_{\idxcl\idxcl'}^{\xi}}a_{\idxcl\idxcl', p}^{\xi}\psi_{\idxcl\idxcl', p}^{\xi}(r)$.
    \EndIf
    \State \textbf{end if}.
\EndIf
\State \textbf{end if}.
\end{algorithmic}}
\end{algorithm}

\subsection{Computational Complexity}
The computational complexity is driven by the construction and solution of the least squares problem in Algorithm \ref{alg:main}. Though the observation data $\{\bx_i^{m}(t_l), \dot\bx_i^{m}(t_l), \xi_i^{m}(t_l)\}_{l, i, m = 1}^{L, N, M}$ requires an array of size $MLN(2d + 1)$, the linear system to be solved, i.e. the normal equations, is only of size $n^E + n^A$. When the normal equations are ill-conditioned or ill-posed, a truncated singular value decomposition will be used, which does a singular value decomposition of the matrix  $A_{L,M}$, and keeps those singular values which are above a (preset) threshold, then assemble an approximated matrix with the truncated singular value matrices.

 Furthermore, since the $M$ trajectories are independent, we can construct $\Psi^{m, E}$ and other related quantities for each trajectory at a time (which can be done in a parallel environment with two communication needed, one to send/receive the maximum interaction radii's, and the other to send/receive  $A_L^m$ and $b_L^m$  in the normal equations after they are built on the master core), each requires a total memory of $LNd(n^E + n^A) + LNd + LNd$, which is $\bigO(LNd)$, since $n^E + n^A \ll LNd$. 
 
 The computing time of the algorithm depends heavily on the time to assemble normal equations from $M$ trajectories; solving the final linear system requires basically little time compared to the assembly time, even using the truncated singular value decomposition solver. 
 
Therefore, the algorithm is effective at inferring the interactions from a wide variety of dynamics, and the results will be discussed in the next section.

\section{Examples}\label{s:SIExamples}
We consider here four important examples of self-organized dynamics: the opinion dynamics, the particle system with the Lennard-Jones potential, the predator-swarm system and the phototaxis dynamics.  We describe here in detail how the numerical simulations are set up for each of these examples.  In all but the Lennard-Jones system, we set up the experiments using the parameters as shown in Table \ref{t:learn_params}. We consider the regime with a rather small number of observations in terms of both $M$ and $L$ to emphasize that our technique can achieve good results even when a relatively small number of samples is given.
\begin{table}[H]\centering
\footnotesize{\begin{tabular}{| c | c | c | c |}
\hline 
 $N$   & $\#$ Trials & $M_{\rhoL}$   & $[0, T_f]$ \\ 
\hline 
 $10$ & $10$     & $2000$       &  $[0, cT]$\\
\hline
\end{tabular}}
\mycaption{\textmd{Parameters used in all the examples but the Lennard-Jones system. Here the observation time $T$ is system-specific. $c=2$ in all examples unless otherwise specified. }} 
\label{t:learn_params}
\end{table}

We use a large number $\smash{M_{\rhoL}}$ (in particular, $\smash{M_{\rhoL}\gg M}$) of independent trajectories (not to be used elsewhere) to obtain an accurate approximation of the unknown probability measure $\smash{\rhoL}$ in (4) in the main text. In what follows, to keep the notation from becoming cumbersome, we denote by $\rhoL$ this empirical approximation to $\rhoL$. We run the dynamics over the time $[0, T_f]$ with $M$ different initial conditions (drawn from the dynamics-specific probability measure $\probIC$), and the observations consist of the state vector, with no derivative information, at $L$ equidistant time samples in the time interval $[0, T]$.  
We report the relative (i.e. normalized by the norm of the true interaction kernel) error of our estimators in the $\smash{L^2(\rhoL)}$ norm. In the spirit of Proposition (3.4) in the main text, we also report on the error on trajectories $\bX(t)$ and $\widehat\bX(t)$ generated by the system with the true interaction kernel and with the learned interaction kernel, on both the “training” time interval $[0,T]$ and on a “prediction” time interval $[T,T_f]$ ($T_f=2T$ unless otherwise specified), with both the same initial conditions as those used for training, and on new initial conditions (sampled according to the specified measure $\probIC$). The trajectory error will be estimated using $M$ trajectories (we report the mean and standard deviation of the error).  We run a total of $10$ independent learning trials and compute the mean and standard deviation of the corresponding estimators, their errors, and the trajectory errors just discussed.  Since each learning trial generates different mean and standard deviation of the trajectory errors over different Initial Conditions (ICs), we also report the mean and standard deviation over the $10$ learning trials for $\text{mean}_{IC}$ and $\text{std}_{IC}$.

All ODE systems are evolved using \textbf{ode$15$s} in MATLAB\textsuperscript{\textregistered} with a relative tolerance at $10^{-5}$ and absolute tolerance at $10^{-6}$.  We choose the finite-dimensional hypothesis space $\hypspace_n$ (with $n$ chosen differently in each example, based on sample size) as the span of either piecewise constant or piecewise linear functions on $n$ intervals forming a uniform partition of $[0,R_{k, k'}]$, where $R_{k, k'}$ is the maximum observed pairwise distance between agents of type $k'$ and agents in type $k$ for $t \in [0, T]$.  

Learning results are showcased in \revision{\ifPNAS \newrefs{Fig.~$5$} in the main text \fi \ifarXiv Fig.~\ref{fig:example_main}\fi}. The first one compares the learned interaction kernel(s) to the true interaction kernel(s) (with mean and standard deviation over the total number of learning trials) with the background showing the comparison of $\smash{\rhoL}$ (computed on $\smash{M_{\rhoL}}$ trajectories, as described above) and $\smash{\rho_T^{L, M}}$ (generated from the observed data consisting of $M$ trajectories). The second plot compares the true trajectories (evolved using the true interaction law(s)) and learned trajectories (evolved using the learned interaction law(s)) over two different set of initial conditions -- one taken from the training data, and one new, randomly generated from $\mu_0$. The third plot compares the true trajectories and the trajectories generated with the estimated interaction kernel, but for a different system with the number of agents $N_{\text{new}} = 4N$, again over two different sets of randomly chosen initial conditions.  Measurements of performance are also shown alongside the figures: ($L^2(\rhoL)$ errors, trajectory errors, etc.  
Let $\bX(t)$ and $\hat{\bX}(t)$ be two sets of continuous-time trajectories; the max-in-time error is defined as
\begin{equation}\label{e:tm_norm}
\norm{\bX - \hat{\bX}}_{\text{TM([0,T])}} = \sup_{t \in [0, T]}\tnorm{\bX(t) - \hat{\bX}(t)}\,.
\end{equation}
For second order systems with the auxiliary environment variable $\xi_i$'s, we are also interested in the trajectories of $\xi_i$, for which we may use $\norm{\Xi - \hat{\Xi}}_{\text{TM([0,T])}} = \sup_{t \in [0, T]}\norm{\Xi(t) - \hat{\Xi}(t)}_{\mathcal{S}}$.

Finally, for each example we consider adding noise to the observations: in the case of additive noise the observations are $\smash{\{(\bXm(t_l)+\eta_{1,l,m},\dot\bXm(t_l))+\eta_{2,l,m}\}_{l=1,m=1}^{L,M}}$, while in the  case of multiplicative noise they are $\smash{\{(\bXm(t_l)\cdot(1+\eta_{1,l,m}), }$ $\smash{\dot\bXm(t_l))\cdot(1+\eta_{2,l,m})\}_{l=1,m=1}^{L,M}}$, where in both cases $\eta_{1,l,m}$ and $\eta_{2,l,m}$ are i.i.d. samples from a distribution modeling noise, which we will pick to be Unif.$([-\sigma,\sigma])$. Note that in both these cases velocities are part of our observations, since with noise added in the position the inference of velocities becomes problematic due to the amplification of the noise that a simple finite difference scheme would incur. 

Finally, for several examples we also report the behavior of the relative error of the estimator as a function of the number of samples $L$ in time and of the number of trajectories $M$. We observe the decrease in error as $L$ increases, which is expected but is not captured by the estimate in Thm. (3.3) in the main text. These plots are qualitatively the same for all the experiments.

We devote the next sections to the various examples, discussing setups particular to each example and corresponding results.

\subsection{Opinion Dynamics}\label{s:SI_OD}
Modeling using self-organized dynamics has seen successful applications in studying and analyzing how the opinions of people influence each other and how consensus is formed based on different kinds of influence functions.  We refer to these systems as opinion dynamics. We consider the first order model in \eqref{e:1stordersystem}, and the interaction kernel defined as
\[
\intkernel(r) = \left\{
        \begin{array}{ll}
           1,    & \quad 0                          \le r < \frac{1}{\sqrt{2}}, \\
           0.1, & \quad \frac{1}{\sqrt{2}} \le r < 1, \\
           0,    & \quad 1                         \le r.
        \end{array}
    \right.
\]
In this context $\intkernel: \R_+ \rightarrow \R_+$ is sometimes referred to as the scaled influence function, modeling the change of each agents' opinion by relative differences in the opinions of the other agents. Here $\bx_i \in \R^{d}$ is the vector opinions of agent $i$.  Here $\norm{\cdot}$ can be taken as the normal Euclidean norm, but other metrics depending on the problem at hand may be used as well, with no changes in our definitions and constructions.  The time-discretization of this system is referred to as the classical Krause model for opinion dynamics.  With the specific $\intkernel$ above, \revision{there is only attraction present in the system, the opinions of the agents} merge into clusters, with the number of clusters significantly smaller than the number of agents. 
This clustering behavior severely reduces the number of effective samples of pairwise distance observable at large times. We consider the system and test parameters given in Table \ref{t:OD_params}.

\begin{table}[H]\centering
\footnotesize{\begin{tabular}{| c | c | c | c | c | c | c |}
\hline 
 $d$  & $M$ & $L$      & $T$   & $\probIC$          & $n$    & deg($\psi$) \\ 
\hline 
 $1$ & $50$ & $200$ & $10$ & $\mathcal{U}([0, 10]^2)$ & $200$ & $0$\\
\hline
\end{tabular}}
\mycaption{\textmd{(OD) Parameters for the system}}
\label{t:OD_params}
\end{table}

\begin{figure}[H]
\begin{subfigure}{\textwidth}
  \centering
   \includegraphics[width=\ifPNAS 0.75\textwidth \fi \ifarXiv 0.48\textwidth \fi]{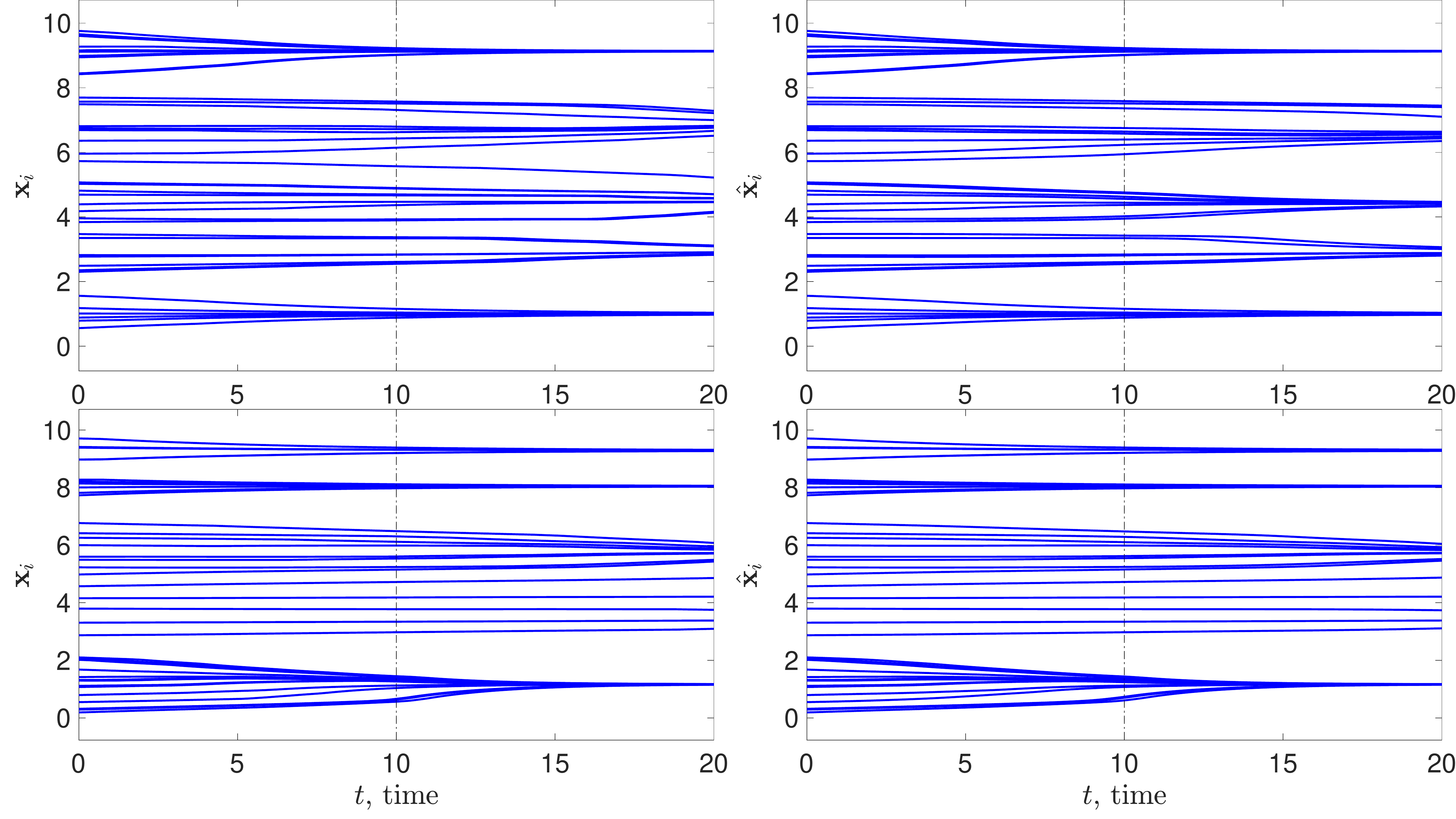}
\end{subfigure}
\mycaption{(OD) Trajectories $\bX(t)$ and $\smash{\widehat{\bX}(t)}$ obtained with $\intkernel$ and $\smash{\lintkernel}$ respectively, for dynamics with larger $N_{\text{new}} = 4N$, over two different set of initial conditions.  We are able to accurately predict the clusters (number and location). Errors are reported in Table \ref{t:ODH1_traj_err}.}
\label{fig:ODH1}
\vskip-4mm
\end{figure}

\begin{table}[H]\centering
\footnotesize{\begin{tabular}{| c || c | c |} 
\hline
                                                             & $[0, T]$                                                    & $[T, T_f]$\\
\hline
$\text{mean}_{\text{IC}}$: Training ICs & $3.5 \cdot10^{-2} \pm 8.1 \cdot10^{-3}$  & $4.8 \cdot10^{-2} \pm 1.4 \cdot10^{-2}$\\
\hline
$\text{std}_{\text{IC}}$: Training ICs    & $5.2 \cdot10^{-2} \pm 1.3 \cdot10^{-2}$  & $7.6 \cdot10^{-2} \pm 2.7 \cdot10^{-2}$\\
\hline            
$\text{mean}_{\text{IC}}$: Random ICs & $3.2 \cdot10^{-2} \pm 7.4 \cdot10^{-3}$  & $4.6 \cdot10^{-2} \pm 1.2 \cdot10^{-2}$\\
\hline
$\text{std}_{\text{IC}}$: Random ICs    & $5.0 \cdot10^{-2} \pm 1.7 \cdot10^{-2}$  & $7.2 \cdot10^{-2} \pm 2.7 \cdot10^{-2}$\\
\hline 
$\text{mean}_{\text{IC}}$: Larger $N$ & $3.1 \cdot10^{-2} \pm 2.0 \cdot10^{-3}$  & $7.3 \cdot10^{-2} \pm 4.1 \cdot10^{-3}$ \\
\hline
$\text{std}_{\text{IC}}$: Larger $N$    & $2.1 \cdot10^{-2} \pm 2.1 \cdot10^{-3}$  & $6.1 \cdot10^{-2} \pm 4.2 \cdot10^{-3}$ \\
\hline         
\end{tabular}}
\mycaption{ \textmd{(OD) Trajectory Errors: ICs used in the training set (first two rows), new IC"s randomly drawn from $\mu_0$ (second set of two rows), for ICs randomly drawn for a system with $4N$ agents (last two rows). Means and std's are over $10$ learning runs.}}
\label{t:ODH1_traj_err}
\end{table}

Fig.~\ref{fig:ODH1} shows the comparison between the estimated interaction kernel $\smash{\lintkernel}$ (as the mean over learning trials) and the true one, $\intkernel$. 
We obtain a faithful approximation of the true interaction kernel, including near the discontinuity and the compact support. Our estimator also performs well near $0$, notwithstanding that information of $\intkernel(0)$ is lost due to the structure of the equations, that have terms of the form $\phi(0)\vec{0} = \vec{0}$.
The same figure also compares the trajectories generated by the system governed by $\intkernel$ and that governed by $\smash{\lintkernel}$.
Table \ref{t:ODH1_traj_err} reports the max-in-time error for those trajectories.
We also test the robustness to noise, by adding noise to the observations of both positions and velocities, as described above: the estimated kernel is shown in Figure \ref{fig:ODHnoise}.
Figure \ref{fig:ODHMLtest} shows the behavior of the error of the estimator as both $L$ and $M$ are increased.
\begin{figure}[H]
\centering
\includegraphics[width=\ifPNAS 0.75\textwidth \fi \ifarXiv 0.48\textwidth \fi]{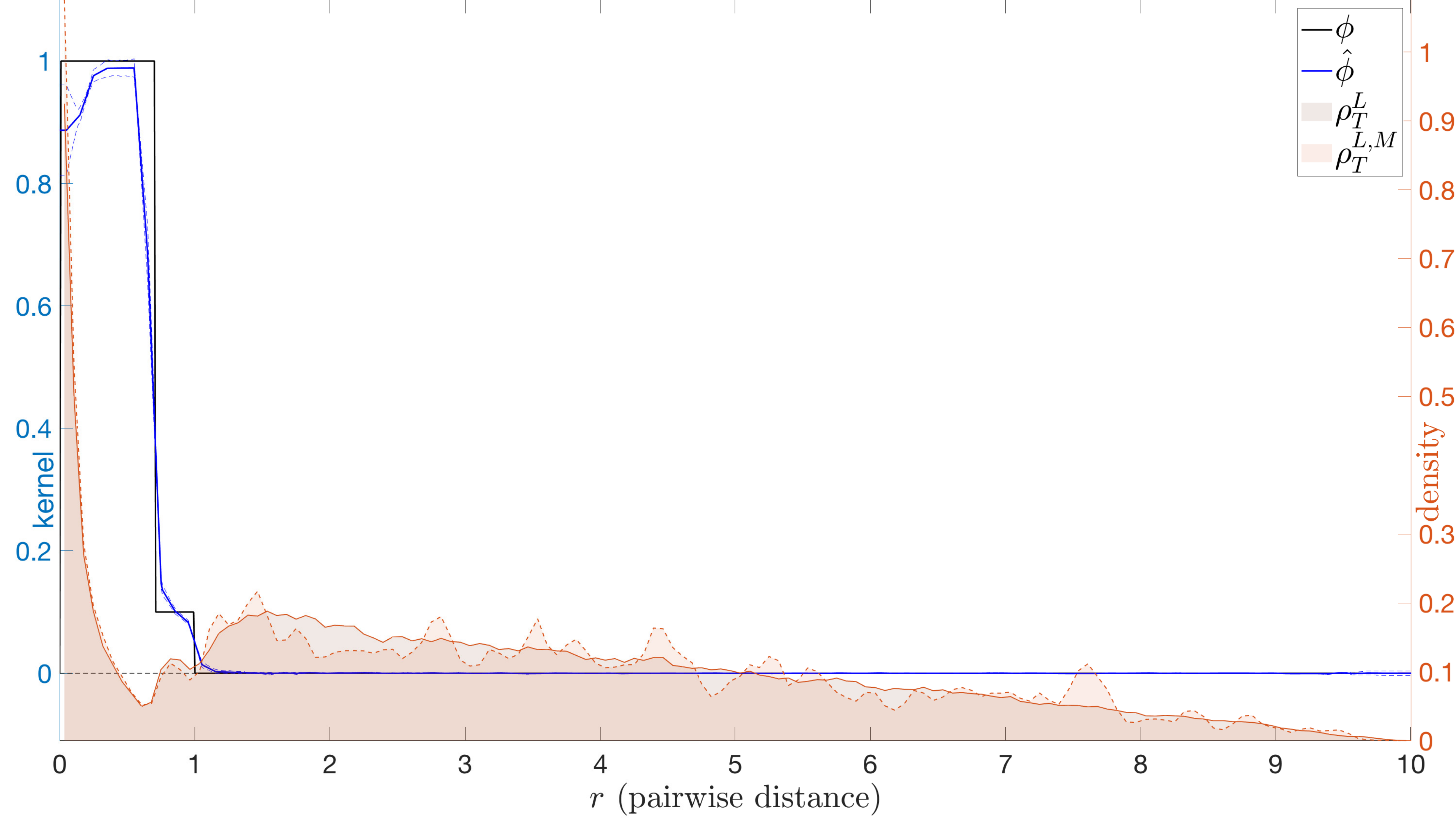}
\mycaption{(OD) Interaction kernel learned with Unif.$([-\sigma,\sigma])$ additive noise, for $\sigma=0.1$ in the observed positions and velocities. The estimated kernels are minimally affected, mostly in regions with small $\rhoL$ and near $0$.}
\label{fig:ODHnoise}
\end{figure}

\begin{figure}[H]
\centering
\includegraphics[width=\ifPNAS 0.75\textwidth \fi \ifarXiv 0.48\textwidth \fi]{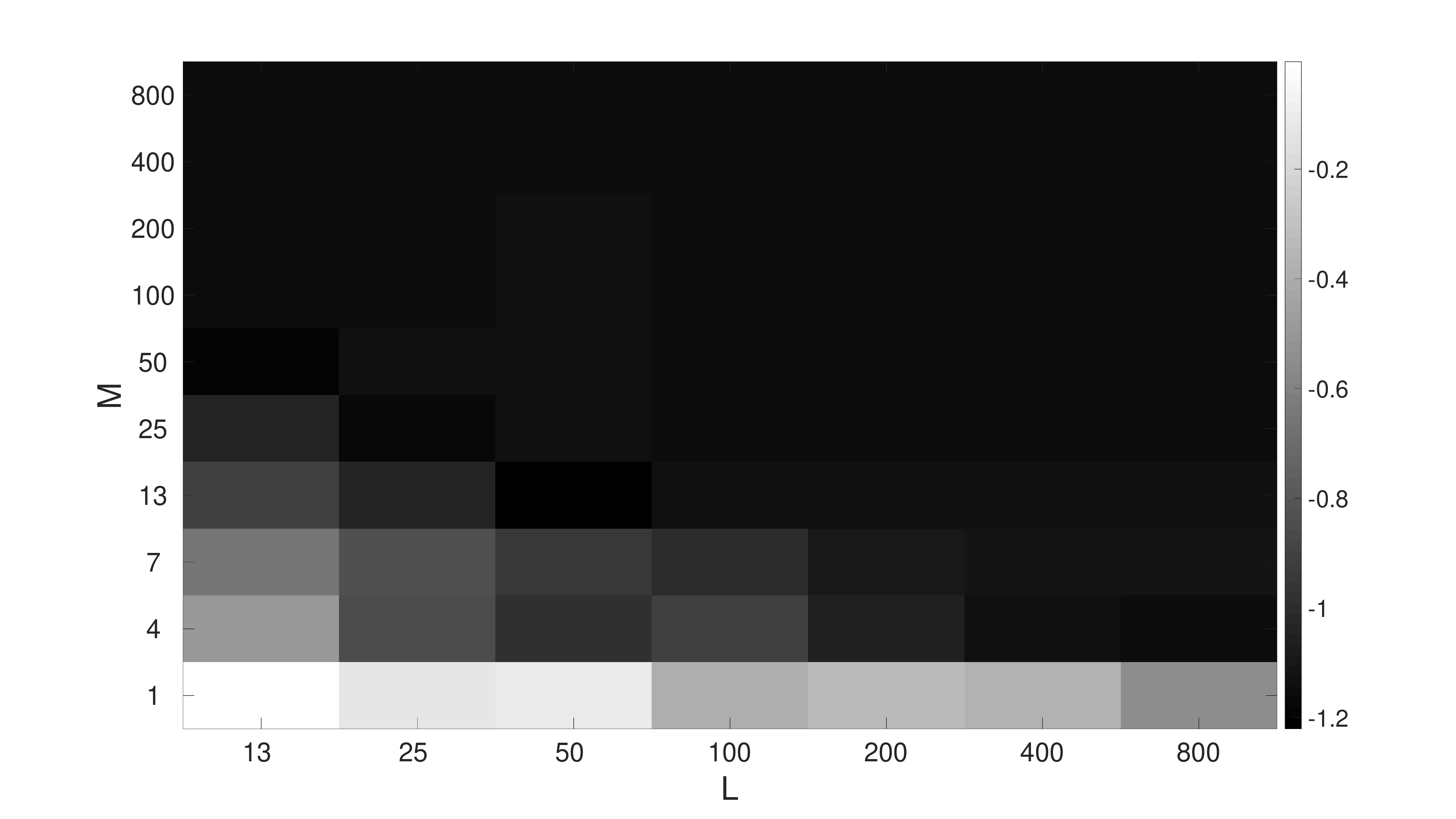}
\mycaption{(OD) Relative error, in $\log_{10}$ scale, of $\hat\intkernel$ as a function of $L$ and $M$. The error decreases both in $L$ and $M$, in fact roughly in the product $ML$, at least when $M$ and $L$ are not too small. $M=1$ does not seem to suffice, no matter how large $L$ is, due to the limited amount of ``information'' contained in a single trajectory.}
\label{fig:ODHMLtest}
\end{figure}

\subsection{Interacting Particles in Lennard-Jones Potential}
\label{LJdescriptions} 
\begin{figure}[H]
\centering
\begin{subfigure}{\ifPNAS 0.75\textwidth \fi \ifarXiv 0.48\textwidth \fi}
   \includegraphics[width=1\linewidth]{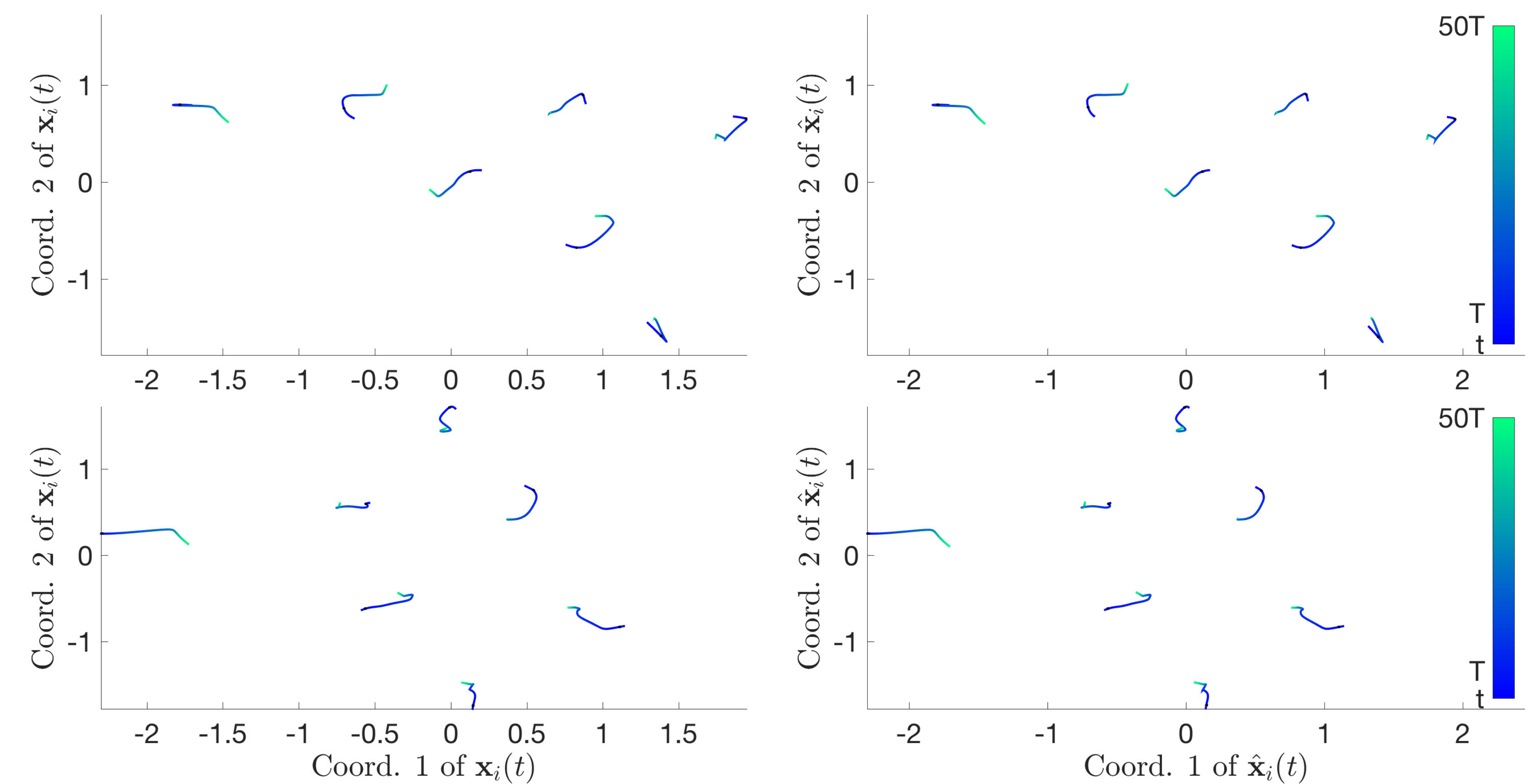}
\subcaption{$N$-particle system, with kernel learned from many short trajectories} \end{subfigure}
\begin{subfigure}{\ifPNAS 0.75\textwidth \fi \ifarXiv 0.48\textwidth \fi}
   \includegraphics[width=1\linewidth]{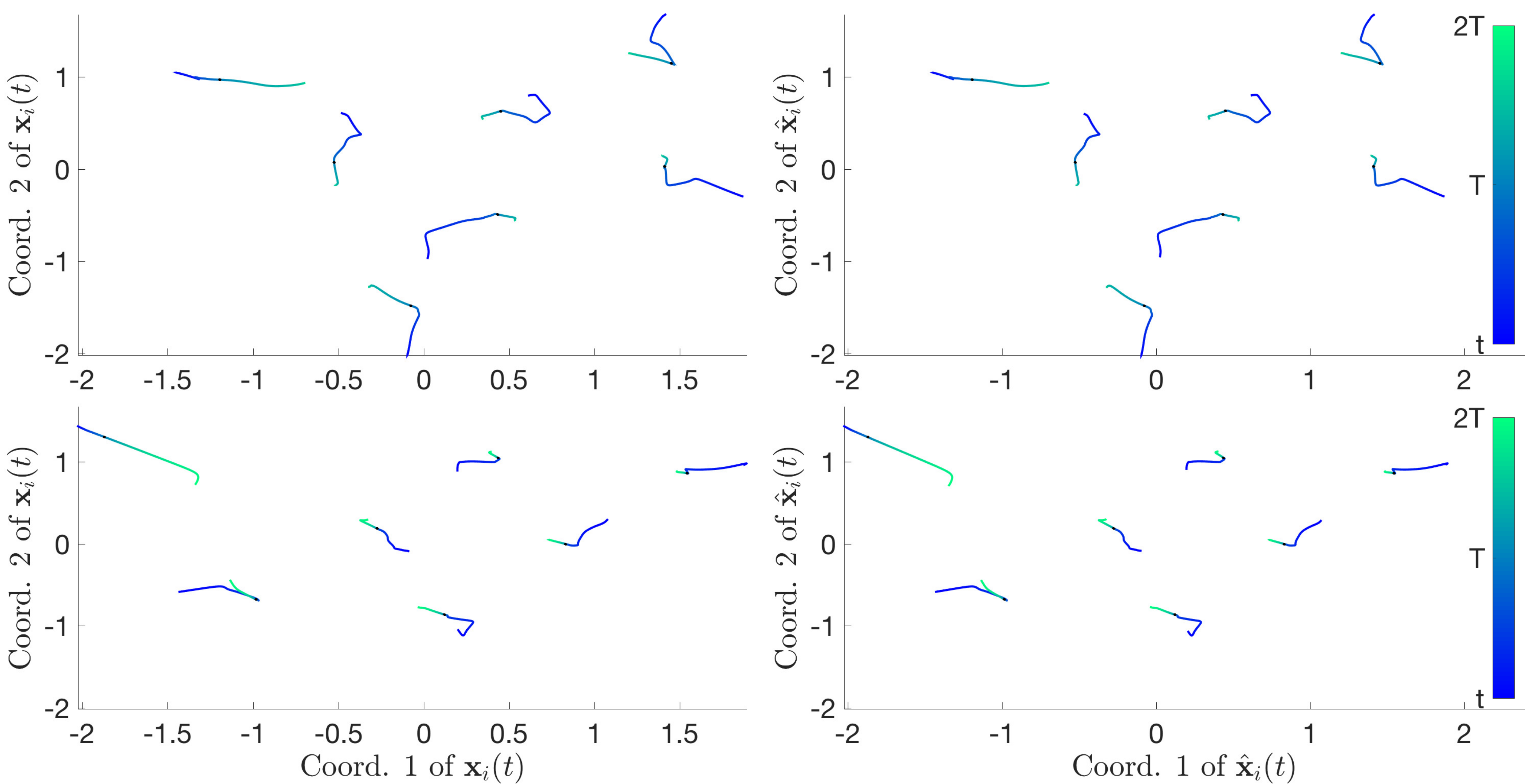}
\subcaption{$N$-particle system, with kernel learned from  a few long trajectories }
\end{subfigure}
\mycaption{(LJ) (a) and (b)presents trajectories $\bX(t)$ (left) and $\smash{\widehat{\bX}(t)}$ (right) obtained with $\intkernel$ and $\smash{\lintkernel}$ respectively, for initial conditions in the training dataset (top) and randomly sampled initial conditions (bottom).  The time $T$ is as in Table \ref{t:LJ_params}.  Trajectory errors for all cases are reported in Table \ref{t:LJ_traj_err_MC}.\label{fig:LJ1traj}}
\end{figure}
The expression of the Lennard-Jones potential is 
\[
\Phi(r) = 4\epsilon \left[\left(\frac{\sigma}{r}\right)^{12} - \left(\frac{\sigma}{r}\right)^6\right]=\epsilon \left[ \left(\frac{r_m}{r}\right)^{12}-2\left(\frac{r_m}{r}\right)^6 \right]
\]
where $\epsilon$ is the depth of the potential well, $\sigma$ is the finite distance at which the inter-particle potential is zero, $r$ is the distance between the particles, and $r_m$ is the distance at which the potential reaches its minimum. At $r_m$, the potential function has the value $-\epsilon$. The  $r^{-12}$ term describes Pauli repulsion at short ranges due to overlapping electron orbitals, and the $r^{-6}$ term describes attraction at long ranges (van der Waals force, or dispersion force).  We set $\epsilon=10$ and $\sigma=1$ in our simulations.

In the experiments, whose results are represented in \ifPNAS \newrefs{Fig.~1 in the main text}\fi \ifarXiv Fig.~\ref{f:LJ_main}\fi, the distribution $\probIC$ for the $M$ i.i.d. initial conditions is a standard Gaussian vector in $\mathbb{R}^{2N}$.  In this Lennard-Jones interacting system, one has to be careful in choosing the observation time interval.  Since the minimum distance between the particles at initial configurations is very close to 0 with high probability, the particles have very large velocities (e.g. $\sim10^{22}$) due to the singularity of the interaction kernel at 0. This obstruction made the learning algorithm infeasible since our algorithm is for learning bounded kernels. Therefore, we chose an observation time starting from a suitable time $t_0$, small but positive. On the other side of the training time interval, since the system evolves to equilibrium configurations very quickly, we observe the dynamics up to a time $T$ which is a fraction of the equilibrium time.  In each sampling regime, we observe the dynamics at discrete times $\{t_i\}_{i=2,\dots, L}$ and then use the standard finite difference method to obtain a faithful approximation of velocities of agents. 
\begin{table}[H]\centering
\footnotesize{\begin{tabular}{| c | c | c |c|c|c|c|}
\hline 
% $N$   &$d$ &$\probIC$ & $\#$ Trials & $M_{\rhoL}$   & $[t, T_f]$ &\footnotesize{\begin{tabular}{@{}c@{}}Deg.  of \\ Basis Func.\end{tabular}}\hspace{-4pt}  \\ 
$N$   &$d$ &$\probIC$ & $\#$ Trials & $M_{\rhoL}$   & $[t_0, T_f]$ & deg($\psi_{kk'}$)  \\ 
\hline 
 $7$ & 2& $N(0,I_{2N})$&$10$     & $2000$       &  $[t_0, cT]$ & 1\\
\hline
\end{tabular}}
\mycaption{\textmd{(LJ) Parameters used in Lennard-Jones system}}
\label{t:LJ_learn_params}
\end{table}

\begin{table}[H]
\centering
\footnotesize{\begin{tabular}{ |c | c | c | c | c | c |c |c|}
\hline 
&   $M$      & $L$     & $n$   & $[t_0, T]$ & $c$ \\ 
\hline 
\footnotesize{\begin{tabular}{@{}c@{}}Many short traj.\end{tabular}}  &  $200$ & $91$ & $600$&$[0.001, 0.01]$&$50$\\
\hline 
\footnotesize{\begin{tabular}{@{}c@{}}Single  long  traj.\end{tabular}} & $20$ & $4991$ & $600$ &$[0.001, 0.5]$&$2$\\
\hline
\end{tabular}}
\mycaption{\textmd{(LJ) Observation parameters for the Lennard-Jones system}}
\label{t:LJ_params}
\end{table}

Table \ref{t:LJ_learn_params} and Table \ref{t:LJ_params} summarize the parameters used for the two regimes: many short-time trajectories, and a single large-time trajectory.
In the first regime, the randomness of initial conditions enables the agents to explore large regions of state space, and in the space of pairwise distance, in a short time.  In the second regime, the large-time dynamics plays a fundamental role in driving the pairwise distance between agents to cover areas of interest. 
\begin{table}[H]
\centering
\revision{
\footnotesize{\begin{tabular}{| c || c | c |} 
\hline
& Many short trajectories & a few long trajectories \\
\hline
Rel. Err. for $\hat\phi$ & $6.6\cdot 10^{-2}\pm 5 \cdot 10^{-3}$ & $7.2\cdot 10^{-2}\pm 1\cdot 10^{-2}$ \\
\hline
\end{tabular}}
\mycaption{\textmd{(LJ) Relative error of the estimator for the Lennard-Jones system}}\label{t:LJ_errs}
}
\end{table}

The estimator belongs to a piecewise linear function space $\hypspace_n$ of dimension $n=600$.  
As reported in Fig.~\ifPNAS \newrefs{$1$ of the main text}\fi \ifarXiv\ref{f:LJ_main}\fi, the estimated interaction kernel $\lintkernel$ approximates the true interaction kernel $\intkernel$ well in the regions where $\rhoT^L$ (and $\rhoT$) is large, i.e. regions with an abundance of observed values of pairwise distances to reconstruct the interaction kernel. 
The dependency on $T$ of $\smash{\rhoL}$, and of the space $\smash{L^2(\rhoL)}$  (see (5) in the main text) used for learning, is rather pronounced, as may be seen from the histogram visualization also in \ifPNAS \newrefs{Fig.~$1$}\fi \ifarXiv Fig.~\ref{f:LJ_main}\fi.
As usual we also compare trajectories $\smash{\widehat\bX(t)}$ generated by the system with the estimated interaction kernel learned with trajectories $\bX(t)$ generated by the original system, given the same initial conditions at $t_0$, both on the learning interval $[t_0,T]$ and on larger time intervals  $[t_0,cT]$. Figure \ref{fig:LJ1traj} provides a visualization of such trajectories. \revision{ Visualization of the corresponding systems with a larger number of agents $N_{\text{new}}$ can be found in Figure 1 of the main text}.
We report the estimation errors of the interaction kernel and the trajectory errors in Tables \ref{t:LJ_errs} and \ref{t:LJ_traj_err_MC}.

\revision{Table \ref{t:LJ_errs} shows the mean and standard deviations of the relative $L^2(\rhoT)$ errors of the kernel estimators in 10 different simulations. We report the relative errors of trajectory prediction in SI Sec.~\ifPNAS\ref{s:SIExamples}\ref{LJdescriptions}\fi \ifarXiv \ref{LJdescriptions}\fi.}
\begin{table}[H]
\centering
\footnotesize{
\footnotesize{\begin{tabular}{| c || c | c |} 
\hline
                                                             & $[t_0, T]$                    & $[T, T_f]$ \\
\hline
$\text{mean}_{\text{IC}}$: Training ICs & $1.6\cdot 10^{-3}\pm 2 \cdot 10^{-4}$ & $1.7\cdot10^{-2}\pm 2 \cdot10^{-3} $\\
\hline
$\text{std}_{\text{IC}}$: Training ICs    &   $4.6 \cdot10^{-4}\pm 5\cdot10^{-5}$ & $2.1\cdot10^{-2}\pm 4\cdot10^{-3} $ \\
\hline            
$\text{mean}_{\text{IC}}$: Random ICs & $1.6\cdot10^{-3}\pm 2\cdot10^{-4}$ & $1.7\cdot10^{-2}\pm 2\cdot10^{-3} $\\
\hline
$\text{std}_{\text{IC}}$: Random ICs    & $4.5\cdot10^{-4}\pm 5\cdot10^{-5}$  &  $1.9\cdot10^{-2}\pm 2 \cdot10^{-3}$ \\
\hline
$\text{mean}_{\text{IC}}$: Larger $N$ & $6.2 \cdot10^{-2}\pm 7 \cdot10^{-3}$   & $6.2\cdot10^{-2}\pm 2 \cdot10^{-2}$ \\
\hline
$\text{std}_{\text{IC}}$: Larger $N$    & $8.2 \cdot10^{-3}\pm 7\cdot10^{-4}$  & $3.0\cdot10^{-2}\pm 1\cdot10^{-2}$ \\
\hline
\hline
$\text{mean}_{\text{IC}}$: Training ICs & $3.4 \cdot10^{-3}\pm 1 \cdot10^{-3}$  & $5.1\cdot10^{-3}\pm 2 \cdot10^{-3} $ \\
\hline
$\text{std}_{\text{IC}}$: Training ICs    &   $ 2.7 \cdot10^{-3}\pm 2 \cdot10^{-3}$ &  $6.6 \cdot10^{-3}\pm 3 \cdot10^{-3}$ \\
\hline            
$\text{mean}_{\text{IC}}$: Random ICs & $4.1 \cdot10^{-3}\pm 2 \cdot10^{-3}$ & $8.7\cdot10^{-3}\pm 8 \cdot10^{-3} $ \\
\hline
$\text{std}_{\text{IC}}$: Random ICs    & $3.6 \cdot10^{-3}\pm2 \cdot10^{-3}$ &  $1.5 \cdot10^{-2}\pm 2 \cdot10^{-2}$ \\
\hline
$\text{mean}_{\text{IC}}$: Larger $N$ & $7.7 \cdot10^{-2}\pm 1 \cdot10^{-2}$ & $6.6 \cdot10^{-2}\pm 3 \cdot10^{-2}$\\
\hline
$\text{std}_{\text{IC}}$: Larger $N$    & $1.5\cdot10^{-2}\pm 1\cdot10^{-2}$ & $5.7\cdot10^{-2}\pm 3 \cdot10^{-2}$ \\
\hline  
\end{tabular}}}
\mycaption{\textmd{(LJ) Trajectory Errors for Many Short Trajectories Learning (top) and Single Large Time Trajectories Learning (bottom)}}
\label{t:LJ_traj_err_MC}
\end{table}
We also test the convergence of our estimator as $M\rightarrow\infty$: we choose the parameters for observations and learning as in Table \ref{t:LJ_learn_params_convergence}. It is important that we choose the dimension $n$ of hypothesis space to be dependent on $M$, as dictated by Thm. (3.3) in the main text. Also, in this experiment (and this experiment only!) we observe the true  derivatives (instead of approximating them by finite differences of positions), as those would introduce a bias term that does not vanishes unless $L$ also increased with $n$. 
\begin{table}[H]
\centering
\footnotesize{\begin{tabular}{| c | c | c |c|}
\hline 
 $[t_0, T]$ & $L$ & $\log_2(M)$  &$n$\\ 
\hline 
  $[0.001,0.01]$ & 10&$12:21$& $64(M/\log M)^{0.2}$\\
\hline
\end{tabular}}
\mycaption{\textmd{(LJ) Observation parameters in the plot of  convergence rate }}
\label{t:LJ_learn_params_convergence}
\end{table}
We obtain a decay rate for for $ \smash{\|\lintkernel(\cdot)\cdot-{\intkernel}(\cdot)\cdot\|_{L^2(\rhoL)}}$ around $M^{-0.36}$, which is close to the theoretical optimal learning rate $M^{-0.4}$ -- see \ifPNAS \newrefs{Fig.~$2$ in the main text}\fi \ifarXiv Fig.~\ref{f:LJ_main}\fi.
We impute this (small) difference to the singularity of the Lennard-Jones interaction kernel at $0$, which makes this interaction kernel not admissible in the our learning theory.  
 \begin{figure}[H]
\centering
\includegraphics[width=\ifPNAS 0.75\textwidth \fi \ifarXiv 0.48\textwidth \fi]{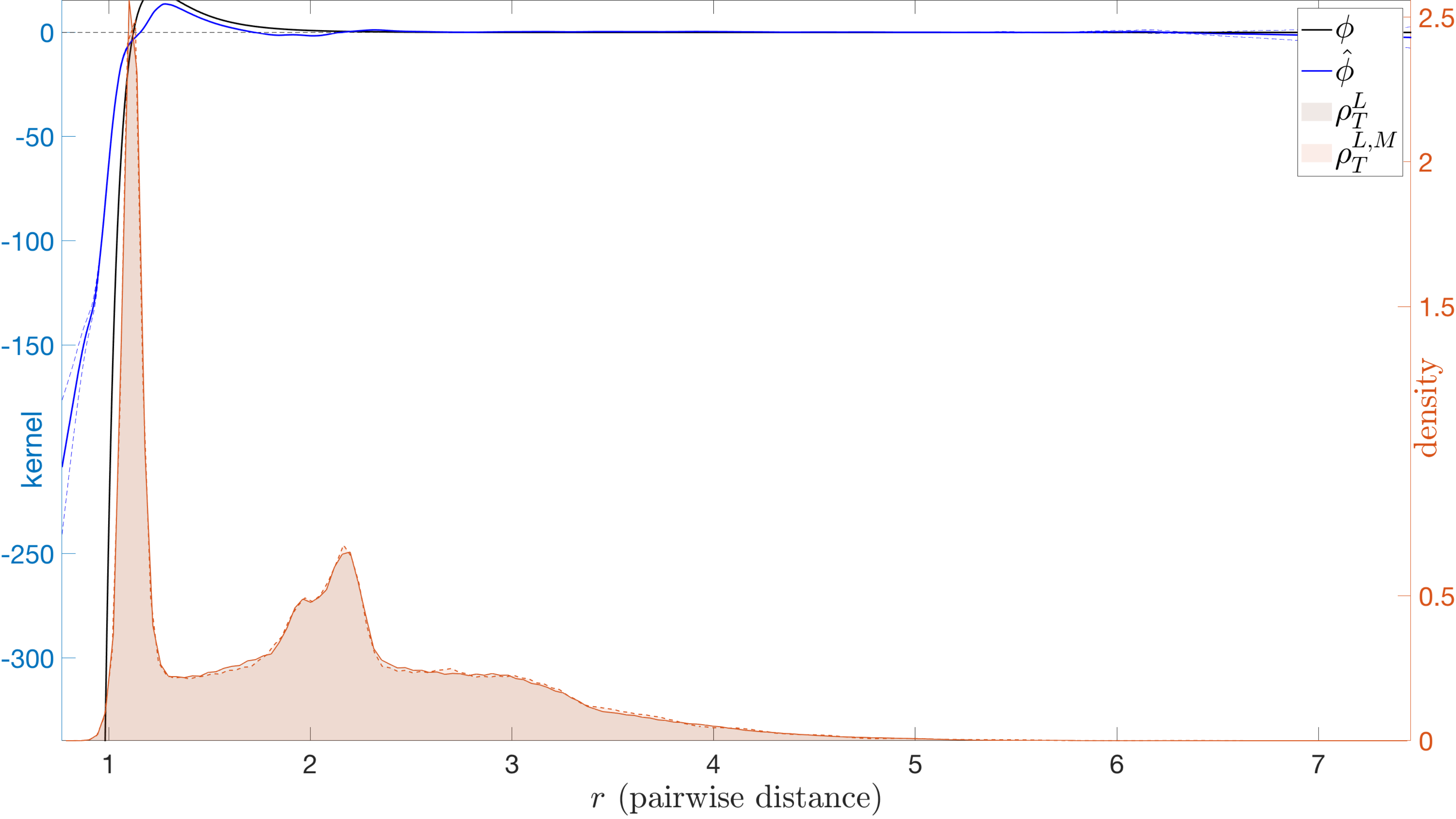}
\mycaption{(LJ) Interaction kernel learned with Unif.$([-\sigma,\sigma])$ additive noise, for $\sigma=0.1$ in the observed positions and velocities; here $M=500$, $L=2000$, with all the other parameters as in Table \ref{t:LJ_params}.}
\label{fig:LJnoise}
\end{figure}

However, the singularity of the Lennard-Jones interaction kernel at $0$ forces the particles close to each other to repel each other. Also, the system evolves rapidly to a steady-state, and the particles only explore a bounded region due to the large range attraction. Therefore, to obtain a well-supported non-degenerate measure $\smash{\rhoL}$, we should make observations on a time interval that avoids reaching either the singularity of the interaction kernel or the steady-state.  
The restriction of the Lennard-Jones interaction kernel to the support of $\smash{\rhoL}$ is bounded and smooth, and hence our learning theory applies and we achieve an almost optimal rate of learning in the numerical experiments. 
The estimated interaction kernel with noisy observation is visualized in Figure \ref{fig:LJnoise}.

Finally, Fig.~\ref{fig:LJ_coercivity_1} reports numerical validations of the coercivity condition in Definition \ref{def_coercivity_SI} for this system. We consider the number of agents $N$ ranging from $5$ to $30$, three different initial distributions $\probIC$, and observations on different time intervals. The coercivity constants computed by Monte Carlo sampling are close to the theoretical lower bound in all these cases. 
\begin{figure}[H]
\centering
\begin{minipage}{\ifPNAS 0.75\textwidth \fi \ifarXiv 0.85\textwidth \fi}
\includegraphics[width=\textwidth]{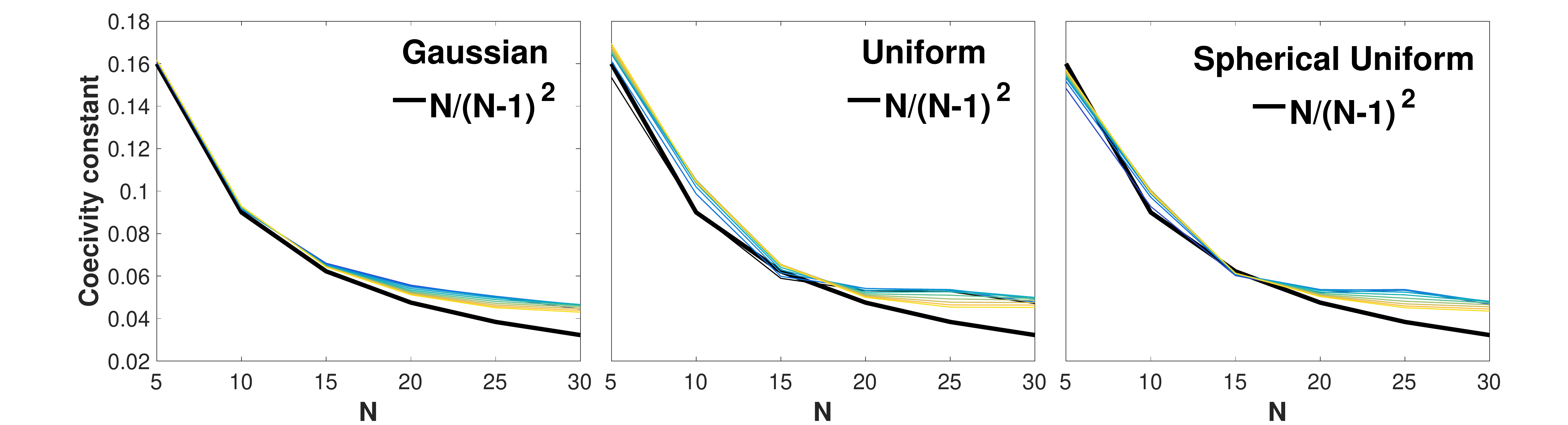}
\end{minipage}
\mycaption{{(LJ) \em{Coercivity condition validation in 2D Lennard-Jones system with different $N$}}. 
We compute the empirical coercivity constant $c_L$  defined in \eqref{gencoer} using $M=131,072$ trajectories with initial conditions drawn from $\probIC$. Three initial distributions for $\probIC$ are tested: the standard Gaussian vector in $\mathbb{R}^{2N}$ (left),  the uniform distribution on $[-0.5, 0.5]^{2N}$ (middle), and the uniform distributions on the unit spheres in $\mathbb{R}^{2N}$ (right). Ten different lengths of trajectories are considered (represented in each figure by the colored curves above the black curve, the theoretical lower bound of $c_L$): each with the same initial time $t_1=0.001$, but the end time $t_L$ ranges from 0.0059 to 0.0509 with a uniform time gap $10^{-4}$. In all these ten sampling regimes (all are short time periods), the coercivity constant is around $\frac{N-1}{N^2}$, matching the theoretical lower bound in \ifPNAS Thm.~$3.1$\fi \ifarXiv Thm.~\ref{t:coercivity}\fi for one time step. \revision{We also note that $c_L$ appears to not go to $0$ as $N$ increases, consistent with the conjecture that in a rather great generality $c_L$ stays bounded away from $0$ independently of $N$}.}
\label{fig:LJ_coercivity_1} 
\end{figure}

\subsection{Predator-Swarm system}\label{s:SI_PSD}
There is an increasing amount of literature in discussing models of self-organized animal motion \cite{CDOP2009, CDOMBC2007, CPT2010, CF2002, CS2007, Niwa1994, PEK1999, PVG2002, Romey1996, TT1995, YEECBKMS2009}.   Even more challenging is modeling interactions between agents of multiple types, in complex and emergent physical and social phenomena \cite{EMSC2014, PEK1999, CK2009, Nowak2006, FMSP2007}.  We consider here a representative heterogeneous agent dynamics: a Predator-Swarm system with a group of preys and a single predator, governed by either a first order or a second order system of ODE's.  The intensity of interaction(s) between the single predator and group of preys can be tuned with parameters, determining dynamics with various interesting patterns (from confusing the predator with fast preys, to chase, to catch up to one prey).  Since there is one single predator in the system, there is no predator-predator interaction to be learned. The interaction kernels (prey-prey, predator-prey) have both short-range repulsion to prevent the agents to collide, and long-range attraction to keep the agents in the flock.  Because of the strong short-range repulsion, the pairwise distances stay bounded away from $r = 0$.  We will see that these difficulties, similar to those confronted with the Lennard-Jones interaction kernel, do not prevent us from learning the interactions kernels.  

In our notation for the heterogeneous system, the set $C_1$ corresponds to the set of preys, and $C_2$ to the set consisting of the single predator.  

\noindent{\bf{Predator-Swarm, $1^{st}$ order}} (PS$1^{st}$).  We start from the first order system.
It is a special case of the first order heterogeneous agent systems we considered, with the following interaction kernels:
\[
\intkernel_{1, 1}(r) = 1 - r^{-2}, \quad \intkernel_{1, 2}(r) = -2r^{-2}, \quad \intkernel_{2, 1}(r) = 3r^{1.5}, \quad \intkernel_{2, 2}(r) \equiv 0.
\]
The simulation parameters are given in Table \ref{t:PSparams_1}.
\begin{table}[H]
\centering
\footnotesize{\begin{tabular}{| c | c | c | c | c | c |}
\hline 
 $d$  & $N_1$ & $N_2$   & $M$ & $L$ & $T$\\ 
\hline 
 $2$ & $9$      &$1$       & $50$ & $200$ & $5$ \\
\hline
\hline
$n_{1, 1}$ & $n_{1, 2} = n_{2, 1}$ & $n_{2, 2}$ & deg($\psi_{kk'}$) & Preys $\probIC^{\bX}$  & Pred. $\probIC^{\bX}$  \\ 
\hline 
$360$ & $120$ & $64$         &$[1, 1; 1, 0]$       &  Unif. on ring $[0.5, 1.5]$ & Unif. on disk at $0.1$ \\
\hline
\end{tabular}}
\mycaption{\textmd{(PS$1^{st}$) System parameters for first order Predator-Swarm system}}
\label{t:PSparams_1}
\end{table}

In the first column of \ifPNAS \newrefs{Fig.~$5$ in the main text}\fi \ifarXiv Fig.~\ref{fig:example_main}\fi, we show the comparison of the learned interaction kernels versus the true interaction kernels (with $\smash{\rho_T^{L, kk'}}$ and $\smash{\rho_T^{L, M, kk'}}$ shown in the background), and the comparison of true and learned trajectories over two different set of initial conditions.

As is shown in the top left a portion ($4$ sub-figures) of \ifPNAS \newrefs{Fig.~$5$ in the main text}\fi \ifarXiv Fig.~\ref{fig:example_main}\fi, we are able to match faithfully all four learned interactions to their corresponding true interactions over the range of $\rhoT$ when the pairwise distance data is abundant.  We are not able to learn the interaction kernels for $r$ close to $0$, demonstrated by the larger area of uncertainty (surrounded by the dashed lines) towards $0$: first, the prey-to-prey interaction is preventing preys colliding into each other; second, in the case of chasing predators, the preys are able to push away the predator.  The predator-to-prey and prey-to-predator interactions are learned over the same set of pairwise distance data, however, we are able to learn the details of the two interaction kernels, and judging from the learned interaction kernels, they are not simply negative of each other.  The predator-to-predator interaction simply is learned as a zero function, even though there is no pairwise distance data of a predator to a different predator.  Errors in their corresponding $L^2(\rho_T^{L, kk'})$ norms are reported in Table \ref{t:PS1_l2rho_err}.
\begin{figure}[H]
\centering
\includegraphics[width=\ifPNAS 0.75\textwidth \fi \ifarXiv 0.75\textwidth \fi]{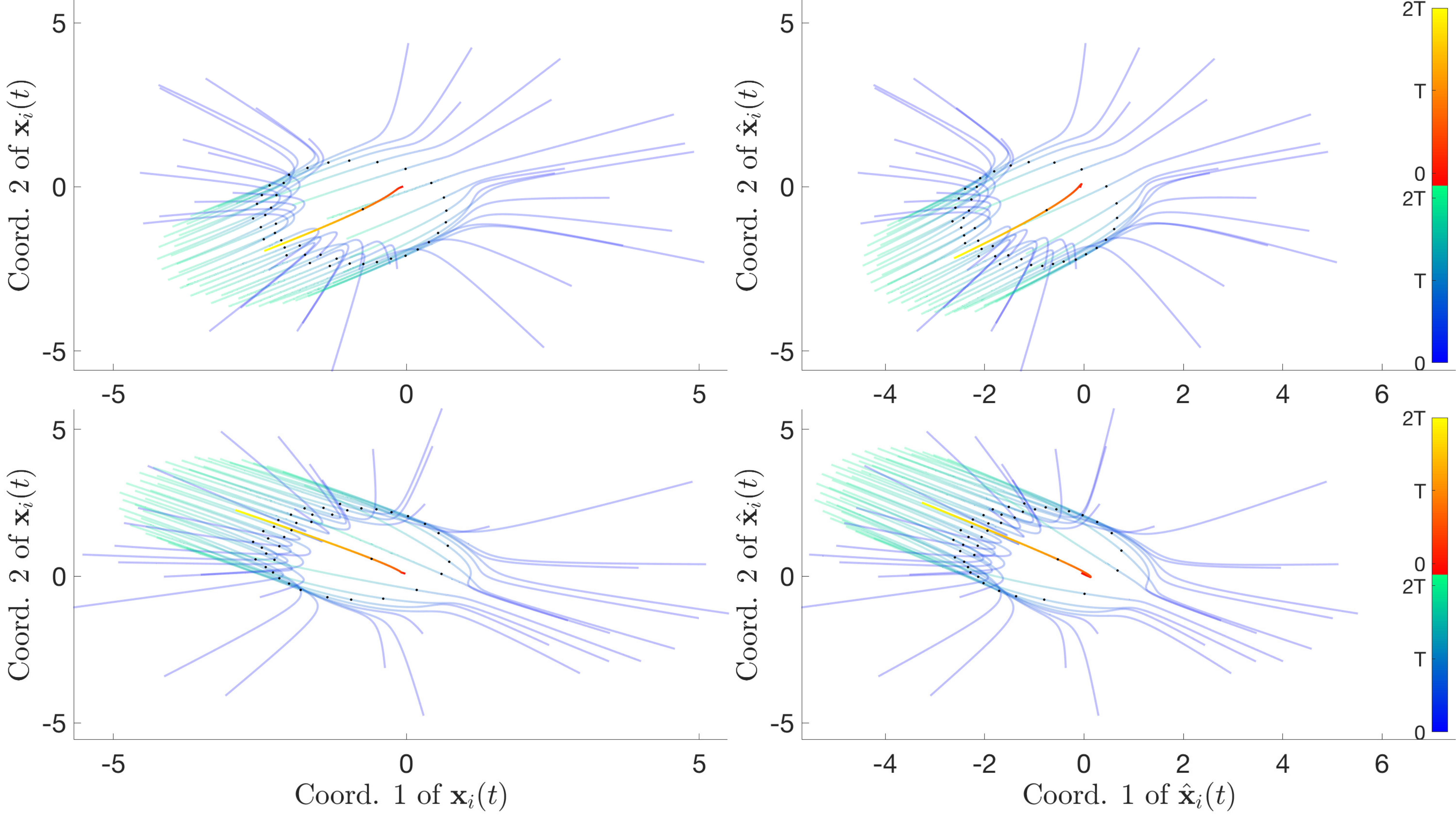}
\mycaption{(PS$1^{st}$) Trajectories $\bX(t)$ and $\smash{\widehat{\bX}(t)}$ obtained with $\intkernel$ and $\smash{\lintkernel}$ respectively, for two randomly chosen initial conditions and evolved for $N_{\text{new}}$ agents (with the same setup as in the case of  $N$ agents).  Trajectory errors are shown in Table \ref{t:PS1_traj_err}.}
\label{fig:PS1_traj_LN}
\end{figure}

The trajectory comparisons are shown in the bottom left portion ($4$ sub-figures) of \ifPNAS \newrefs{Fig.~5 in the main text}\fi \ifarXiv Fig.~\ref{fig:example_main}\fi.  We use color changing lines to indicate the movement of agents in time: with the blue-to-green lines attached to preys and the red-to-yellow line for the predator).  The black dot on the trajectories indicate the position of the agents at time $t = T$, and it shows the time divide: the first half of the time, $[0, T]$, is used for learning; and the second half of the time, $[T, T_f]$, is used for prediction.  

And the first row of $2$ sub-figures show the comparison of the trajectories over the initial condition taken from training data, it shows (visually) no major difference between the two, except one of the prey-trajectory, is having a bigger loop in the learned trajectories.  The second row of $2$ sub-figures compares the trajectories from a randomly chosen initial condition (outside of the training set).  
We are able to predict the movement of the predator in the learned trajectories, and movement of most preys.
In Fig.~\ref{fig:PS1_traj_LN} we compare the true and predicted trajectories over a corresponding system a dynamics but with a larger number $N_{\text{new}}$ of agents.
Table \ref{t:PS1_traj_err} reports the max-in-time error \eqref{e:tm_norm} in the trajectories in all cases considered.
We consider the effect of adding noise to observations, with results visualized in \ifPNAS \newrefs{Fig.~$8$ of the main text.}\fi \ifarXiv Fig.~\ref{fig:PS1noisetraj}\fi
\begin{table}[H]\centering
\footnotesize{\begin{tabular}{| c || c |} 
\hline
Rel. Err. for $\hat\phi_{1, 1}$ & $5.6 \cdot10^{-2} \pm 1.1 \cdot10^{-3}$ \\
\hline
Rel. Err. for $\hat\phi_{1, 2}$ & $6.6 \cdot10^{-3} \pm 2.4 \cdot10^{-3}$ \\
\hline
Rel. Err. for $\hat\phi_{2, 1}$ & $2.7 \cdot10^{-2} \pm 8.9 \cdot10^{-3}$ \\
\hline
Abs. Err. for $\hat\phi_{2, 2}$ & $0$ \\
\hline  
\end{tabular}}
\mycaption{\textmd{(PS$1^{st}$) Estimator Errors}}
\label{t:PS1_l2rho_err}
\end{table}

\begin{table}[H]\centering
\footnotesize{\begin{tabular}{| c || c | c |} 
\hline
                                                             & $[0, T]$                                                    & $[T, T_f]$\\
\hline
$\text{mean}_{\text{IC}}$: Training ICs & $4.2 \cdot10^{-2} \pm 1.0 \cdot10^{-2}$  & $1.1 \cdot10^{-1} \pm 3.0 \cdot10^{-2}$\\
\hline
$\text{std}_{\text{IC}}$: Training ICs    & $7.2 \cdot10^{-2} \pm 5.6 \cdot10^{-2}$   & $1.9 \cdot 10^{-1} \pm 1.4 \cdot 10^{-1}$ \\
\hline            
$\text{mean}_{\text{IC}}$: Random ICs & $3.8 \cdot10^{-2} \pm 1.4 \cdot10^{-2}$  & $9.5 \cdot10^{-2} \pm 3.2\cdot10^{-2}$\\
\hline
$\text{std}_{\text{IC}}$: Random ICs    & $5.5 \cdot10^{-2} \pm 6.2 \cdot10^{-2}$  & $1.4 \cdot 10^{-1} \pm 1.4 \cdot 10^{-1}$\\
\hline 
$\text{mean}_{\text{IC}}$: Larger $N$ & $4.2 \cdot 10^{-1} \pm 1.7 \cdot 10^{-1}$ & $3.1 \pm 4.6$\\
\hline
$\text{std}_{\text{IC}}$: Larger $N$    & $1.7 \cdot10^{-1} \pm 9.6 \cdot10^{-2}$  & $15.8 \pm 27.4$\\
\hline         
\end{tabular}}
\mycaption{\textmd{(PS$1^{st}$) Trajectory Errors}}
\label{t:PS1_traj_err}
\end{table}
\revision{We show numerically that our learning approach is robust to the choice of hypothesis space, as predicted by the theory, by testing on the Predator-Swarm, $1^{st}$-order system with the B-splines basis.  Results are shown in Fig.~\ref{fig:PS1splines}. Note that the estimators perform similarly in comparison with \ifPNAS \newrefs{Fig.~8 of the main text }\fi \ifarXiv Fig.~\ref{fig:PS1noisetraj}\fi  are consistent with the error statistics in Table \ref{t:PS1_traj_err}, in both of which the hypothesis space uses piece-wise polynomial basis.
\begin{figure}[H]
\centering
\includegraphics[width=\ifPNAS 0.75\textwidth \fi \ifarXiv 0.75\textwidth \fi]{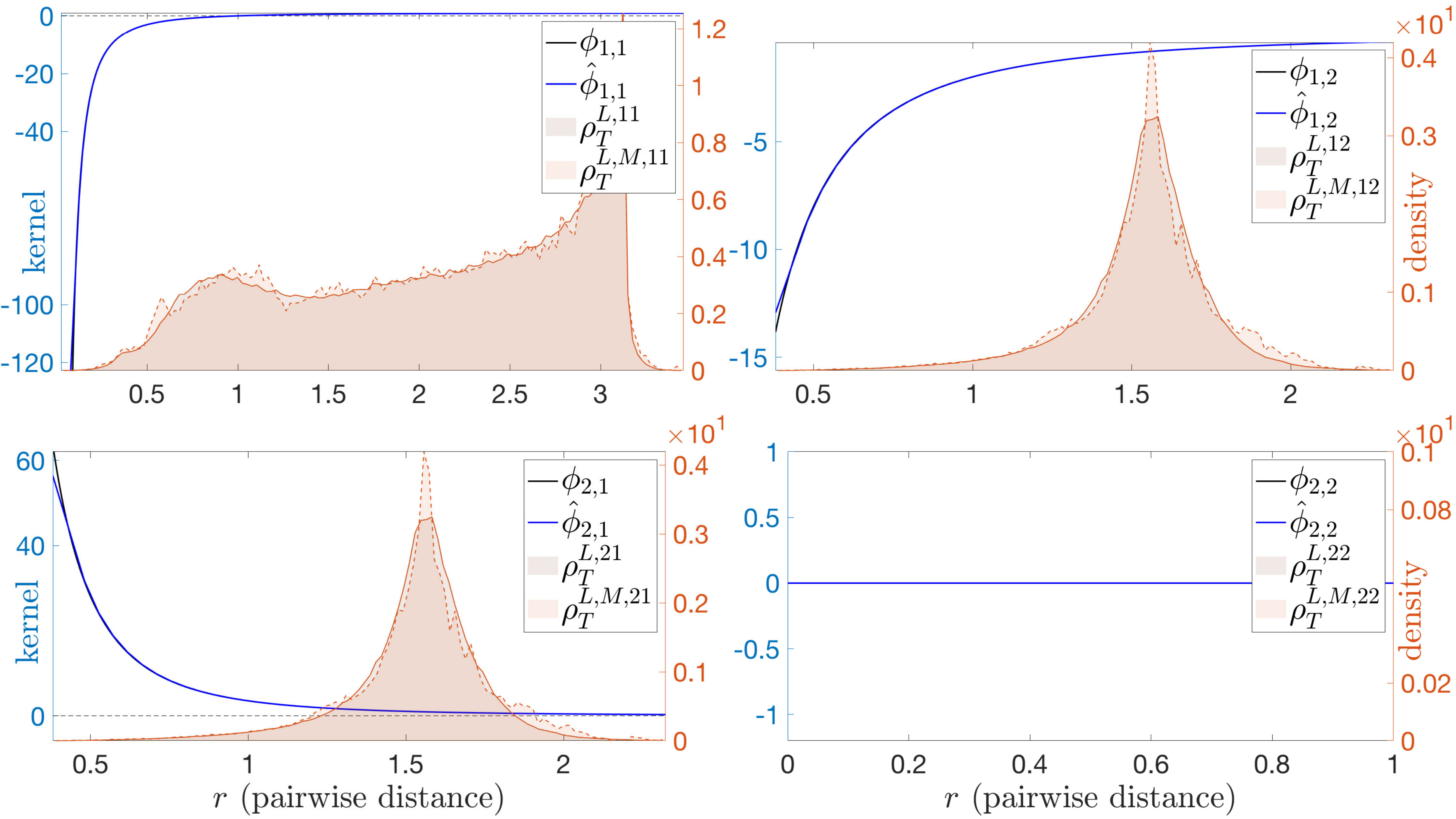}
\mycaption{\revision{(PS$1^{st}$) Comparison of interaction kernels (true versus learned) when the learned kernels are generated by linear B-splines ($n$ as in the other case considered for this system).  The relative error (in $L^2(\rho_T)$ norm) for prey on prey interaction is: $6.6 \cdot 10^{-2}$; for predatory on prey: $6.1 \cdot 10^{-3}$; for prey on predator: $3.6 \cdot 10^{-2}$; and finally for predator on predator: $0$.}}
\label{fig:PS1splines}
\end{figure}}

\begin{figure}[H]
\centering
\includegraphics[width=\ifPNAS 0.75\textwidth \fi \ifarXiv 0.48\textwidth \fi]{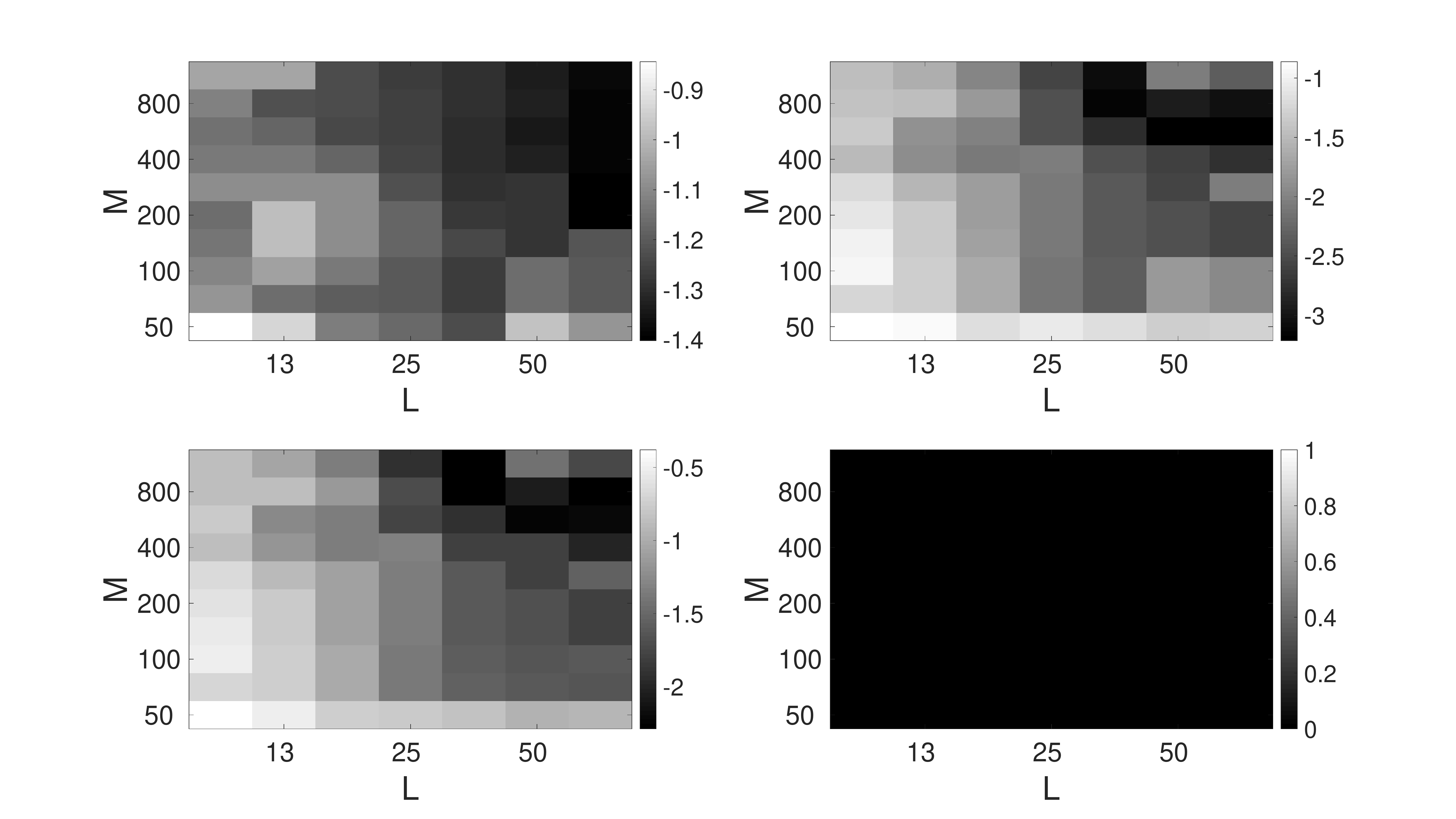}
\mycaption{(PS1) Relative error, in $\log_{10}$ scale, of $\hat\intkernel^E_{k,k'}$ (with $(k,k')$ increasing lexicographically from top-left to bottom-right) as a function of $L$ and $M$. The error decreases both in $L$ and $M$, in fact roughly in the product $ML$. The fourth plot is an identically $0$ absolute error, because both $\phi^E_{2,2}$ and its estimator are identically $0$, since there is only one predator. \revision{Note $M\gg1$ seems to be needed for accurate inference of the interaction kernels, regardless of how large $L$ is: the trajectories explored for small $M$ do not explore enough configuration to enable estimation, suggesting that the limit $M\rightarrow+\infty$ considered in this work is of fundamental importance, at least for non-ergodic systems.}}
\label{fig:PS1MLtest}
\end{figure}

\noindent{\bf{Predator-Swarm, $2^{nd}$-order}} (PS$2^{nd}$).  The second order Predator-Swarm system is a special case of the second order system which is considered in this paper, without alignment-based interactions and without environment variables $\xi_i$'s, similar to the Cucker-Dong model of repulsion-attraction \cite{CD2014} and D'Orsogna-Bertozzi model for modeling fish school formation \cite{CDOP2009, CDOMBC2007} without the non-collective forcing term.  The energy-based interactions are
\[
\intkernel_{1, 1}(r) = 1 - r^{-2}, \quad \intkernel_{1, 2}(r) = -r^{-2}, \quad \intkernel_{2, 1}(r) = 1.5r^{-2.5}, \quad \intkernel_{2, 2}(r) \equiv 0.
\]
The non-collective change on $\dot\bx_i$ is $\forcev_i(\dot\bx_i,\xi_i) = -\nu_{\clof_i}\dot\bx_i$, where the friction constants are type-based and $\nu_{\idxcl} = 1$ for all $\idxcl = 1, \cdots, \numcl$; and the mass of each agent is $m_i = 1$ for all $i = 1, \cdots, N$.  We consider the system and test parameters given in table \ref{t:PSparams_2} (the initial velocity of preys and predator are fixed at $0\in\R^2$).
\begin{table}[H]\centering
\footnotesize{\begin{tabular}{| c | c | c | c | c | c | c |}
\hline 
 $d$  & $N_1$ & $N_2$   & $M$ & $L$  & $T$\\ 
\hline 
 $2$ & $9$      &$1$       & $150$ & $300$ & $10$ \\
\hline
\hline
$n_{1, 1}$ & $n_{1, 2}=n_{2, 1}$ & $n_{2, 2}$ & deg($\psi_{kk'}^E$)                 & Preys $\probIC^{\bX}$      & Pred. $\probIC^{\bX}$  \\ 
\hline 
$1620$ & $540$ & $180$        &$[1, 1; 1, 0]$       &  Unif. on $[0.1, 1]^2$ & Unif. on $[0, 0.08]^2$ \\
\hline
\end{tabular}}
\mycaption{\textmd{(PS$2^{nd}$) System Parameters}}
\label{t:PSparams_2}
\end{table}

Note that the two dynamics, predator-prey $1^{st}$ order and predator-prey $2^{nd}$ order, use a similar set of interaction kernels, however, the resulting dynamics are significantly different from each other, as demonstrated in both the distribution of pairwise distance data and in the trajectories.

In the middle column of \ifPNAS \newrefs{Fig.~5 in the main text}\fi \ifarXiv Fig.~\ref{fig:example_main}\fi, we show the comparison of the learned interaction kernels versus the true interaction kernels (with $\smash{\rho_{T, r}^{L, kk'}}$ and $\smash{\rho_{T, r}^{L, M, kk'}}$ shown in the background), and the comparison of true and learned trajectories over two different set of initial conditions.
Similar observations to those for the $1^{st}$ order system apply here.
Errors of the estimators in the $\smash{L^2(\rho_T^{L, kk'})}$ norms are reported in Table \ref{t:PS2_l2rho_err}.
The test on trajectories (bottom middle portion ($4$ sub-figures) of \ifPNAS \newrefs{Fig.~5 in the main text}\fi \ifarXiv Fig.~\ref{fig:example_main}\fi) shows visually the accuracy of the predicted trajectories, quantified by the numerical report in Table \ref{t:PS2_traj_err}.
We also compare in Fig.~\ref{fig:PS2_traj_LN} the true and learned trajectories over a corresponding system with $N_{\text{new}}$ agents.
We consider the effect of adding noise to observations, with results visualized in Figure \ref{fig:PS2noise}.
Figures \ref{fig:PS1MLtest} and \ref{fig:PS2MLtest} show the behavior of the error of the estimator (for systems (PS1$^{st}$) and (PS2$^{nd}$) respectively) as both $L$ and $M$ are increased.
\begin{figure}[H]
\centering
\includegraphics[width=\ifPNAS 0.75\textwidth \fi \ifarXiv 0.75\textwidth \fi]{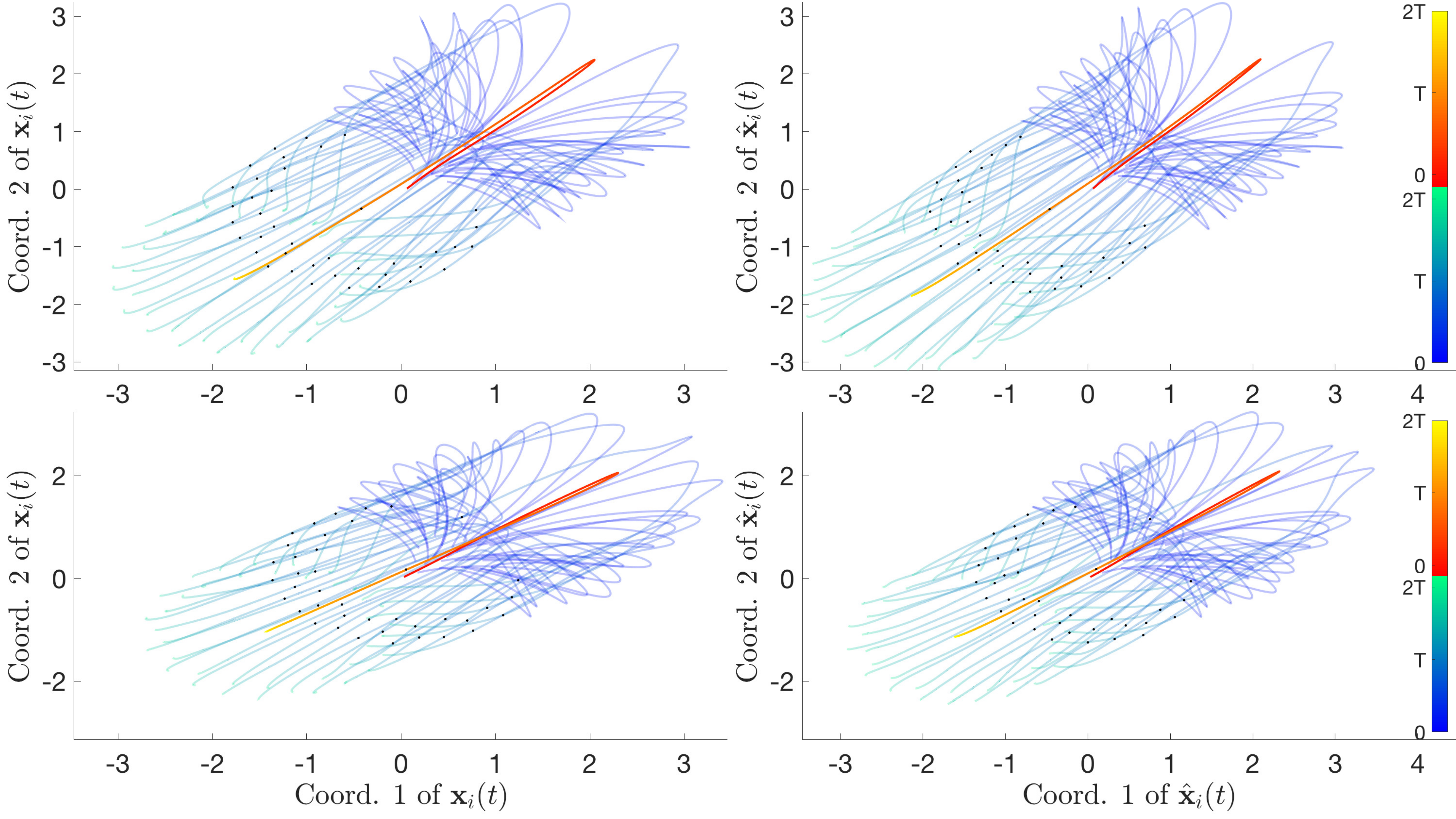}
\mycaption{(PS$2^{nd}$) Trajectories $\bX(t)$ and $\smash{\widehat{\bX}(t)}$ obtained with $\intkernel$ and $\smash{\lintkernel}$ respectively, for two randomly chosen initial conditions and evolved for $N_{\text{new}}$ agents (with the same setup as in the case of  $N$ agents).  Trajectory errors are shown in Table \ref{t:PS2_traj_err}.}
\label{fig:PS2_traj_LN}
\end{figure}

\begin{table}[H]\centering
\footnotesize{\begin{tabular}{| c || c |} 
\hline
Rel. Err. for $\hat\phi_{1, 1}^E$ & $1.5 \cdot10^{-1} \pm 5.0 \cdot10^{-2}$ \\
\hline
Rel. Err. for $\hat\phi_{1, 2}^E$ & $1.3 \cdot10^{-1} \pm 1.1 \cdot10^{-2}$ \\
\hline
Rel. Err. for $\hat\phi_{2, 1}^E$ & $7.1 \cdot10^{-1} \pm 3.8 \cdot10^{-1}$ \\
\hline
Abs. Err. for $\hat\phi_{2, 2}^E$ & $0$ \\
\hline  
\end{tabular}}
\mycaption{\textmd{(PS$2^{nd}$) Estimator Errors}}
\label{t:PS2_l2rho_err}
\end{table}

\begin{table}[H]\centering
\footnotesize{\begin{tabular}{| c || c | c |}
\hline
                                                             & $[0, T]$                                                     & $[T, T_f]$ \\
\hline
$\text{mean}_{\text{IC}}$: Training ICs & $3.5 \cdot10^{-1} \pm 1.2 \cdot10^{-1}$  & $7.9 \cdot10^{-1} \pm 2.1 \cdot10^{-1}$\\
\hline
$\text{std}_{\text{IC}}$: Training ICs    & $6.5 \cdot10^{-1} \pm 2.7 \cdot10^{-1}$  & $1.2 \pm 3.7\cdot10^{-1}$\\
\hline            
$\text{mean}_{\text{IC}}$: Random ICs & $3.5 \cdot10^{-1} \pm 1.2 \cdot10^{-1}$  & $8.0 \cdot10^{-1} \pm 2.3\cdot10^{-1}$\\
\hline
$\text{std}_{\text{IC}}$: Random ICs    & $5.8 \cdot10^{-1} \pm 1.6 \cdot10^{-1}$  &$1.2 \pm 3.1 \cdot10^{-1}$\\
\hline 
$\text{mean}_{\text{IC}}$: Larger $N$ & $2.0 \cdot10^{-1} \pm 3.0 \cdot10^{-2}$   & $4.6 \cdot10^{-1} \pm 1.2 \cdot10^{-1}$\\
\hline
$\text{std}_{\text{IC}}$: Larger $N$    & $1.1 \cdot10^{-1} \pm 1.4 \cdot10^{-2}$  & $2.5 \cdot10^{-1} \pm 5.6 \cdot10^{-2}$\\
\hline      
\end{tabular}}
\mycaption{\textmd{(PS$2^{nd}$) Trajectory Errors}}
\label{t:PS2_traj_err}
\end{table}

\begin{figure}[H]
\centering
\includegraphics[width=\ifPNAS 0.75\textwidth \fi \ifarXiv 0.75\textwidth \fi]{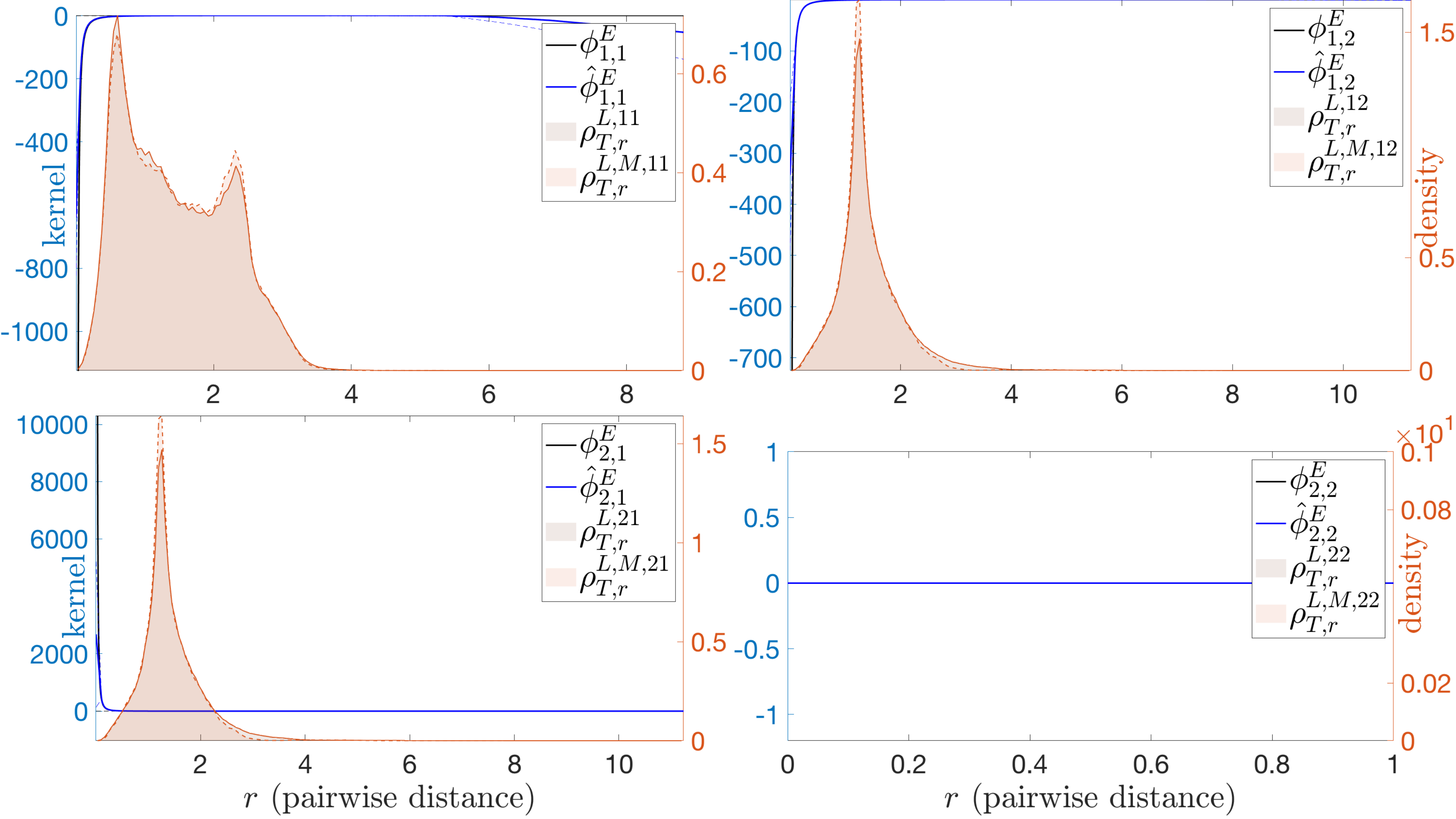}
\mycaption{(PS$2^{nd}$) Interaction kernels learned with Unif.$([-\sigma,\sigma])$ multiplicative noise, for $\sigma=0.1$ in the observed positions and velocities, with parameters as in Table \ref{t:PSparams_2}. The estimated kernels are minimally affected, mostly in regions with small $\rhoL$ near $0$.}
\label{fig:PS2noise}
\end{figure}

\begin{figure}[H]
\centering
\includegraphics[width=\ifPNAS 0.75\textwidth \fi \ifarXiv 0.75\textwidth \fi]{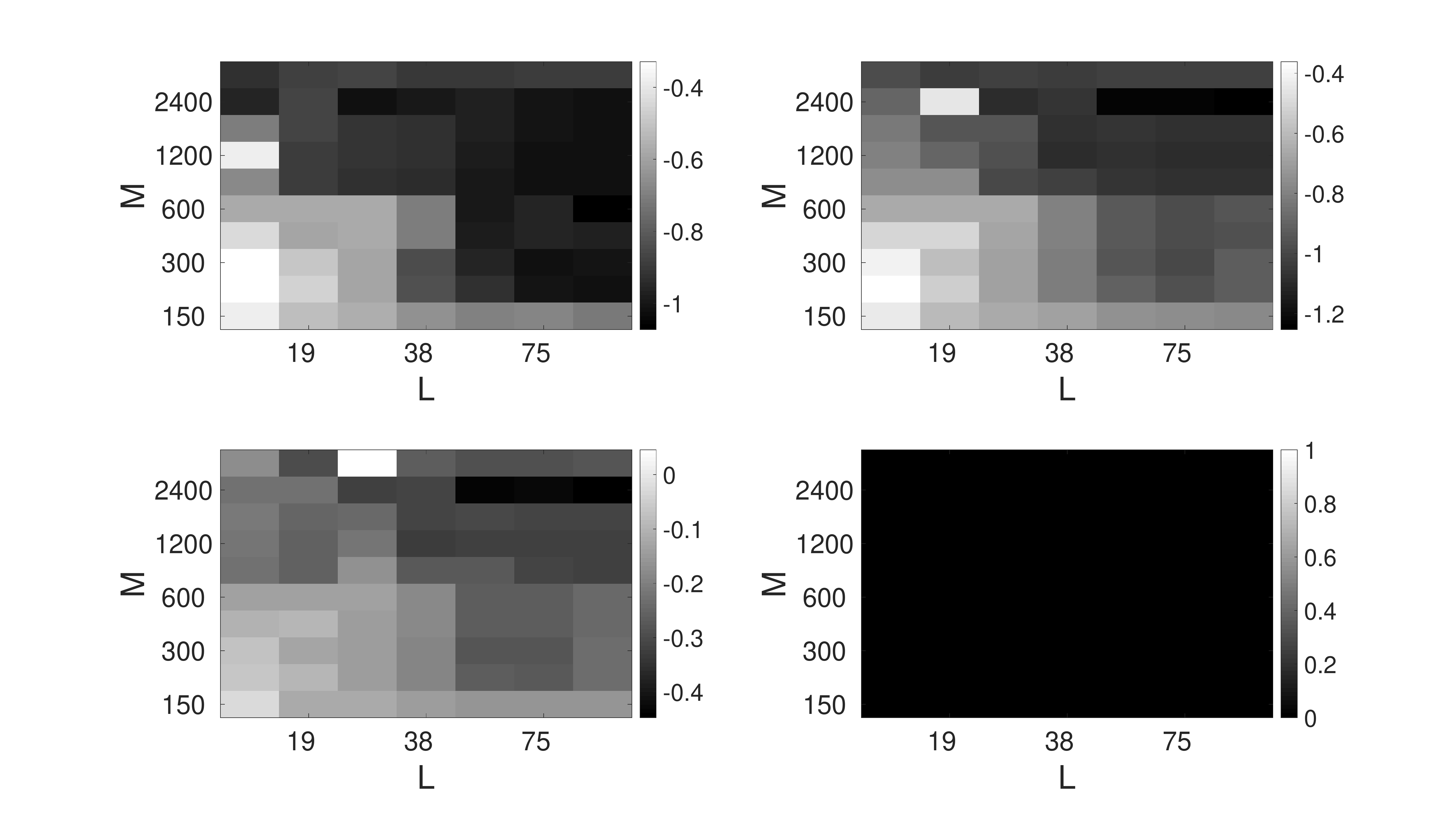}
\mycaption{(PS2) Relative error, in $\log_{10}$ scale, of $\hat\intkernel^E_{k,k'}$ (with $(k,k')$ increasing lexicographically from top-left to bottom-right) as a function of $L$ and $M$. The error decreases both in $L$ and $M$, in fact roughly in the product $ML$. The fourth plot is an identically $0$ absolute error, because both $\phi^E_{2,2}$ and its estimator are identically $0$, since there is only one predator. \revision{Note $M\gg1$ seems to be needed for accurate inference of the interaction kernels, regardless of how large $L$ is: the trajectories explored for small $M$ do not explore enough configuration to enable estimation, suggesting that the limit $M\rightarrow+\infty$ considered in this work is of fundamental importance, at least for non-ergodic systems.}}
\label{fig:PS2MLtest}
\end{figure} 
\subsection{Phototaxis Dynamics}\label{s:SI_PT}
Second order models have been widely used in describing self-organized human motion \cite{CPT2011, CSZ2004, SDOPTBBC2008}, synthetic agent (robots, drones, etc.) behavior \cite{CHDOB2007, LF2001, PGE2009, SS1997}, and bacteria/cell aggregation and motility \cite{CDFSTB2003, KS1970, KW1998, Perthame2007}.  A step further in accurately model reality is to consider models with responses of agents to their surrounding environment or the spread of emotion among agents within a system.  Such phenomena appear in a variety of applications, including modeling of emergency evacuation, crowded pedestrian dynamics, bacteria movement toward certain food sources \cite{MHT2011, DGAB2015, BDMTvdW2009, BHKTvdW2011, LL2015, CDFSTB2003, KS1970, KW1998, Perthame2007}.  We choose here a system modeling the dynamics of phototactic bacteria towards a fixed light source.  This system extends the Cucker-Smale system \cite{CS2007, CS2007a, HHK2010} with an extra auxiliary variable $\xi_i$ modeling the response (called excitation level) of individual bacteria to the light source.  The dynamics is known to lead to flocking (all bacteria moving in the same direction) within a rather short amount time, due to the interaction kernel having a long interaction range and the effect of light entering the dynamics uniformly.  This system is within our family of the second order systems, with homogeneous agents and no energy-induced interaction kernel.  The alignment-based interaction kernels acting on $\dot\bx_i$ and $\xi_i$ are the same:
\[
\intkernel^{\bv}(r) = \intkernel^{\xi}(r) = {(1 + r^2)^{-\frac14}}.
\]
The non-collective change on $\dot\bx_i$ is given by
\[
\forcev_i(\dot\bx_i,\xi_i) = I_0(\bv_{\text{term}} - \dot\bx_i)(1 - \gamma(\xi_i; \xi_{\text{cr}})),
\]
where $I_0 = 0.1$ is the light intensity, $\bv_{\text{term}} = (60,0)$ is the terminal velocity (light source at infinity), $\xi_{\text{cr}} = 0.3$ is the critical excitation level (when the light effect activates the bacteria), and $\gamma(\cdot)$ is the smooth cutoff function
\[
\gamma(\xi; \xi_c) = \left\{
        \begin{array}{ll}
          1,                                                             & \quad 0        \le \xi < \xi_c, \\
          \frac12(\cos(\frac{\pi}{\xi_c}(\xi - \xi_c) + 1), & \quad \xi_c   \le \xi < 2\xi_c, \\
          0,                                                             & \quad 2\xi_c \le \xi.
        \end{array}
    \right.
\]
Here $\xi_c$ is a a threshold constant.  The non-collective change on $\xi_i$ is given by
\[
\forcexi_i(\xi_i) = I_0\gamma(\xi_i; \xi_{\text{cp}}),
\]
where $\xi_{\text{cp}} = 0.6$ is the maximum excitation level of light effect on the bacteria.  The system parameters are summarized in Table \ref{t:PTparams}.
\begin{table}[H]\centering
\footnotesize{\begin{tabular}{| c | c | c | c |}
\hline 
 $d$  & $M$ & $L$     & $T$      \\
\hline
$2$ & $50$ &$200$ &$0.25$ \\
\hline
\hline
$\probIC^{\bX} = \probIC^{\dot\bX}$  & $\probIC^{\Xi}$                      & $n^{\bv} = n^{\xi}$ & deg($\psi_{kk'}^A$) $=$ deg($\psi_{kk'}^\xi$) \\ 
\hline 
Unif. on $[0, 100]^2$                  & Unif. on $[0, 0.001]^2$ & $400$                     & $1$\\
\hline
\end{tabular}}
\mycaption{\textmd{(PT) Parameters for Phototaxis Dynamics}}
\label{t:PTparams}
\end{table}

In the right column of \ifPNAS \newrefs{Fig.~5 in the main text}\fi \ifarXiv Fig.~\ref{fig:example_main}\fi, we show the comparison of the learned interaction kernels $\lintkernel^{A}$ and $\lintkernel^{\xi}$ versus the true interaction kernels, as well as the comparison of true and learned trajectories over two different set of initial conditions.
We are able to accurately learn the interaction kernels $\lintkernel^{A}$ and $\lintkernel^{\xi}$ over the support of $\rhoT$ when pairwise distance data is abundant.  When the pairwise distance data becomes scarce towards the two ends of the interaction interval $[0, R]$, we are able to faithfully capture the behavior of $\intkernel$ at $r = 0$; the errors are larger near the upper end $r = R$, where the data is extremely scarce.  Crucially, we recover faithfully the interactions between the agents and their environment. Estimation errors in the appropriate $L^2(\rho^L_{T, r, \dot{r}})$- and $L^2(\rho^L_{T, r, \xi})$-norms are reported in Table \ref{t:PT_l2rho_err}. A case with noisy observation is also investigated and shown in Fig.~\ref{fig:PTnoise}.  Trajectory errors are shown in Table \ref{t:PT_traj_err}.
We also compare in Fig.~\ref{fig:PT_traj_LN} the true and learned trajectories for a corresponding system a dynamics with larger $N$.
\begin{figure}[H]
\centering
\includegraphics[width=\ifPNAS 0.75\textwidth \fi \ifarXiv 0.48\textwidth \fi]{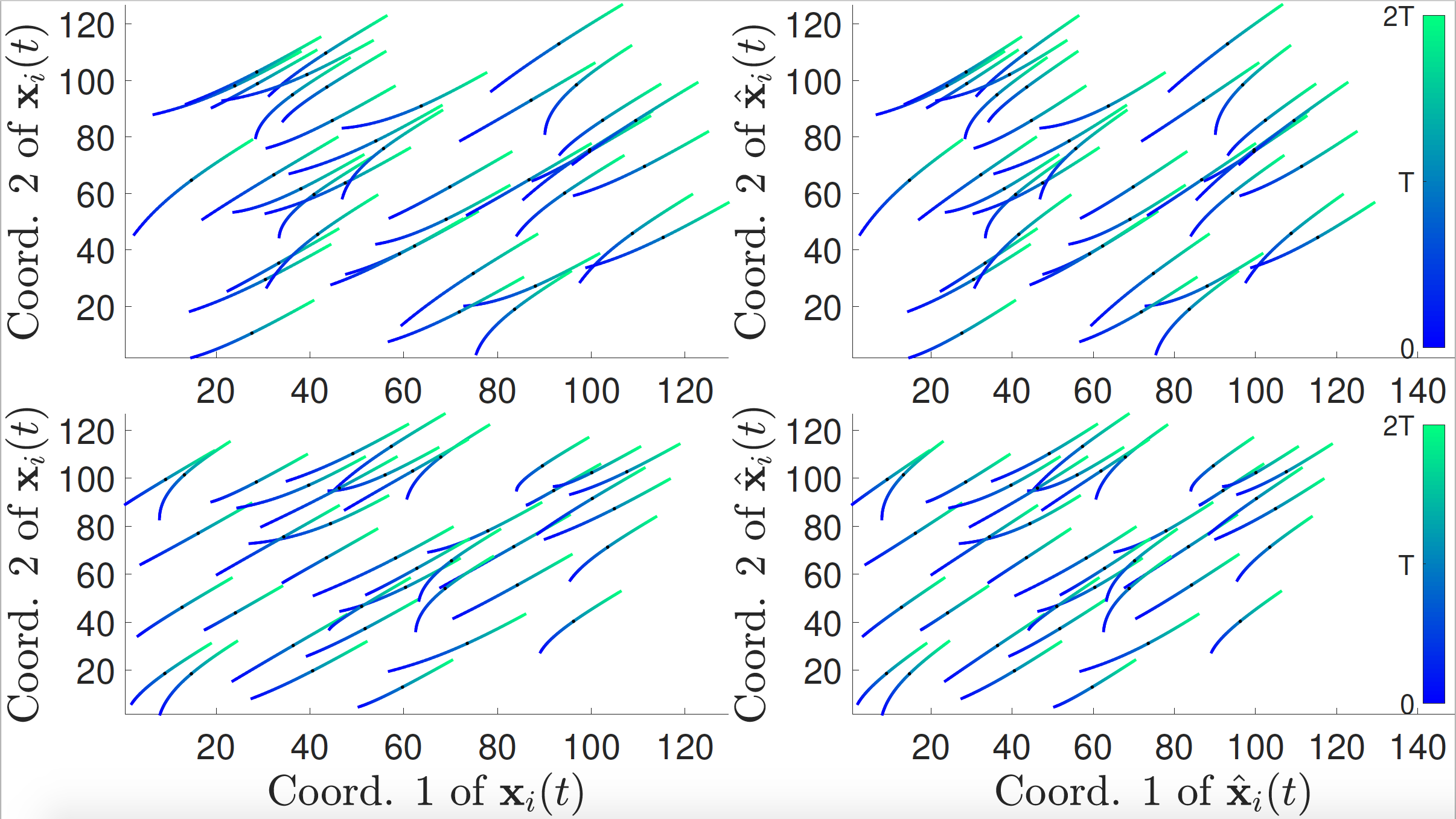}
\mycaption{(PT) Trajectories $\bX(t)$ and $\smash{\widehat{\bX}(t)}$ obtained with true and learned interaction kernels respectively, for two randomly chosen initial conditions and evolved using the larger number of agents $N_{\text{new}}$ (governed by the same equations as in the case of $N$ agents).  Trajectory errors are shown in Table \ref{t:PT_traj_err}.}
\label{fig:PT_traj_LN}
\end{figure}

\begin{table}[H]\centering
\footnotesize{\begin{tabular}{| c || c |} 
\hline
Rel. Err. for $\lintkernel^A$   & $9.4 \cdot10^{-3} \pm 5.2 \cdot10^{-3}$ \\
\hline
Rel. Err. for $\lintkernel^\xi$ & $8.2 \cdot10^{-3} \pm 5.0 \cdot10^{-3}$ \\
\hline  
\end{tabular}}
\mycaption{\textmd{(PT) Estimator Errors}}
\label{t:PT_l2rho_err}
\end{table}

\begin{table}[H]\centering
\footnotesize{\begin{tabular}{| c || c | c |} 
\hline
                                                             & $[0, T]$                                                   & $[T, T_f]$\\
\hline
$\text{mean}_{\text{IC}}$: Training ICs & $1.6 \cdot10^{-3} \pm 5.7 \cdot10^{-5}$ &$6.5 \cdot10^{-3} \pm 9.1 \cdot10^{-4}$ \\
\hline
$\text{std}_{\text{IC}}$: Training ICs    & $3.1 \cdot10^{-4} \pm 4.8 \cdot10^{-5}$ & $8.1 \cdot10^{-3} \pm 3.9 \cdot10^{-3}$\\
\hline            
$\text{mean}_{\text{IC}}$: Random ICs & $1.8 \cdot10^{-3} \pm 8.0 \cdot10^{-4}$ &  $7.3 \cdot10^{-3} \pm 3.2\cdot10^{-3}$\\
\hline
$\text{std}_{\text{IC}}$: Random ICs    & $1.5 \cdot10^{-3} \pm 3.4 \cdot10^{-3}$  &  $1.1 \cdot10^{-2} \pm 1.2 \cdot10^{-2}$\\
\hline 
$\text{mean}_{\text{IC}}$: Larger $N$ & $4.2 \cdot10^{-3} \pm 1.6 \cdot10^{-3}$   & $8.4 \cdot10^{-3} \pm 3.8 \cdot10^{-3}$\\
\hline
$\text{std}_{\text{IC}}$: Larger $N$    & $2.9 \cdot10^{-3} \pm 3.0 \cdot10^{-3}$  & $7.9 \cdot10^{-3} \pm 7.0 \cdot10^{-3}$\\
\hline           
\end{tabular}}
\mycaption{\textmd{(PT) Trajectory Errors}}
\label{t:PT_traj_err}
\end{table}

Finally we display, in Fig.~\ref{fig:PT_rhoLTA} and \ref{fig:PT_rhoLTXi}, the two joint distributions $\smash{\rho^L_{T, r, \dot{r}}}$ and $\smash{\rho^L_{T, r, \xi}}$, used to define the appropriate $L^2$-norms for measuring the performance of $\smash{\lintkernel^{A}}$ and $\smash{\lintkernel^{\xi}}$.  We also calculated the $\ell^1$ distance between the joint distribution $\rho^L_{T, r, \dot{r}}$ and the product of its marginals, and it is $1.3 \cdot 10^{-1}$.  For the $\ell^1$ distance between $\rho^L_{T, r, \xi}$ and the product of its marginals, it is $6.7 \cdot 10^{-2}$.  For the empirical distributions (over $10$ learning trials), the $\ell^1$ distance for $\rho^{L, M}_{T, r, \dot{r}}$ and the product of its marginal is $7.2 \cdot 10^{-1} \pm 1.0 \cdot 10^{-2}$; whereas the $\ell^1$ distance of $\rho^{L, M}_{T, r, \xi}$ to the product of its marginals is $3.7 \cdot 10^{-1} \pm 6.7 \cdot 10^{-3}$.
\begin{figure}[H]
\centering
\begin{subfigure}[b]{\ifPNAS 0.75\textwidth \fi \ifarXiv 0.48\textwidth \fi}
   \includegraphics[width=1\linewidth]{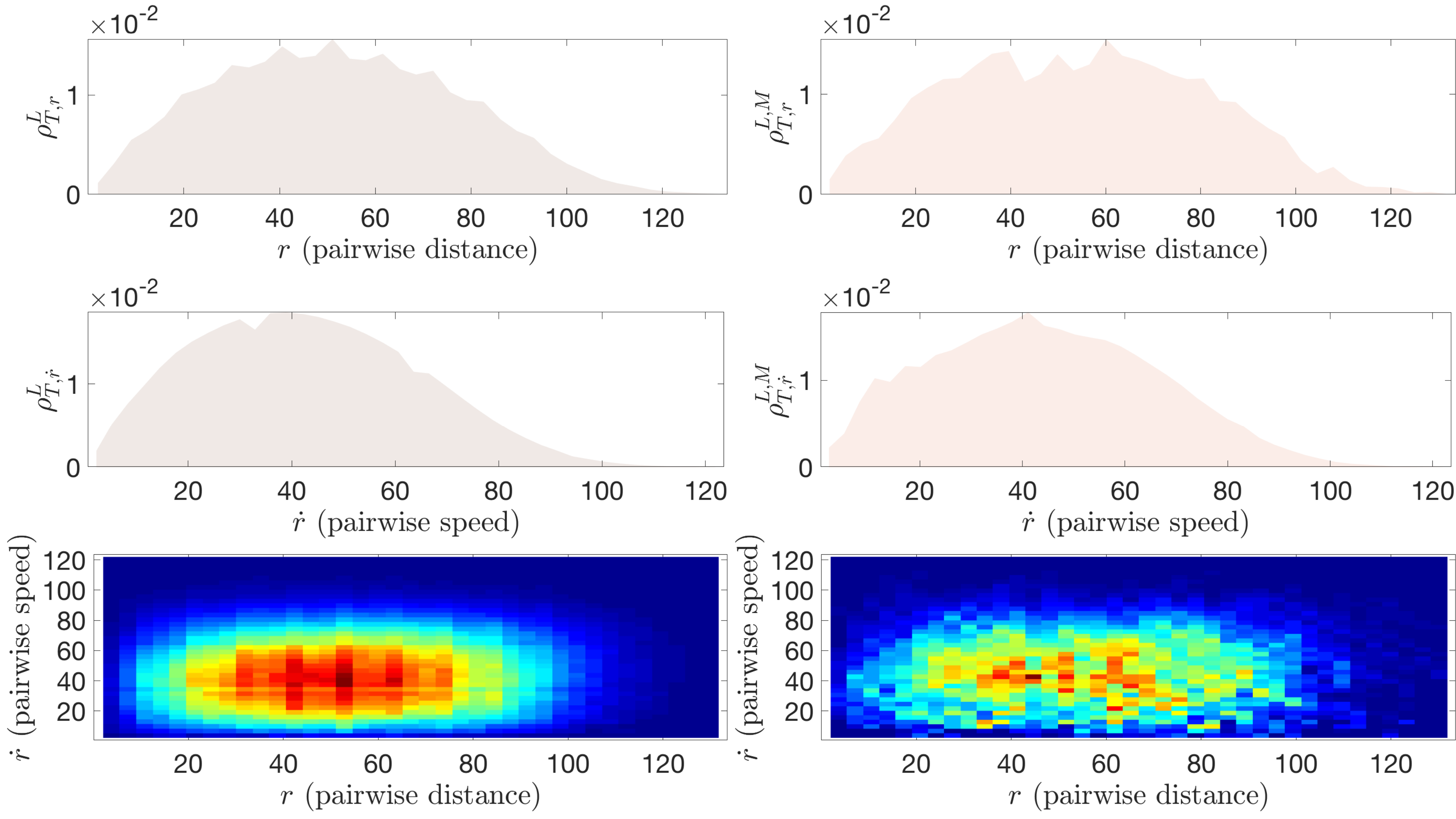}
   \mycaption{(PT) $\rho^L_{T, r, \dot{r}}$ vs. $\rho^{L, M}_{T, r, \dot{r}}$.}
   \label{fig:PT_rhoLTA} 
\end{subfigure}
\begin{subfigure}[b]{\ifPNAS 0.75\textwidth \fi \ifarXiv 0.48\textwidth \fi}
   \includegraphics[width=1\linewidth]{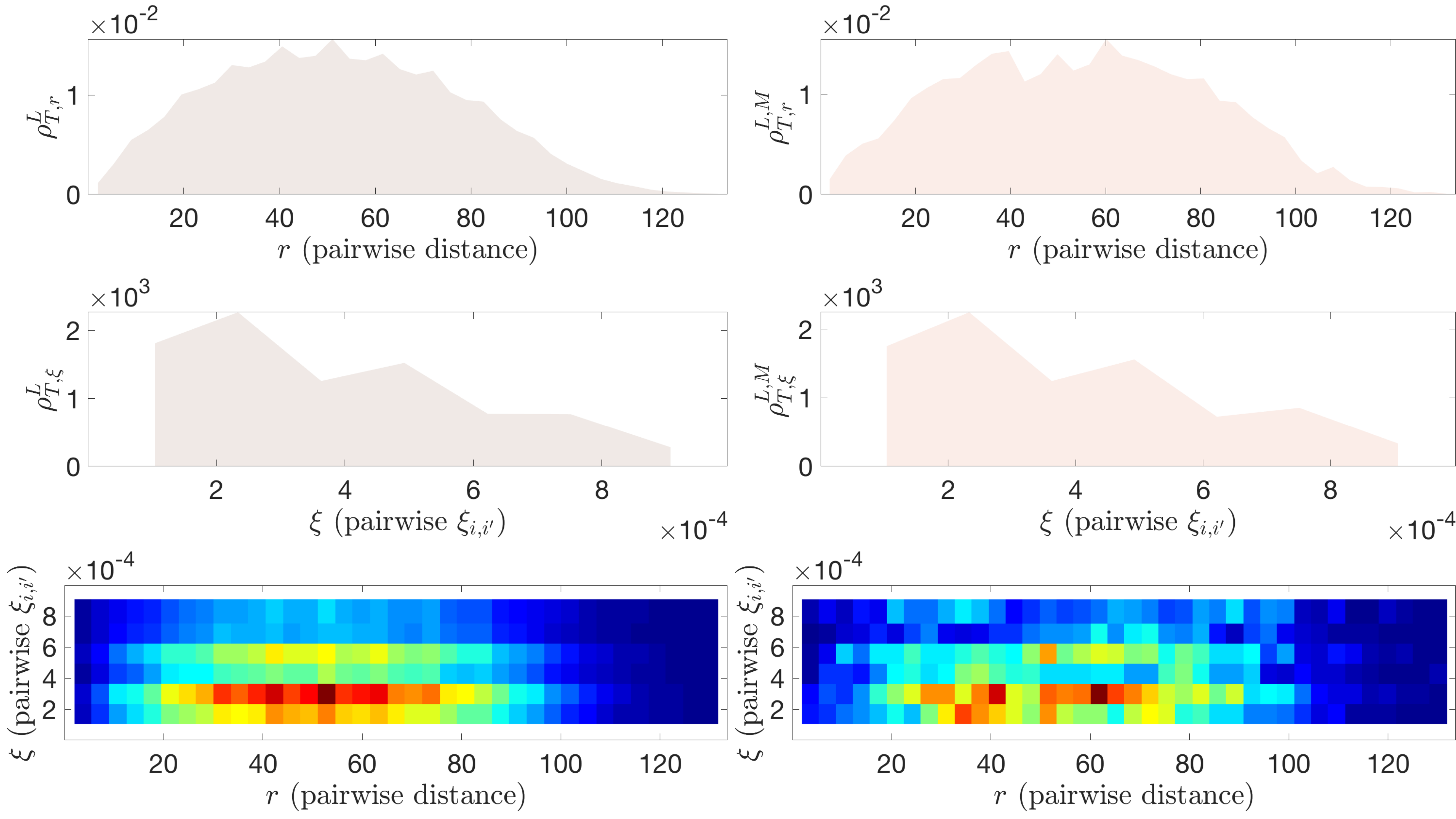}
   \mycaption{(PT) $\rho^L_{T, r, \xi}$ vs. $\rho^{L, M}_{T, r, \xi}$.}
   \label{fig:PT_rhoLTXi} 
\end{subfigure}
\mycaption{(PT) Density plots for the various $\rhoL$ measures.}
\end{figure}

\begin{figure}[H]
\centering
\includegraphics[width=\ifPNAS 0.75\textwidth \fi \ifarXiv 0.75\textwidth \fi]{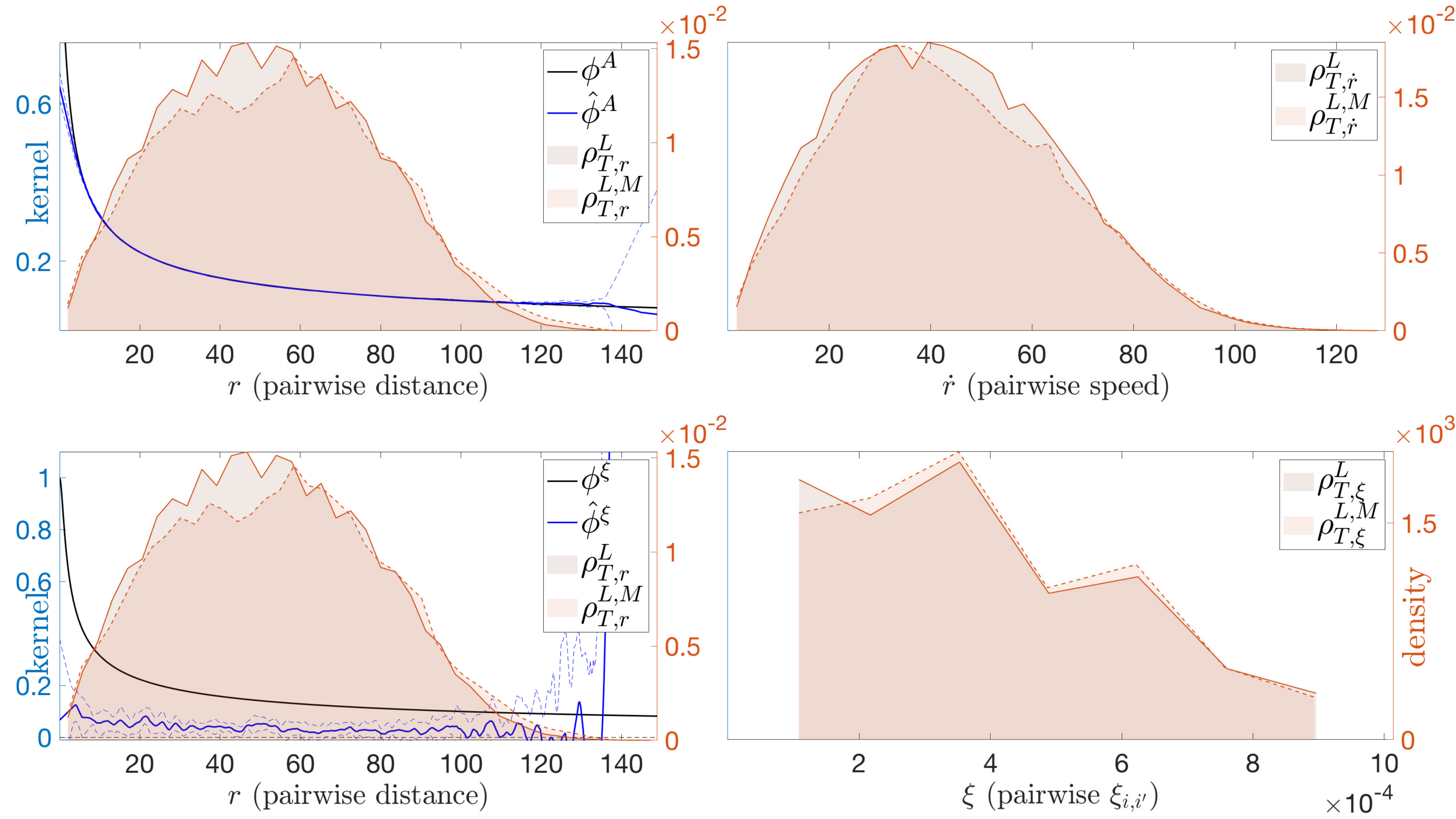}
\mycaption{(PT) Interaction kernels learned from noisy observations of positions and velocities. The noises are multiplicative, Unif.$([-\sigma,\sigma])$ with $\sigma=0.1$ and with other parameters as in Table \ref{t:PTparams}. The estimated kernel for associated with $\dot\bx_i$ is minimally affected, mostly in regions with small $\rhoL$; the additive noise is on a scale far great then that on $\xi_i$ hence severely affects the learning result on the interaction kernel on $\xi_i$.}
\label{fig:PTnoise}
\end{figure}

Figure \ref{fig:PTMLtest} shows the behavior of the error of the estimators as both $L$ and $M$ are increased.
\begin{figure}[H]
\centering
\includegraphics[width=\ifPNAS 0.48\textwidth \fi \ifarXiv 0.48\textwidth \fi]{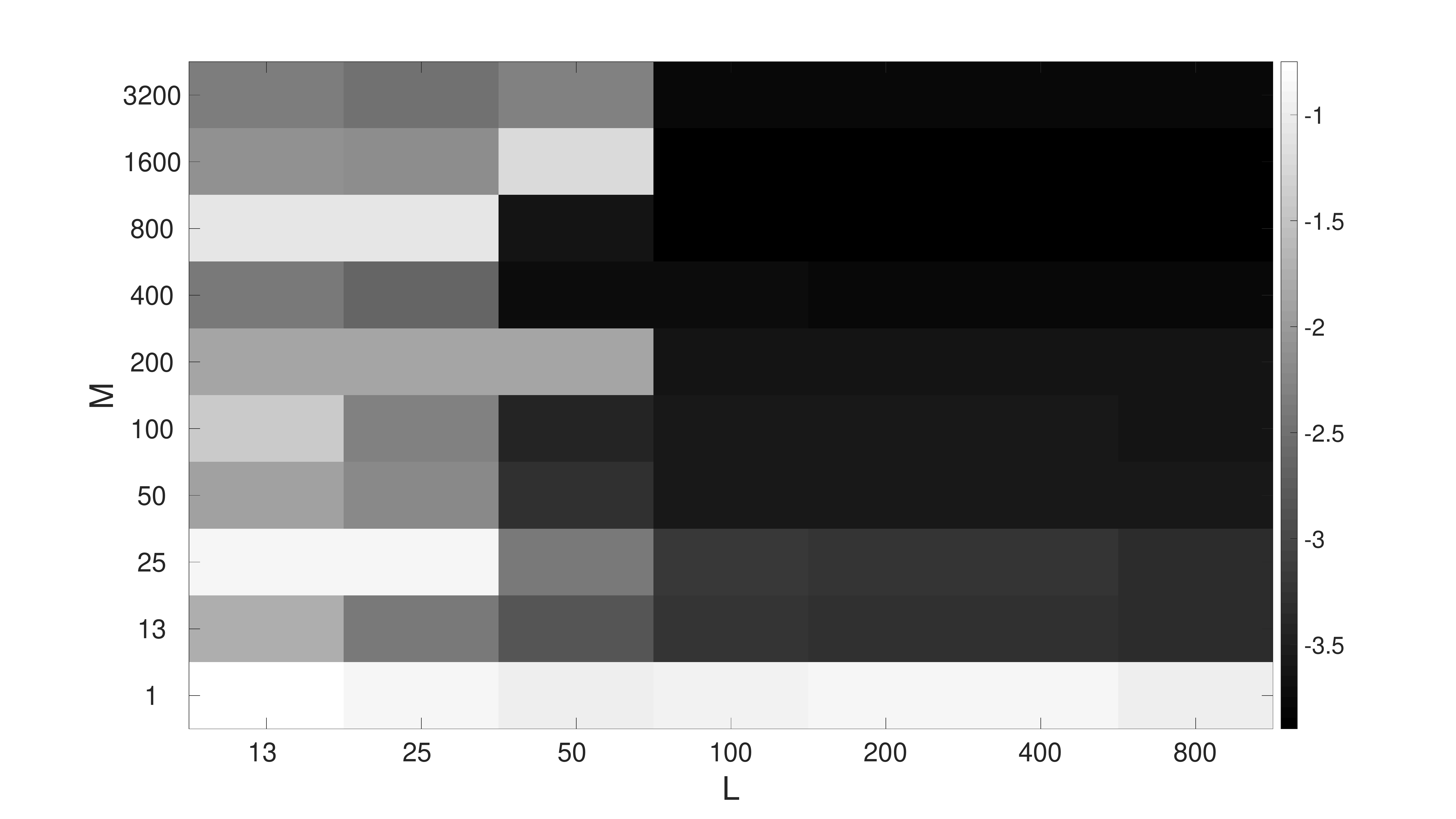}
\includegraphics[width=\ifPNAS 0.48\textwidth \fi \ifarXiv 0.48\textwidth \fi]{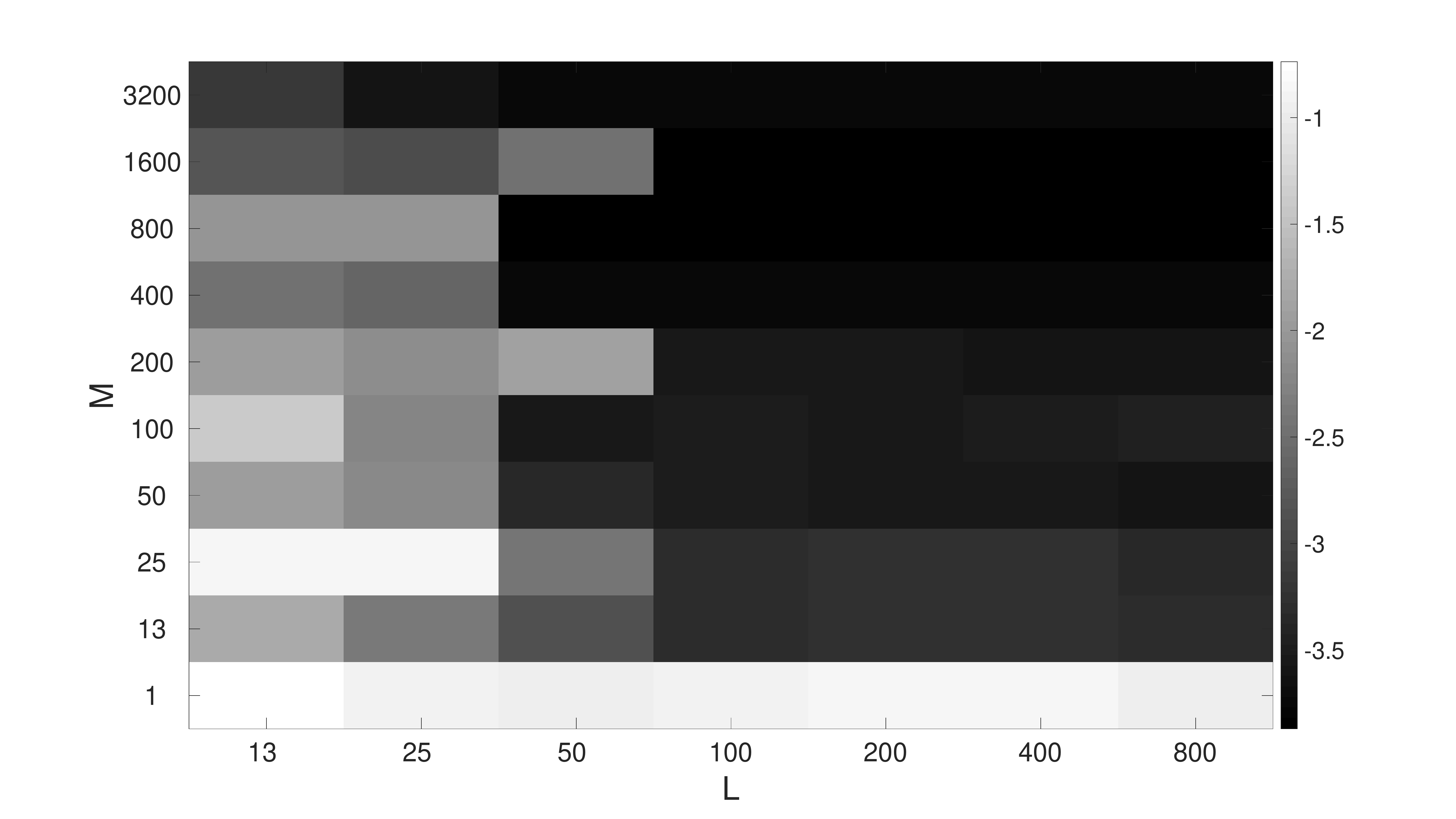}
\mycaption{(PT) Relative error, in $\log_{10}$ scale, of $\hat\intkernel^A$ (left) and $\hat\intkernel^\xi$ (right) as a function of $L$ and $M$. The error decreases both in $L$ and $M$, in fact roughly in the product $ML$. The fourth plot is an identically $0$ absolute error, because both $\phi^E_{2,2}$ and its estimator are identically $0$, since there is only one predator. \revision{Note $M\gg1$ seems to be needed for accurate inference of the interaction kernels, regardless of how large $L$ is: the trajectories explored for small $M$ do not explore enough configuration to enable estimation, suggesting that the limit $M\rightarrow+\infty$ considered in this work is of fundamental importance, at least for non-ergodic systems.}}
\label{fig:PTMLtest}
\end{figure} 
\subsection{Model Selection}\label{s:SI_MS}
Our learning approach can be used to identify the model of the system from the observation data.  We consider here two different scenarios of model selection: one is identifying the type -- energy-based vs. alignment-based -- of interaction kernels from a second order system driven by only one type of interaction kernel; the other is to identify the order of the system from a heterogeneous dynamics.

{\em{Model Selection: energy-based vs. alignment-based interactions}}. We consider a special case of the second order homogeneous agent dynamics, given as either
\[
\ddot{\bx}_i = \sum_{i' = 1}^N\frac{1}{N}\intkernele(r_{ii'})\br_{ii'} \quad \text{or} \quad \ddot{\bx}_i = \sum_{i' = 1}^N\frac{1}{N}\intkernela(r_{ii'})\dot\br_{ii'},
\]
with the (unknown) interaction kernels defined as 
\[
\intkernele(r) = 2 - \frac{1}{r^2} \quad \text{and} \quad \intkernela(r) = \frac{1}{(1 + r^2)^{0.25}}.
\]
The system parameters are given in Table \ref{t:MS_params_1}.
\begin{table}[H]\centering
\footnotesize{\begin{tabular}{| c | c | c | c | c | c | c | c |}
\hline 
 $d$ & $M$      & $L$    & $T$    & $\probIC^{\bX}$          & $\probIC^{\dot\bX}$         &$n^E = n^A$      & deg($\psi^A$)$=$deg($\psi^\xi$)\\ 
\hline 
 $2$ & $200$ & $200$ &  $10$ & Unif. on ring $[0.5, 1]$ & $\mathcal{U}([0, 10]^2)$ &$800$              & $1$\\
\hline
\end{tabular}}
\mycaption{\textmd{(MS$1$ and $2$) Test Parameters}}
\label{t:MS_params_1}
\end{table}

Given the observation data from either system ($\intkernele$- or $\intkernela$-driven), we proceed to learn the interaction kernels as usual, i.e. as if the dynamics were generated with both energy-based and alignment-based interaction kernels present. Results are shown in \ifPNAS \newrefs{Fig.~7 in the main text}\fi \ifarXiv Fig.~\ref{fig:ms_cases1}\fi.  The two sub-figures on the left show the learned interaction kernels $\lintkernele$ and $\lintkernela$ from a purely energy-based system: $\lintkernela$ is small in the appropriate norm, while $\lintkernele$ is large (and a good approximation to $\intkernele$): the estimators can therefore detect this is an energy-driven system. In the two sub-figures on the right, we display the analogous results corresponding to learning the interaction kernels for an alignment-based system. We obtain (almost) $0$ for the norm of $\lintkernele$.  The result why the $L^2(\rho^L_{T, r, \dot{r}})$ norm of $\lintkernela$ (from the first case) is not close to zero as the $L^2(\rho^L_{T, r})$ norm of the $\lintkernele$ (from the second case) lies in the difference in the joint distribution of the two cases, see Figures \ref{fig:MS1_rhoLTA} and \ref{fig:MS2_rhoLTA}.  To further investigate the properties of the joint distributions (and also to differentiate the two dynamics), we calculated the $\ell^1$ distance of the respective joint distributions to the product and their marginals.  For MS$1$, the $\ell^1$ distance (over $10$ learning trials) between the joint distribution $\rho^{L, M}_{T, r, \dot{r}}$ and the product of its marginals is $1.3 \cdot 10^{-1} \pm 3.8 \cdot 10^{-3}$.  For MS$2$, the $\ell^1$ distance (over $10$ learning trials) between the joint distribution $\rho^{L, M}_{T, r, \dot{r}}$ and the product of its marginals is $4.6 \cdot 10^{-1} \pm 3.4 \cdot 10^{-3}$.
\begin{figure}[H]
\centering
\begin{subfigure}[b]{\ifPNAS 0.48\textwidth \fi \ifarXiv 0.48\textwidth \fi}
   \includegraphics[width=1\linewidth]{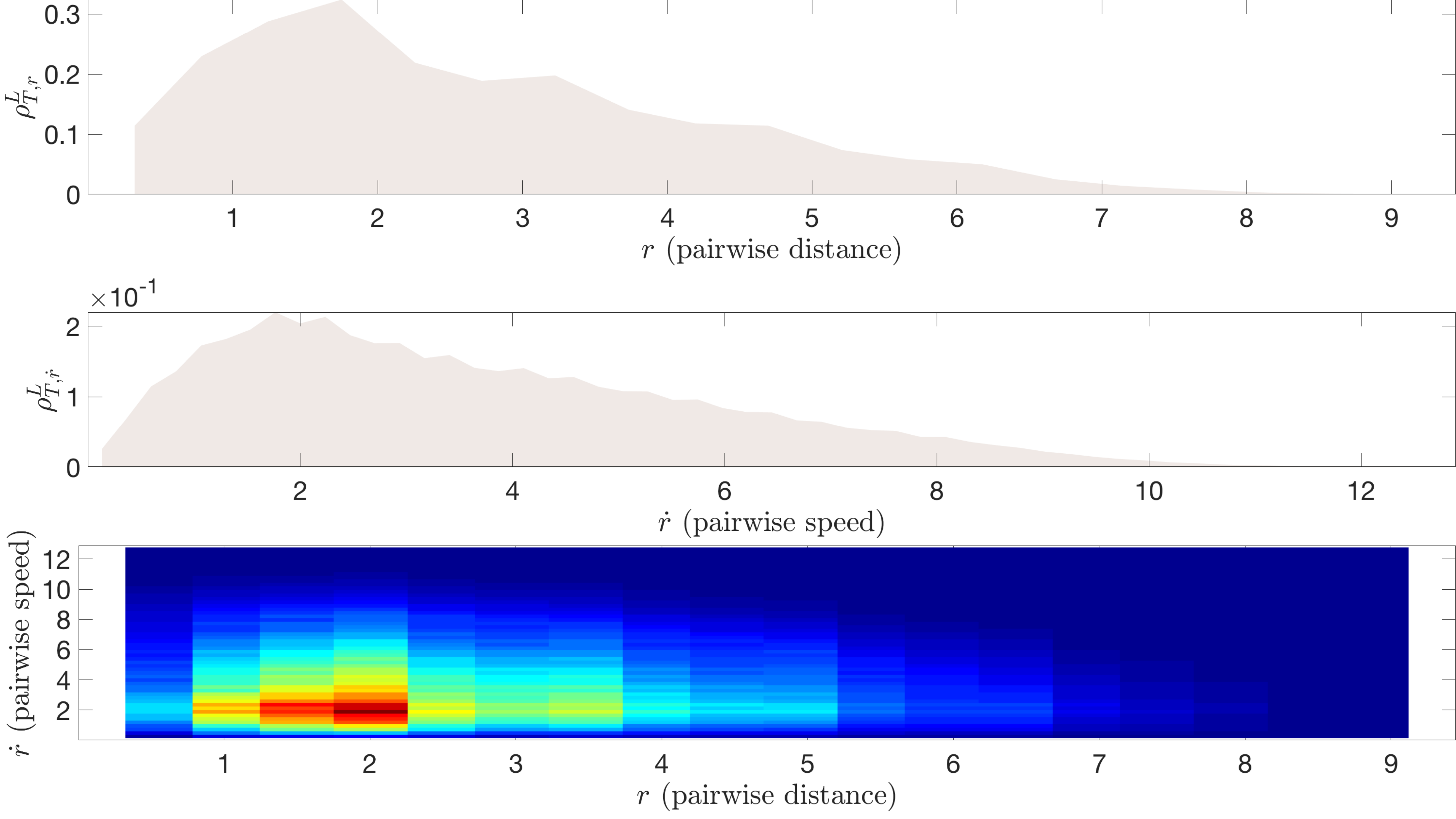}
   \mycaption{(MS$1$) Joint distribution of $\rho^L_{T, r, \dot{r}}$.}
   \label{fig:MS1_rhoLTA} 
\end{subfigure}
\begin{subfigure}[b]{\ifPNAS 0.48\textwidth \fi \ifarXiv 0.48\textwidth \fi}
   \includegraphics[width=1\linewidth]{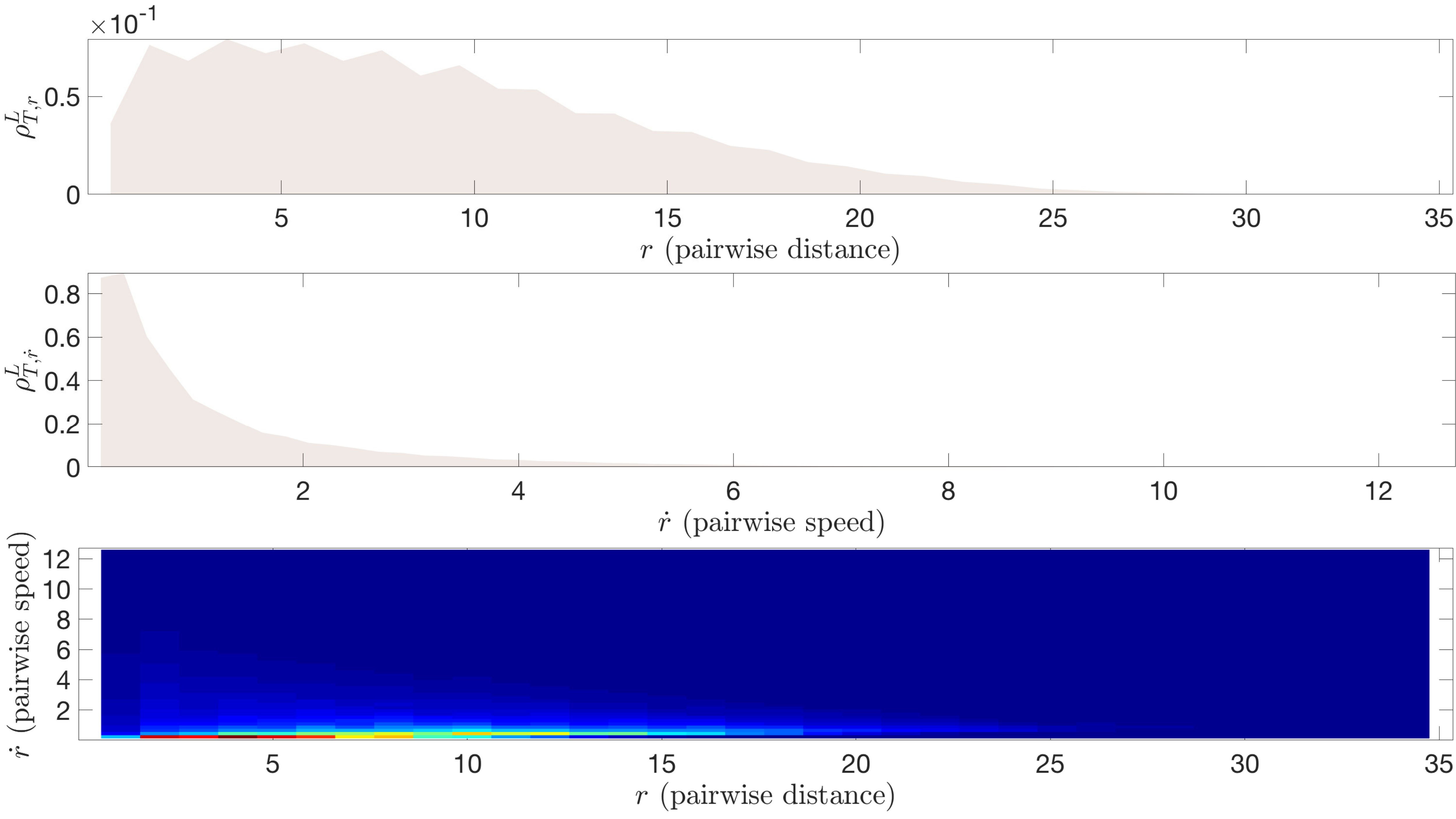}
   \mycaption{(MS$2$) Joint distributions of $\rho^L_{T, r, \dot{r}}$.}
   \label{fig:MS2_rhoLTA} 
\end{subfigure}
\mycaption{(MS$1$ and $2$) Density plots for the various $\rhoL$ measures.}
\end{figure}

{\em{Model Selection: first order vs. second order}}.  We consider two different heterogeneous agent systems, one first order and one second order, with the order of the system unknown to the estimator. The observations are in the time interval $[0,T]$, and in this case $T_f = T$.  We first consider the first order heterogeneous agent system
\[
\dot\bx_i = \sum_{i' = 1}^N \frac{1}{N_{\clof_{i'}}}\intkernel_{\clof_i\clof_{i'}}(r_{ii'})\br_{ii'},
\]
with 
\[
\intkernel_{1, 1}(r) = 1 - r^{-2}, \quad \intkernel_{1, 2}(r) = -2r^{-2}, \quad \intkernel_{2, 1}(r) = 3.5r^{-3}, \quad \intkernel_{2, 2}(r) \equiv 0,
\]
and the type information setup similar to that of the Predator-Swarm first order system (detailed in Sec.~\ifPNAS\ref{s:SIExamples}\ref{s:SI_PSD}\fi \ifarXiv \ref{s:SI_PSD}\fi).  
For the second scenario, we consider the data generated by the following second order heterogeneous agent dynamics,
\[
\ddot\bx_i = -\dot\bx_i + \sum_{i' = 1}^N \frac{1}{N_{\clof_{i'}}}\intkernele_{\clof_i\clof_{i'}}(r_{ii'})\br_{ii'},
\]
with 
\[
\intkernel_{1, 1}(r) = 1 - r^{-2}, \quad \intkernel_{1, 2}(r) = -r^{-2}, \quad \intkernel_{2, 1}(r) = 1.5r^{-2.5}, \quad \intkernel_{2, 2}(r) \equiv 0,
\]
and the type information setup similar to that of the Predator-Swarm second order system (details shown in Sec.~\ifPNAS\ref{s:SIExamples}\ref{s:SI_PSD}\fi \ifarXiv \ref{s:SI_PSD}\fi). 
The parameters for both systems are given in Tables \ref{t:MS_params_3} and \ref{t:MS_params_4}.
\begin{table}[H]\centering
\footnotesize{\begin{tabular}{| c | c | c | c |}
\hline 
 $d$  & $M$       & $L$  & $T$   \\
\hline 
 $2$ & $250$ & $250$ & $1$ \\ 
\hline
\hline
 $n$                          & Deg($\psi_{kk'}$) & Prey $\probIC^{\bX}$     & Pred. $\probIC^{\bX}$ \\ 
 \hline
$[298, 150; 150, 2]$ &$[1, 1; 1, 0]$        & Unif. on ring $[0.5, 1.5]$  & Unif. on disk at $0.1$\\
\hline
\end{tabular}}
\mycaption{\textmd{(MS$3$) Test Parameters}}
\label{t:MS_params_3}
\end{table}

\begin{table}[H]\centering
\footnotesize{\begin{tabular}{| c | c | c | c |}
\hline 
 $d$  & $M$       & $L$  & $T$   \\
\hline 
 $2$ & $250$ & $250$ & $1$ \\ 
\hline
\hline
 $n$                          & deg($\psi_{kk'}^E$) & Prey $\probIC^{\bX}$         & Pred. $\probIC^{\bX}$ \\ 
 \hline
$[298, 150; 150, 2]$ &$[1, 1; 1, 0]$           & $\mathcal{U}([0.1, 1]^2)$  & $\mathcal{U}([0, 0.07]^2)$\\
\hline
\end{tabular}}
\mycaption{\textmd{(MS$4$) Test Parameters}}
\label{t:MS_params_4}
\end{table}
With the order of the ODE system and the interaction kernels being the missing information, we construct estimators for the interaction kernels in two ways: first assuming a first order system, then assuming a second order system (without non-collective forcing).  We then generate predicted trajectories using the learned interaction kernels, and the same initial conditions as in the training data.  Next, we calculate the trajectory max-in-time error, obtaining the results in Table \ifPNAS \newrefs{$1$ of the main text}\fi \ifarXiv\ref{tab:ms_cases2}\fi (shown as the mean of the trajectory error plus or minus standard deviation of the error over $10$ runs).  As indicated by the trajectory error statistics, the predicted trajectories with smaller error indicate the correct order of the true underlying system in both cases.  Details on the statistics of  the trajectory errors are reported in Tables \ref{t:MS3_traj_err} and \ref{t:MS4_traj_err}. In each, the column with smaller values (within both mean and standard deviation of the trajectory errors) corresponds the correct order of the system.
\begin{table}[H]\centering
\footnotesize{\begin{tabular}{| c || c | c |} 
\hline
                                          & Learned as $1^{st}$ order                                        & Learned as $2^{nd}$ order   \\
\hline
$\text{mean}_{\text{IC}}$ & $\mathbf{9.5 \cdot10^{-3} \pm 2 \cdot10^{-3}}$ & $3.9 \pm 8$\\
\hline
$\text{std}_{\text{IC}}$    & $\mathbf{1.8 \cdot10^{-2} \pm 1.1 \cdot10^{-2}}$ & $48 \pm 1 \cdot 10^2$\\
\hline                     
\end{tabular}}
\mycaption{\textmd{(MS$3$) Trajectory Errors}}
\label{t:MS3_traj_err}
\end{table}

\begin{table}[H]\centering
\footnotesize{\begin{tabular}{| c || c | c |} 
\hline
                                          & Learned as $1^{st}$ order                            & Learned as $2^{nd}$ order   \\
\hline
$\text{mean}_{\text{IC}}$ & $1.6 \pm 1 \cdot 10^{-1}$                        & $\mathbf{1.3 \cdot 10^{-1} \pm 3 \cdot 10^{-2}}$\\
\hline
$\text{std}_{\text{IC}}$    & $9.4 \cdot 10^{-1} \pm 2 \cdot 10^{-1}$ & $\mathbf{2.0 \cdot 10^{-1} \pm 5 \cdot 10^{-2}}$\\
\hline                     
\end{tabular}}
\mycaption{\textmd{(MS$4$) Trajectory Errors}}
\label{t:MS4_traj_err}
\end{table}
\ifarXiv
\bibliographystyle{plain}
\fi
\bibliography{learning_dynamics}

\begin{thebibliography}{10}

\bibitem{BCCCCGLOPPVZ2008}
M.~Ballerini, N.~Cabibbo, R.~Candelier, A.~Cavagna, E.~Cisbani, I.~Giardina,
  V.~Lecomte, A.~Orlandi, G.~Parisi, A.~Procaccini, M.~Viale, and
  V.~Zdravkovic.
\newblock Interaction ruling animal collective behavior depends on topological
  rather than metric distance: Evidence from a field study.
\newblock {\em Proc Natl Acad Sci USA}, 105(4):1232--1237, 2008.

\bibitem{bellman1971use}
Richard Bellman and Robert~S Roth.
\newblock The use of splines with unknown end points in the identification of
  systems.
\newblock {\em J Math Anal Appl}, 34(1):26--33, 1971.

\bibitem{BTG2001}
D.~{Bhaya}, A.~{Takahashi}, and A.~R. {Grossman}.
\newblock {Light Regulation of type IV pilus-dependent motility by
  chemosensor-like elements in Synechocystis PCC6803}.
\newblock {\em Proc Natl Acad Sci USA}, 98(13):7540 -- 7545, 2001.

\bibitem{BCGMSVW2012}
W.~Bialek, A.~Cavagna, I.~Giardina, T.~Mora, E.~Silvestri, M.~Viale, and A.~M.
  Walzak.
\newblock {Statistical mechanics for natural flocks of birds}.
\newblock {\em Proc Natl Acad Sci USA}, 109:4786 -- 4791, 2012.

\bibitem{binev2005universal}
P.~Binev, A.~Cohen, W.~Dahmen, R.~DeVore, and V.~Temlyakov.
\newblock Universal algorithms for learning theory part i: piecewise constant
  functions.
\newblock {\em J Mach Learn Res}, 6(Sep):1297--1321, 2005.

\bibitem{BFHM17}
M.~Bongini, M.~Fornasier, M.~Hansen, and M.~Maggioni.
\newblock Inferring interaction rules from observations of evolutive systems
  {I}: The variational approach.
\newblock {\em Math Mod Methods Appl Sci}, 27(05):909--951, 2017.

\bibitem{BDMTvdW2009}
T.~{Bosse}, R.~{Duell}, Z.~A. {Memon}, J.~{Treur}, and C.~N. {van der Wal}.
\newblock A multi-agent model for mutual absorption of emotions.
\newblock In {\em European council on modeling and simulation}, ECMS 2009,
  2009.

\bibitem{BHKTvdW2011}
Tibor Bosse, Mark Hoogendoorn, Michel C.~A. Klein, Jan Treur, and C.~Natalie
  van~der Wal.
\newblock Agent-based analysis of patterns in crowd behaviour involving
  contagion of mental states.
\newblock In Kishan~G. Mehrotra, Chilukuri~K. Mohan, Jae~C. Oh, Pramod~K.
  Varshney, and Moonis Ali, editors, {\em Modern Approaches in Applied
  Intelligence: 24th International Conference on Industrial Engineering and
  Other Applications of Applied Intelligent Systems, IEA/AIE 2011, Syracuse,
  NY, USA, June 28 -- July 1, 2011, Proceedings, Part II}, pages 566--577.
  Springer Berlin Heidelberg, Berlin, Heidelberg, 2011.

\bibitem{brunel2008parameter}
Nicolas~JB Brunel.
\newblock Parameter estimation of {ODE}'s via nonparametric estimators.
\newblock {\em Electron J Stat}, 2:1242--1267, 2008.

\bibitem{BPK2016}
S.L. Brunton, J.L. Proctor, and J.N. Kutz.
\newblock Discovering governing equations from data by sparse identification of
  nonlinear dynamical systems.
\newblock {\em Proc Natl Acad Sci USA}, 113(15):3932--3937, 2016.

\bibitem{CDFSTB2003}
S.~Camazine, J.L. Deneubourg, N.R. Franks, J.~Sneyd, G.~Theraula, and
  E.~Bonabeau.
\newblock {\em Self-organization in Biological Systems}.
\newblock Princeton studies in complexity. Princeton University Press,
  Princeton, 2001.

\bibitem{cao2011robust}
J~Cao, L~Wang, and J~Xu.
\newblock Robust estimation for ordinary differential equation models.
\newblock {\em Biometrics}, 67(4):1305--1313, 2011.

\bibitem{CDOP2009}
J.~A. Carrillo, M.~R. D'Orsogna, and V.~Panferov.
\newblock {Double Milling in self-propelled swarms from kinetic theory}.
\newblock {\em Kinet Relat Mod}, 2(2):363 -- 378, 2009.

\bibitem{carrillo2017review}
J.A. Carrillo, Y.P. Choi, and S.P Perez.
\newblock A review on attractive--repulsive hydrodynamics for consensus in
  collective behavior.
\newblock In N.~Bellomo, P.~Degond, and Tadmor E., editors, {\em Active
  Particles}, volume~1, pages 259--298. Birkh\"auser, Cham, 2017.

\bibitem{CPT2014}
J.~A. Carrilo, Y.~P. Choi, and M.~Haurray.
\newblock {The derivation of swarming models: mean-field limit and Wasserstein
  distances}.
\newblock In A.~Muntean and F.~Toschi, editors, {\em Collective Dynamics from
  Bacteria to Crowds}, pages 1 -- 46. Springer, CISM International Centre for
  Mechanical Sciences, 2014.

\bibitem{CFTV2010}
J.~A. Carrilo, M.~Fornasier, G.~Toscani, and F.~Vecil.
\newblock {Particle, kinetic, and hydrodynamic models of swarming}.
\newblock In G.~Naldi, L.~Pareschi, G.~Toscani, and N.~Bellom, editors, {\em
  Mathematical Modeling of Collective Behavior in Socio-Economic and Life
  Sciences, Modeling and Simulation in Science, Engineering and Technology},
  pages 297 -- 336. Springer, {Birkh\"{a}user Boston}, 2010.

\bibitem{CK2013}
Y.~{Chen} and T.~{Kolokolnikov}.
\newblock {A minimal model of predator-swarm interactions}.
\newblock {\em J R Soc Interface}, 11:20131208, 2013.

\bibitem{CDOMBC2007}
Y.~Chuang, M.~D'Orsogna, D.~Marthaler, A.~Bertozzi, and L.~Chayes.
\newblock {State transition and the continuum limit for the 2D interacting,
  self-propelled particle system}.
\newblock {\em Physica D}, 232:33 -- 47, 2007.

\bibitem{CHDOB2007}
Y.~Chuang, Y.~Huang, M.~D'Orsogna, and A.~Bertozzi.
\newblock {Multi-vehicle flocking: scalability of cooperative control
  algorithms using pairwise potentials}.
\newblock {\em IEEE Intern Conf Robotics and Automation}, pages 2292 -- 2299,
  2007.

\bibitem{CK2009}
H.~Cohn and A.~Kumar.
\newblock {Algorithmic design of self-assembling structures}.
\newblock {\em Proc Natl Acad Sci USA}, 106:9570 -- 9575, 2009.

\bibitem{CF2002}
I.~Couzin and N.~Franks.
\newblock {Self-organized lane formation and optimized traffic flow in army
  ants}.
\newblock {\em Proc R Soc Lond}, B 270:139 -- 146, 2002.

\bibitem{CKFL2005}
I.D. Couzin, J.~Krause, N.R. Franks, and S.A. Levin.
\newblock {Effective leadership and decision-making in animal groups on the
  move}.
\newblock {\em Nature}, 433(7025):513 -- 516, 2005.

\bibitem{CPT2010}
E.~Cristiani, B.~Piccoli, and A.~Tosin.
\newblock {Modeling self-organization in pedestrians and animal groups from
  macroscopic and microscopic viewpoints}.
\newblock In G.~Naldi, L.~Pareschi, G.~Toscani, and N.~Bellomo, editors, {\em
  Mathematical Modeling of Collective Behavior in Socio-Economic and Life
  Sciences, Modeling and Simulation in Science, Engineering and Technology},
  pages 337 -- 364. Springer, {Birkh\"{a}user Boston}, 2010.

\bibitem{CPT2011}
E.~Cristiani, B.~Piccoli, and A.~Tosin.
\newblock {Multiscale modeling of granular flows with application to crowd
  dynamics}.
\newblock {\em Multi Model Simul}, 9(1):155 -- 182, 2011.

\bibitem{CD2011}
F.~Cucker and J.-G. Dong.
\newblock {A general collision-avoiding flocking framework}.
\newblock {\em IEEE Trans Automat Contr}, 56(5):1124 -- 1129, 2011.

\bibitem{CD2014}
F.~Cucker and J.-G. Dong.
\newblock {A conditional, collision-avoiding, model for swarming}.
\newblock {\em Discrete Continuous Dyn Syst}, 43(3):1009 -- 1020, 2014.

\bibitem{CM2008}
F.~Cucker and E.~Mordecki.
\newblock {Flocking in noisy environments}.
\newblock {\em J Math Pure Appl}, 89(3):278 -- 296, 2008.

\bibitem{cucker2002mathematical}
F.~Cucker and S.~Smale.
\newblock On the mathematical foundations of learning.
\newblock {\em Bull Amer Math Soc}, 39(1):1--49, 2002.

\bibitem{CS2007}
F.~Cucker and S.~Smale.
\newblock {Emergent behavior in flocks}.
\newblock {\em IEEE Trans Automat Contr}, 52(5):852, 2007.

\bibitem{CS2007a}
F.~Cucker and S.~Smale.
\newblock {On the mathematics of emergence}.
\newblock {\em Jpn J Math}, 2(1):197 -- 227, 2007.

\bibitem{CSZ2004}
F.~Cucker, S.~Smale, and D.~Zhou.
\newblock {Modeling language evolution}.
\newblock {\em Found Comput Math}, 4(5):315 -- 343, 2004.

\bibitem{devore2006approximation}
R.~DeVore, G.~Kerkyacharian, D.~Picard, and V.~Temlyakov.
\newblock Approximation methods for supervised learning.
\newblock {\em Found Comput Math}, 6(1):3--58, 2006.

\bibitem{DGAB2015}
F.~{Durupinar}, U.~{Gudukbar}, A.~{Aman}, and N.~I. {Badler}.
\newblock {Psychological Parameters for Crowd Simulation: From Audiences to
  Mobs}.
\newblock {\em IEEE Trans Vis Comput Graph}, 21:1 -- 15, 2015.

\bibitem{EMSC2014}
R.~{Escobedo}, C.~{Muro}, L.~{Spector}, and R.~P. {Coppinger}.
\newblock {Group size, individual role differentiation and effectiveness of
  cooperation in a homogeneous group of hunters}.
\newblock {\em J R Soc Interface}, 11:20140204, 2014.

\bibitem{KS1970}
F.~K. Evelyn and A.~S. Lee.
\newblock Initiation of slime mold aggregation viewed as an instability.
\newblock {\em J Theor Biol}, 26 3:399--415, 1970.

\bibitem{FMSP2007}
J.~M. Fryxell, A.~Mosser, A.~R.~E. Sinclair, and C.~Packer.
\newblock {Group formation stabilizes predator-prey dynamics}.
\newblock {\em Nature}, 449:1041 -- 1043, 2007.

\bibitem{GC2004}
G.~Gr\'egoire and H.~Chat\'e.
\newblock Onset of collective and cohesive motion.
\newblock {\em Phys Rev Lett}, 92:025702, Jan 2004.

\bibitem{Gyorfi06}
L.~Gy{\"o}rfi, M.~Kohler, A.~Krzyzak, and H.~Walk.
\newblock {\em A distribution-free theory of nonparametric regression}.
\newblock Springer, New York, 2002.

\bibitem{HL2009}
S.~{Ha} and D.~{Levy}.
\newblock {Particle, Kinetic and Fluid Models for Phototaxis}.
\newblock {\em Discrete Continuous Dyn Syst Ser B}, 12(1):77 -- 108, 2009.

\bibitem{HHK2010}
S.-Y. Ha, T.~Ha, and J.-H. Kim.
\newblock {Emergent behavior of a Cucker-Smale type particle model with
  nonlinear velocity couplings}.
\newblock {\em IEEE Trans Automat Contr}, 55(7):1679 -- 1683, 2010.

\bibitem{JT2007}
J.M. Jeschke and R.~Tollrian.
\newblock {Prey swarming: which predators become confused and why?}
\newblock {\em Anim Behav}, 74:387 -- 393, 2007.

\bibitem{KTIHC2011}
Y.~Katz, K.~Tunstrom, C.C. Ioannou, C.~Huepe, and I.D. Couzin.
\newblock {Inferring the structure and dynamics of interactions in schooling
  fish}.
\newblock {\em Proc Natl Acad Sci USA}, 108:18720--8725, 2011.

\bibitem{KMAW2002}
J.~Ke, J.~W. Minett, C.-P. Au, and W.~S.-Y. Wang.
\newblock Self-organization and selection in the emergence of vocabulary.
\newblock {\em Complexity}, 7(3):41--54, 2002.

\bibitem{KW1998}
A.~L. Koch and D.~White.
\newblock The social lifestyle of myxobacteria.
\newblock {\em BioEssays}, 20(12):1030--1038, 1998.

\bibitem{KSUB2011}
T.~Kolokolnikov, H.~Sun, D.~Uminsky, and A.~Bertozzi.
\newblock A theory of complex patterns arising from $2$d particle interactions.
\newblock {\em Phys Rev E, Rapid Communications}, 84:015203(R), 2011.

\bibitem{Krause2000}
U.~Krause.
\newblock A discrete nonlinear and non-autonomous model of consensus formation.
\newblock {\em Commun Part Diff Eq}, 2000:227--236, 2000.

\bibitem{LF2001}
N.~Leonard and E.~Fiorelli.
\newblock {Virtual leaders, artificial potentials and coordinated control of
  groups}.
\newblock {\em Proc $40^{th}$ IEEE Conf Decision Contr}, pages 2968 -- 2973,
  2001.

\bibitem{liang2008parameter}
Hua Liang and Hulin Wu.
\newblock Parameter estimation for differential equation models using a
  framework of measurement error in regression models.
\newblock {\em J Am Stat Assoc}, 103(484):1570--1583, 2008.

\bibitem{LL2015}
J.~{Lin} and T.~A. {Luckas}.
\newblock {A particle swarm optimization model of emergency airplane evacuation
  with emotion}.
\newblock {\em Net Het Media}, 10:631 -- 646, 2015.

\bibitem{LLEK2010}
R.~Lukeman, Y.-X. Li, and L.~Edelstein-Keshet.
\newblock {Inferring individual rules from collective behavior}.
\newblock {\em Proc Natl Acad Sci USA}, 107:12576 -- 12580, 2010.

\bibitem{miao2011identifiability}
Hongyu Miao, Xiaohua Xia, Alan~S Perelson, and Hulin Wu.
\newblock On identifiability of nonlinear ode models and applications in viral
  dynamics.
\newblock {\em SIAM Rev}, 53(1):3--39, 2011.

\bibitem{MT2014}
S.~{Mostch} and E.~{Tadmor}.
\newblock {Heterophilious Dynamics Enhances Consensus}.
\newblock {\em SIAM Rev}, 56(4):577 -- 621, 2014.

\bibitem{MHT2011}
M.~{Moussaid}, D.~{Helbing}, and G.~{Theraulaz}.
\newblock {How simple rules determine pedestrian behavior and crowd disasters}.
\newblock {\em Proc Natl Acad Sci USA}, 108(17):6884 -- 6888, 2011.

\bibitem{Niwa1994}
H.~Niwa.
\newblock {Self-organizing dynamic model of fish schooling}.
\newblock {\em J Theor Biol}, 171:123 -- 136, 1994.

\bibitem{Nowak2006}
M.~A. Nowak.
\newblock {Five rules for the evolution of cooperation}.
\newblock {\em Science}, 314:1560 -- 1563, 2006.

\bibitem{PVG2002}
J.~Parrish, S.~Viscido, and D.~Gruenbaum.
\newblock {Self-organized fish schools: An examination of emergent properties}.
\newblock {\em Biol Bull}, 202:296 -- 305, 2002.

\bibitem{PEK1999}
J.~K. Parrish and L.~Edelstein-Keshet.
\newblock {Complexiy, pattern, and evolutionary trade-offs in animal
  aggregation}.
\newblock {\em Science}, 284:99 -- 101, 1999.

\bibitem{pascual2000linking}
Mercedes Pascual and Stephen~P Ellner.
\newblock Linking ecological patterns to environmental forcing via nonlinear
  time series models.
\newblock {\em Ecology}, 81(10):2767--2780, 2000.

\bibitem{PGE2009}
L.~Pera, G.~G\'{o}mez, and P.~Elosegui.
\newblock {Extension of the Cucker-Smale control law to space flight
  formations}.
\newblock {\em J Guid Control Dyn}, 32:527 -- 537, 2009.

\bibitem{Perthame2007}
B.~Perthame.
\newblock {\em Transport Equations in Biology}.
\newblock Frontiers in Mathematics. Birkh\"{a}user Basel, 2007.

\bibitem{ramsay2005functional}
James Ramsay and Giles Hooker.
\newblock Dynamic data analysis: Modeling data with differential equations.
\newblock {\em Springer Series in Statistics}, 2018.

\bibitem{ramsay1996principal}
Jim~O Ramsay.
\newblock Principal differential analysis: Data reduction by differential
  operators.
\newblock {\em J R Stat Soc Series B Stat Methodol}, 58(3):495--508, 1996.

\bibitem{ramsay2007parameter}
Jim~O Ramsay, Giles Hooker, David Campbell, and Jiguo Cao.
\newblock Parameter estimation for differential equations: a generalized
  smoothing approach.
\newblock {\em J R Stat Soc Series B Stat Methodol}, 69(5):741--796, 2007.

\bibitem{Romey1996}
W.~Romey.
\newblock {Individual differences make a difference in the trajectories of
  simulated schools of fish}.
\newblock {\em Ecol Model}, 92:65 -- 77, 1996.

\bibitem{Schaeffer6634}
H.~Schaeffer, R.~Caflisch, C.~D. Hauck, and S.~Osher.
\newblock Sparse dynamics for partial differential equations.
\newblock {\em Proc Natl Acad Sci USA}, 110(17):6634--6639, 2013.

\bibitem{STW:extractingsaprsedynamics}
H.~Schaeffer, G.~Tran, and R.~Ward.
\newblock Extracting high-dimensional dynamics from limited data.
\newblock {\em SIAM J Appl Math}, 78, 07 2017.

\bibitem{Shoham}
Y.~Shoham and K.~Leyton-Brown.
\newblock {\em Multiagent Systems: Algorithmic, Game-Theoretic, and Logical
  Foundation}.
\newblock Cambridge University Press, Cambridge, 2009.

\bibitem{SDOPTBBC2008}
M.~B. Short, M.~R. D'Orsogna, V.~B. Pasour, G.~E. Tita, P.~J. Brantingham,
  A.~L. Bertozzi, and L.~B. Chayes.
\newblock {A statistical model of criminal behavior}.
\newblock {\em Math Models Methods Appl Sci}, 18(suppl.):1249 -- 1267, 2008.

\bibitem{SST2015}
R.~Simione, D.~Slep{\v{c}}ev, and I.~Topaloglu.
\newblock Existence of ground states of nonlocal-interaction energies.
\newblock {\em J Stat Phys}, 159(4):972--986, May 2015.

\bibitem{SB2001}
J.~M. {Skerker} and H.~C. {Berg}.
\newblock {Direct observation of extension and retraction of type IV pili}.
\newblock {\em Proc Natl Acad Sci USA}, 98(12):6901 -- 6904, 2001.

\bibitem{StiglerHistoryStats}
S.~M. Stigler.
\newblock {\em The History of Statistics: The Measurement of Uncertainty Before
  1900}.
\newblock Harvard University Press, 1st edition, 1986.

\bibitem{SS1997}
K.~Sugawara and M.~Sano.
\newblock {Cooperative acceleration of task performance: Foraging behavior of
  interacting multi-robots system}.
\newblock {\em Physica D}, 100:343 -- 354, 1997.

\bibitem{timmer2000parametric}
J~Timmer, H~Rust, W~Horbelt, and HU~Voss.
\newblock Parametric, nonparametric and parametric modelling of a chaotic
  circuit time series.
\newblock {\em Phys Lett A}, 274(3-4):123--134, 2000.

\bibitem{TT1995}
J.~Toner and Y.~Tu.
\newblock {Long-range order in a two-dimensional dynamical xy model: How birds
  fly together}.
\newblock {\em Phys Rev Lett}, 75:4326 -- 4329, 1995.

\bibitem{TranWardExactRecovery}
G.~Tran and R.~Ward.
\newblock Exact recovery of chaotic systems from highly corrupted data.
\newblock {\em Multi Model Simul}, 15(3):1108--1129, 2017.

\bibitem{varah1982spline}
James~M Varah.
\newblock A spline least squares method for numerical parameter estimation in
  differential equations.
\newblock {\em SIAM J Sci Comput}, 3(1):28--46, 1982.

\bibitem{Vicsek_model}
T.~{Vicsek}, A.~{Czir{\'o}k}, E.~{Ben-Jacob}, I.~{Cohen}, and O.~{Shochet}.
\newblock {Novel Type of Phase Transition in a System of Self-Driven
  Particles}.
\newblock {\em Physical Review Letters}, 75:1226--1229, August 1995.

\bibitem{vicsek2012collective}
T.~Vicsek and A.~Zafeiris.
\newblock Collective motion.
\newblock {\em Phys Rep}, 517(3-4):71--140, 2012.

\bibitem{BJ2012}
J.H. von Brecht and D.~Uminsky.
\newblock On soccer balls and linearized inverse statistical mechanics.
\newblock {\em J Nonlinear Sci}, 22(6):935--959, Dec 2012.

\bibitem{YEECBKMS2009}
C.~Yates, R.~Erban, C.~Escudero, L.~Couzin, J.~Buhl, L.~Kevrekidis, P.~Maini,
  and D.~Sumpter.
\newblock {Inherent noise can facilitate coherence in collective swarm motion}.
\newblock {\em Proc Natl Acad Sci USA}, 106:5464 -- 5469, 2009.

\bibitem{ZKHFK2005}
M.~Zheng, Y.~Kashimori, O.~Hoshino, K.~Fujita, and T.~Kambara.
\newblock {Behavior pattern (innate action) of individuals in fish schools
  generating efficient collective evasion from predation}.
\newblock {\em J Theor Biol}, 235:13 -- 167, 2005.

\end{thebibliography}
%

%\ifPNAS
%\pnasbreak
%\cleardoublepage 
%\fi

%
\end{document}